\documentclass[12pt]{article}

\usepackage{hyperref}
\hypersetup{
  colorlinks=true,
  linkcolor={red!60!black},
  citecolor={blue!60!black},
  urlcolor={magenta!60!black}
} 
\usepackage{times}
\usepackage{verbatim}
\usepackage{mathtools,stmaryrd}
\usepackage{subcaption,wrapfig}

\usepackage{amsmath, amssymb, amsthm, amsfonts}
\usepackage{times}
\usepackage{xcolor}
\usepackage{graphicx}
\usepackage[letterpaper, margin=1in]{geometry}
\usepackage[shortlabels]{enumitem}
\setlist[itemize]{leftmargin=*}
\setlist[enumerate]{leftmargin=*}
\usepackage[capitalize]{cleveref}
\usepackage{titlesec}
\usepackage{mathrsfs}
\usepackage{physics}
\usepackage{booktabs}
\usepackage{tensor}
\usepackage{algpseudocode,algorithm}
\usepackage[numbers]{natbib} 

\titlespacing*{\section}{0pt}{6pt}{0pt}
\titlespacing*{\subsection}{0pt}{6pt}{0pt}

\DeclareMathOperator{\E}{\mathbb{E}}
\DeclareMathOperator{\Var}{Var}

\newcommand{\R}{\mathbb{R}}

\newcommand{\1}{\mathbf{1}}

\renewcommand{\op}{\mathrm{op}}

\DeclareMathOperator{\diag}{diag}
\DeclareMathOperator{\sgn}{sign}

\DeclareMathOperator{\poly}{poly}

\let\vec\relax\DeclareMathOperator{\vec}{Vec}

\DeclareMathOperator{\F}{F}
\DeclareMathOperator{\pol}{polar}

\newtheorem{theorem}{Theorem}[section]
\newtheorem{lemma}[theorem]{Lemma}
\newtheorem{corollary}[theorem]{Corollary}
\newtheorem{proposition}[theorem]{Proposition}
\newtheorem{assumption}[theorem]{Assumption}
\newtheorem*{conjecture}{Conjecture}

\theoremstyle{remark}
\newtheorem*{remark}{Remark}

\newcommand{\bone}{\boldsymbol{1}}
\ifdefined\usebigfont

\usepackage{times}
\usepackage[fontsize=13pt]{scrextend}
\AtBeginDocument{
	\newgeometry{letterpaper,left=1.56in,right=1.56in,top=1.71in,bottom=1.77in}
}
\else
\fi
\def\safedef#1{% 
   \ifx#1\undefined
      \expandafter\def\expandafter#1%
   \else
      \errmessage{The \string#1 is defined already}% 
      \expandafter\def\expandafter\tmp
   \fi
}

\theoremstyle{definition} % no italics

\setenumerate[1]{leftmargin=*, label={\upshape(\arabic*)}}
\newcommand{\nnorm}[1]{\lVert#1\rVert}

\newcommand{\Ebig}[1]{\mathbb{E}\left[#1\right]}
\newcommand{\EE}[2]
{\mathbb{E}_{#1}[#2]}

\newcommand*{\rd}{\mathop{}\!\mathrm{d}}
\newcommand*{\eps}{\epsilon}

\newcommand*{\lam}{\lambda}
\newcommand{\deq}{\mathrel{\stackrel{d}{=}}}
\newcommand{\bmat}[1]{\begin{bmatrix}#1\end{bmatrix}}
\newcommand{\db}[1]{\llbracket #1 \rrbracket}
\newcommand{\loo}{^{(-i)}}
\newcommand{\abi}{^{(+i)}}
\newcommand{\bbG}{\bar{\bG}}

\DeclareMathOperator{\Unif}{Unif}

\DeclareMathOperator{\Cov}{Cov}
\DeclareMathOperator{\polylog}{polylog}

\DeclareMathOperator{\Lip}{Lip}

\DeclareMathOperator{\Bin}{Bin}

\DeclareMathOperator{\tail}{tail}

\DeclareMathOperator{\SGD}{SGD}
\DeclareMathOperator{\Spec}{Spec}
\DeclareMathOperator{\Muon}{Muon}
\DeclareMathOperator{\Newton}{Newton}

\DeclareMathOperator*{\argmax}{arg\,max}

\def\ddefloop#1{\ifx\ddefloop#1\else\ddef{#1}\expandafter\ddefloop\fi}

\def\ddef#1{\expandafter\def\csname #1#1\endcsname{\ensuremath{\mathbb{#1}}}}
\ddefloop ABCDFGHIJKLMNOPQRSTUVWXYZ\ddefloop % except for E

\def\ddef#1{\expandafter\def\csname c#1\endcsname{\ensuremath{\mathcal{#1}}}}
\ddefloop ABCDEFGHIJKLMNOPQRSTUVWXYZ\ddefloop

\def\ddef#1{\expandafter\def\csname b#1\endcsname{\ensuremath{{\mathbf{#1}}}}}
\ddefloop ABCDEFGHIJKLMNOPQRSTUVWXYZ\ddefloop  
\def\ddef#1{\expandafter\def\csname b#1\endcsname{\ensuremath{{\boldsymbol{#1}}}}}
\ddefloop abcdeghijklmnopqrtsuvwxyz\ddefloop  % except for bf, which often causes an issue

\def\ddef#1{\expandafter\def\csname h#1\endcsname{\ensuremath{\hat{#1}}}}
\ddefloop ABCDEFGHIJKLMNOPQRSTUVWXYZabcdefghijklmnopqrsuvwxyz\ddefloop % except for 't'
\def\ddef#1{\expandafter\def\csname hc#1\endcsname{\ensuremath{\hat{\mathcal{#1}}}}}
\ddefloop ABCDEFGHIJKLMNOPQRSTUVWXYZ\ddefloop
\def\ddef#1{\expandafter\def\csname hb#1\endcsname{\ensuremath{\hat{\mathbf{#1}}}}}
\ddefloop ABCDEFGHIJKLMNOPQRSTUVWXYZ\ddefloop % 
\def\ddef#1{\expandafter\def\csname hb#1\endcsname{\ensuremath{\hat{\boldsymbol{#1}}}}}
\ddefloop abcdefghijklmnopqrstuvwxyz\ddefloop % 

\def\ddef#1{\expandafter\def\csname t#1\endcsname{\ensuremath{\tilde{#1}}}}
\ddefloop ABCDEFGHIJKLMNOPQRSTUVWXYZabcdefgijklmnpqtsuvwxyz\ddefloop % except for o and h and r
\def\ddef#1{\expandafter\def\csname tc#1\endcsname{\ensuremath{\tilde{\mathcal{#1}}}}}
\ddefloop ABCDEFGHIJKLMNOPQRSTUVWXYZ\ddefloop
\def\ddef#1{\expandafter\def\csname tb#1\endcsname{\ensuremath{\tilde{\mathbf{#1}}}}}
\ddefloop ABCDEFGHIJKLMNOPQRSTUVWXYZ\ddefloop
\def\ddef#1{\expandafter\def\csname tb#1\endcsname{\ensuremath{\tilde{\boldsymbol{#1}}}}}
\ddefloop abcdefghijklmnopqrstuvwxyz\ddefloop % 

\def\ddef#1{\expandafter\def\csname bar#1\endcsname{\ensuremath{\bar{#1}}}}
\ddefloop ABCDEFGHIJKLMNOPQRSTUVWXYZabcdefghijklmnopqrtsuvwxyz\ddefloop
\def\ddef#1{\expandafter\def\csname barc#1\endcsname{\ensuremath{\bar{\mathcal{#1}}}}}
\ddefloop ABCDEFGHIJKLMNOPQRSTUVWXYZ\ddefloop
\def\ddef#1{\expandafter\def\csname barb#1\endcsname{\ensuremath{\bar{\mathbf{#1}}}}}
\ddefloop ABCDEFGHIJKLMNOPQRSTUVWXYZ\ddefloop
\def\ddef#1{\expandafter\def\csname barb#1\endcsname{\ensuremath{\bar{\boldsymbol{#1}}}}}
\ddefloop abcdefghijklmnopqrstuvwxyz\ddefloop % 

\def\ddef#1{\expandafter\def\csname war#1\endcsname{\ensuremath{\overline{#1}}}}
\ddefloop ABCDEFGHIJKLMNOPQRSTUVWXYZabcdefghijklmnopqrtsuvwxyz\ddefloop
\def\ddef#1{\expandafter\def\csname warc#1\endcsname{\ensuremath{\overline{\mathcal{#1}}}}}
\ddefloop ABCDEFGHIJKLMNOPQRSTUVWXYZ\ddefloop
\def\ddef#1{\expandafter\def\csname warb#1\endcsname{\ensuremath{\overline{\mathbf{#1}}}}}
\ddefloop ABCDEFGHIJKLMNOPQRSTUVWXYZ\ddefloop
\def\ddef#1{\expandafter\def\csname warb#1\endcsname{\ensuremath{\overline{\boldsymbol{#1}}}}}
\ddefloop abcdefghijklmnopqrstuvwxyz\ddefloop % 

\safedef\tilr{\tilde r}
\safedef\bff{{\boldsymbol f}}
\safedef\hbff{{\hat{\boldsymbol f}}}
\safedef\hatt{\hat{t}}
\safedef\tilo{{\tilde{o}}}
\safedef\tilh{{\tilde{h}}}
\safedef\bell{{{\boldsymbol\ell}}}
\safedef\tell{\ensuremath{\tilde{\ell}}} 
\safedef\btell{\ensuremath{\widetilde{\boldsymbol{\ell}}}} 
\safedef\hell{{{\hat\ell}}}

\usepackage{pgffor}
\safedef\greeksymbols{alpha,beta,gamma,delta,eps,epsilon,zeta,eta,theta,iota,kappa,lambda,mu,nu,xi,pi,rho,sigma,tau,phi,chi,psi,omega,Gamma,Delta,Theta,Lambda,Pi,Sigma,Phi,Psi,Omega,Xi}
\safedef\greeksymbolsnoeta{alpha,beta,gamma,delta,eps,epsilon,zeta,theta,iota,kappa,lambda,mu,nu,xi,pi,rho,sigma,tau,phi,chi,psi,omega,Gamma,Delta,Theta,Lambda,Pi,Sigma,Phi,Psi,Omega,Xi} % except for eta

\foreach \x in \greeksymbolsnoeta{\expandafter\xdef\csname b\x\endcsname{\noexpand\ensuremath{\noexpand\boldsymbol{\csname \x\endcsname}}}}
\safedef\bfeta{{\boldsymbol \eta}}

\foreach \x in \greeksymbols{\expandafter\xdef\csname h\x\endcsname{\noexpand\ensuremath{\noexpand\hat{\csname \x\endcsname}}}}
\foreach \x in \greeksymbolsnoeta{\expandafter\xdef\csname hb\x\endcsname{\noexpand\ensuremath{\noexpand\hat{\noexpand\boldsymbol{ \csname \x\endcsname}}}}}
\safedef\hbfeta{{\hat{\boldsymbol \eta}}}

\foreach \x in \greeksymbols{\expandafter\xdef\csname bar\x\endcsname{\noexpand\ensuremath{\noexpand\bar{\csname \x\endcsname}}}}
\foreach \x in \greeksymbolsnoeta{%
\expandafter\xdef\csname barb\x\endcsname{\noexpand\ensuremath{\noexpand\bar{\noexpand\boldsymbol{ \csname \x\endcsname}}}}
}
\safedef\barbfeta{{\bar{\boldsymbol \eta}}}

\foreach \x in \greeksymbols{\expandafter\xdef\csname t\x\endcsname{\noexpand\ensuremath{\noexpand\tilde{\csname \x\endcsname}}}}
\foreach \x in \greeksymbolsnoeta{\expandafter\xdef\csname tb\x\endcsname{\noexpand\ensuremath{\noexpand\tilde{\noexpand\boldsymbol{ \csname \x\endcsname}}}}}
\safedef\tbfeta{{\tilde{\boldsymbol \eta}}}

\newcommand{\jk}[1]{{\color{orange}[JK: #1]}}
\newcommand{\new}[1]{{#1}}

\title{Sharp Capacity Scaling of Spectral Optimizers in Learning Associative Memory}

\ifdefined\usebigfont

\usepackage[fontsize=13pt]{scrextend}
\makeatletter
\@ifpackageloaded{geometry}{\AtBeginDocument{\newgeometry{letterpaper,left=1.56in,right=1.56in,top=1.71in,bottom=1.77in}}}{\usepackage[letterpaper,left=1.56in,right=1.56in,top=1.71in,bottom=1.77in]{geometry}}
\AtBeginDocument{\newgeometry{letterpaper,left=1.56in,right=1.56in,top=1.71in,bottom=1.77in}}
\usepackage{hyperref} % watch out for error: ! Illegal parameter number in definition of \Hy@tempa.
\else
\usepackage{times}
\fi

\author{
Juno Kim\textsuperscript{1,}\thanks{Equal contribution.}
\qquad
Eshaan Nichani\textsuperscript{2,}\footnotemark[1]
\\[0.5em]
Denny Wu\textsuperscript{3,4}
\qquad
Alberto Bietti\textsuperscript{4}
\qquad
Jason D. Lee\textsuperscript{1}
\\[0.5em]
\normalsize \textsuperscript{1}UC Berkeley \quad
\textsuperscript{2}Princeton University \quad
\textsuperscript{3}New York University \quad
\textsuperscript{4}Flatiron Institute
}
\date{\today\\[-1em]}

\setlist[itemize]{leftmargin=*}
\setlist[enumerate]{leftmargin=*}

\titlespacing*{\paragraph}{0pt}{0.8ex}{1em}

\begin{document}

\maketitle

\allowdisplaybreaks

\begin{abstract}
Spectral optimizers such as Muon have recently shown strong empirical performance in large-scale language model training, but the source and extent of their advantage remain poorly understood. We study this question through the linear associative memory problem, a tractable model for factual recall in transformer-based models. In particular, we go beyond orthogonal embeddings and consider Gaussian inputs and outputs, which allows the number of stored associations to greatly exceed the embedding dimension. Our main result sharply characterizes the recovery rates of one step of Muon, SGD, and Newton's method on the logistic regression loss under a power law frequency distribution. We show that the storage capacity of Muon significantly exceeds that of SGD, and even matches Newton's method while only using first-order information. Moreover, Muon saturates at a larger critical batch size. We further analyze the multi-step dynamics under a thresholded gradient approximation and show that Muon achieves a substantially faster initial recovery rate than SGD, while both methods eventually converge to the information-theoretic limit at comparable speeds. Experiments on synthetic tasks validate the predicted scaling laws. Our analysis provides a quantitative understanding of the signal amplification of spectral preconditioners and lays the groundwork for establishing scaling laws across more practical language modeling tasks and optimizers.
\end{abstract}

\section{Introduction}

Large language models (LLMs) with billions of parameters are typically trained using adaptive first-order optimization algorithms. The workhorse of modern neural network optimization has long been the Adam optimizer and its variants~\citep{kingma2014adam,loshchilov2019decoupledweightdecayregularization}. However, there has been growing interest in matrix-based or \emph{spectral} optimizers \citep{martens2015optimizing,gupta2018shampoo,vyas2024soap,jordan2024muon}, which explicitly utilize the matrix structure of neural network parameters. Among these methods, Muon~\citep{jordan2024muon} has shown strong empirical performance in large-scale pretraining studies~\citep{liu2025muon}, even outperforming Adam at sufficiently large batch sizes~\citep{wen2025fantastic, semenov2025benchmarking}. 
Muon updates each weight matrix in the approximate direction of the polar factor, or spectral orthogonalization, of the negative gradient. Ignoring accumulation, this can also be interpreted as steepest descent with respect to the spectral norm~\citep{bernstein2024oldoptimizernewnorm}. However, it remains unclear which aspects of modern language model training make this update particularly effective.

To investigate this question, we analyze the dynamics of Muon versus stochastic gradient descent (SGD) and Newton's method on the task of learning linear \emph{associative memory}. The associative memory task, introduced in \citet{cabannesscaling,nichani2024understanding}, provides a simple model of factual recall in language models \citep{roberts2020much,allen2024physics}, and captures the ability of transformer-based models to store factual knowledge within the self-attention matrices. The goal is to store a collection of atomic associations (i.e., facts), expressed as~$N$ pairs of input and output embeddings $\{(v_i, u_i)\}_{i \in [N]} \subset \R^d$, using a weight matrix~$\bW\in\RR^{d\times d}$ so that $u_i\approx \bW v_i$. We train~$\bW$ by casting this task as a multiclass logistic regression problem with logits given by $u_j^\top\bW v_i$ and optimizing the cross-entropy loss. A formal description of the problem is given in Section~\ref{sec:setting}.

Recent work has studied the benefit of spectral optimizers on related associative memory tasks~\citep{wang2025muon,li2026muon}, but these results rely on an orthogonality assumption on the embeddings $u_i$ and $v_i$. While this assumption simplifies the optimization analysis, it also requires the embedding dimension $d$ to be larger than the number of stored items $N$. In contrast, we study the regime in which $u_i$ and $v_i$ are drawn i.i.d.~from an isotropic Gaussian distribution, so that the number of stored items can greatly exceed the embedding dimension ($N \gg d$). This captures the ability of language models to store items, or features, in \emph{superposition}~\citep{elhage2022superposition}, where the total number of features is far greater than the ambient dimension. Indeed, under this random-embedding model, it is information-theoretically possible for $\bW$ to store up to $\widetilde{\Theta}(d^2)$ items~\citep{nichani2024understanding}. At the same time, removing orthogonality makes the learning dynamics substantially more intricate~\citep{vurallearning} and they remain poorly understood.

Motivated by Zipf's law for language modeling, we assume that the~$i$th item appears with power-law frequency~$p_i\sim i^{-\alpha}$, parallel to previous theoretical analyses on scaling laws~\citep{michaud2023quantization,bordelon2024dynamical, lin2024scaling, paquette20244+, ren2025emergence, kunstner2025scaling}. We also consider the minibatch versions of SGD and Muon, where at each timestep a new batch of size $B$ is sampled with replacement.
Under this setting, our main result sharply characterizes the one-step recovery of Muon, showing that Muon outperforms SGD and stores significantly more items than in the orthogonal case.

\begin{theorem}[Informal version of Theorems~\ref{thm:main},~\ref{thm:gd},~\ref{thm:newton}] Let~$d$ be the embedding dimension and~$B$ be the batch size, and suppose the~$i$th item has power law frequency $p_i \propto i^{-\alpha}$ for $\alpha > 1$. One step of Muon on the associative memory task recovers the top $\widetilde\Theta(\min\{d^{1 + \frac{1}{2\alpha}}, B^{\frac{1}{\alpha}}\})$ most frequent items, \new{matching the Newton update,} while one step of SGD recovers the $\widetilde\Theta(\min\{d^{\frac{1}{2\alpha}}, B^{\frac{1}{\alpha}}\})$ items.
\end{theorem}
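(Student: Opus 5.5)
We write the one-step update from $\bW_0=0$ explicitly. At zero initialization every softmax is uniform over the output set. The population gradient of the cross-entropy is therefore
\[
-\nabla L(0)=\sum_i p_i\, u_i v_i^\top-\bar u\,\bar v^\top ,
\qquad \bar u=\tfrac1N\sum_j u_j,\quad \bar v=\sum_i p_i v_i .
\]
The minibatch version replaces $p_i$ by the empirical frequencies $\hat p_i=c_i/B$, where $c_i$ counts the occurrences of item $i$ in the batch. The mean term is rank one with small norm, so we treat it as a perturbation. Write $\bG=\sum_i \hat p_i u_iv_i^\top=\bU\hat\bP\bV^\top$.

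Item $i$ counts as recovered when its margin is positive,
\[
u_i^\top\bW v_i-\max_{j\neq i}u_j^\top\bW v_i>0 .
\]
We need the margin positive for all top items and negative (or no better than chance) for most items beyond the claimed threshold. The batch constraint is the simpler part. An item with $Bp_i\ll1$ is unlikely to appear in the batch, so it receives zero signal and cannot be recovered. Since $p_i\asymp i^{-\alpha}$, this caps recovery at $i\lesssim B^{1/\alpha}$ for every method.

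\textbf{SGD.} Here $\bW=\eta\bG$. The signal for item $i$ is
\[
u_i^\top\bG v_i\approx \hat p_i\,d^2 .
\]
The cross term for a competitor $j$ is
\[
\sum_k \hat p_k (u_j^\top u_k)(v_k^\top v_i),
\]
which is dominated by the $k=j$ and $k=i$ contributions of size $\hat p_{\max}d^{3/2}$ from the top item, plus a Gaussian-like fluctuation of size $d\,(\sum_k\hat p_k^2)^{1/2}=O(d)$. Taking the maximum over $N$ competitors costs a $\sqrt{\log N}$ factor. The binding term is the top-frequency interference $\approx p_1 d^{3/2}$ (up to logs), which gives the condition $p_i d^2\gtrsim d^{3/2}$. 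That yields $i\lesssim d^{1/(2\alpha)}$.

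For the matching lower bound, we show that the interference from item $1$ has a random sign and is of order $d^{3/2}$ for a constant fraction of competitors $j$. Hence items with $p_i\ll d^{-1/2}$ are not recovered.

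\textbf{Muon.} Here $\bW=\pol(\bG)=\bG(\bG^\top\bG)^{-1/2}$. The planned route is to write the polar factor via the resolvent/Stieltjes integral
\[
(\bG^\top\bG)^{-1/2}=\frac{2}{\pi}\int_0^\infty(\bG^\top\bG+s^2)^{-1}\,ds .
\]
We then compute $u_i^\top\pol(\bG)v_i$ by leave-one-out (the $\loo$ notation): remove the rank-one term $\hat p_iu_iv_i^\top$ and apply Sherman--Morrison. The result has the form
\[
u_i^\top\pol(\bG)v_i\approx\frac{\hat p_i\,d\,\kappa_i}{1+\hat p_i\,d\,\kappa_i},
\]
where $\kappa_i$ is a deterministic-equivalent quantity built from the singular value profile of the leave-one-out matrix $\bG^{(-i)}$. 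That profile is determined by the power law: $\bG$ has effective rank $\min(d,B)$, and its singular values follow the ordering of the $\hat p_k$, since the top $d$ items are nearly orthogonal after whitening.

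Heuristically, the polar factor flattens the spectrum. Each direction gets weight about $1$ instead of $p_k$, so the dominant interference from frequent items disappears. What remains is an interference of order $\sqrt{\log N/d}$ at scale $1$ per competitor, generic across $d^2$ degrees of freedom. The signal for a tail item is roughly $p_i/\sigma_{\text{tail}}$, where $\sigma_{\text{tail}}\asymp\big(\sum_{k>d}p_k^2\cdot d\big)^{1/2}/d$ is the bulk singular level. Comparing signal with noise gives $p_i\gtrsim d^{-1-1/2}\cdot\text{(tail norm)}$. Plugging in the tail norm of the power law should give
\[
i\lesssim d^{1+1/(2\alpha)} .
\]
The \textbf{Newton} result is handled in parallel. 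The Hessian at $0$ is roughly a Kronecker product of second moments, so the update is $\bG$ preconditioned by $(\bG\bG^\top)^{-1/2}$-like factors of the same spectral shape. The identical leave-one-out computation should give the same threshold.

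\textbf{Main obstacle.} The hardest step is making the Muon computation rigorous. This requires a deterministic equivalent for the resolvent of $\bG^\top\bG$ when $\bG$ is a sum of $N\gg d$ Gaussian rank-one terms with heavy-tailed weights. It also requires uniform concentration of the quadratic forms $u_j^\top\pol(\bG)v_i$ over all $N^2$ pairs, since leave-one-out must be done for both $i$ and $j$. The first attempt would use a Marchenko--Pastur-type fixed point for the weighted sample covariance. The spikes (the top $\approx d$ items) would be treated separately, with the integral over $s$ split at the bulk edge.
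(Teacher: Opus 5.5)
Your SGD argument matches the paper's: the signal is $\approx q_i$, and it competes with the interference $q_j\langle v_i,v_j\rangle$ from frequent competitors $j$. The Muon part, which is the core of the statement, has a genuine gap.

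You work with the exact polar map. The resolvent parameter in $\bG(\bG^\top\bG)^{-1/2}=\frac{2}{\pi}\int_0^\infty\bG(\bG^\top\bG+s^2)^{-1}ds$ then runs down to $0$, far below $s_{\min}$ of the leave-out matrix, which is of order $d^{-\alpha-1}$ (cf.\ Lemma~\ref{lem:sval}). In that range the leave-out quadratic forms do not concentrate on the scale at which they enter. For example, $u_j^\top\bG_{-ij}^{-\top}v_i$ has size $\frac1d\|\bG_{-ij}^{-1}\|_{F}\approx d^{\alpha}$, so after multiplication by $q_i$ or $q_j$ the correction terms are not small. Consequently nothing controls the interaction logits of two frequent items, which is exactly the step you defer. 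Splitting the $s$-integral at the bulk edge does not help, since the part below the edge is precisely the uncontrolled part. A deterministic equivalent for this heavy-tailed weighting is not off-the-shelf, and as shown below it is not needed. The paper does not prove the exact-polar case at all.

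The missing idea is the regularization: use $h_\lam$ with $\lam\gtrsim s_{d/2}(\bG_{-i})$, which the bulk bound on $\sum_k q_k^2u_ku_k^\top$ guarantees (Lemmas~\ref{lem:spectrum-new},~\ref{lem:symmetrize}). With it, your leave-out idea can be closed, provided you use Woodbury rather than Sherman--Morrison: adding $q_iu_iv_i^\top$ changes $\bG^\top\bG$ by a rank-two indefinite term. Work on the symmetric dilation of Proposition~\ref{prop:operator-lipschitz}, where $h_\lam(x)=\frac2\pi\int_0^\infty\mathrm{Re}\,(x-\mathrm{i}\delta_t)^{-1}dt$ with $\delta_t=\sqrt{\lam^2+t^2}$. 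This gives $u_j^\top h_\lam(\bG)v_i=\frac2\pi\int_0^\infty\mathrm{Re}\,[\bA_t(\bI+\bC\bA_t)^{-1}]_{u_j,v_i}dt$, where $\bC$ carries $q_i,q_j$ and $\bA_t$ is the $4\times 4$ matrix of quadratic forms of $(u_i,v_i,u_j,v_j)$ in the leave-two-out resolvent.

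Conditionally on $\bG_{-ij}$, Hanson--Wright gives $\bA_t=\mathrm{i}a_t\bI+O(\sqrt{\log d/d}\,\delta_t^{-1})$ with $a_t=\frac{\delta_t}{d}\operatorname{tr}(\bG_{-ij}\bG_{-ij}^\top+\delta_t^2)^{-1}$. Here $a_t\asymp\delta_t^{-1}$ precisely because $s_{d/2}\le\lam\le\delta_t$. Hence $\|(\bI+\mathrm{i}a_t\bC)^{-1}\bC\|\lesssim\delta_t$, the expansion converges with ratio $\widetilde O(d^{-1/2})$, and the zeroth-order cross term vanishes. Integrating in $t$ should then give $|\gamma_{ij}|=\widetilde O(d^{-1/2})$, and the rank-two version gives $\gamma_{ii}\gtrsim q_i/\sqrt{q_i^2+\lam^2}-\widetilde O(d^{-1/2})$. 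The paper takes a different route. It gets the signal from the Daleckii--Krein formula for $\phi'(0)$, monotonicity of $\phi$, and $|\phi''|\lesssim\lam^{-2}$. It gets the top-block interactions from a long perturbative expansion (Appendix~\ref{sec:interaction}). The regularized leave-out route appears to be a considerably shorter alternative.

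Several smaller points also need fixing.
\begin{itemize}
\item \emph{SGD non-recovery.} Your step (``interference from item $1$ \dots\ for a constant fraction of competitors'') only shows that item $1$ beats item $i$ with probability about $1/2$. That bounds neither the number of recovered items nor the loss. The relevant interference comes from the $k=j$ term $q_j\|u_j\|^2\langle v_i,v_j\rangle$ with $j$ itself frequent. So take the $\Theta(\log d)$ most frequent items as competitors: they have $q_j\ge d^{-o(1)}$ and independent $\langle v_i,v_j\rangle$, so w.h.p.\ one of them beats each $i$ with $q_i\ll d^{-1/2}$.
\item \emph{Rank of the minibatch gradient.} It is $\min\{d,\Theta(B^{1/\alpha})\}$ (the number of distinct items, Lemma~\ref{lem:proportional}), not $\min\{d,B\}$. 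For $B\lesssim d^\alpha$ more than half its singular values vanish, which is why $\lam\asymp(\log d)^2/B$ in that regime.
\item \emph{Newton.} The update is $\bSigma_u^{-1}\bG_0\bM_v^{-1}$, with the unweighted, nearly isotropic $\bSigma_u$ on the left and $\bM_v=\sum_iq_iv_iv_i^\top$ on the right, not a $(\bG\bG^\top)^{-1/2}$-type factor. Single Sherman--Morrison steps suffice there because adding back $q_iv_iv_i^\top$ to $\bM_v$ is rank one. Also, $\bM_v\succeq\Theta(d^{-\alpha})\bI$ needs $B\gtrsim d^\alpha$, a hypothesis of Theorem~\ref{thm:newton} that your sketch omits.
\item \emph{Muon upper side.} The paper proves only that all items up to $\widetilde\Theta(\min\{d^{1+\frac{1}{2\alpha}},B^{1/\alpha}\})$ are recovered. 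Non-recovery beyond $d^{1+\frac{1}{2\alpha}}$ would require anti-concentration of the interaction logits and is argued only heuristically (Section~\ref{sec:optimal}). Your signal-versus-noise comparison should not be expected to give that upper side beyond the batch-size cap.
\end{itemize}
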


\new{Surprisingly, Muon is even able to match Newton's method -- the gold standard of curvature-aware optimization -- using only first-order information, demonstrating the power of spectral preconditioning.} The theorem also implies that the \emph{critical batch size}, beyond which increasing batch size does not yield performance gains, is much larger for Muon compared to SGD. The capacity exponents and batch size saturation predicted by our theory are empirically verified by our experiments (Figure~\ref{fig:intro}).

\begin{figure}[t]
% \vspace{-2.5mm} 
\centering
\begin{subfigure}[t]{0.49\linewidth}
\centering
{\includegraphics[height=0.77\textwidth]{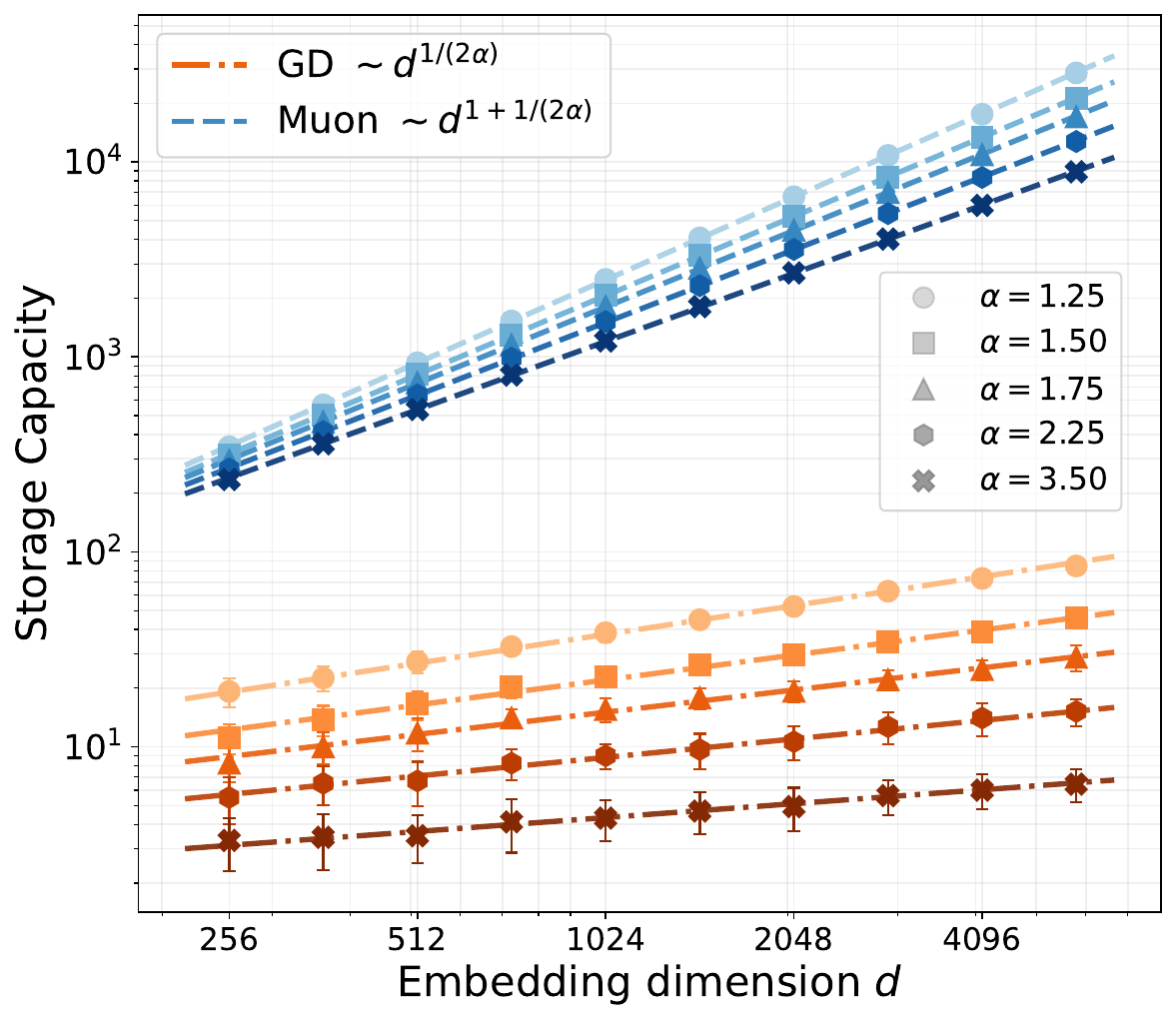}} 
%\vspace{-0.5mm}
\caption{Capacity scaling with embedding dimension $d$.}
\label{fig:intro-population}
\end{subfigure}%
\begin{subfigure}[t]{0.49\linewidth}
\centering 
{\includegraphics[height=0.77\textwidth]{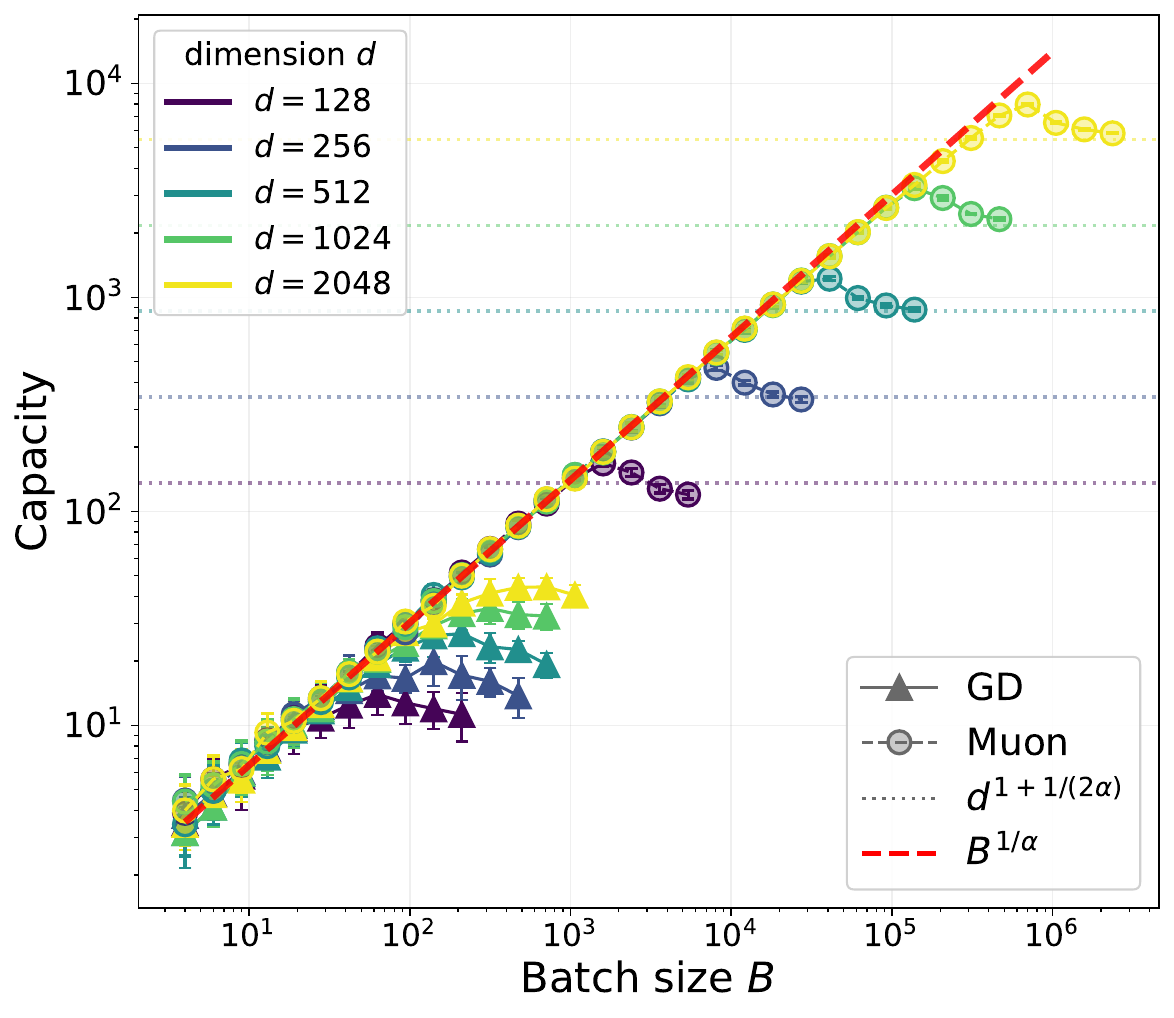}} 
%\vspace{-0.5mm}
\caption{\small Capacity scaling with minibatch size $B$.} 
\label{fig:intro-batch}
\end{subfigure}%
% \vspace{-1mm} 
\caption{\small \textbf{(a)} Capacity achieved by one Muon and GD step on the population objective; Muon improves the storage capacity when frequency is power-law distributed with exponent $\alpha>1$. \textbf{(b)} Critical batch size for the first Muon and SGD step ($\alpha=1.5$); the Muon capacity saturates at a much larger batch size than SGD.}  
% \vspace{-0.5mm}
\label{fig:intro} 
\end{figure}  

Furthermore, we study the multi-step trajectories of Muon and SGD under a simplifying thresholded gradient update, and show the following scaling laws for the recovery rate (for brevity, we only state the population version below).

\begin{theorem}[Informal version of Theorems~\ref{thm:multi},~\ref{thm:gd-multi}]
Under the thresholded update,~$t$ steps of Muon recover the top $\widetilde{\Theta}(d^{2-(1-\frac{1}{2a})^t})$ items. In contrast, $t$~steps of SGD recover the top~$d_t$ items where~$d_t$ is given by the recursion $d_{t+1} = \widetilde{\Theta}(d^{\frac{1}{2\alpha}}d_t)$ if $d_t\lesssim d$, and $d_{t+1} = \widetilde{\Theta}(d^\frac{1}{\alpha}d_t^{1-\frac{1}{2\alpha}})$ if $d_t\gtrsim d$.
\end{theorem}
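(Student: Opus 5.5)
The informal statement collects the multi-step results (Theorems~\ref{thm:multi} and~\ref{thm:gd-multi}), so I will argue by induction on $t$, reusing the one-step signal/noise analysis behind the informal version of Theorems~\ref{thm:main} and~\ref{thm:gd}. Under the thresholded update, the gradient at step $t$ is modeled as containing only the items not yet recovered, i.e.\ items $i > d_t$. Each of these has probability weight $p_i \propto i^{-\alpha}$, while recovered items contribute negligibly because their softmax has saturated. The state after $t$ steps is then $\bW_t \approx \sum_{i\le d_t} c_i u_i v_i^\top$ plus noise.

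The key step is to compute the recovery threshold of one update whose gradient is
\[
\bG_t=\sum_{i>d_t} p_i u_i v_i^\top + (\text{cross terms}).
\]
Item $j$ counts as recovered when its signal $u_j^\top \Delta \bW v_j$ beats the interference $\max_{k\neq j} u_k^\top \Delta\bW v_j$.

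For SGD, $\Delta\bW=\bG_t$. The signal for item $j$ is $p_j d^2$. The noise from the other items is of order $d\,(\sum_{i>d_t} p_i^2)^{1/2}\sim d\, d_t^{1/2-\alpha}$, up to logs. So item $j$ is recovered when $j^{-\alpha}\gtrsim d_t^{1/2-\alpha}/d$.

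I expect the two regimes to come from the fact that the gradient is a sum of rank-one terms. When $d_t\lesssim d$, the leading unrecovered terms are nearly orthogonal and the operator-level noise scales as above, which should give $d_{t+1}\sim d^{1/(2\alpha)} d_t$. When $d_t\gtrsim d$, the noise instead saturates through a spectral-norm bound of the form $\|\bG_t\|_{\op}\sim p_{d_t}\, d_t/d+\dots$, which should give $d_{t+1}\sim d^{1/\alpha} d_t^{1-1/(2\alpha)}$. I would check both exponents by plugging them into the signal/noise comparison.

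For Muon, the update is $\Delta\bW=\pol(\bG_t)$. I will use the polar-factor approximation from the one-step theorem: $\pol(\bG)\approx \bG(\bG^\top\bG)^{-1/2}$. This equalizes the top-$d$ spectral directions, so item $j$ with $d_t<j$ receives effective signal proportional to $p_j^{1/2}$ relative to a spectrum flattened over the tail $\{i>d_t\}$.

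Repeating the one-step computation with the tail sum starting at $d_t$ rather than at $1$ should give the recursion
\[
d_{t+1}\sim d^{1/(2\alpha)}\, d\, (d_t/d)^{1-1/(2\alpha)}\quad\text{(up to logs)},
\]
or, equivalently, for $\log_d$-exponents $e_{t+1}=2-(2-e_t)(1-\tfrac1{2\alpha})$. Starting from $e_0=1$, or from the one-step value $e_1=1+\tfrac1{2\alpha}$, this solves to $e_t=2-(1-\tfrac1{2\alpha})^t$. This matches the claim with $a=\alpha$.

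The main obstacle is justifying the Muon step rigorously. That means controlling $\pol(\bG_t)$ when $\bG_t$ is a power-law weighted sum of Gaussian rank-one matrices with the head removed. Concretely, I need the resolvent-type bound showing that $u_j^\top \pol(\bG_t) v_j$ concentrates around the deterministic equivalent uniformly over $j$.

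My approach would be a leave-one-out argument: write $\bG_t=\bG_t\loo + p_j u_j v_j^\top$, apply a Sherman–Morrison expansion to $(\bG^\top\bG)^{-1/2}$ through its integral representation, and use concentration for quadratic forms in the independent vectors $u_j$ and $v_j$. The lower bound, that items beyond the threshold fail, should follow from the same expansion by showing that the noise dominates. Finally, a union bound over the $\poly(d)$ items and the $t$ steps absorbs the logarithmic factors.
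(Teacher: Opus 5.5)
Your proposal has genuine gaps, mainly in the SGD half, and the Muon recursion is asserted rather than derived.

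\textbf{SGD: the noise estimate misses the dominant term.} In the paper's normalization your noise estimate reads $\frac1d\bigl(\sum_{\ell>d_t}p_\ell^2\bigr)^{1/2}\sim d_t^{1/2-\alpha}/d$. That is only the aggregate bulk fluctuation. It drops the competitor's own term $q_k\|u_k\|_2^2\langle v_k,v_j\rangle\sim q_k/\sqrt d$ inside $u_k^\top\bbG_t v_j$. For the heaviest unrecovered competitors $k\in(d_t,2d_t]$ this term is $d_t^{-\alpha}/\sqrt d$, which exceeds the bulk term by $\sqrt{d/d_t}$ exactly when $d_t\lesssim d$. That comparison is what produces the two regimes, not near-orthogonality versus spectral-norm saturation. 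Your single criterion $j^{-\alpha}\gtrsim d_t^{1/2-\alpha}/d$ yields $d^{1/\alpha}d_t^{1-1/(2\alpha)}$ in both regimes, and so overshoots when $d_t<d$.

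\textbf{SGD: accumulation and step sizes are ignored.} You analyze each increment in isolation, but recovery is decided by the accumulated $\bW_{t+1}=\bW_t+\eta_t\bbG_t$, which retains all earlier noise (e.g.\ $\eta_0p_1\langle v_1,v_j\rangle\sim\eta_0/\sqrt d$ from step $0$). With a constant step size this caps SGD at roughly $d^{1/(2\alpha)}$ items. Achievability therefore requires the increasing schedule $\eta_t=\widetilde\Theta(d_{t+1}^\alpha)$. The matching upper bound in Theorem~\ref{thm:gd-multi} must hold for \emph{every} schedule. The paper handles this by conditioning on all $v$'s and on $u_1,\dots,u_{d_t}$, which freezes the accumulated vector $w=\sum_{s\le t}\eta_s\sum_{d_s<\ell\le d_t}q_\ell^{(s)}\langle v_j,v_\ell\rangle u_\ell$. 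Then $u_j^\top w$ and the $u_k^\top w$ are i.i.d.\ Gaussians, and $u_j^\top w$ wins only with probability $O(1/d_t)$ among the $\Theta(d_t)$ competitors with $\langle v_j,v_k\rangle\ge C/\sqrt d$. Gaussian-maxima superconcentration plays this role when $d_t\gtrsim d$. Your proposal supplies neither the achievability nor the upper-bound direction.

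\textbf{Muon: the recursion is right but the mechanism is not.} Your exponent bookkeeping is correct, but the signal is not $\propto p_j^{1/2}$, and after the first step nothing is being equalized. The key input is Lemma~\ref{lem:hojicha}. Once the head is deflated, the weighted covariance $\bM_t=\sum_{\ell>d_t}(q_\ell^{(t)})^2u_\ell u_\ell^\top$ satisfies $\|\bM_t\|_\op\lesssim d_t^{1-2\alpha}(\log d)^2/d$. So \emph{all} singular values of $\bbG_t$ lie below $\lam_t\asymp d_t^{1/2-\alpha}d^{-1/2}\log d$, and $h_{\lam_t}$ acts essentially linearly. The Daleckii--Krein slope then gives $\phi_t'(0)\asymp 1/\lam_t$, hence a signal $\Theta(q_j/\lam_t)$ that is linear in $q_j$. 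Moreover $q_j/\lam_t\lesssim\sqrt{d/d_t}/\log d=o(1)$ for all $j>d_t$; this uses $d_t\gg d$, which already holds after one Muon step. Consequently every interaction logit is $\widetilde O(1/\sqrt d)$ by the $\lam_t^{-1}$ operator-Lipschitz bound of Proposition~\ref{prop:operator-lipschitz} alone. No resolvent expansion or deterministic equivalent is needed beyond step $0$.

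Balancing $\eta q_j/\lam_t$ against $\eta\,\widetilde O(1/\sqrt d)$ gives $d_{t+1}^{-\alpha}\asymp d_t^{1/2-\alpha}/d$ up to logs. This is your recursion, and it coincides with SGD's second regime. Because each Muon increment has operator norm at most one while $1/\lam_t$ grows, a fixed $\eta$ suffices where SGD needs an exploding schedule.

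\textbf{Exact polar map.} Working with the exact polar map rather than $h_{\lam_t}$ would discard exactly the controls used above: the $\lam_t^{-1}$ Lipschitz constant and $|\phi_t''|\lesssim\lam_t^{-2}$. The polar map's perturbation constant instead scales like $1/s_{\min}(\bbG_t)$. The paper proves the result only for $h_{\lam_t}$ and treats the exact polar map heuristically (Section~\ref{sec:optimal}).
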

The main takeaways from our analysis are as follows.
\begin{enumerate}[topsep=0.5mm,itemsep=0.5mm]
    \item \textbf{Muon improves storage efficiency.} In the population regime $B \rightarrow \infty$, one step of Muon is able to recover the top~$d^{1 + \frac{1}{2\alpha}}$ items, \new{matching Newton's method,} while one step of SGD only recovers the top~$d^{\frac{1}{2\alpha}}$ (see Figure~\ref{fig:intro-population}). Noticeably, a single step of Muon is able to store more than~$d$ items, which is the maximal value when embeddings are constrained to be orthogonal. In other words, Muon  effectively stores more features than dimensions via \textit{superposition}.
    \item \textbf{The benefit of Muon comes at larger batch sizes.} When the batch size~$B$ is small, Muon and SGD both recover the top~$B^{\frac{1}{\alpha}}$ items. However, the performance of SGD saturates at a batch size of $B \asymp \sqrt{d}$, while Muon saturates at the much larger batch size of $B \asymp d^{\alpha + \frac12}$ (see Figure~\ref{fig:intro-batch}). This provides evidence that Muon has the ability to handle much larger batch sizes, and aligns with empirical observations that Muon significantly outperforms non-spectral optimizers only at large batch sizes~\citep{wen2025fantastic}.
    \item \textbf{Muon accelerates early in training.} The SGD exponent initially exhibits a slow linear scaling and requires $\lceil 2\alpha\rceil$ steps to reach $d_t\gtrsim d$, while Muon exceeds this in a single step. However, once SGD enters this regime, both updates obey the same recursion $d_{t+1} \sim d^\frac{1}{\alpha}d_t^{1-\frac{1}{2\alpha}}$ with the same convergence rate to the optimal $\widetilde{\Theta}(d^2)$ capacity. Thus, the main benefits of Muon (with appropriate batch size) appear earlier in training when gradients are strongly anisotropic, explaining the short-term gains observed in~\citep{semenov2025benchmarking}.
\end{enumerate}

The rest of the paper is organized as follows. In Section \ref{sec:setting}, we formally define the associative memory task and the family of spectral optimizers considered in our analysis. Section \ref{sec:one_step} contains our main results on the scaling of a single step of Muon, SGD, and Newton. In Section \ref{sec:new}, we argue that Muon is the asymptotically optimal one-step update, and extend our scaling analysis to multiple steps along the Muon and SGD trajectories. We conclude in Section~\ref{sec:experiments} with simulations verifying our predicted scaling laws and batch size analysis, as well as experiments with transformers on a more sophisticated in-context recall task.

\section{Related Work}

\paragraph{Associative memory and factual recall.} Associative memory has a long history in neural computation~\citep{willshaw1969non, Kohonen1972CorrelationMM, hopfield1982neural}. Recent work has shown that transformer weights can be viewed as associative memories storing input-output mappings between pairs of concepts~\citep{geva2021transformer, bietti2023birth, cabannesscaling, jiang2024llms}. This perspective is especially useful for modeling factual recall~\citep{allen2024physics}, where such mechanisms encode factual knowledge directly in the weights of a transformer~\citep{meng2022locating, geva2023dissecting, nichani2024understanding}.

Our most relevant points of comparison are \citet{wang2025muon, li2026muon}, which analyze Muon on similar associative memory tasks. Their analysis assumes that the embeddings $\{u_i\}_{i\in[N]}$ and $\{v_i\}_{i\in[N]}$ are pairwise orthogonal, greatly simplifying the study of the polar map. However, this assumption also limits the model capacity to at most $N \le d$ stored associations. By contrast, because $\bW$ has $d^2$ parameters, the information-theoretically optimal capacity is $\widetilde{\Theta}(d^2)$~\citep{nichani2024understanding}, which requires storing embeddings in superposition. In this regime, we show that a single Muon step already recovers $\widetilde{\Theta}\!\left(d^{1+\frac{1}{2\alpha}}\right)$ items, far beyond what is possible under orthogonality. We further derive scaling laws for the multi-step Muon and SGD dynamics, and show that both updates indeed approach the optimal capacity.

\paragraph{Theoretical analyses of Muon.} A number of recent works have sought to rigorously characterize the benefits of Muon and other matrix-based optimizers over SGD. One line of work derives convergence guarantees using tools from convex optimization and online learning. \citet{shen2025convergence,chen2025muon,kim2026convergence} prove convergence guarantees for Muon that depend on smoothness in the spectral norm or the spectral norm of the weight matrix itself. \citet{jiang2026adaptive} derive regret bounds and corresponding non-convex optimization rates. Beyond Muon, \citet{xie2025structured} develop a framework for analyzing convergence rates of a broad class of matrix preconditioners on smooth convex problems, while \citet{lau2025polargrad} adopt a structure-aware preconditioning perspective to introduce a new family of matrix-based optimizers.

A second line of work studies the loss reduction after a single descent step. \citet{davis2025spectral} show that Muon achieves a larger one-step loss reduction than SGD when the gradient rank exceeds the activation rank. \citet{su2025isotropic} introduces an ``isotropic curvature model'' based on a single optimization step and show that Muon is optimal in certain regimes. However, \citet{gonon2026insights} demonstrate that such single-step arguments can fail to predict full end-to-end convergence rates.

Finally, other works compare Muon and SGD on specific problem classes. \citet{fan2025implicit} show that, for separable classification, Muon converges to the solution maximizing the spectral-norm margin. For matrix-valued linear regression, \citet{wang2025high} characterize the risk of Muon on isotropic data, while \citet{vasudeva2025muon} show faster convergence than SGD under imbalanced covariates. \citet{ma2026preconditioning} further show that, in matrix factorization, Muon attains a convergence rate faster than SGD and independent of the condition number.

\paragraph{Adaptivity to heavy-tailed data.} Our results show that Muon is particularly effective when the fact distribution is power-law distributed with heavy tail. Similar advantages have also been observed for other adaptive optimizers. \citet{kunstner2024heavy,yadav2025provable} show that Adam and its limiting variant SignSGD outperform SGD when the class distribution follows a power law. \citet{kunstner2025scaling} further prove that SignSGD outperforms SGD for learning a bigram model, while \citet{kimscaling} derive scaling laws for SignSGD in the power-law random features model.

\section{Setting: Associative Memory}\label{sec:setting}

\paragraph{Linear associative memory.} The goal of the associative memory task is to store a collection of atomic associations, or \emph{facts}. Let~$[N]$ be the input and output vocabulary. A set of facts is defined by a bijection $f^*:[N] \rightarrow [N]$, where the input token~$i$ is mapped to the output token $f^*(i)$. Each token is assigned an embedding vector $v_i \in \mathbb{R}^d$ and an unembedding vector $u_i \in \mathbb{R}^d$, sampled i.i.d. from the distribution $\cN(0, \frac{1}{d}\bI_d)$. Without loss of generality, we will assume that~$f^*(i) = i$ for all $i \in [N]$. As an illustrative example, consider the set of countries \textcolor{red!60!black}{$\mathcal{S} = \{\text{USA, France, Japan,}\dots\}$} and the set of capitals \textcolor{blue!60!black}{$\mathcal{A} = \{\text{Washington~D.C., Paris, Tokyo} \dots\}$}, with the goal being to store the mapping between each country and its capital. The embeddings of the countries \textcolor{red!60!black}{USA, France, Japan, $\dots$} are \textcolor{red!60!black}{$v_1, v_2, v_3, \dots$} respectively, and the embeddings of the capitals \textcolor{blue!60!black}{Washington~D.C., Paris, Tokyo, $\dots$} are \textcolor{blue!60!black}{$u_1, u_2, u_3, \dots$} respectively.

We consider training a linear associative memory model, given by a weight matrix $\bW\in\RR^{d\times d}$, to store the fact dataset as the following multi-class classification problem. The score prediction for the unembedding token $u_j$ associated to $v_i$ is defined as
\begin{align*}
    \hat p_{\bW} (j \mid i) := \frac{\exp(u_j^\top \bW v_i)}{\sum_{k \in [N]} \exp(u_k^\top \bW v_i)}, \quad\forall j\in[N].
\end{align*}
Let $p \in \Delta^N$ denote the vector of probabilities of each item in the dataset. The population cross-entropy loss is then defined as
\begin{align}
    L(\bW) := \EE{i \sim p}{-\log p_{\bW}(i \mid i)} = - \sum_{i \in [N]}p_i \Bigg(u_i^\top \bW v_i - \log \sum_{j \in [N]} \exp(u_j^\top \bW v_i)\Bigg).
    \label{eq:population-loss}
\end{align}
We assume that $p$ follows a power law, $p_i \sim i^{-\alpha}$ with exponent $\alpha>1$. This condition is motivated by Zipf's law in statistical linguistics, which states that the frequency of a word decays approximately as a power of its rank, indicating that such heavy-tailed structure arises naturally in language~\citep{piantadosi2014zipf}. Such a power law source condition is common in prior analyses of scaling laws~\citep{caponnetto2007optimal,michaud2023quantization,bordelon2024dynamical, lin2024scaling, paquette20244+, ren2025emergence, kunstner2025scaling,li2026muon}.

We will consider optimizing $\bW$ via the minibatch variant of Muon and SGD. Let $B$ be the batch size. A \emph{minibatch} $\cB$ is defined as a collection of tokens $\cB := \{i_1, \dots, i_B\}$, where each token is sampled i.i.d from~$p$. The loss on a minibatch $\cB$ is defined by
\begin{align*}
    L(\bW; \cB) := \frac{1}{B}\sum_{i \in \cB} - \log \hp_{\bW}(i \mid i) = - \sum_{i \in [N]}q_i \Bigg(u_i^\top \bW v_i - \log \sum_{j \in [N]} \exp(u_j^\top \bW v_i) \Bigg),
\end{align*}
where $q_i := \frac{1}{B}\sum_{j \in \cB} \mathbf{1}_{\{i=j\}}$ are the empirical frequencies of each token in the batch $\cB$. The negative gradient at some $\bW$ is thus 
\begin{align}\label{eq:gradient}
    -\nabla_\bW L(\bW;\cB) = \sum_{i\in[N]} q_i\Bigg(u_i - \sum_{j\in[N]} u_j \hat p_\bW(j \mid i)\Bigg)v_i^\top.
\end{align}

\paragraph{Muon.} The Muon optimizer~\citep{jordan2024muon} directly operates on weight matrices. Let $\bG = -\nabla_{\bW}L(\bW; \cB)$ be the negative gradient of the loss (we will omit momentum in our treatment). Denote by $\bG = \bU \bS \bV^\top$ the singular value decomposition (SVD) of $\bG$. The polar map is defined as $\pol(\bG) := \bU\bV^\top$; if $\bG$ is full rank, then $\pol(\bG) = \bG(\bG^\top\bG)^{-1/2}$. The Muon update is
\begin{align*}
    \bW \leftarrow \bW + \eta \cdot \pol(\bG).
\end{align*}
In practice, rather than computing the exact SVD, one instead approximates $\pol(\bG)$ via a constant number of \emph{Newton--Schulz} iterations. Let $\varphi(z)$ be a quadratic or higher-order polynomial. A single Newton--Schulz iteration computes the mapping
\begin{align*}
    \bG\mapsto \bG\varphi(\bG^\top\bG) = \bU \bS\varphi(\bS^2) \bV^\top.
\end{align*}
The output of multiple steps of Newton--Schulz is thus of the form $\bU h(\bS)\bV^\top$, where $h(z)$ is the function obtained by composing $z \mapsto z\varphi(z^2)$ with itself multiple times. $\varphi$ is typically chosen so that $h(z) \approx 1$. This motivates a broad class of \emph{spectral optimizers}: given a function $h: \R_{\ge 0} \rightarrow \R_{\ge 0}$ satisfying $h(0) = 0$, one can define the spectral map $h(\bG) = \bU h(\bS)\bV^\top$ and update the weight matrix as $\bW \leftarrow \bW + \eta h(\bG)$. Within this scheme, gradient descent corresponds to $h(z) = z$, while exact Muon corresponds to $h(z) = \sgn(z)$.

\begin{wrapfigure}{r}{0.22\textwidth}
    \vspace{-1.3\baselineskip}
    \centering
    \includegraphics[width=\linewidth]{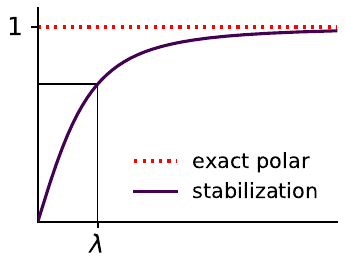}
    \vspace{-1.5\baselineskip}
\end{wrapfigure}

In this work, we will focus on a stabilized approximation to Muon: $h_\lam(z) = \frac{z}{\sqrt{z^2 + \lambda^2}}$ for a hyperparameter $\lambda$ (see right), which as we will see determines the `resolution' of the singular spectrum. The limit $\lam\to 0^+$ recovers the exact polar map. Given schedules $\{\eta_t\}_{t \ge 0}$, $\{\lam_t\}_{t \ge 0}$, the Muon updates $\{\bW_t\}_{t \ge 0}$ are defined by
\begin{align}\label{eq:muon-iterates}
    \bW_{t+1} = \bW_t + \eta_t \bG_t (\bG_t^\top\bG_t + \lam_t^2 \bI_d)^{-1/2}, \quad \bG_t := -\nabla_{\bW}L(\bW_t; \mathcal{B}_t)
\end{align}
initialized at $\bW_0 = 0_{d \times d}$. We also denote the estimated scores by~$\hp_t = \hp_{\bW_t}$.

\begin{remark}
We consider the stabilized approximation $h_\lambda$ primarily for technical convenience. In Section \ref{sec:optimal}, we give a heuristic argument that this update is asymptotically optimal for sufficiently small~$\lam$, and moreover the exact polar map should attain the same rates. A similar smoothed approximation is considered in \citet{jiang2026adaptive}, and, unlike Muon, is shown to converge on non-smooth, non-convex problems. Relatedly, \citet{gonon2026insights} demonstrate problems where using a constant number of Newton-Schulz iterations outperforms the exact polar map.
\end{remark}

\section{One Step of Muon}\label{sec:one_step}

\subsection{One-step recovery of Muon}

%Recall that the weight matrix~$\bW$ recovers item~$i$ iff the diagonal or \emph{signal} logit $u_i^\top\bW v_i$ dominates the off-diagonal or \emph{interaction} logits $u_j^\top\bW v_i$ for all $j\ne i$.
We say that the weight matrix~$\bW\in\RR^{d\times d}$ \emph{recovers} or \emph{stores} item~$i$ if $\argmax_{j\in[N]} \hp(j\mid i)=i$, i.e., the diagonal or \emph{signal} logit ($j=i$) dominates all off-diagonal or \emph{interaction} logits ($j\ne i$):
\begin{align}\label{eq:recovery}
u_i^\top \bW v_i > \max_{j \neq i} u_j^\top \bW v_i.
\end{align}
Our main result sharply characterizes the set of recovered items after one Muon update.

\begin{theorem}[one-step recovery of Muon]\label{thm:main}
Let $u_i,v_i$ for $i\in[N]$ be i.i.d. $\cN(0,\frac1d\bI_d)$ vectors. Let $\bG_0 =-\nabla_{\bW} L(\bW_0;\cB)$ be the negative gradient at initialization of the empirical loss on a minibatch~$\cB$ of size~$B$, or the population loss (equivalently $B=\infty$). Suppose $N=\poly(d)$, $N\gtrsim d^{2\alpha+2}$ and set
\begin{align*}
\lam \asymp \max\left\{\frac{(\log d)^{2\alpha+2}}{d^\alpha}, \frac{(\log d)^2}{B} \right\}.
\end{align*}
Then with high probability, the one-step Muon update $\bW_1^{\Muon} \propto h_\lam(\bG_0)$ recovers all items up to
\begin{align}\label{eq:muon-recovery}
i\lesssim \min\left\{ i^\star, B^\frac{1}{\alpha} (\log d)^{-\frac{1}{\alpha}}\right\}, \quad i^\star \asymp d^{1+\frac{1}{2\alpha}} (\log d)^{-2-\frac{5}{\alpha}}.
\end{align}
\end{theorem}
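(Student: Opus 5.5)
At $\bW_0=0$ all scores are uniform, $\hp_0(j\mid i)=1/N$. The gradient \eqref{eq:gradient} therefore becomes
\[
\bG_0=\sum_i q_i(u_i-\bar u)v_i^\top,\qquad \bar u=\tfrac1N\sum_j u_j .
\]
Since $N\gtrsim d^{2\alpha+2}$, the mean $\bar u$ has norm $\approx N^{-1/2}$ and is a negligible perturbation, so I will first study $\bG:=\bU_{\cB}\bQ\bV_{\cB}^\top=\sum_i q_i u_iv_i^\top$ and control the $\bar u$ term at the end by a perturbation bound. The plan is to write
\[
h_\lam(\bG)=\bG(\bG^\top\bG+\lam^2\bI)^{-1/2},
\]
and, using the integral representation $(\bA+\lam^2)^{-1/2}=\frac{2}{\pi}\int_0^\infty(\bA+\lam^2+s^2)^{-1}\,ds$, to reduce everything to resolvent quantities $u_j^\top\bG(\bG^\top\bG+\mu)^{-1}v_i$. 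For each item $i$ I apply a leave-one-out decomposition $\bG=q_iu_iv_i^\top+\bG^{(-i)}$ and use Sherman--Morrison twice (the update is rank one on each side of $\bG^\top\bG$). This expresses the signal logit $u_i^\top h_\lam(\bG)v_i$ and the interaction logits $u_j^\top h_\lam(\bG)v_i$ through scalars such as $v_i^\top(\bG^{(-i)\top}\bG^{(-i)}+\mu)^{-1}v_i$. Because $v_i,u_i$ are independent of $\bG^{(-i)}$, these scalars concentrate around normalized traces $\frac1d\operatorname{tr}(\cdot)$.

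The second step is a deterministic-equivalent description of the spectrum of $\bG^{(-i)}$. Its weights $q_k$ follow a power law: the top $k$ directions carry singular values $\approx k^{-\alpha}$ while $k\lesssim d$, and beyond that the tail $\sum_{k>d}q_k u_kv_k^\top$ acts as a noise matrix with operator norm $\approx(\sum_{k>d}q_k^2)^{1/2}\approx d^{1/2-\alpha}$. The signal logit comes out roughly as $q_i\,m(\lam)$, where $m$ is the effective inverse resolution $\frac1d\operatorname{tr}(\bG^{(-i)\top}\bG^{(-i)}+\lam^2)^{-1/2}$. With $\lam\asymp d^{-\alpha}$ (up to logs), this gives $m\approx \lam^{-1}\cdot(\text{fraction of spectrum below }\lam)$. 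The interaction logits are mean-zero Gaussian given $\bG^{(-i)}$. Their variance is $\approx\frac1d\|h_\lam(\bG^{(-i)})v_i\|^2\lesssim \frac1d\cdot\frac{1}{d}\operatorname{tr}h_\lam^2\approx 1/d$, up to the cross term through $u_i$. Recovery then requires the signal to exceed $\sqrt{(\log N)/d}$. Balancing $q_i\asymp i^{-\alpha}$ against this threshold with the computed $m$ should produce $i^\star\asymp d^{1+1/(2\alpha)}$; the log factors come from the maximum over $N=\poly(d)$ items and from $\lam$. The minibatch cap $i\lesssim(B/\log d)^{1/\alpha}$ arises because items with $Bp_i\lesssim\log d$ may have $q_i=0$ or unreliable $q_i$. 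Items with $q_i\ge\lam$ resolve at the $\lam\gtrsim(\log d)^2/B$ scale, so the empirical $q_i$ concentrates to $p_i$ up to constants on the recovered range.

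The third step is to verify, uniformly over $i\le\min\{i^\star,(B/\log d)^{1/\alpha}\}$ and all $j\ne i$, that the signal exceeds the interactions with high probability. I would use Hanson--Wright bounds for the quadratic forms, a union bound over $N^2=\poly(d)$ pairs, and operator-norm bounds on $\bG^{(-i)}$ via matrix Bernstein and Gaussian concentration. Along the way I must also bound the difference between the $(-i)$ and $(-i,-j)$ resolvents, since $u_j$ appears inside $\bG$.

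The main obstacle is the second step: obtaining sharp two-sided control of the trace functionals $\frac1d\operatorname{tr}\,g(\bG^{(-i)\top}\bG^{(-i)})$ for a non-square, heavy-tailed weighted Gaussian sum. Free-probability equivalents are not directly available here. My first attempt would split $\bG^{(-i)}$ into a head of rank about $d/\log d$ and a tail. The tail's spectrum I would sandwich between constant multiples of $d^{1/2-\alpha}$ using Gaussian concentration, after checking that $N\gtrsim d^{2\alpha+2}$ makes its smallest singular value comparable to its largest. The head I would treat as a low-rank perturbation that removes at most a $1/\log d$ fraction of the trace, which is where the polylog losses in $\lam$ and $i^\star$ enter.
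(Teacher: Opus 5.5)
\textbf{The gap: interactions between two frequent items.} The step that fails is the bound on the interaction logits $u_j^\top h_\lam(\bG)v_i$ for pairs of frequent items, $i\ne j$ with $q_i,q_j\gg\lam$ (roughly the top $d/\mathrm{polylog}(d)$ items). The paper treats exactly this as the crux.

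You assert that $u_j^\top h_\lam(\bG)v_i$ is mean-zero Gaussian given $\bG^{(-i)}$. But $u_j$ enters $\bG^{(-i)}$ through the spike $q_ju_jv_j^\top$, so this conditioning is not available.

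The repair you mention is bounding the difference between the $(-i)$ and $(-i,-j)$ quantities. For these pairs it cannot be done by perturbation bounds:
\begin{itemize}
\item reinserting $q_ju_jv_j^\top$ moves $h_\lam$ by $\Theta(1)$ in operator norm, and the Lipschitz bound only gives $q_j/\lam$, which can be as large as $d^{\alpha}$;
\item it moves $(\bG^\top\bG+\mu)^{-1}$ by order $\mu^{-1}$.
\end{itemize}
So controlling that difference inside the bilinear form \emph{is} the problem, not a side step.

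Your variance heuristic is valid only in the regime of the paper's Lipschitz argument (Section~\ref{sec:gl}). Its constant $2+16q_j/\lam$ makes it useful only when $q_i$ or $q_j\lesssim\lam$. For the remaining top block, the paper spends most of Appendix~\ref{sec:interaction} on a joint block-resolvent expansion over the top $r\asymp d/(\log d)^2$ items, positivity of path correlations, and hypercontractivity. Your plan has no mechanism for this block, and the ``main obstacle'' you identify lies elsewhere.

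\textbf{How to close it within your framework.} The missing idea is to keep both spikes exactly rather than bound their effect.
\begin{itemize}
\item Set $\delta^2=\lam^2+s^2$ and $\bH:=\bG^{(-i,-j)}$, and let $\tbG,\tbH$ be the Hermitian dilations of $\bG,\bH$. Then $\tbG=\tbH+\bX\bC\bX^\top$, where $\bX\in\R^{2d\times 4}$ has columns $(u_i;0),(u_j;0),(0;v_i),(0;v_j)$ and $\bC=\bmat{0&\bQ_2\\ \bQ_2&0}$ with $\bQ_2=\diag(q_i,q_j)$.
\item Woodbury gives $\bX^\top(\tbG-\mathrm{i}\delta)^{-1}\bX=(\bS^{-1}+\bC)^{-1}$ with $\bS=\bX^\top(\tbH-\mathrm{i}\delta)^{-1}\bX$. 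The $(u_j,v_i)$ entry of this matrix is $u_j^\top\bG(\bG^\top\bG+\delta^2)^{-1}v_i$.
\item Independence and Hanson--Wright, applied on a polynomial net in $s$, give $\bS=\mathrm{i}a_\delta(\bI_4+\bF)$. Here $a_\delta=\frac{\delta}{d}\operatorname{tr}(\bH\bH^\top+\delta^2)^{-1}\le\delta^{-1}$ and $\nnorm{\bF}_\op\lesssim\sqrt{\log d/d}$, as soon as $a_\delta\delta\gtrsim 1$.
\item The matrix $-\mathrm{i}a_\delta^{-1}\bI_4+\bC$ is normal, block-diagonal by item, and its inverse has norm at most $a_\delta$ \emph{uniformly in} $q_i,q_j$. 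Hence the cross entry is $O(a_\delta\nnorm{\bF}_\op)=O(\delta^{-1}\sqrt{\log d/d})$.
\item The $s$-integral then costs only a factor $\log(1/\lam)$: cut at $s=\sqrt d$ and use $\nnorm{\bG}_\op/\delta^2$ beyond.
\item The one-spike version gives the signal $\gtrsim\min\{q_i/\lam,1\}-\widetilde O(d^{-1/2})$. This replaces your linearization $q_i m(\lam)$, which is valid only for $q_i\lesssim\lam$.
\end{itemize}
This is a genuinely different route from the paper's, and a lighter one.

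\textbf{Step 2 is unnecessary and also wrong as stated.} The route above needs only one spectral input, $a_\delta\delta\gtrsim1$. That is the one-sided bulk bound $s_{d/2}(\bH)\lesssim\lam$. It follows from $\bH\deq d^{-1/2}\bM_{-i,-j}^{1/2}\bZ$ with $\bM=\sum_kq_k^2u_ku_k^\top$ and $\lambda_{d/2}(\bM)\lesssim d^{-2\alpha}(\log d)^2$ (Lemmas~\ref{lem:symmetrize} and~\ref{lem:spectrum-new}). No deterministic equivalent is needed.

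Two claims in your step 2 are incorrect:
\begin{itemize}
\item The tail $\sum_{k>d}q_ku_kv_k^\top$ has operator norm $\approx\max\{\nnorm{q_{>d}}_\infty,\nnorm{q_{>d}}_2/\sqrt d\}\approx d^{-\alpha}$ up to logs. The value $d^{1/2-\alpha}$ is its Frobenius norm. If its spectrum really sat at $d^{1/2-\alpha}\gg\lam$, you would get $m(\lam)\approx d^{\alpha-1/2}$ and only $i\lesssim d$.
\item The condition $N\gtrsim d^{2\alpha+2}$ does not make the tail well conditioned. The tail is a square matrix distributed as $d^{-1/2}\bM_{\mathrm{tail}}^{1/2}\bZ$ with $\bZ$ a $d\times d$ Gaussian matrix. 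Its smallest singular value sits a factor of order $d$ below its largest, for every $N$ (cf.\ Lemma~\ref{lem:sval}). That hypothesis only serves to make the centering $\bar u$ negligible after the $\lam^{-1}$-Lipschitz bound.
\end{itemize}
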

This bound is tight (up to polylog factors) in the sense that for items $i\gg i^\star$, the signal and interaction terms in Eq.~\eqref{eq:recovery} will be of the same order, so recovery cannot be guaranteed; moreover, items $i\gg B^{1/\alpha}$ have vanishing probability to even be observed in the minibatch~$\cB$, and hence will only be learned sporadically.
From this, we see that the \emph{critical batch size}, beyond which increasing~$B$ no longer yields gains in recovery, is
\begin{align*}
B_{\Muon}^\star = \widetilde{\Theta}((i^\star)^\alpha) = \widetilde{\Theta}(d^{\alpha+\frac12}),
\end{align*}
and this allows us to recover $\widetilde{\Theta}(d^{1+\frac{1}{2\alpha}})$ items. We also note that the condition $N\gtrsim d^{2\alpha+2}$ can be removed by considering correlation loss.

As a corollary, we obtain the following guarantee for the loss decrease after one step.

\begin{corollary}\label{cor:muon-loss}
Taking the learning rate $\eta \asymp (\log d)^{-4}\sqrt{d}$ in the setting of Theorem~\ref{thm:main}, the one-step update $\bW_1^{\Muon} = \eta h_\lam(\bG_0)$ achieves loss
\begin{align}\label{eq:muon-loss}
L(\bW_1^{\Muon}) \le \widetilde{O} \qty(\max\left\{d^{\frac{1}{2\alpha}+\frac12-\alpha}, B^{\frac{1}{\alpha} -1}\right\})
\end{align}
and moreover no item is significantly misclassified, that is, $\hp_1(i\mid i) \ge (1-o(1))\sup_{j\ne i}\hp_1(j\mid i)$ for all $i\in[N]$.
\end{corollary}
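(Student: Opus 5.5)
The corollary follows from Theorem~\ref{thm:main}, provided we extract from its proof a quantitative margin statement rather than only the sign condition \eqref{eq:recovery}. Write $\bW_1 = \eta h_\lam(\bG_0)$ and, for each item $i$, define the signal $s_i := u_i^\top h_\lam(\bG_0) v_i$ and the interactions $r_{ij} := u_j^\top h_\lam(\bG_0) v_i$ for $j\ne i$.

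The first step is to record two estimates from the proof of Theorem~\ref{thm:main}. Lower bounds on the signal should hold in the form
\[
s_i \gtrsim (\log d)^{c}\, d^{-1/2}
\]
for all recovered items $i \lesssim \min\{i^\star, B^{1/\alpha}(\log d)^{-1/\alpha}\}$. For all $i$, we need a uniform bound of the form
\[
\max_{j\ne i} |r_{ij}| \lesssim d^{-1/2}\,\polylog(d)
\]
on the interactions, together with $s_i \ge -o(\eta^{-1})$ for unrecovered items. The uniform bound on $r_{ij}$ follows from $\|h_\lam(\bG_0)\|_{\op}\le 1$ plus Gaussian concentration of the bilinear forms $u_j^\top \bM v_i$, with a union bound over $N^2=\poly(d)$ pairs. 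I expect the proof of the theorem to separate signal from noise at the scale $\sqrt d$, and the choice $\eta \asymp (\log d)^{-4}\sqrt d$ matches this. With this $\eta$, recovered items have margin $\eta(s_i - \max_j r_{ij}) \gtrsim (\log d)^{c-4} \to\infty$. Meanwhile every logit gap for any item is at most $\eta \cdot O(d^{-1/2}\polylog) $, which is $O(1)$ or $o(1)$ once the log powers are matched.

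The second step bounds the per-item loss. For a recovered item,
\[
-\log \hp_1(i\mid i) = \log\Big(1+\sum_{j\ne i} e^{\eta(r_{ij}-s_i)}\Big) \le N e^{-\eta(s_i-\max_j r_{ij})}.
\]
The margin must beat $\log N = O(\log d)$, which is where the polylog slack in $\eta$ is spent. This makes the recovered-item contribution $o(1/\poly(d))$. For an unrecovered item, all logits are $O(\eta\, d^{-1/2}\polylog)=O(1)$ in magnitude. Since $\hp_1(i\mid i)$ is then comparable to $1/N$ up to constants, the loss is at most $\log N + O(1) = O(\log d)$. The same bound gives the no-significant-misclassification claim: $\eta\max_j(r_{ij}-s_i) = o(1)$ yields $\hp_1(i\mid i)\ge(1-o(1))\sup_{j\ne i}\hp_1(j\mid i)$.

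The third step sums over items. The loss is at most $O(\log d)\sum_{i>k} p_i + o(1)$, where $k = \min\{i^\star, B^{1/\alpha}(\log d)^{-1/\alpha}\}$. Using $\sum_{i>k} p_i \asymp k^{1-\alpha}$, this gives $(i^\star)^{1-\alpha} = \widetilde O(d^{(1+\frac{1}{2\alpha})(1-\alpha)}) = \widetilde O(d^{\frac1{2\alpha}+\frac12-\alpha})$ and $B^{\frac1\alpha-1}$, which is \eqref{eq:muon-loss}.

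The main obstacle is the first step. Showing that unrecovered items are not misclassified by a large margin needs a two-sided uniform control of $s_i$ and $r_{ij}$ at the precise scale $d^{-1/2}$, with log powers compatible with $\eta\asymp(\log d)^{-4}\sqrt d$. I would try to read the needed constants off the signal/interaction decomposition in the proof of Theorem~\ref{thm:main} and adjust polylog factors accordingly. If the bound on the interactions there is only one-sided, I would add a separate concentration argument for $u_j^\top h_\lam(\bG_0) v_i$ conditioned on the leave-one-out gradient.
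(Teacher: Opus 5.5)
Your three-step skeleton is the paper's proof. You multiply the signal lower bound and the interaction bound by $\eta\asymp(\log d)^{-4}\sqrt d$. This gives a margin $\gtrsim(\log d)^2$ below a cutoff $i'$, so $\hp_1(i\mid i)=1-d^{-\omega(1)}$ there. Every other item gets a margin $\ge -O(1/\log d)$, so its loss is $\le\log N+o(1)$ and it is not significantly misclassified. Finally you sum $\sum_{i>i'}p_i\log N\asymp (i')^{1-\alpha}\log d$.

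\textbf{The gap.} The problem is the one justification you supply yourself. The bound $\max_{j\ne i}|u_j^\top h_\lam(\bG_0)v_i|\lesssim d^{-1/2}\polylog(d)$ does \emph{not} follow from $\nnorm{h_\lam(\bG_0)}_{\op}\le 1$ plus Gaussian concentration of a bilinear form.
\begin{itemize}
\item $h_\lam(\bG_0)$ depends on $u_j$ and $v_i$ through the spikes $q_ju_jv_j^\top$ and $q_iu_iv_i^\top$, so there is no fixed matrix to condition on.
\item The operator-norm bound alone gives only $O(1)$, which is useless after multiplying by $\eta$.
\item Your fallback, concentration conditioned on the leave-one-out gradient, fails for the same reason. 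The map $(u_j,v_j)\mapsto\gamma_{ij}$ is only $(2+16q_j/\lam)$-Lipschitz (Lemma~\ref{lem:gl-iota}), and for the leading items $q_j/\lam=\widetilde\Theta(d^\alpha)$.
\end{itemize}
Bounding the block $i,j\le r\asymp d/(\log d)^2$ is the main technical content of the paper: Proposition~\ref{prop:interaction}, proved via the resolvent expansion, positive path correlation and hypercontractivity in Appendix~\ref{sec:interaction}. For the corollary you should simply import it. It gives the two-sided bound $|\gamma_{ij}|\lesssim(\log d)^3/\sqrt d$, transferred from $\bG$ to $\bG_0$ via Proposition~\ref{prop:operator-lipschitz}. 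Hence $\eta|\gamma_{ij}|\lesssim 1/\log d$, and no fallback is needed. Likewise, the lower bound $s_i\ge -o(\eta^{-1})$ for unrecovered items is not automatic. It comes from monotonicity of $\phi$ (Lemma~\ref{lem:phi-prime}) together with $|\phi(0)|\lesssim\sqrt{\log d/d}$; that is, Eq.~\eqref{eq:signal-guarantee} holds for every item.

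\textbf{Smaller bookkeeping points.}
\begin{itemize}
\item At the cutoff $i^\star$ of Theorem~\ref{thm:main}, the signal exceeds the interactions only by $\asymp(\log d)^3/\sqrt d$, so $\eta$ times this gap is $O(1/\log d)$. To get a margin $(\log d)^2\gg\log N$ you must shrink the cutoff to $i'=i^\star/\polylog(d)$, requiring $q_i/\lam\gtrsim(\log d)^6/\sqrt d$. This is harmless inside $\widetilde O$.
\item The recovered items then contribute $d^{-\omega(1)}$. That, rather than the ``$+o(1)$'' in your third step, is what you need, since the target rate is polynomially small.
\item For items above the cutoff, do not claim all logits are $O(1)$: items just above $i'$ can have a large signal. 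You only need $\hp_1(i\mid i)\ge 1/(1+Ne^{o(1)})$, which uses just the lower bound on $s_i$ and the upper bound on the interactions.
\end{itemize}
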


The proof of Theorem~\ref{thm:main} is developed throughout Appendices~\ref{sec:a},~\ref{sec:interaction}; a sketch of the main ideas is provided in Section~\ref{sec:sketch}. Here, we make some basic observations. From Eq.~\eqref{eq:gradient}, the gradient of the cross-entropy loss~$L(\bW)$ at initialization is roughly $\bG_0 \approx \sum_i p_iu_iv_i^\top$. If the embeddings~$u_i, v_i$ were orthogonal, Muon would then output $\bW_1^{\Muon} \propto h_\lam(\bG_0) \approx \sum_i u_iv_i^\top$ which already classifies all items correctly; however this constrains the number of items~$N\le d$, far less than the information-theoretic optimum~$\widetilde{\Theta}(d^2)$. In our non-orthogonal setting,~$N$ can be much larger, but we must now account for the correlations between embeddings.

Let us now focus on each signal term $u_i^\top h_\lam(\bG_0) v_i$. The main contribution comes from the aligned rank-one spike $p_iu_iv_i^\top$ in the gradient~$\bG_0$. \new{We quantify this through an \emph{add-back-in} argument: starting from the leave-one-out component $\bG_{-i} = \sum_{j\ne i} p_ju_jv_j^\top$, we analyze how quickly the logit grows as the $u_iv_i^\top$~spike coefficient increases from~$0$ to~$p_i$.} At the same time, a large fraction of the singular values of~$\bG_0$ lie below~$d^{-\alpha}$. The map $h_\lam$ amplifies these by a factor of $\lam^{-1}\sim d^\alpha$, which also boosts the logit growth rate by the same factor, allowing us to recover items with lower frequencies~$p_i$. Thus, $\lam$ effectively acts as a \emph{scale of resolution} for the singular spectrum; the implications of this is discussed in Section~\ref{sec:optimal}.

\subsection{One-step recovery of SGD and Newton}
\label{subsec:SGD-newton}

\paragraph{Stochastic gradient descent.} In contrast with Theorem~\ref{thm:main}, we next prove a tight bound on the number of recovered items for vanilla SGD on the same objective.

\begin{theorem}[one-step recovery of SGD]\label{thm:gd}
In the setting of Theorem~\ref{thm:main} and $N\gtrsim d$, the number of items recovered by the one-step SGD update $\bW_1^{\SGD} = \eta\bG_0$ is
\begin{align*}
\widetilde{\Theta}\qty(\min\left\{d^\frac{1}{2\alpha},B^\frac{1}{\alpha}\right\})
%\min\left\{d^\frac{1}{2a}(\log d)^{1+\frac1a}, B^\frac{1}{a}\right\}
\end{align*}
with high probability. In addition, for any choice of learning rate $\eta$, the loss is lower bounded as
\begin{align*}
L(\bW_1^{\SGD}) \ge\widetilde{\Omega} \qty(\max\left\{ d^{\frac{1}{2\alpha}-\frac12}, B^{\frac{1}{\alpha}-1}\right\}). 
\end{align*}
\end{theorem}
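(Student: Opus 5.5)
Since the SGD update is $\bW_1^{\SGD}=\eta\bG_0$ and recovery in \eqref{eq:recovery} is scale invariant, the learning rate plays no role in which items are recovered. It suffices to analyze the logits of $\bG_0$. At $\bW_0=0$ every $\hp_0(j\mid i)=1/N$, so by \eqref{eq:gradient}
\[
\bG_0=\sum_i q_i(u_i-\bar u)v_i^\top,\qquad \bar u:=\frac1N\sum_j u_j .
\]
Since $\|\bar u\|\approx N^{-1/2}$ and $N\gtrsim d$, the $\bar u$ correction is lower order, and I will treat $\bG_0\approx\sum_k q_ku_kv_k^\top$.

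For item $i$, the signal logit is $u_i^\top\bG_0v_i\approx q_i\|u_i\|^2\|v_i\|^2+\sum_{k\ne i}q_k(u_i^\top u_k)(v_k^\top v_i)$. The first term is $\approx q_i$. The cross terms are independent mean-zero products of size $1/d$, so their total is of order $\|q\|_2/d$ up to logs. The interaction logit for $j\ne i$ is
\[
u_j^\top\bG_0v_i=q_i(u_j^\top u_i)\|v_i\|^2+\sum_{k\ne i}q_k(u_j^\top u_k)(v_k^\top v_i).
\]
Conditionally on $\{u_k,v_k\}_{k}$ with $u_j$ excluded, this is Gaussian in $u_j$ with variance $\frac1d\|\bG_0v_i\|^2$. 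Moreover $\|\bG_0v_i\|^2\approx q_i^2+\frac1d\sum_kq_k^2$, since the Gram structure of the $u_k$ is nearly isotropic. Taking the max over $N=\poly(d)$ values of $j$ costs a $\sqrt{\log N}$ factor, giving interaction size $\asymp\sqrt{\log d/d}\,\bigl(q_i+\|q\|_2/\sqrt d\bigr)$. Item $i$ is therefore recovered iff
\[
q_i\gg \sqrt{\log d}\,\|q\|_2/d .
\]
In the population case $\|p\|_2\asymp1$ because $\alpha>1$, so the condition reads $i^{-\alpha}\gg d^{-1}$ up to logs.

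This naive threshold would give $i\lesssim d^{1/\alpha}$, which does not match the claimed $d^{1/(2\alpha)}$. I must have lost a factor, and I suspect it comes from the interaction $j=k$ for a frequent $k$. The term $q_k\|u_k\|^2(v_k^\top v_i)$ has size $q_k/\sqrt d$. With $k=1$ this is $\approx d^{-1/2}$, which dominates the other contributions. Taking the max over the top items, the largest interaction is roughly $\max_k p_k|v_k^\top v_i|\approx d^{-1/2}\sqrt{\log d}$. Recovery then requires $p_i\gtrsim d^{-1/2}$, i.e.\ $i\lesssim d^{1/(2\alpha)}$, matching the theorem. So the dominant competitors are the heavy items themselves, $u_j^\top\bG_0v_i\approx p_j v_j^\top v_i$.

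For the upper bound, for $i\gg d^{1/(2\alpha)}$ I will show that with high probability $j=1$ (or some $j\le\polylog$) beats $i$. Since $v_1^\top v_i$ is Gaussian with standard deviation $d^{-1/2}$, $p_1v_1^\top v_i>p_i$ with constant probability for every such $i$. Using independence across $i$, a constant fraction of these items fail, and in fact choosing among $\polylog$ heavy $j$ gives a sharp high-probability failure. For the lower bound on recovery, I will combine a union bound with Gaussian concentration over all $j$ and $i\le \tilde c\,d^{1/(2\alpha)}$.

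For the batch version, replace $p$ by $q$. Items never sampled have $q_i=0$, so their signal is at noise level and they are not recovered reliably. This yields the $B^{1/\alpha}$ cap, using concentration of $\Bin(B,p_i)$ for $p_i\gtrsim \log d/B$. The frequent items keep $q_k\approx p_k$, so the threshold argument is unchanged.

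For the loss lower bound with any $\eta$, I will split into two regimes. If $\eta$ is small, the logits are $O(1)$ and every term of the loss is $\Omega(1)\cdot\log$. If $\eta$ is large, each misclassified item $i$, with $p_i$ below threshold, incurs loss $\gtrsim\eta\cdot(\text{margin gap})$. Optimizing over $\eta$ shows the tail mass is lost, with $\sum_{i\gtrsim d^{1/(2\alpha)}}p_i\asymp d^{(1-\alpha)/(2\alpha)}=d^{\frac1{2\alpha}-\frac12}$; similarly $B^{1/\alpha-1}$ is the mass of the unsampled items.

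\textbf{Main obstacle.} The hardest step is the matching upper bound: showing that tail items are misclassified with high probability despite the dependence between $v_i$ and the other terms, and showing that the loss bound holds uniformly over $\eta$.
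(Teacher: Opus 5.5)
Your final argument is essentially the paper's proof, once you drop the first computation. That computation fails because $\bG_0$ itself contains $q_ju_jv_j^\top$, so $u_j^\top\bG_0v_i$ is not conditionally Gaussian in $u_j$ until that term is split off. The corrected version matches the paper step for step: the dominant competitor is $q_j\langle v_i,v_j\rangle$ from the heaviest items; items with $q_i\gtrsim\sqrt{\log d/d}$ (and $q_i\asymp p_i$ by Chernoff) are recovered; and items beyond the threshold, including unsampled ones, fail because conditioned on $v_i$ each of the $\Theta(\log d)$ heaviest $j$ independently has $\langle v_i,v_j\rangle\ge C/\sqrt d$ with constant probability, so w.h.p.\ one of them wins.

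The loss step you flag as an obstacle needs no case split over $\eta$. A non-recovered item has some $j\ne i$ with $\hp_1(j\mid i)\ge\hp_1(i\mid i)$, hence $\hp_1(i\mid i)\le\frac12$ for every $\eta$, and summing $p_i\log 2$ over the unrecovered items gives the bound.
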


The intuitive reason for this threshold is that an item beyond this has signal $p_i\ll 1/\sqrt{d}$, which is drowned out by the noise of order $p_j/\sqrt{d}$ from the $j\lesssim \log d$ most frequent items, and so is unlikely to have the highest score. 
Thus, the number of items recovered by Muon greatly improves upon that of SGD by a factor of~$d$. Furthermore, the critical batch size for SGD is $B_{\SGD}^\star = \widetilde{\Theta}(\sqrt{d})$, which is much smaller compared to $B_{\Muon}^\star = \widetilde{\Theta}(d^{\alpha+\frac12})$.

\new{

\paragraph{Newton's method.} To put the capacity gain of Muon into perspective, we now show that for the first optimization step, Muon in fact matches the recovery rate of Newton's method for linear associative memory. The Newton update is defined as the direction towards the minimizer of the local quadratic approximation to $L$:
\begin{align*}
\bW\gets\bW + \eta\cdot [\nabla_{\bW}^2 L(\bW;\cB)]^{-1} \bG.
\end{align*}

\begin{theorem}[one-step recovery of Newton]\label{thm:newton}
Denote the Hessian of the loss at initialization as $\cH = \nabla_{\bW}^2 L(\bW_0;\cB)$.
In the setting of Theorem~\ref{thm:main}, if $B=\widetilde{\Omega}(d^\alpha)$ and $\eta=\widetilde{\Theta}(1/\sqrt{d})$, the one-step Newton update $\bW_1^{\Newton}= \eta \cH^{-1}[\bG_0]$ achieves the same recovery rate Eq.~\eqref{eq:muon-recovery} and loss decrease Eq.~\eqref{eq:muon-loss} of Muon, up to log factors.
\end{theorem}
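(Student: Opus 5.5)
At initialization $\bW_0=0$, all scores are uniform, $\hp_{\bW_0}(j\mid i)=1/N$, so the Hessian has an explicit form and the Newton direction can be computed in closed form. The plan is to reduce $\cH^{-1}[\bG_0]$ to a spectral map of $\bG_0$ of the same type as $h_\lam$, and then invoke the machinery behind Theorem~\ref{thm:main}.

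\textbf{Step 1: compute $\cH$.} For each $i$ the logits are $z_{ij}=u_j^\top\bW v_i$. With uniform softmax, the Hessian of $-\log\hp(i\mid i)$ in $\bW$ is
\[
\Sigma_u\otimes v_iv_i^\top, \qquad \Sigma_u:=\frac1N\sum_j u_ju_j^\top-\bar u\bar u^\top, \quad \bar u=\frac1N\sum_j u_j.
\]
Hence
\[
\cH[\bDelta]=\Sigma_u\,\bDelta\,\Sigma_v^{q}, \qquad \Sigma_v^{q}:=\sum_i q_iv_iv_i^\top .
\]
Since $N\gg d$, concentration gives $\Sigma_u=\frac1d(\bI_d+O(\sqrt{d/N}))$, so up to negligible error
\[
\cH^{-1}[\bG_0]\approx d\,\bG_0(\Sigma_v^q)^{-1}.
\]
This requires $\Sigma_v^q$ to be invertible. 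That fails if fewer than $d$ distinct items appear in the batch, which I believe is the role of the condition $B=\widetilde\Omega(d^\alpha)$: it makes the number of distinct observed items $\gtrsim B^{1/\alpha}\gtrsim d$.

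\textbf{Step 2: relate to a spectral map.} Write $\bG_0=\bU_q\bD\bV_q^\top-\bar u\,(\ldots)^\top$ with $\bU_q=[u_i]$, $\bV_q=[v_i]$ and $\bD=\diag(q_i)$, so that $\bG_0\approx\bU_q\bD\bV_q^\top$. Also $\Sigma_v^q=\bV_q\bD\bV_q^\top$. The Newton direction is therefore $\bU_q\bD\bV_q^\top(\bV_q\bD\bV_q^\top)^{-1}$, which is a "right-whitened" gradient. Compare this with Muon:
\[
h_\lam(\bG_0)=\bG_0(\bG_0^\top\bG_0+\lam^2)^{-1/2}, \qquad \bG_0^\top\bG_0\approx\bV_q\bD\,\bU_q^\top\bU_q\,\bD\bV_q^\top\approx\bV_q\bD^2\bV_q^\top
\]
up to off-diagonal fluctuations of $\bU_q^\top\bU_q$. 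In both cases the preconditioner flattens the heavy-tailed spectrum induced by $q_i$. I will not claim the two maps are equal. Instead I will rerun the add-back-in argument of Section~\ref{sec:sketch} directly for Newton.

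\textbf{Step 3: signal.} Use Sherman--Morrison with $\bM_{-i}=\Sigma_v^q-q_iv_iv_i^\top$:
\[
v_i^\top(\Sigma_v^q)^{-1}v_i=\frac{s_i}{1+q_is_i}, \qquad s_i=v_i^\top\bM_{-i}^{-1}v_i .
\]
This gives a signal $u_i^\top\cH^{-1}[\bG_0]v_i\approx d\,\|u_i\|^2 q_is_i/(1+q_is_i)$, plus cross terms $\sum_{j\ne i}q_j(u_i^\top u_j)\,v_j^\top\bM^{-1}v_i$. The quantity $s_i$ is governed by the small eigenvalues of $\bM_{-i}$. Since $\bM_{-i}$ is a weighted Wishart matrix with weights $q_j\sim j^{-\alpha}$, its bottom eigenvalues are of order $d^{-\alpha}$, matching the role of $\lam\sim d^{-\alpha}$ in Theorem~\ref{thm:main}. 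In the batch version, the effective floor is $\min(d^{-\alpha},1/B)$, and items unseen in the batch receive no signal.

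\textbf{Step 4: interactions.} Bound the interaction terms $u_k^\top\cH^{-1}[\bG_0]v_i$ for $k\neq i$. Conditional on the $v$'s, these are Gaussian in $u_k$ (or in $u_i$ after a leave-one-out step). The same resolvent estimates for weighted Wishart matrices from Appendix~\ref{sec:interaction} apply, with $(\bG^\top\bG+\lam^2)^{-1/2}$ replaced by $\bM^{-1}$. Taking a union bound over $k$ then yields the threshold $i^\star$. The loss bound follows exactly as in Corollary~\ref{cor:muon-loss}, after choosing $\eta=\widetilde\Theta(1/\sqrt d)$ so that the logit margins are $\widetilde\Omega(\log N)$ for recovered items and $O(1)$ for the rest.

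\textbf{Main obstacle.} I expect Step 4 to be the main difficulty. It requires sharp control of $v_j^\top\bM^{-1}v_i$ when $\bM$ has power-law weights and its spectrum sits near $d^{-\alpha}$. In particular, the cross terms between frequent items $j$ and a rare item $i$ must not swamp the signal. I would first try to transfer the deterministic-equivalent bounds already proven for Muon, rather than derive new ones.
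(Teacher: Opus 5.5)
Your Steps 1--3 follow the paper's route: the Kronecker-factored Hessian $\cH[\bDelta]=\bSigma_u\bDelta\bM_v$, the reduction to $\bG\bM^{-1}$ via $\bSigma_u\approx\frac1d\bI_d$, and a leave-one-out Sherman--Morrison step for the signal. Step~4, however, which you rightly flag as the crux, is a genuine gap. The same gap covers the cross terms $\sum_{j\ne i}q_j(u_i^\top u_j)v_j^\top\bM^{-1}v_i$ that you left unbounded in Step~3.

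The plan to ``transfer'' the estimates of Appendix~\ref{sec:interaction} does not work. That machinery is a perturbative expansion specific to $h_\lam(\bG)$. It is built on the $\lam$-regularized resolvent integral for $(\bG^\top\bG+\lam^2\bI_d)^{-1/2}$ and on the near-orthonormal top-$r$ block, and it yields no bounds on bilinear forms of the unregularized inverse $\bM^{-1}$. There are also no deterministic-equivalent results in the paper to import.

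The naive estimates really do fail here:
\begin{itemize}
\item Treating $v_j$ as independent of $\bM_{-i}$ gives $q_j|v_j^\top\bM_{-i}^{-1}v_i|\approx q_jd^{\alpha}\cdot\widetilde O(d^{-1/2})$. For $j=1$ this is $d^{\alpha-1/2}\gg 1$, which swamps a signal of size at most $O(1)$.
\item Bounding $v_k^\top\bM_{-i}^{-2}v_k\lesssim d^{2\alpha}$ in the conditional variance of the Gaussian part gives $\sigma_{ij}^2\sim d^{2\alpha-2}$, instead of the needed $\widetilde O(1/d)$.
\end{itemize}

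The missing idea is a second, leave-two-out layer of Sherman--Morrison that makes each frequent item self-normalize. This is how the paper proceeds, and it makes the Newton case elementary, using none of the Appendix~\ref{sec:interaction} machinery. With $\bM_{-i,-k}:=\bM_{-i}-q_kv_kv_k^\top$,
\[
v_j^\top\bM_{-i}^{-1}v_i=\frac{v_j^\top\bM_{-i,-j}^{-1}v_i}{1+q_jv_j^\top\bM_{-i,-j}^{-1}v_j},\qquad v_k^\top\bM_{-i}^{-2}v_k=\frac{v_k^\top\bM_{-i,-k}^{-2}v_k}{(1+q_kv_k^\top\bM_{-i,-k}^{-1}v_k)^2}.
\]
One also has $v_k^\top\bM_{-i,-k}^{-1}v_k\asymp d^\alpha$, by a trace computation plus Hanson--Wright; this needs both $\lambda_{\min}\gtrsim d^{-\alpha}$ and $\lambda_{d/2}\lesssim d^{-\alpha}$.
\begin{itemize}
\item \textbf{Diagonal piece.} Combining these gives $q_j|v_j^\top\bM_{-i}^{-1}v_i|\lesssim\frac{q_jd^\alpha}{1+q_jd^\alpha}\sqrt{\log d/d}\le\sqrt{\log d/d}$.
\item \textbf{Gaussian piece.} Since $\bM$ involves only the $v$'s, $\delta_{ij}=u_j^\top\bG_{-i,-j}\bM_{-i}^{-1}v_i$ is exactly Gaussian given all variables except $u_j$ (this is your observation). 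The representation of Lemma~\ref{lem:symmetrize}, with the roles of $u$ and $v$ swapped, gives $\bG_{-i,-j}^\top\bG_{-i,-j}\preceq O(1)\sum_{k\ne i,j}q_k^2v_kv_k^\top$.
\item \textbf{Variance bound.} The variance therefore reduces, up to Hanson--Wright fluctuations, to
\[
\frac{1}{d^2}\sum_kq_k^2v_k^\top\bM_{-i}^{-2}v_k\lesssim\frac{1}{d^2}\sum_k\min\{1,q_k^2d^{2\alpha}\}\lesssim\frac{(\log d)^2}{d}.
\]
The terms with $k\le d$ contribute at most $d$, and the tail uses $\nnorm{q_{>d}}_2\lesssim d^{1/2-\alpha}\log d$.
\end{itemize}
Together these give a logit gap $\gtrsim\min\{q_id^\alpha,1\}-O((\log d)^2/\sqrt d)$, and hence the threshold $i^\star$.

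Two smaller corrections:
\begin{itemize}
\item The eigenvalue floor of $\bM$ in this regime is $d^{-\alpha}$, not $\min(d^{-\alpha},1/B)$.
\item Mere invertibility of $\bM$ is not enough. You need $\lambda_{\min}(\bM)\gtrsim d^{-\alpha}$, which uses $q_i\asymp p_i$ for all $i\le Cd$, i.e.\ $B\gtrsim(Cd)^\alpha\log d$. The resulting bound $\nnorm{\bM^{-1}}_\op\lesssim d^\alpha$ is also what makes your Step~1 reduction error $\lesssim d^\alpha\sqrt{d/N}\lesssim 1/\sqrt d$ under $N\gtrsim d^{2\alpha+2}$.
\end{itemize}
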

This result is particularly surprising because Newton's method is often considered the gold standard of local second-order or curvature-aware optimization. Many popular preconditioned optimizers -- such as Gauss-Newton, Adagrad~\citep{duchi2011adaptive}, K-FAC~\citep{martens2015optimizing}, and Shampoo~\citep{gupta2018shampoo} -- can be viewed as tractable approximations which avoid the cost of computing and inverting the full Hessian. Theorem~\ref{thm:newton} shows that Muon can match Newton \emph{without accessing any second-order information}, even though the updates are structurally different.

To gain intuition on this comparison, we explicitly write down the Newton update and compare with Muon~(taking $\lam=0$ for simplicity):
\begin{align}\label{eq:explicit}
\bW_1^{\Newton}\propto \Big(\underbrace{\textstyle\frac1N \sum_i u_iu_i^\top - \bar u\bar u^\top}_{=:\bSigma_u}\Big)^{-1}\bG_0 \Big(\underbrace{\textstyle \sum_i q_i v_i v_i^\top}_{=:\bM_v}\Big)^{-1} \quad\text{vs.}\quad \bW_1^{\Muon}\propto \bG_0(\bG_0^\top \bG_0)^{-1/2}.
\end{align}
The Hessian admits a Kronecker factorization $\cH = \bM_v \otimes \bSigma_u$ at initialization, thus the Newton update is equivalent to the K-FAC update with preconditioning on both sides, while Muon is only preconditioned on the right. In particular, the left preconditioner~$\bSigma_u$ is the (unweighted) empirical covariance matrix, which has the effect of whitening the unembedding vectors~$u_i$. We note that the batch size condition $B\gtrsim d^\alpha$ is necessary for the Hessian to be well-conditioned (Lemma~\ref{lem:proportional}). Regarding the proof, we analyze the Newton step by applying the add-back-in argument to each spike in~$\bG_0,\bM_v$. Compared to Muon, the analysis is relatively straightforward as we can directly apply the Sherman--Morrison formula to~$\bM_v^{-1}$ to compute the change in logits.

\paragraph{Handling anisotropic embeddings.} While Muon matches Newton's method in our isotropic Gaussian setting, an important limitation appears for anisotropic data. Suppose the unembedding and embedding vectors follow $u_i \sim \cN(0,\bXi_u)$ and $v_i \sim \cN(0,\bXi_v)$ with general covariance matrices $\bXi_u,\bXi_v$. It is apparent from Eq.~\eqref{eq:explicit} that the logits $u_j^\top\bW_1^{\Newton}v_i$ (and thus the estimated likelihoods) of the Newton update are invariant under the transformations $u_i\mapsto \bXi_u^{-1/2}u_i$ and $v_i\mapsto \bXi_v^{-1/2}v_i$, and therefore achieve the same recovery rate as in Theorem~\ref{thm:newton}. By contrast, the polar map is not invariant under these transformations, so Muon cannot in general be expected to retain the same rate. This is consistent with the experiments in Figure~\ref{fig:newton-population}: when $u_i$ and $v_i$ have identity covariance, Muon achieves capacity comparable to Newton's method, but the gap widens as the unembedding vectors become more anisotropic.
% \jk{TODO experiments // need to fix figures vspacing}
}

\subsection{Proof sketch of Theorem~\ref{thm:main}}\label{sec:sketch}

%Before we begin, we encourage the casual reader to still skim the first part of the sketch, which contains the essence of the analysis and will be used in Section~\ref{sec:optimal} to establish optimality. The second part is more technical and can be skipped on a first reading.

To start, we approximate the gradient as a sum of independent rank-one terms:
\begin{align*}
-\nabla_{\bW} L(\bW_0;\cB) = \sum_{i\in[N]} q_i(u_i - \bar{u}) v_i^\top \approx \sum_{i\in[N]} q_iu_i v_i^\top =: \bG
\end{align*}
and define the logits $\gamma_{ij} := u_j^\top h_\lam(\bG)v_i$ (omitting $\eta$). We analyze the signal and interaction terms separately.

\paragraph{Lower bounding signal logits (Appendix~\ref{sec:a}).}
Denote the leave-one-out gradient $\bG_{-i} := \bG -q_iu_iv_i^\top$, so that~$u_i^\top h_\lam(\bG_{-i})v_i$ is random with size $\widetilde{O}(1/\sqrt{d})$. We study how the $i$th logit behaves as we gradually add the $u_iv_i^\top$ term back in via the auxiliary function
\begin{align}\label{eq:signal-strength}
\phi(q) = u_i^\top h_\lam(\bG_{-i} + qu_iv_i^\top)v_i, \quad q\ge 0.
\end{align}
By definition, $\gamma_{ij} = \phi(q_i)$, which we analyze via Taylor expansion. The slope $\phi'(0)$ can be computed explicitly via the Daleckii--Krein formula (Proposition~\ref{prop:daleckii}), which computes the Fr\'{e}chet derivative of a matrix function along a specified perturbation direction. Let $\bG_{-i} = \bA\bS\bB^\top$ be the SVD with singular values $\bS = \diag(s_1,\cdots,s_d)$ in decreasing order and $a=\bA^\top u_i$, $b=\bB^\top v_i$. Then $\phi'(0)$ is given as the sum of nonnegative terms
\begin{align*}
\frac14 \sum_{k \neq \ell} \frac{h_\lam(s_k) + h_\lam(s_\ell)}{s_k + s_\ell} (a_kb_\ell - a_\ell b_k)^2 + \frac{h_\lam(s_k) - h_\lam(s_\ell)}{s_k - s_\ell} (a_kb_\ell + a_\ell b_k)^2 + \sum_k h_\lam'(s_k) a_k^2 b_k^2.
\end{align*}
We focus on the first term. As $a,b$ are i.i.d. Gaussian conditioned on~$\bG_{-i}$, $(a_kb_\ell-a_\ell b_k)^2 \approx 1/d^2$. Since $z\mapsto h_\lam(z)/z$ is decreasing, the sum is dominated by small singular values. We then show that the `bulk' singular value $s_{d/2} = \widetilde{O}(d^{-\alpha})$ with high probability~(Lemmas \ref{lem:spectrum-new}, \ref{lem:symmetrize}). Hence choosing $\lam$ at the same scale,
\begin{align}\label{eq:vanda}
\phi'(0) \gtrsim \frac{h_\lam(s_{d/2})}{s_{d/2}} = \frac{1}{\sqrt{\smash[b]{s_{d/2}^2}+\lam^2}} \gtrsim \lam^{-1}.
\end{align}
Moreover, we show that $\phi$ is nondecreasing and $|\phi''(0)| \lesssim \lam^{-2}$. Taylor expanding $\phi$ around zero and optimizing the radius yields the lower bound
\begin{align*}
\gamma_{ii} = \phi(q_i) \gtrsim \min\left\{\frac{q_i}{\lam}, 1\right\} - \sqrt{\frac{\log d}{d}}.
\end{align*}
\paragraph{Upper bounding interaction logits (Appendix~\ref{sec:interaction}).} The interaction logits turn out to be much more challenging to control. A naive approach is to use that $h_\lam$ is $\lam^{-1}$-Lipschitz w.r.t. operator norm (Proposition~\ref{prop:operator-lipschitz}), so $(u_j,v_j) \mapsto\gamma_{ij} = u_j^\top h_\lam(\bG_{-j} + q_ju_jv_j^\top) v_i$ is a Lipschitz mapping of Gaussians and so exhibits good concentration. This argument works when either $q_i$ or $q_j\ll\lam$, but fails to bound `large' interactions where both $i,j\le r$ for a threshold $r\approx d$. For these terms, we first invoke a block resolvent integral representation amenable to series expansion, then develop a nonasymptotic perturbative analysis reminiscent of moment methods in random matrix theory. A technical overview is provided in Appendix~\ref{sec:overview} for the interested reader. In the end, we show:
\begin{align*}
|\gamma_{ij}| \lesssim \frac{(\log d)^3}{\sqrt{d}}\quad \forall j\ne i.
\end{align*}
We have thus proved that $\gamma_{ii} > \max_{j\ne i}\gamma_{ij}$ if $q_i/\lam \gtrsim 1/\sqrt{d}$ (ignoring log factors). In the population regime, from $p_i\asymp i^{-\alpha}$ we conclude that all items $i \lesssim d^{1+\frac{1}{2\alpha}}$ are recovered w.h.p. In the minibatch setting, we incur an additional information-theoretic threshold: items $i\gg B^{1/\alpha}$, that is $p_i\ll 1/B$, are unlikely to be observed in the minibatch at all, and thus will not be learned.

%Our analysis can be extended to polar maps $h$ which have the following properties: (1) $h$ is monotone increasing, sufficiently smooth and $h'(0) \approx \lam^{-1}$ where $\lam^{-1}$ is the desired scale of resolution, $\widetilde{O}(d^{\alpha})$ for the first step; (2) $h(z)/z$ is monotone decreasing and does not decay faster than $1/z$, and further admits certain resolvent expansions. This class includes local approximate polynomials which do not necessarily `saturate' at $1$, thus our analysis may also be extended to certain Newton--Schulz schemes.

\section{Optimality and Convergence Rate of Muon}\label{sec:new}

\subsection{Optimality of Muon}\label{sec:optimal}

A natural follow-up question to Theorem~\ref{thm:main} is: is the~$d^{1+\frac{1}{2\alpha}}$ one-step recovery rate optimal among first-order methods for the linear associative memory task, or can it be improved by choosing a different estimator $h(\bG_0)$ of~$\bW$? For example, our analysis used $\lam\sim d^{-\alpha}$ while the limit $\lam\to 0$ recovers the exact polar map; can choosing a sharper resolution improve our results?

In this section, we give a negative answer to this question by providing a heuristic argument for the one-step optimality of our stabilized variant of Muon; \new{this intuitively aligns with the fact that Muon already matches the recovery rate of Newton's method (Theorem~\ref{thm:newton}).} We first show that any Bayes optimal gradient-based estimator must be spectrally equivariant, that is for the SVD of the gradient $\bG_0=\bU\bS\bV^\top$, it holds that $h(\bG_0) = \bU h(\bS)\bV^\top$ and $h(\bS)$ is diagonal.

\begin{proposition}\label{prop:hunt-stein}
Let $\Spec(d)$ denote the set of bi-orthogonally equivariant measurable maps $h:\RR^{d\times d}\to \RR^{d\times d}$ such that $\sup\nnorm{h}_{\F}<\infty$, that is, $h(\bU\bX\bV^\top)=\bU h(\bX)\bV^\top$ for all $\bX\in\RR^{d\times d}$ and $\bU,\bV\in O(d)$. %Then $\Spec(d)$ is precisely the set of bounded spectral optimizers, and
The Bayes optimal update rule w.r.t. $L$ is in $\Spec(d)$, that is, for the Bayes risk $\cR(h) := \E_{(u_i,v_i)_{i\in[N]},\cB}[L(h(\bG_0))]$ it holds that
\begin{align*}
\inf_{h:\sup\nnorm{h}_{\F}<\infty} \cR(h) = \inf_{h\in\Spec(d)} \cR(h).
\end{align*}
\end{proposition}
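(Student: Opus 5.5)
This is a Hunt--Stein style argument: we symmetrize an arbitrary estimator over the group $O(d)\times O(d)$ and use convexity of $L$ together with invariance of the joint law. The symmetrization is the averaged map
\[
\bar h(\bX) := \int_{O(d)\times O(d)} \bU^\top h(\bU\bX\bV^\top)\bV \,\rd\mu(\bU)\rd\mu(\bV),
\]
where $\mu$ is Haar measure on the compact group $O(d)$. The plan is to show $\cR(\bar h)\le \cR(h)$ and $\bar h\in\Spec(d)$. Since $\Spec(d)$ is a subset of the bounded maps, the reverse inequality is trivial, and this gives equality of the infima.

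\textbf{Step 1: $\bar h\in\Spec(d)$.} Measurability of $\bar h$ follows from Fubini. Boundedness follows from $\nnorm{\bar h(\bX)}_{\F}\le \sup\nnorm{h}_{\F}$, since the Frobenius norm is orthogonally invariant and the average is convex. Equivariance follows from left/right invariance of Haar measure: substituting $\bU\mapsto \bU\bU_0^\top$ and $\bV\mapsto\bV\bV_0^\top$ gives $\bar h(\bU_0\bX\bV_0^\top)=\bU_0\bar h(\bX)\bV_0^\top$. One can also note that equivariance under the diagonal sign/permutation subgroup forces $h(\bS)$ to be diagonal for diagonal $\bS$, so such maps are exactly spectral maps $\bU h(\bS)\bV^\top$. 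This is not needed for the stated equality, but it matches the paper's description.

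\textbf{Step 2: invariance of the data law.} Fix $\bU,\bV\in O(d)$ and apply the transformation $u_i\mapsto \bU u_i$, $v_i\mapsto \bV v_i$ for all $i$, leaving the batch $\cB$ unchanged. Since $\cN(0,\frac1d\bI_d)$ is rotation invariant and the embeddings are independent of $\cB$, the joint law of $((u_i,v_i)_{i},\cB)$ is unchanged. Under this transformation, Eq.~\eqref{eq:gradient} at $\bW_0=0$ gives $\bG_0\mapsto \bU\bG_0\bV^\top$, because every $\hp_0$ is uniform and hence unchanged. The loss evaluated at a matrix $\bW$ depends on the data only through the logits $u_j^\top\bW v_i$. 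Writing $L_{\bU,\bV}$ for the loss with transformed embeddings, we get $L_{\bU,\bV}(\bW)=L(\bU^\top\bW\bV)$. Therefore
\[
\cR(h)=\E\big[L_{\bU,\bV}(h(\bU\bG_0\bV^\top))\big]=\E\big[L(\bU^\top h(\bU\bG_0\bV^\top)\bV)\big]
\]
for every fixed $(\bU,\bV)$.

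\textbf{Step 3: averaging and Jensen.} Integrate the identity from Step 2 over Haar measure and exchange the order of integration by Tonelli. This requires the integrand to be bounded below, which holds since $L\ge0$, and integrable, which holds since $L(\bW)\le \log N + 2\nnorm{\bW}_{\F}\cdot\max_{i,j}\|u_j\|\|v_i\|$ has finite moments. Then use that $\bW\mapsto L(\bW)$ is convex: it is a nonnegative combination of log-sum-exp compositions with linear maps, minus a linear term. Jensen's inequality in $(\bU,\bV)$ for each fixed data realization then gives
\[
\cR(h)\ge \E\big[L(\bar h(\bG_0))\big]=\cR(\bar h).
\]

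The main obstacle is bookkeeping in Step 2. We must check that the \emph{same} $(\bU,\bV)$ transforms both the gradient input and the loss consistently. This works because the initialization $\bW_0=0$ is itself invariant; for a nonzero $\bW_0$ the argument would need $\bW_0$ to transform too. We must also make sure the measure-theoretic exchange is justified, which the boundedness assumption $\sup\nnorm{h}_{\F}<\infty$ is designed to guarantee.
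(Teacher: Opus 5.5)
Your proposal is correct and follows essentially the same route as the paper. Both arguments Haar-symmetrize $h$ into $\bar h$ and use rotational invariance of the Gaussian embeddings (with $\bW_0=0$ and $\cB$ left fixed) to get $\cR(h)=\E[L(\bU^\top h(\bU\bG_0\bV^\top)\bV)]$ for each fixed $(\bU,\bV)$, then conclude by Jensen using convexity of log-sum-exp. Your extra remarks on Fubini/Tonelli, $L\ge 0$ and moment bounds make explicit the measure-theoretic details the paper leaves implicit.
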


This is essentially a corollary of the Hunt--Stein theorem on minimax tests of invariant statistical problems. Any bi-orthogonal conjugate $h^{\bU,\bV}(\bX):= \bU^\top h(\bU\bX\bV^\top)\bV$ of~$h$ will have the same Bayes risk due to rotation invariance. Then the estimator $\bar h\in\Spec(d)$ constructed by averaging~$h^{\bU,\bV}$ over Haar measure $\bU,\bV\sim O(d)\times O(d)$ satisfies $\cR(\bar h)\le\cR(h)$ due to convexity of~$L$.

We remark that $h\in\Spec(d)$ does not preclude nonseparable maps where each diagonal entry $h(\bS)_{ii}$ can depend on the entire spectrum $\bS$. Nonetheless, such maps are in general difficult to implement as they require computing the full SVD, which Muon (with Newton--Schulz iterations) is designed to avoid. Thus, we restrict our attention to separable maps $h(\bS) = \diag (h(s_i))$ for a scalar-valued function $h$. Since the inputs to~$h$ are bounded by $\nnorm{\bG_0}_\op = O(1)$ w.h.p., we can always rescale~$h$ to have bounded outputs. We also assume a mild monotonicity property:

\begin{assumption}\label{assp:h}
$h:\RR_{\ge 0}\to [0,1]$ is~$C^1$ and $h(z)/z$ is nonincreasing.
\end{assumption}

\begin{wrapfigure}{r}{0.25\textwidth}
    \vspace{-1.2\baselineskip}
    \centering
    \includegraphics[width=\linewidth]{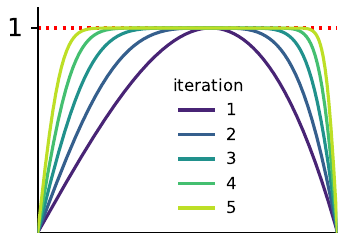}
    \vspace{-1.7\baselineskip}
\end{wrapfigure}
Intuitively, this means that smaller singular values are blown up by a larger multiplicative factor. Note that we do not require~$h$ itself to be monotonic. For example, the classical cubic Newton--Schulz iteration $h(z)=\frac32 z-\frac12 z^3$ \citep{bernstein2024newtonschulz} satisfies this assumption, as well as its higher-order iterates on the interval of convergence (see right figure) --- see Appendix~\ref{sec:newton} for details.

We now show that any~$h$ satisfying Assumption~\ref{assp:h} \emph{cannot} improve the signal~$\gamma_{ii}$. As in Eq.~\eqref{eq:signal-strength}, we compute the strength of the signal via a first-order approximation of the auxiliary map~$\phi$, and the slope is given via the singular values of the leave-one-out gradient~$\bG_{-i} = \sum_{j\ne i} q_i u_iv_i^\top$ as
\begin{align*}
\phi'(0) &\asymp \frac{1}{d^2} \sum_{k,\ell} \frac{h(s_k) + h(s_\ell)}{s_k + s_\ell} + \frac{h(s_k) - h(s_\ell)}{s_k - s_\ell} + \frac{1}{d^2} \sum_k h'(s_k) \lesssim \frac1d \sum_k \frac{h(s_k)}{s_k},
\end{align*}
where the inequality follows from $\frac{h(s_k) + h(s_\ell)}{s_k + s_\ell} \le \frac{h(s_k)}{s_k}+\frac{h(s_\ell)}{s_\ell}$, $\frac{h(s_k) - h(s_\ell)}{s_k - s_\ell} \le \min\{\frac{h(s_k)}{s_k}, \frac{h(s_\ell)}{s_\ell}\}$ and $h'(s_k) \le \frac{h(s_k)}{s_k}$ under Assumption~\ref{assp:h}. That is, the signal strength is roughly determined by \emph{how much the average singular value is blown up} by~$h$. If~$h$ is Lipschitz, this is uniformly bounded by~$\nnorm{h}_{\Lip}$ (thus Eq.~\eqref{eq:vanda} is tight for $h_\lam$). However, even without Lipschitz control, this is fundamentally limited by the average scale of the singular spectrum as we prove below.

\begin{lemma}\label{lem:sval}
Let $s_1\ge\cdots\ge s_d$ be the singular values of the leave-one-out gradient $\bG_{-i}$. It holds w.h.p. that $s_d\gtrsim d^{-\alpha-1}(\log d)^{-1}$ and $\sum_{k=1}^d s_k^{-1} \lesssim d^{\alpha+1}(\log d)^2$.
\end{lemma}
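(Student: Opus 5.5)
Both bounds will come from lower bounds on the small singular values of $\bG_{-i}=\sum_{j\ne i} q_j u_jv_j^\top$. Write this as $\bG_{-i} = \bU_N \bD \bV_N^\top$, where $\bU_N, \bV_N\in\RR^{d\times N}$ have columns $u_j, v_j$ and $\bD=\diag(q_j)$ with the $i$th entry zeroed.

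\textbf{Reduction to a top block.} The idea is to isolate the top roughly $m\asymp d$ frequent items. Split $\bG_{-i} = \bG_{\le m} + \bG_{>m}$. Since a sum of independent rotation-invariant pieces is not monotone in singular values, we instead condition on $\bG_{>m}$. For $\bG_{\le m}=\bU_m\bD_m\bV_m^\top$ with $m = 2d$, say, the matrices $\bU_m,\bV_m$ are $d\times m$ Gaussian. The smallest singular values of Gaussian rectangular matrices with aspect ratio $2$ are $\gtrsim 1$ with high probability (Davidson--Szarek/Gordon), and $\bD_m \succeq p_{m}\asymp d^{-\alpha}$, up to batch effects handled by Lemma~\ref{lem:spectrum-new}. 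This alone is not enough, because $\bU_m\bD_m\bV_m^\top$ is a product in which the middle dimension exceeds $d$ and cancellations can occur.

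\textbf{Smallest singular value.} Here I would use a small-ball argument. For a fixed unit $x$, the vector $\bG_{-i}^\top x = \sum_j q_j (u_j^\top x) v_j$ is, conditionally on $(u_j)$, a Gaussian vector in $\RR^d$ with covariance $\frac1d\sum_j q_j^2 (u_j^\top x)^2 \bI$. Its norm is therefore small only if $\sum_{j\le 2d} q_j^2 (u_j^\top x)^2$ is small, which is unlikely uniformly over $x$ because $\sum_{j\le 2d} (u_j^\top x)^2 \gtrsim 1$ for all $x$ with high probability. This is the kind of anti-concentration that gives $s_d$ bounds, as for square Gaussian matrices. To make it uniform I would use the standard invertibility-via-distance argument (Rudelson--Vershynin): $s_d(\bM)\gtrsim \min_k \operatorname{dist}(\bM e_k, H_k)/\sqrt d$, with a loss of one extra factor of $d$. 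Applied to the rescaled block this yields $s_d \gtrsim d^{-\alpha}\cdot d^{-1}/\log d$, since the conditional Gaussianity of $v$ gives distance $\gtrsim d^{-\alpha}\cdot \epsilon/\sqrt d$ with failure probability $\epsilon$, and a union bound with $\epsilon\sim 1/\log d$ or polynomial tails costs only the logarithm. The tail $\bG_{>m}$ is independent and rotationally invariant, so conditioning on it only shifts the Gaussian mean, and small-ball bounds are uniform over shifts.

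\textbf{Sum of inverses.} For $\sum_k s_k^{-1}$ I would bound the counting function $\#\{k: s_k\le t\}$. Following Lemma~\ref{lem:spectrum-new} and Lemma~\ref{lem:symmetrize}, the bulk $s_{d/2}\gtrsim d^{-\alpha}/\mathrm{polylog}$, and more precisely $s_{d-k}\gtrsim d^{-\alpha}\,k/d$ up to logs. This follows from the same distance argument applied to the projection onto a codimension-$k$ subspace (the "$s_{d-k}$ via distances to $(d-k)$-dim subspaces" bound, giving $\operatorname{dist}\gtrsim d^{-\alpha}\sqrt{k/d}$). Then
\[
\sum_k s_k^{-1}\lesssim \sum_{k=1}^{d} \frac{d^{\alpha+1}\log d}{k}\lesssim d^{\alpha+1}(\log d)^2 .
\]

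\textbf{Main obstacle.} I expect the hardest step to be the uniform intermediate-singular-value bound $s_{d-k}\gtrsim d^{-\alpha}k/d$ for the weighted product structure, where the weights decay in a power law. Dominating by the top $2d$ items requires care because weights within that block vary from $1$ to $d^{-\alpha}$. I would handle this by lower bounding $\bD_m\succeq p_{2d}\bI$ only after conditioning on $\bU_m$, so that the Gaussian $\bV_m$ randomness still supplies the small-ball estimate.
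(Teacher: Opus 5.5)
\textbf{The smallest-singular-value step does not give the rate you claim.} The deterministic inequality $s_d\ge d^{-1/2}\min_k \mathrm{dist}(\bG_{-i}e_k,H_k)$ is fine. Conditionally on the other columns, each distance is a Gaussian with standard deviation $\gtrsim d^{-\alpha-1/2}$. But the small-ball estimate $\Pr(\mathrm{dist}_k<\epsilon d^{-\alpha-1/2})\lesssim\epsilon$ is only linear in $\epsilon$, and there are $d$ columns. So the union bound costs a factor $d$, not a logarithm:
\begin{itemize}
\item with $\epsilon\sim 1/\log d$ it is vacuous, since $d\epsilon\gg 1$;
\item with $\epsilon\sim 1/(d\log d)$ it only yields $s_d\gtrsim d^{-\alpha-2}/\log d$.
\end{itemize}
Getting the right order inside your framework needs the actual Rudelson--Vershynin argument, which the proposal does not carry out. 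That argument restricts to incompressible vectors, where the averaging lemma bounds the failure probability by an \emph{average} of the $\Pr(\mathrm{dist}_k<\epsilon)$ rather than their sum, and handles compressible vectors by a net. The second claim inherits this gap, since $s_d^{-1}$ is itself a summand of $\sum_k s_k^{-1}$. In addition, the intermediate bound $s_{d-k}\gtrsim d^{-\alpha}k/d$ is only asserted: passing from codimension-$k$ distances to singular values needs a concrete tool, such as the negative second moment identity on a column submatrix or a rectangular smallest-singular-value bound after interlacing.

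\textbf{The missing idea is one you nearly wrote down.} Conditionally on $\{u_j\}$, it is not only $\bG_{-i}^\top x$ that is Gaussian for each fixed $x$. The columns $\bG_{-i}e_k=\sum_{j\ne i}q_jv_{jk}u_j$ are i.i.d.\ $\cN(0,\frac1d\bM_{-i})$, so $\bG_{-i}\deq d^{-1/2}\bM_{-i}^{1/2}\bZ$ with $\bZ$ a standard Gaussian $d\times d$ matrix, exactly as in Lemma~\ref{lem:symmetrize}. Hence $s_k(\bG_{-i})\ge d^{-1/2}\lambda_{\min}(\bM_{-i})^{1/2}s_k(\bZ)$ for every $k$, and everything reduces to two facts:
\begin{enumerate}
\item $\lambda_{\min}(\bM_{-i})\gtrsim d^{-2\alpha}$, which follows from $\bM_{-i}\succeq p_{Cd}^2\sum_{d\le j\le Cd,\,j\ne i}u_ju_j^\top$;
\item a lower bound on the spectrum of $\bZ$. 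The paper uses $s_k(\bZ)\ge s_{\min}(\bZ_{1:k})$ together with the Rudelson--Vershynin rectangular bound $\Pr(s_{\min}(\bZ_{1:k})\le t(\sqrt d-\sqrt{k-1}))\le (Ct)^{d-k+1}+e^{-\Omega(d)}$ at $t=1/\log d$.
\end{enumerate}
Those failure probabilities are geometrically summable in $k$, which is why the union bound there genuinely costs only a logarithm. The resulting $s_k(\bZ)\gtrsim (d-k+1)/(\sqrt d\log d)$ gives both claims at once. With this factorization, the top-block reduction, the worry about cancellations in $\bU_m\bD_m\bV_m^\top$, and your ``main obstacle'' about weights varying within the block all disappear.

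One smaller point: Lemma~\ref{lem:spectrum-new} is an \emph{upper} bound on $\lambda_{d/2}(\bM)$ and cannot supply $\bD_m\succeq d^{-\alpha}$. For $B\lesssim d^\alpha$ it actually shows $\bG_{-i}$ has rank below $d/2$, so $s_d=0$. The statement must therefore be read for the population gradient (as the paper proves it) or for $B\gtrsim d^\alpha\log d$.
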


As such, we must have $\phi'(0) \lesssim d^{\alpha}$ regardless of the choice of $h$, therefore the signal $\gamma_{ii} = \phi(q_i)$ is upper bounded (ignoring higher-order terms) as $\phi(0) + \phi'(0)q_i \lesssim \frac{1}{\sqrt{d}} + q_i d^\alpha$. In contrast, the noise and interaction terms~$\phi(0)$ and $\gamma_{ij}$ are generally of size $\widetilde{\Theta}(1/\sqrt{d})$. As a consequence, we indeed require $q_i\gg d^{-\alpha-\frac12}$, equivalently $i\ll d^{1+\frac{1}{2\alpha}}$ in the population regime to ensure recovery, matching the rate obtained in Theorem~\ref{thm:main}.%\footnote{To make this argument fully rigorous, one would need to show an upper bound for $\phi$, which requires additional smoothness assumptions on~$h$, and also show an anti-concentration argument for the interaction terms, which seems out of reach with current techniques.}

For example, Lemma~\ref{lem:sval} implies that taking the resolution as $\lam\ll d^{-\alpha-1}$ instead of $\lam\sim d^{-\alpha}$ in Theorem~\ref{thm:main} essentially gives the exact polar map $h(z)\approx\sgn(z)$, as this scale will never be `seen' by the singular values. Nonetheless, even in this regime the average blowup of the singular values is of order~$d^\alpha$, and so we will not see any improvement from using the polar map. In fact, from this argument we expect roughly the same recovery rate, which is indeed what we observe in our experiments in Section~\ref{sec:experiments}.

\subsection{Multiple steps of Muon}\label{sec:multi_step}

We now turn our attention to the entire update trajectory of Muon and SGD. To study the macroscopic scaling behavior of these processes, we will adopt a simplifying heuristic. %We say an item~$i$ is \emph{strongly recovered} if $\hp_{\bW}(i\mid i)>1-d^{-\omega(1)}$.
At step~$t$, suppose all items $i=1,\cdots,d_t$ have been recovered with $\hp_t(i\mid i)\approx 1$. We presume all items $i>d_t$ have not been recovered at all, i.e. $\hp_t(i\mid i)\ll 1$, and approximate the current gradient as
\begin{align}\label{eq:heuristic}
\bG_t \approx \sum_{i\in[N]} q_i^{(t)} (1-\hp_t(i\mid i))u_i v_i^\top \approx \sum_{i > d_t} q_i^{(t)} u_i v_i^\top =: \bbG_t
\end{align}
where~$q^{(t)}$ is the frequency vector of the~$t$th minibatch. This can be viewed as a \emph{deflation} process: starting from all items $\bG_0\approx \sum_{i\in[N]} q_i u_i v_i^\top$, the already-recovered items are continually removed from the gradient after each update. Under this simplification, we recursively derive the \emph{recovery threshold} $\{d_t\}_{t\ge 1}$ after each Muon update, which yields the following sharp scaling law.

\begin{theorem}[multi-step recovery of Muon]\label{thm:multi}
Let $d_0=0$, $T\in\NN$ and
\begin{align*}
d_t =\widetilde{\Theta}\qty(\min\{d^{2-(1-\frac{1}{2\alpha})^t}, B^\frac{1}{\alpha}\}), \quad \lam_t = \widetilde{\Theta}\qty(d_{t+1}^{-\alpha} \sqrt{d}), \quad \eta\asymp (\log d)^{-4}\sqrt{d}.
\end{align*}
Then for sufficiently large $d$, the iterates~$\{\bW_t\}_{t\ge 0}$ defined as $\bW_0=0$, $\bW_{t+1} = \bW_t + \eta h_{\lam_t}(\bbG_t)$ recover all items $i = 1,\cdots,d_t$ at all steps $t\le T$, and moreover $L(\bW_t) \le \widetilde{O}(d_t^{1-\alpha})$.
\end{theorem}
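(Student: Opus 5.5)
We argue by induction on $t$. The inductive statement at step $t$ is that $\bW_t$ recovers items $1,\dots,d_t$ with signal margin of order $\eta\sqrt{d}\cdot\widetilde{\Omega}(1/\sqrt d)$ or larger, and that no item is significantly misclassified.

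At step $t$, the update $h_{\lam_t}(\bbG_t)$ is exactly the one-step object of Theorem~\ref{thm:main}, but for the \emph{deflated} gradient $\bbG_t=\sum_{i>d_t}q^{(t)}_iu_iv_i^\top$. So the plan is to rerun the signal/interaction analysis of Appendices~\ref{sec:a},~\ref{sec:interaction} with the frequency profile truncated at $d_t$. After reindexing $i\mapsto i-d_t$, this is a power law whose top mass is $\asymp d_t^{-\alpha}$ and which has a flat head of length about $d_t$. This gives new logits $\gamma^{(t)}_{ij}=u_j^\top h_{\lam_t}(\bbG_t)v_i$, and we add them to the previous $\bW_t$.

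The key step is the spectral computation replacing Lemmas~\ref{lem:spectrum-new},~\ref{lem:symmetrize} and~\ref{lem:sval}: we need the bulk singular value $s_{d/2}$ of $\bbG_{t,-i}$. Heuristically, $\bbG_t$ is a sum of rank-one terms with weights $i^{-\alpha}$ for $i>d_t$. There are two regimes.
\begin{itemize}
\item If $d_t\lesssim d$, the spectrum is governed as before by the $\widetilde\Theta(d)$ largest remaining weights, and $s_{d/2}\approx d^{-\alpha}$.
\item If $d_t\gtrsim d$, the $d_t$-ish items with weight about $d_t^{-\alpha}$ behave like a Wishart-type matrix of aspect ratio $d_t/d$. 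This gives $s_{d/2}\approx d_t^{-\alpha}\sqrt{d_t/d}$, up to logs.
\end{itemize}
I would prove the lower bound on $s_{d/2}$ by a matrix Chernoff/Gram-matrix argument applied to the block $i\in(d_t,2d_t]$, and the upper bound by comparison with the tail sum. Choosing $\lam_t\asymp s_{d/2}$ then gives $\phi'(0)\gtrsim\lam_t^{-1}$, and the Taylor argument gives signal $\gamma^{(t)}_{ii}\gtrsim\min\{q_i/\lam_t,1\}$. Recovery requires $q_i/\lam_t\gtrsim 1/\sqrt d$, i.e.\ $i^{-\alpha}\gtrsim d_t^{-\alpha}\sqrt{d_t}/d$. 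This yields $d_{t+1}\approx d^{1/\alpha}d_t^{1-1/(2\alpha)}$, which matches both the stated $\lam_t=\widetilde\Theta(d_{t+1}^{-\alpha}\sqrt d)$ and the closed form. Indeed, writing $d_t=d^{e_t}$, we get $e_{t+1}=1/\alpha+e_t(1-1/(2\alpha))$, whose fixed point is $2$; hence $2-e_t=(1-1/(2\alpha))^t(2-e_0)$ with $e_0=0$. The minibatch cap $B^{1/\alpha}$ enters exactly as in Theorem~\ref{thm:main}, since items with $p_i\ll 1/B$ are not observed.

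Next comes the bookkeeping across steps. Items $i\le d_t$ are absent from $\bbG_t$, so for these items $u_i^\top h_{\lam_t}(\bbG_t)v_i$ and $u_j^\top h_{\lam_t}(\bbG_t)v_i$ are only noise of size $\widetilde O(1/\sqrt d)$. This is shown by the same Lipschitz-concentration argument as for interactions, now conditioning on the independent pair $(u_i,v_i)$. Summed over $T=O(1)$ steps, this noise is dominated by the margin of order $\eta\cdot\Omega(1)$ that item $i$ received at the step it was recovered: with $\eta\asymp(\log d)^{-4}\sqrt d$, the signal is $\eta\min\{q_i/\lam_t,1\}$ and the noise is $\eta\widetilde O(1/\sqrt d)$. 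The interaction bound $|\gamma^{(t)}_{ij}|\lesssim(\log d)^3/\sqrt d$ must be re-established for the deflated gradient. I expect the resolvent/moment expansion of Appendix~\ref{sec:interaction} to go through, with $r\approx d$ replaced by the head length $\max\{d,d_t\}$.

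The loss bound then follows as in Corollary~\ref{cor:muon-loss}. Recovered items have margin of order $\eta\cdot(\text{const})/\sqrt d\cdot\sqrt d\gg\log N$ after log tuning, so their loss is negligible. Unrecovered items contribute $O(\log N)$ each but are not significantly misclassified, so the total loss is at most $\widetilde O(\sum_{i>d_t}p_i)=\widetilde O(d_t^{1-\alpha})$.

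The main obstacle is the interaction bound in the regime $d_t\gg d$. There the head of $\bbG_t$ has $d_t\gg d$ comparable spikes, so the few-large-spikes perturbative structure used in Theorem~\ref{thm:main} disappears. I would first try to treat the head as a Wishart-like bulk via a deterministic equivalent for the resolvent $(\bbG_t^\top\bbG_t+z)^{-1}$ in the integral representation of $h_{\lam_t}$. Then I would control $u_j^\top(\cdot)v_i$ by leave-two-out arguments and a local law at spectral scale $\lam_t$, which sits at the bulk scale and so should be stable.
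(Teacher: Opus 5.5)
Your signal analysis, cross-step bookkeeping and loss bound follow the paper's proof. This means the leave-one-out $\phi_t$, the Daleckii--Krein bound $\phi_t'(0)\gtrsim\lam_t^{-1}$ from an upper bound of order $d_t^{1/2-\alpha}d^{-1/2}$ (up to logs) on the singular values of $\bbG_{t,-i}$, a linear-regime Taylor expansion, and accumulating $\widetilde O(1/\sqrt d)$ noise over $T$ steps.

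The genuine gap is the interaction bound for $t\ge 1$, which you leave open as the ``main obstacle''. Your first route would fail. The expansion of Appendix~\ref{sec:interaction} is built on $\bU^\top\bU=\bI_r+\bE_u$ with $\|\bE_u\|_{\op}\le\rho\asymp\sqrt{r/d}\lesssim 1/\log d$. That smallness is what makes $\bG_u^{-1/2}$, the near-orthonormal change of basis, the Neumann series of Lemma~\ref{lem:neumann}, and the final condition $C\rho L\le\frac12$ work. With $r\approx\max\{d,d_t\}\ge d$, the $r\times r$ Gram matrix of $r$ vectors in $\RR^d$ has rank at most $d$. So $\|\bE_u\|_{\op}\ge 1$ once $r>d$, and the representation collapses. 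Your second route, deterministic equivalents plus a local law for bilinear forms in vectors that themselves enter $\bbG_t$, is a substantial project you have not carried out.

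It is also unnecessary. The missing observation is that, with $\lam_t$ at the bulk scale, there are no large spikes after the first step. For $t\ge 1$ every remaining item satisfies $q_j^{(t)}\lesssim d_t^{-\alpha}+\frac{\log d}{B}\lesssim d_t^{-\alpha}$, by Lemma~\ref{lem:mini-truncated} and the cap $d_t\lesssim(B/\log d)^{1/\alpha}$. Meanwhile $\lam_t\asymp d_t^{1/2-\alpha}d^{-1/2}\log d$. Hence $q_j^{(t)}/\lam_t\lesssim\sqrt{d/d_t}=o(1)$, since $d_t\ge d_1\approx d^{1+\frac{1}{2\alpha}}\gg d$.

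Now condition on everything except $(u_j,v_j)$ and consider the map $(u_j,v_j)\mapsto u_j^\top h_{\lam_t}(\bbG_{t,-j}+q_j^{(t)}u_jv_j^\top)v_i$. It is centered by the symmetry $(u,v)\mapsto(-u,-v)$. After the truncation of Lemma~\ref{lem:gl-iota}, Proposition~\ref{prop:operator-lipschitz} makes it Lipschitz with constant $2+16q_j^{(t)}/\lam_t=O(1)$. Gaussian concentration then gives $|u_j^\top h_{\lam_t}(\bbG_t)v_i|\lesssim\sqrt{\log d/d}$ for every pair $i\ne j$, including pairs with both indices present in $\bbG_t$.

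This is exactly how the paper closes the argument. The resolvent/moment machinery is needed only at $t=0$, where items $j\le r$ have $q_j\gg\lam_0$ and the Lipschitz bound is useless; that case is Proposition~\ref{prop:interaction}. So the regime $d_t\gg d$ that you flag as hardest is in fact the easiest.

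Separately, your exponent recursion has an initialization slip. The recursion $e_{t+1}=\frac1\alpha+e_t(1-\frac{1}{2\alpha})$ with $e_0=0$ gives $2-e_t=2(1-\frac{1}{2\alpha})^t$, not the stated exponent. That recursion holds only once $d_t\gtrsim d$. Step $0$ lies in your first regime ($s_{d/2}\approx d^{-\alpha}$) and yields $d_1\approx d^{1+\frac{1}{2\alpha}}$ directly from Theorem~\ref{thm:main}, which corresponds to an effective $e_0=1$.
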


Thus for sufficiently large~$B$, the recovery exponent $2-(1-\frac{1}{2\alpha})^t$ converges exponentially to the information-theoretic maximum~$2$ with a fixed learning rate. In comparison, for multiple steps of SGD, we show a strictly suboptimal scaling law for \emph{any} learning rate schedule.

\begin{theorem}[multi-step recovery of SGD]\label{thm:gd-multi}
Let $T\in\NN$ and $\{\eta_t\}_{t\ge 0}$ be any learning rate schedule. Suppose the SGD iterates~$\{\bW_t\}_{t\ge 0}$ defined as  $\bW_0=0$, $\bW_{t+1} = \bW_t + \eta_t\bbG_t$ recover all items $i = 1,\cdots,d_t$ with constant probability at all steps $t\le T$. Then it must hold that 
\begin{align*}
d_{t+1} \lesssim \begin{cases}
\min\{d^\frac{1}{2\alpha}d_t, B^\frac{1}{\alpha}\} & d_t\lesssim d, \\
\min\{ d^\frac{1}{\alpha} d_t^{1-\frac{1}{2\alpha}}, B^\frac{1}{\alpha}\} & d_t \gtrsim d.
\end{cases}
\end{align*}
Moreover, this rate is achieved (up to polylog factors) by taking $\eta_t = \widetilde{\Theta}(d_{t+1}^\alpha)$.
\end{theorem}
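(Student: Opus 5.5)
Under the thresholded approximation Eq.~\eqref{eq:heuristic}, I would analyze step $t$ by writing the SGD iterate explicitly as $\bW_{t+1}=\sum_{s\le t}\eta_s\bbG_s=\sum_{i}c_i^{(t)}u_iv_i^\top$ with $c_i^{(t)}=\sum_{s\le t}\eta_s q_i^{(s)}\mathbf 1\{i>d_s\}$. This is a weighted sum of independent rank-one terms, so the logits are explicit:
\begin{align*}
\gamma_{ij}=u_j^\top\bW_{t+1}v_i=\sum_k c_k\,(u_j^\top u_k)(v_k^\top v_i).
\end{align*}
The signal is $\gamma_{ii}\approx c_i$ plus a cross term of size $\sqrt{\sum_{k\ne i}c_k^2}/d$. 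For $j\ne i$, the interaction is $\gamma_{ij}\approx (c_i+c_j)/\sqrt d+\mathrm{noise}$, where the noise is of order $\sqrt{\sum_k c_k^2}/d$. For the upper bound I would fix an item $i$ that is newly to be recovered ($i>d_t$, so $c_i\asymp \eta_t p_i$ if observed, with earlier contributions comparable or smaller). I would then exhibit a competitor $j$ with $\gamma_{ij}\ge\gamma_{ii}$ with constant probability. The information-theoretic cap $B^{1/\alpha}$ is immediate, since unobserved items have $c_i$ equal to the contribution of earlier steps only; I would handle that part exactly as in Theorem~\ref{thm:gd}.

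The two cases come from the two dominant noise sources.

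\textbf{Case $d_t\lesssim d$.} The largest coefficients belong to the unrecovered head items $k\in(d_t,2d_t]$, where $c_k\asymp\eta_t d_t^{-\alpha}$. Taking $j$ among these, $\gamma_{ij}$ contains $c_j u_j^\top u_j\,v_j^\top v_i\approx c_j\, g/\sqrt d$ with $g$ Gaussian. Its maximum over $\sim\log d$ such $j$ exceeds $c_i$ unless $p_i\gtrsim d_t^{-\alpha}/\sqrt d$. This gives $i\lesssim d^{1/(2\alpha)}d_t$. The earlier-step contributions only add terms with the same structure, so they do not help; I would bound them by the same Gaussian comparison.

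\textbf{Case $d_t\gtrsim d$.} Now the aggregate noise dominates. I would compute $\sum_{k>d_t}c_k^2\asymp\eta_t^2 d_t^{1-2\alpha}$, so the cross term in $\gamma_{ij}$ has standard deviation $\eta_t d_t^{1/2-\alpha}/d$. The maximum over the $N$ competitors $j$ is of this order times $\sqrt{\log N}$, and beating it requires $p_i\gtrsim d_t^{1/2-\alpha}/d$. That is exactly $i\lesssim d^{1/\alpha}d_t^{1-1/(2\alpha)}$. Since every quantity scales linearly in $\eta_t$ (and, for the accumulated past terms, in the earlier $\eta_s$), the bound is uniform over learning rate schedules. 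The past-step terms enter only by increasing the noise variance, and they can be absorbed by a conditioning argument.

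\textbf{Main obstacle: anticoncentration of the maximum interaction.} This is needed to show that failure occurs with constant probability. I would condition on $\{u_k\}$ and $\{v_k\}_{k\ne i}$. Then each $\gamma_{ij}$ is, over $v_i$, a Gaussian linear form $\langle \bW^\top u_j,v_i\rangle$. I would use Slepian/Sudakov-type lower bounds for the maximum over weakly correlated $j$, justified because the vectors $\bW^\top u_j$ are nearly orthogonal, combined with concentration of $\gamma_{ii}$ around $c_i$.

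\textbf{Achievability.} The matching rate with $\eta_t=\widetilde\Theta(d_{t+1}^\alpha)$ follows from the same explicit formulas. Signal $\eta_t p_i\gtrsim 1$ dominates the noise by a $\polylog$ margin for $i\le d_{t+1}$. Already recovered items are preserved: their accumulated signal stays at least as large as the noise, whose order was computed above, after rescaling.
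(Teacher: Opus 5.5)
Your two-threshold heuristics and the threshold algebra are right, and your case $d_t\gtrsim d$ is essentially the paper's Case II picture. The upper bound, however, has a genuine gap: you never neutralize the contribution of items that were \emph{already recovered}, and for an arbitrary schedule this contribution can dominate every logit.

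The thresholded update only deflates the gradient. $\bW_{t+1}$ still contains $c_ku_kv_k^\top$ for all $k\le d_t$, with $c_k=\sum_{s:\,d_s<k}\eta_sq_k^{(s)}$. So your claim that the largest coefficients belong to the head items $(d_t,2d_t]$ fails unless the earlier $\eta_s$ are polynomially smaller than $\eta_t$. For example, with constant $\eta$:
\begin{itemize}
\item One gets $d_1\asymp d^{1/(2\alpha)}$ and $d_2\asymp d^{1/\alpha}$ up to polylogs, which is consistent with the hypothesis.
\item At the next step, the step-$0$ terms put a fluctuation of order $\eta/d$ into every logit, including $\gamma_{ii}$.
\item For $i$ near the claimed threshold $d^{1/(2\alpha)}d_2$, however, $c_i\asymp\eta d^{-3/2}$, a factor $\sqrt d$ below that noise.
\end{itemize}
Such terms cannot be ``bounded by the same Gaussian comparison''; one has to show they do not favor $i$.

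The tools in your main-obstacle paragraph break exactly here. First, $\gamma_{ii}$ does not concentrate around $c_i$ for items beyond the threshold, which are precisely the ones you must show fail. Second, with power-law weights the past contribution gives $\bW$ stable rank $O(1)$. The vectors $\bW^\top u_j$ are then strongly correlated rather than nearly orthogonal. After conditioning on all $u$'s, the comparison hinges on $u_i$ not being extreme among the $u_j$'s, which your plan does not supply. Finally, linear scaling in $\eta_t$ only handles a common rescaling, not the relative weights of different steps.

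The missing idea is the paper's sign-plus-exchangeability decomposition, which uses the randomness of the competitors' \emph{unembeddings}.

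In Case I the paper writes $\gamma_{\tau+1,ij}-\gamma_{\tau+1,ii}=\sum_{t\le\tau}\eta_t[\,\cdot\,]_t+(u_j-u_i)^\top w$, where $w=\sum_{t\le\tau}\eta_t\sum_{d_t<k\le d_\tau}q_k^{(t)}\langle v_i,v_k\rangle u_k$. This $w$ gathers all previously recovered items and is independent of $u_i,u_j$.
\begin{itemize}
\item Restricting to competitors $j\in(d_\tau,2d_\tau]$ with $\langle v_i,v_j\rangle\ge C/\sqrt d$ (a constant fraction) makes every bracket $[\,\cdot\,]_t$ positive. Hence the sum is positive for any $\eta_t\ge 0$.
\item Conditionally on the $v$'s and $u_1,\dots,u_{d_\tau}$, the scalars $u_i^\top w$ and $u_j^\top w$, $j\in\cJ$, are i.i.d. So $i$ wins with probability only $O(1/|\cJ|)$, however large $\nnorm{w}_2$ is.
\end{itemize}

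In Case II the same device applies with $\cJ=\{i+1,\dots,i+\sqrt d\}$, $w$ built from all other items, and $Z_k=\sqrt d\,u_k^\top w/\nnorm{w}_2$ i.i.d. It turns the schedule dependence into the one-sided bound $\nnorm{w}_2^2\gtrsim d\bar\eta_\tau^2p_i^2$. Against this, $i$'s own terms are only $O(\tau\bar\eta_\tau p_i)$, so recovery would require $Z_i\ge\max_{j\in\cJ}Z_j-O(T)$, which has vanishing probability.

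Your achievability outline is in line with the paper. For the $B^{1/\alpha}$ cap in the multi-step setting, the paper again argues by symmetry rather than reusing the one-step argument: never-sampled items make $\bW_{\tau+1}$ independent of $(u_i,v_i)$.
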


In words, for the first $\lceil 2\alpha\rceil$ steps, the recovery exponent of SGD increases linearly until $d_t\gtrsim d$; note that Muon already achieves this with a single update. After this point, however, the improvement recursion $d_{t+1}\sim d^\frac{1}{\alpha} d_t^{1-\frac{1}{2\alpha}}$ matches that of Muon. Hence Muon accelerates recovery earlier in training, but the convergence behavior for large $t$ is comparable to that of SGD. This aligns with the empirical observations in \citep{semenov2025benchmarking}, where preconditioned optimizers such as Muon and SOAP~\citep{vyas2024soap} are found to outperform AdamW on shorter runs, but the gap narrows over longer horizons.

We give a brief intuition for this phenomenon. In the non-orthogonal setting, each item $i>d_t$ to be recovered must compete with noise from the top individual unclassified items $j\sim d_t$ with large frequencies, as well as the aggregate fluctuation from the bulk of the unclassified items, which leads to the two thresholds in Theorem~\ref{thm:gd-multi}. Hence, Muon can be interpreted as effectively removing the first threshold by amplifying the bulk (but not top) singular directions. Once $d_t>d$, the gradient becomes relatively more isotropic and so the effect of orthogonalization is less pronounced; the second threshold becomes the limiting factor for both optimizers.

\begin{remark}
The rates in Theorem~\ref{thm:multi} and Theorem~\ref{thm:gd-multi} are given for a constant (or at most logarithmically diverging) horizon $T$, and hence can achieve $\Omega(d^{2-\eps})$ recovery rate for any $\eps>0$. As in Section~\ref{sec:optimal}, this scaling is likely near-optimal in this regime. However when $T$ grows further, we expect that the batch size dependency should more accurately scale as $(TB)^{1/\alpha}$, as this is the information-theoretic upper bound on the set of all observed items after $T$ steps.

We also emphasize that the approximation in Eq.~\eqref{eq:heuristic} is heuristic. Under the exact Muon dynamics after~$t$ steps, items with indices $i \ge \bar d_t := d_t \polylog(d)$ have nearly uniform logits, but for items in the intermediate range $d_t \le i \le \bar d_t$, the predicted scores $\hp_t(i\mid i)$ can take any value between~$\frac1N$ and~$1$. These scores also depend in a complicated way on all embeddings $\{u_j,v_j\}_{j\in[N]}$, preventing a direct extension of the proof of Theorem~\ref{thm:main}. One way to bypass this is to assume gaps in the power-law spectrum as in \citet{li2026muon}, but we do not take this route. Instead, we leave the precise end-to-end guarantee as a conjecture below and empirically validate predictions of Theorem~\ref{thm:multi} in Figure~\ref{fig:multi-step-population} (on the exact Muon iterates). From Theorem~\ref{thm:newton}, we also conjecture that Muon continues to match Newton's method throughout training, suggesting an intrinsic curvature-aware property.
\end{remark}

\begin{conjecture}
The recovery and convergence rates of Theorem~\ref{thm:multi} also hold for the exact Muon iterates $\bW_{t+1} = \bW_t + \eta h_{\lam_t}(\bG_t)$. \new{Moreover, Muon matches Newton's method throughout training.}
\end{conjecture}

\section{Experiments}\label{sec:experiments}

\subsection{Linear associative memory}

We quantify the benefits of Muon over SGD in synthetic experimental settings. First, we consider the linear associative memory model introduced in Section~\ref{sec:setting}. 
For convenience, in the Muon update we keep the regularization hyperparameter fixed to $\lambda_t\equiv 0$ and compute the exact polar update. 

\paragraph{First gradient step.} Figure~\ref{fig:one-step-population} shows the storage capacity scaling of the memory matrix $\bW$ after a single population Muon or GD step. We fix the vocabulary size at $N=100{,}000$ and vary both the power law exponent~$\alpha$ and the embedding dimension~$d$. Muon (Figure~\ref{fig:muon-population}) indeed achieves a dramatically larger storage capacity than GD (Figure~\ref{fig:GD-population}). Moreover, the fitted scaling exponents (bottom right) agree with our theoretical predictions in the population limit: Muon stores $d^{1+\frac{1}{2\alpha}}$ items (Theorem~\ref{thm:main}), whereas GD stores only $d^{\frac{1}{2\alpha}}$ items (Theorem~\ref{thm:gd}).

In Figure~\ref{fig:one-step-batch}, we study the empirical loss or minibatch setting to probe the critical batch size. We fix $N=100{,}000$ and $\alpha=1.5$, and vary the batch size $B$. At small batch sizes (up to roughly $B \approx 100$), both Muon and GD are bottlenecked by the information-theoretic rate $B^{\frac{1}{\alpha}}$. As the batch size increases, however, the capacity of SGD quickly plateaus, whereas Muon continues to benefit from larger batches. This is consistent with the empirical observation that Muon’s computational gains are accompanied by a much larger critical batch size \citep{wen2025fantastic}.

\begin{figure}[!t] 
%\vspace{-2.5mm} 
\centering
\begin{subfigure}[t]{0.49\linewidth}
\centering
{\includegraphics[height=0.72\textwidth]{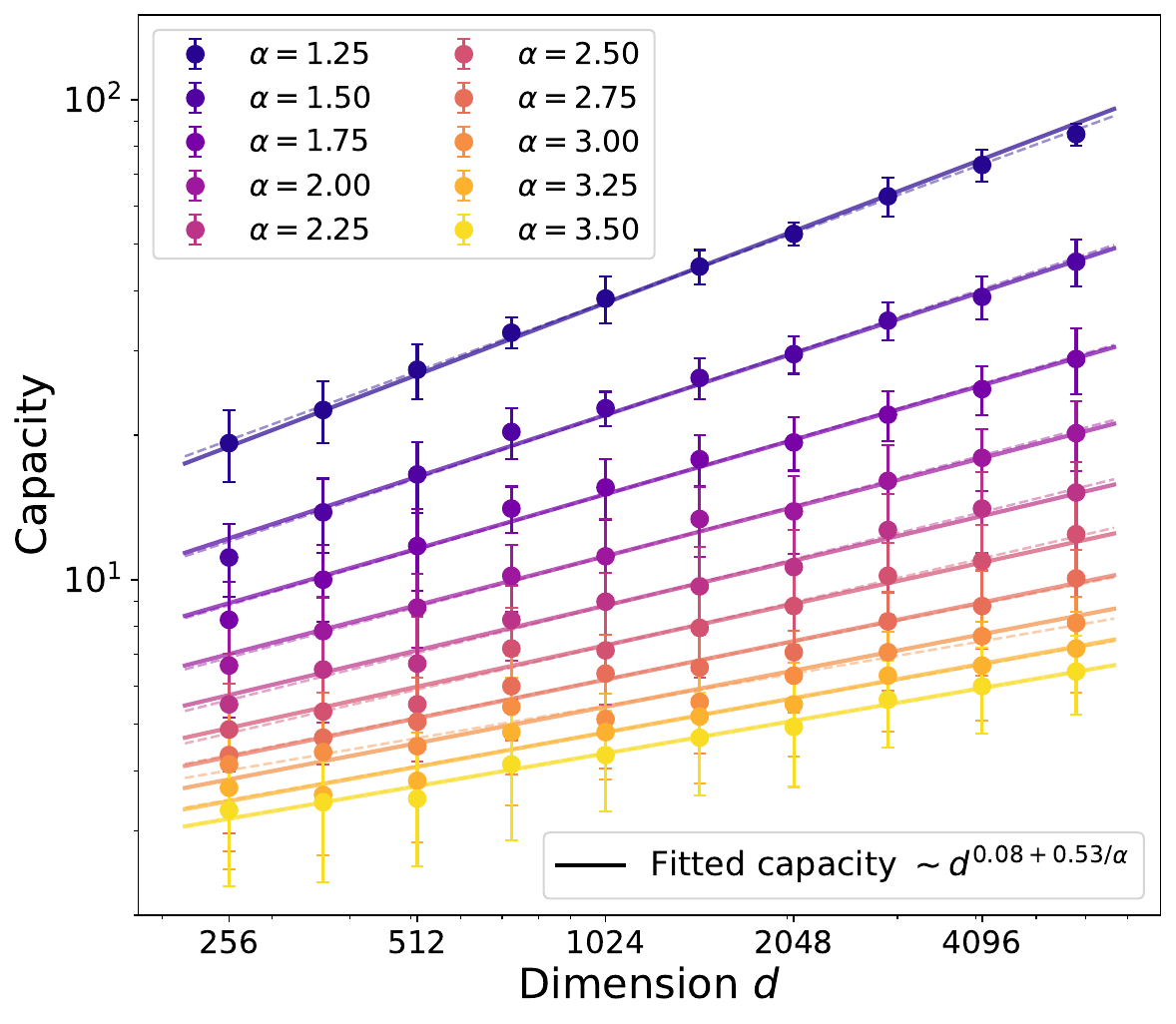}} 
%\vspace{-1mm}
\caption{Gradient descent (population).}
\label{fig:GD-population}
\end{subfigure}%
\begin{subfigure}[t]{0.49\linewidth}
\centering 
{\includegraphics[height=0.72\textwidth]{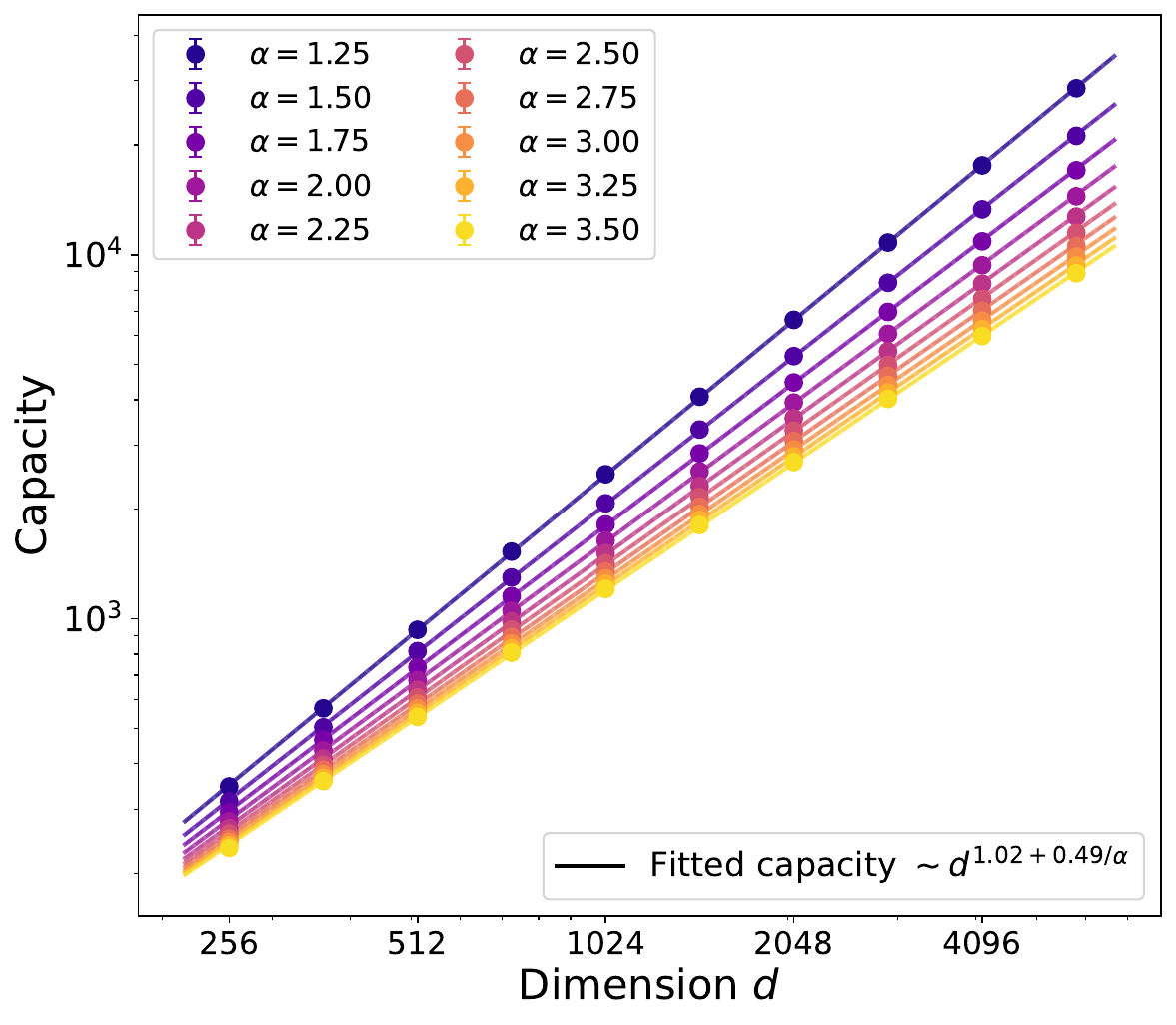}} 
%\vspace{-1mm}
\caption{\small Muon (population).} 
\label{fig:muon-population}
\end{subfigure}%
%\vspace{-1mm} 
\caption{\small Capacity scaling after one population Muon and GD step. We set $N=100,000$ and vary $d,\alpha$. Each experiment is repeated 16 times. For each $\alpha$, we fit the dimension exponents of the mean capacity $d^{C_\alpha}$ (dashed lines), and then find the best fit of exponents $C_\alpha$ in the form of $C_\alpha = c_1 + \frac{c_2}{\alpha}$ (solid lines). Observe that Muon achieves much higher storage than GD, and the exponents are consistent with Theorems~\ref{thm:main}, \ref{thm:gd}. 
}  
%\vspace{-2mm}
\label{fig:one-step-population} 
\end{figure}

\begin{figure}[!t]
% \vspace{-2mm} 
\centering
\begin{subfigure}[t]{0.49\linewidth}
\centering
{\includegraphics[height=0.72\textwidth]{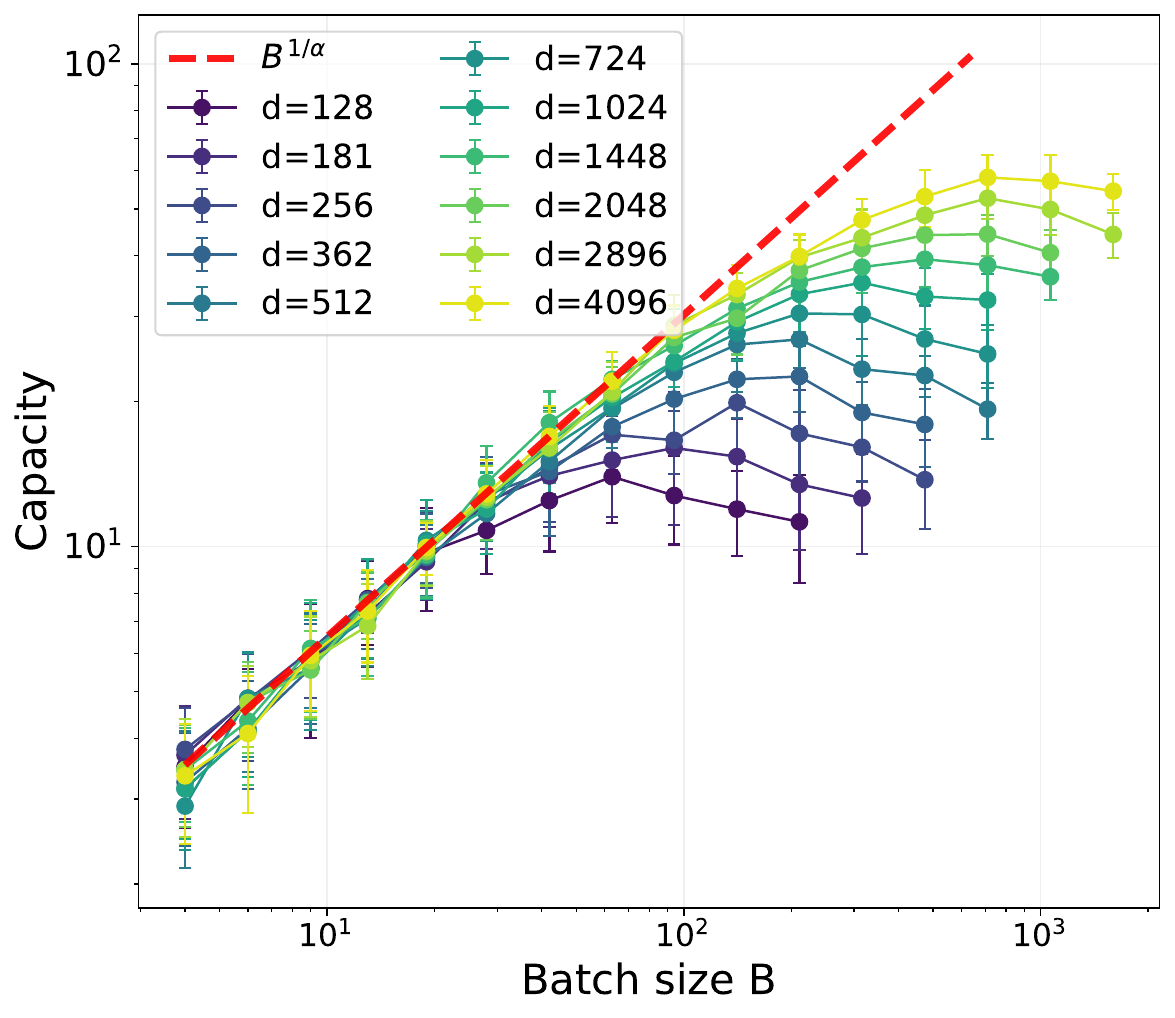}} 
%\vspace{-1mm}
\caption{Gradient descent (minibatch).}
\label{fig:GD-batch}
\end{subfigure}%
\begin{subfigure}[t]{0.49\linewidth}
\centering 
{\includegraphics[height=0.72\textwidth]{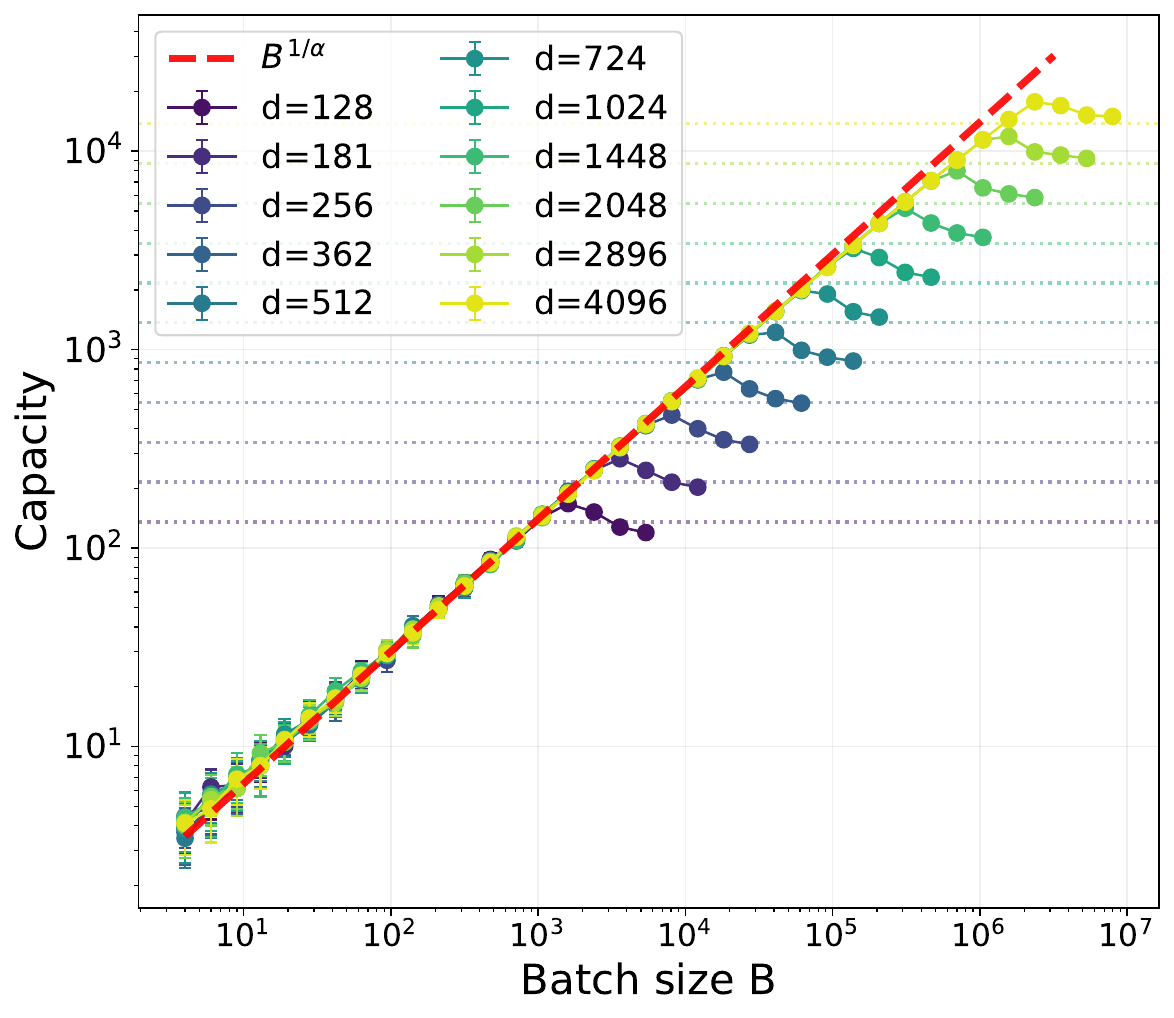}} 
%\vspace{-1mm}
\caption{\small Muon (minibatch).} 
\label{fig:muon-batch}
\end{subfigure}%
%\vspace{-1mm} 
\caption{\small Capacity scaling after one Muon and SGD step on empirical loss. We set $N=100{,}000, \alpha=1.5$, and vary the minibatch size $B$. Each experiment is repeated 16 times. The dashed red line indicates the information-theoretic rate, and the horizontal dashed lines in Figure~\ref{fig:muon-batch} correspond to the $d^{1+\frac{1}{2\alpha}}$ ceiling; the predicted critical batch sizes are given by their intersections. Observe that Muon offers capacity gain over SGD only at sufficiently large~$B$, and the empirical critical batch sizes match well with our predictions.}
% \vspace{-0.5mm}
\label{fig:one-step-batch} 
\end{figure}  

\paragraph{Multiple gradient steps.} In Figure~\ref{fig:multi-step-population}, we examine the multi-step capacity scaling of population Muon to test the predictions of Theorem~\ref{thm:multi} (which assumes the deflation heuristic). We fix $N=250{,}000$, vary $d$ and $\alpha$, and run Muon with $\lambda_t=0$ on the population cross-entropy objective for $T$ steps using a fixed learning rate $\eta \asymp \sqrt{d}$. After each step, we measure the storage capacity and fit a power law to extract its scaling exponent in $d$. As shown in Figures~\ref{fig:step-2}, \ref{fig:step-3}, \ref{fig:step-4}, the capacity increases with the number of training steps. Moreover, Figure~\ref{fig:step-convergence} shows that, after sufficiently many steps, the weight matrix approaches the optimal $\widetilde{\Theta}(d^2)$ capacity \citep{nichani2024understanding}. Figure~\ref{fig:multi-step-exponent} compares the fitted capacity exponents for all $(T,\alpha)$ pairs against the predictions of Theorem~\ref{thm:multi}. We find good overall agreement, with larger deviations at smaller $\alpha$ and larger $T$ where non-asymptotic effects are expected to be more pronounced. Overall, these results suggest that the heuristic approximation we introduced in Eq.~\eqref{eq:heuristic} captures the scaling behavior of the training dynamics reasonably well.

% In Figure~\ref{fig:multi-step-2}, we further compare the multi-step capacity scaling of GD and Muon with $N=100{,}000$ and $\alpha=1.5$. Figure~\ref{fig:GD-vs-muon-multistep} compares their performance in minimizing the population cross-entropy loss. Muon attains much higher capacity than GD in the first few steps. On the other hand, with the increasing learning rate schedule from Theorem~\ref{thm:gd-multi}, GD catches up later in training, and both methods eventually reach the optimal $d^2$ capacity (we however note that this increasing~$\eta_t$ schedule for GD is numerically unstable, especially when~$\alpha$ is large). These results suggest that Muon’s acceleration is most significant early in training, matching the predictions in Theorems~\ref{thm:multi} and~\ref{thm:gd-multi}. 
% Finally, Figure~\ref{fig:multistep-batch} shows minibatch Muon run for up to $T=20$ steps with batch sizes $B \in \{2^{11},\dots,2^{16}\}$. For batch sizes below the critical threshold, the capacity stays close to the information-theoretic limit $(BT)^{1/\alpha}$, i.e., the number of items that can be observed after $T$ steps with batch size $B$. For larger batch sizes (darker curves), the capacity saturates and sample efficiency worsens.

\begin{figure}[t]
%\vspace{-2mm} 
\centering
\begin{subfigure}[t]{0.325\linewidth}
\centering
{\includegraphics[height=0.85\textwidth]{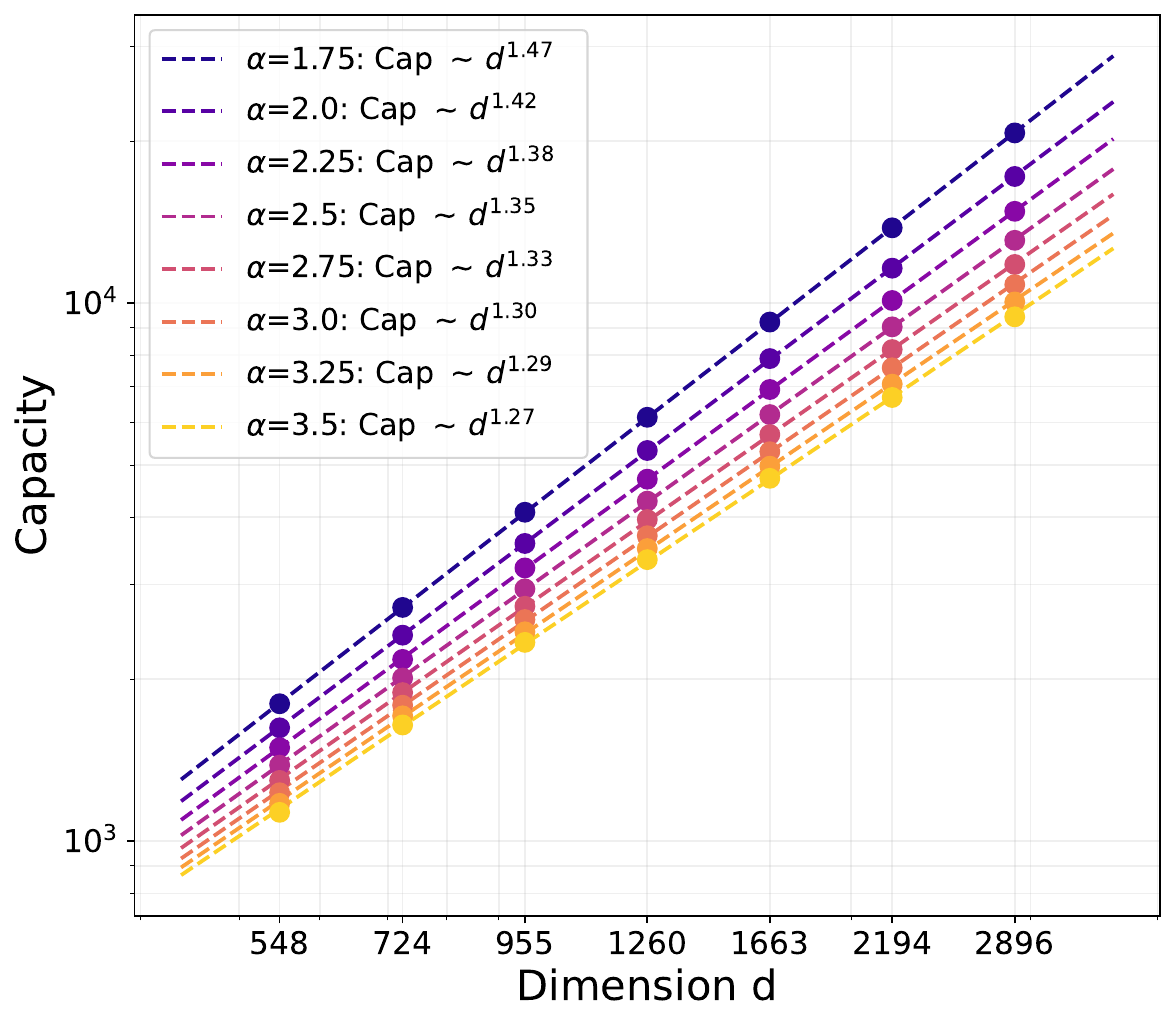}} 
%\vspace{-1mm}
\caption{Capacity at $T=2$.}
\label{fig:step-2}
\end{subfigure}%
\begin{subfigure}[t]{0.325\linewidth}
\centering
{\includegraphics[height=0.85\textwidth]{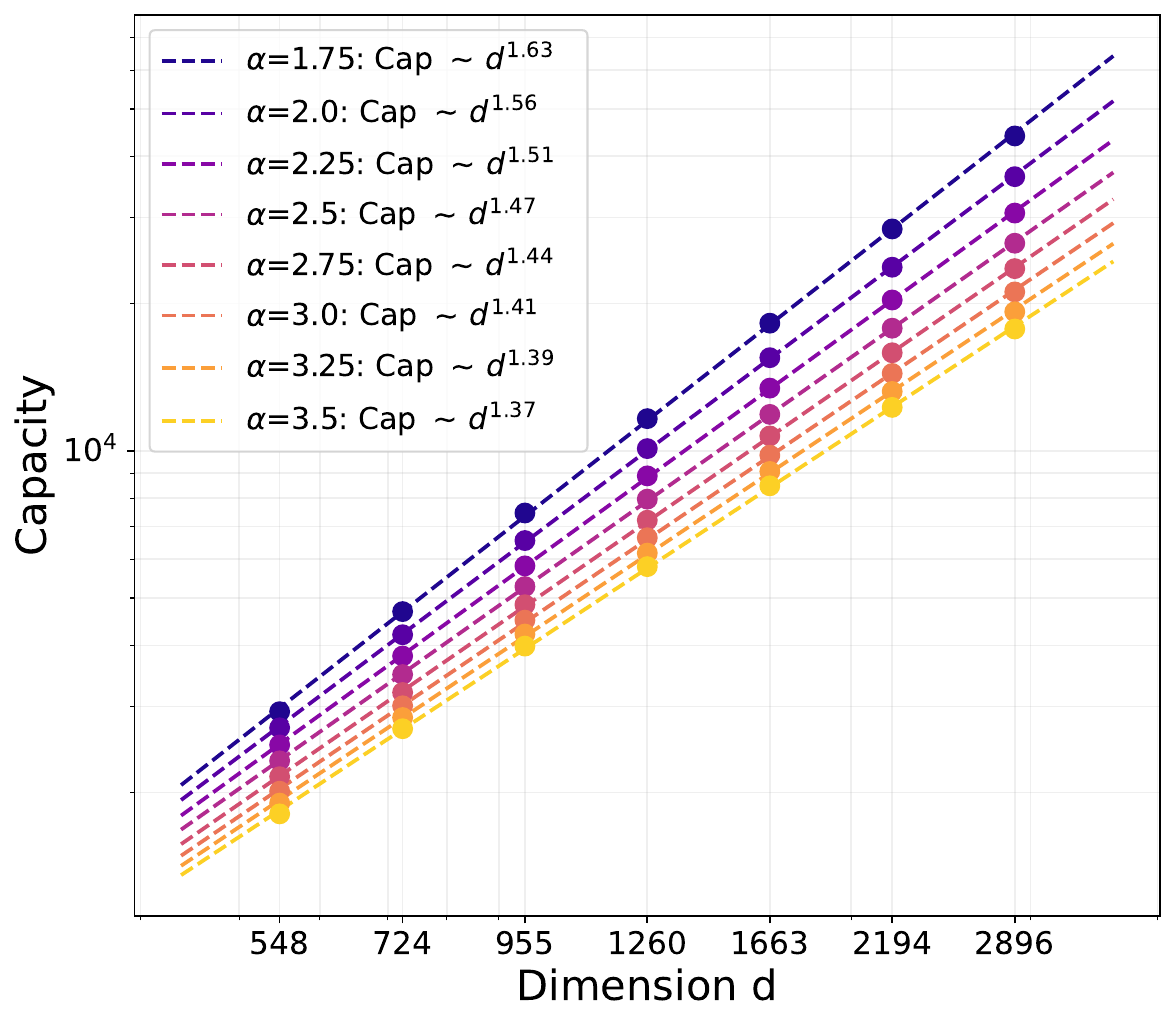}} 
%\vspace{-1mm}
\caption{Capacity at $T=3$.}
\label{fig:step-3}
\end{subfigure}%
\begin{subfigure}[t]{0.325\linewidth}
\centering
{\includegraphics[height=0.85\textwidth]{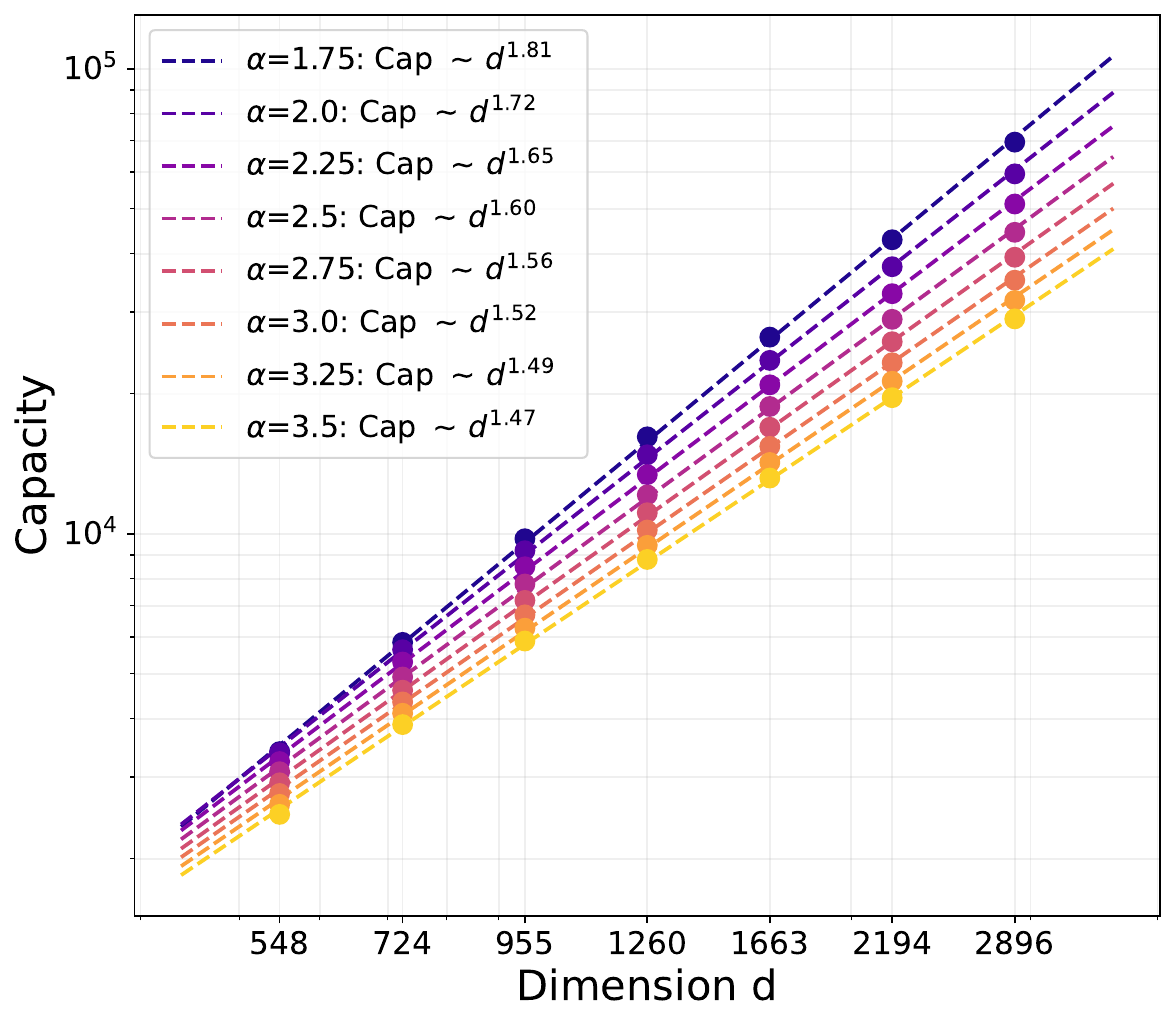}} 
%\vspace{-1mm}
\caption{Capacity at $T=4$.}
\label{fig:step-4}
\end{subfigure}
\\
\vspace{3mm}
\begin{subfigure}[t]{0.42\linewidth}
\centering
{\includegraphics[height=0.69\textwidth]{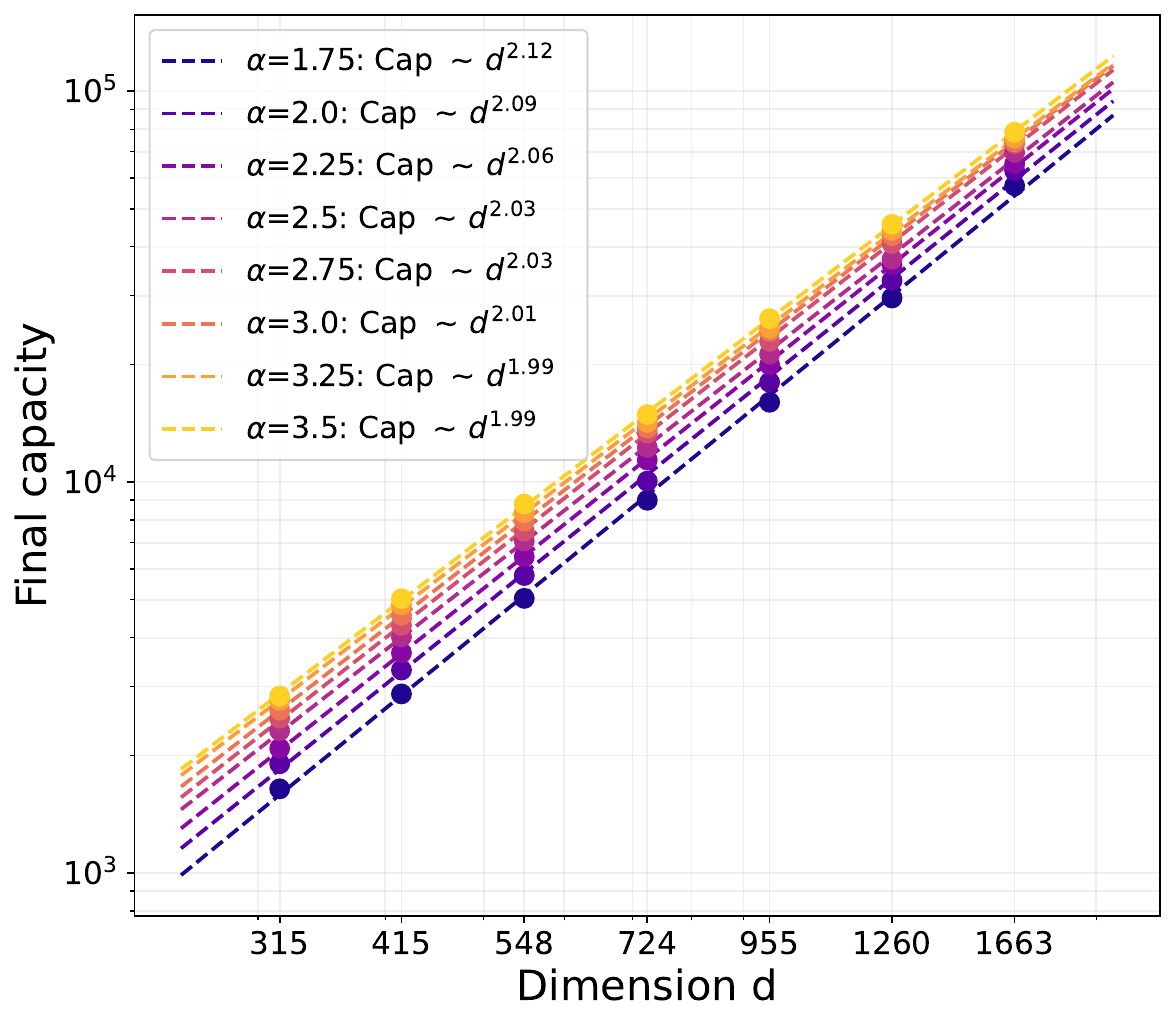}} 
%\vspace{-1mm}
\caption{Capacity near convergence.} 
\label{fig:step-convergence}
\end{subfigure}%
\begin{subfigure}[t]{0.42\linewidth}
\centering
{\includegraphics[height=0.691\textwidth]{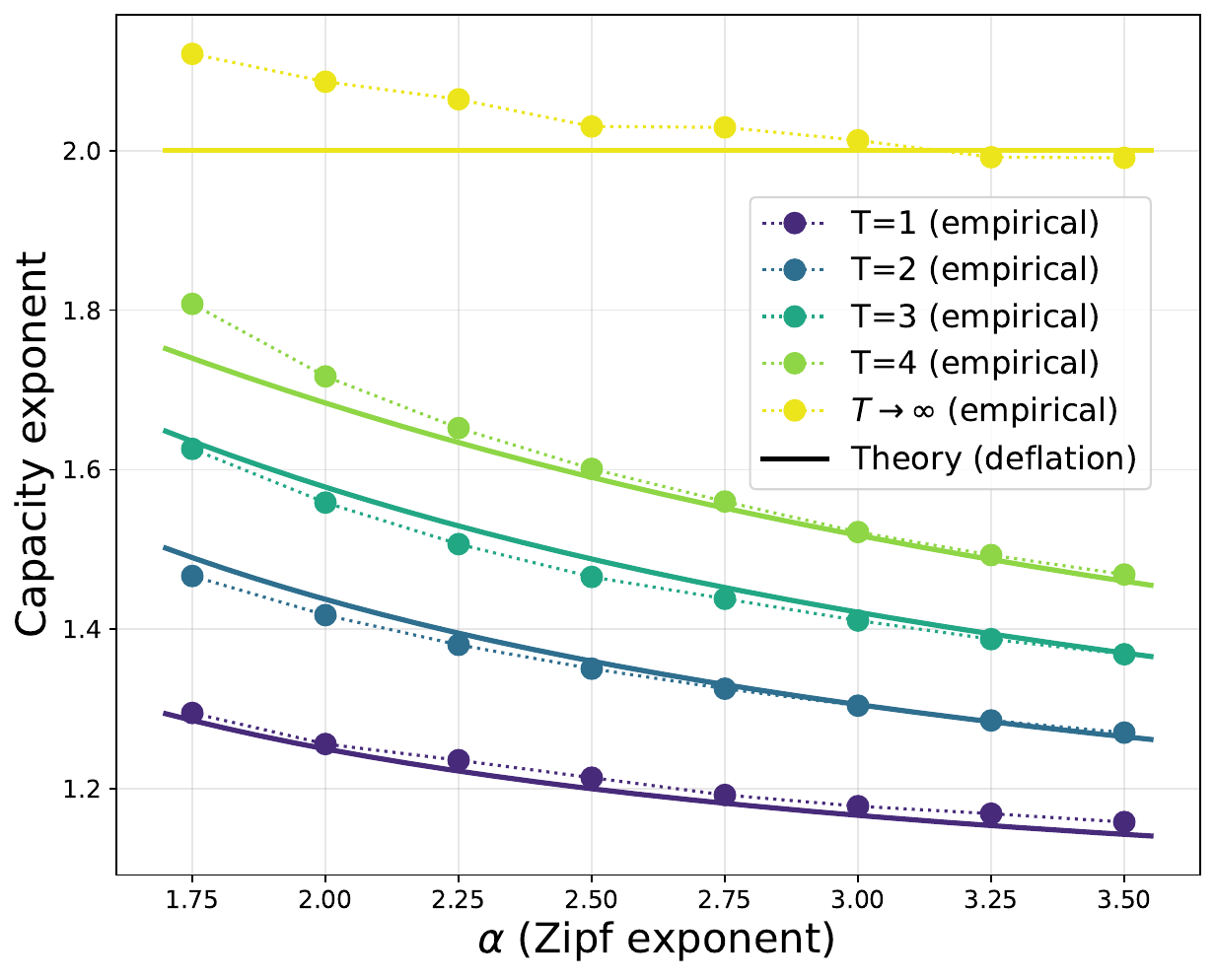}} 
%\vspace{-1mm}
\caption{Capacity scaling exponent across $T$.} 
\label{fig:multi-step-exponent}
\end{subfigure}%
%\vspace{-1mm} 
\caption{\small Capacity after~$T$ Muon steps on the population cross-entropy loss. We set $N=250{,}000$, $\eta=2\sqrt{d}$.  Figures~\ref{fig:step-2}, \ref{fig:step-3}, \ref{fig:step-4} report the capacity at $T = 2,3,4$, respectively (see Figure~\ref{fig:muon-population} for $T=1$); Figure~\ref{fig:step-convergence} presents the capacity at large $T$: we run Muon for up to $500$ steps and early stop when the capacity improvement over $10$ steps drops below $0.5\%$. Figure~\ref{fig:multi-step-exponent} compares the fitted dimension exponents against predictions of Theorem~\ref{thm:multi}; observe that the exponents agree except at small~$\alpha$ and large~$T$. 
}  
\label{fig:multi-step-population} 
\end{figure}   

\begin{figure}[!htb]
% \vspace{-2mm}
\centering
\begin{subfigure}[t]{0.49\linewidth}
\centering
{\includegraphics[height=0.69\textwidth]{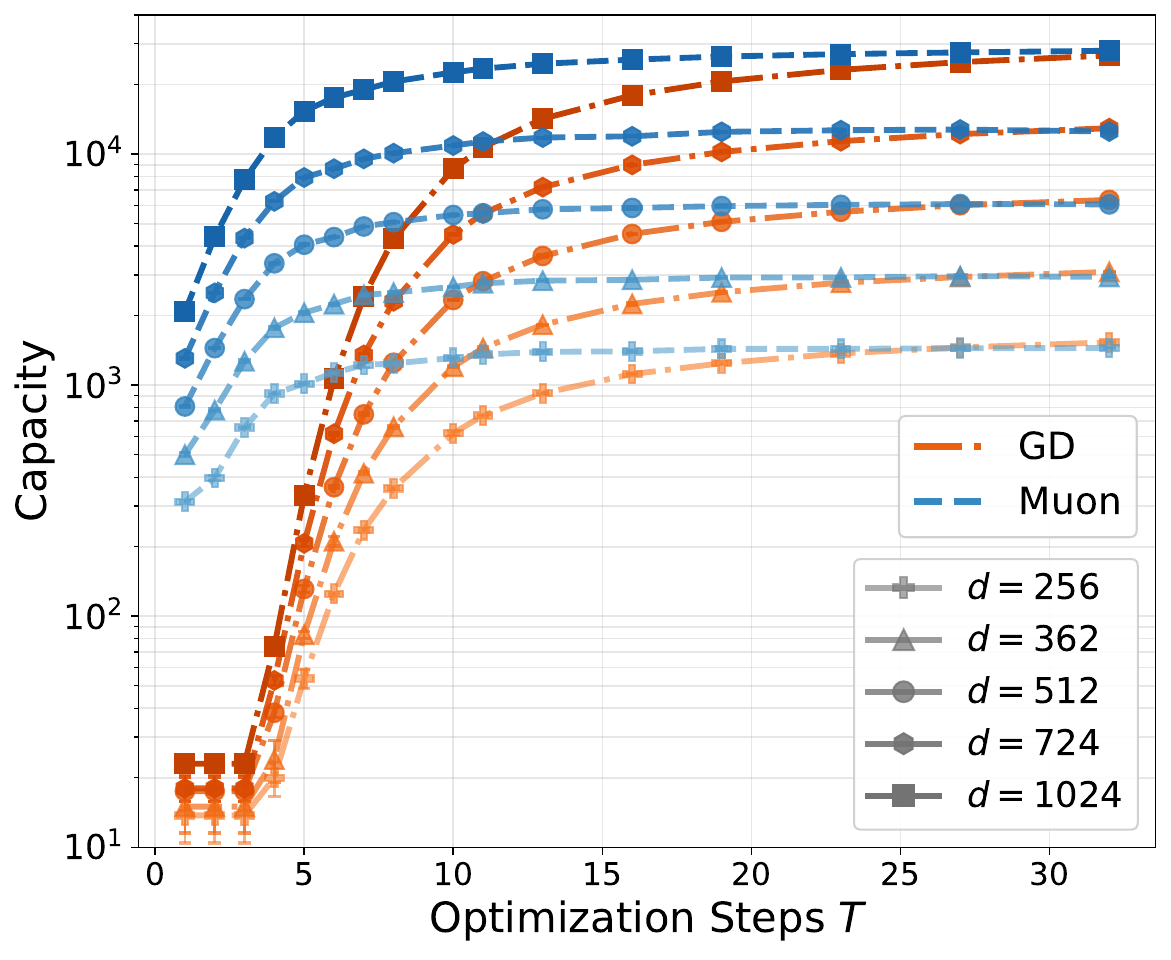}} 
%\vspace{-1mm}
\caption{Multi-step GD vs.~Muon (population).}
\label{fig:GD-vs-muon-multistep}
\end{subfigure}%
\begin{subfigure}[t]{0.49\linewidth}
\centering 
{\includegraphics[height=0.69\textwidth]{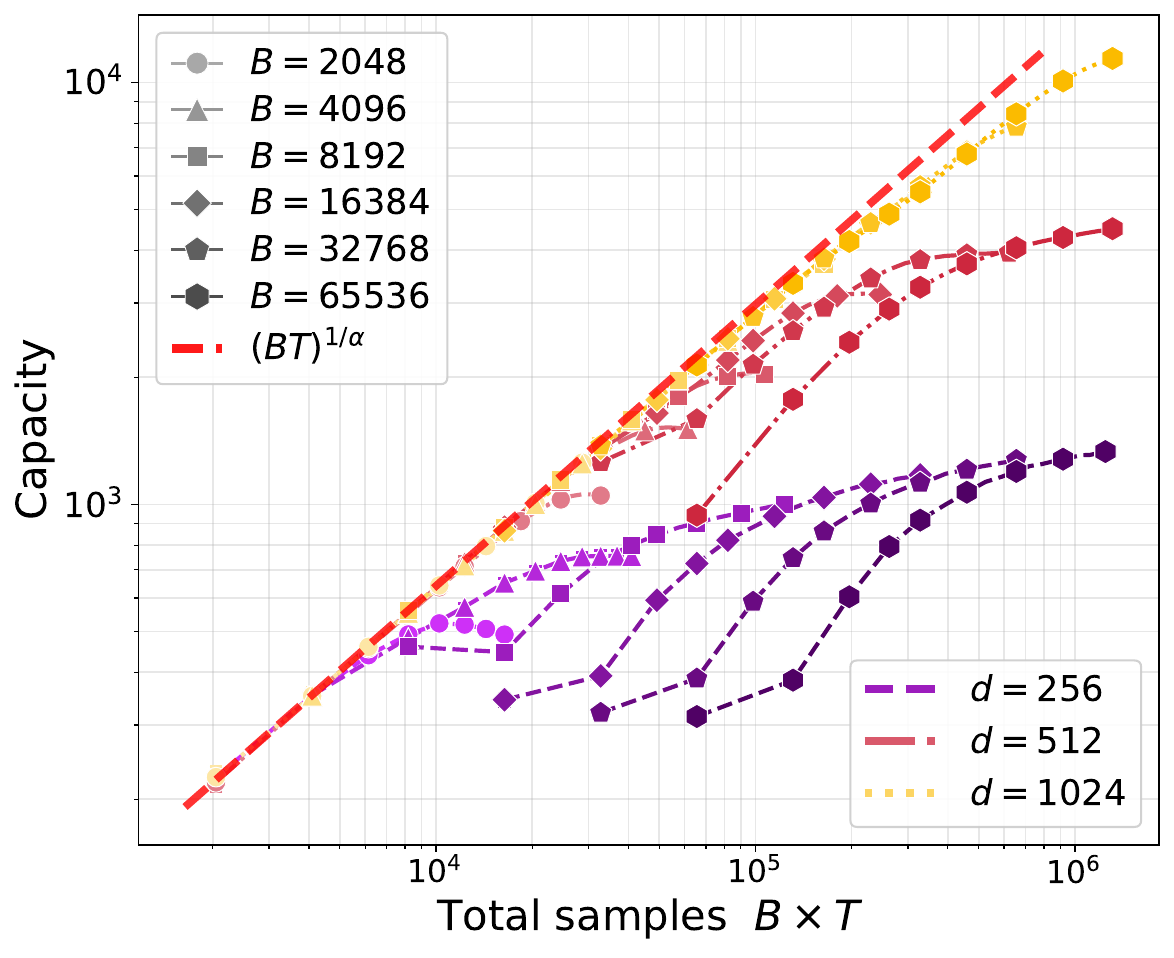}} 
%\vspace{-1mm}
\caption{\small Capacity of multi-step Muon (minibatch).} 
\label{fig:multistep-batch}
\end{subfigure}%
\vspace{-1mm} 
\caption{\small Capacity scaling of multi-step Muon and GD. We set $N=100,000$, $\alpha=1.5$. 
% Each experiment is repeated 5 times. 
\textbf{(a)} Population update: for GD we implement an increasing learning rate schedule (see Theorem~\ref{thm:gd-multi}) with $\eta_1 = 0.01\sqrt{d}$; for Muon we use a fixed step size $\eta=\sqrt{d}$. Observe that the benefit of Muon is most visible in the ``early phase'' of training (the initial plateau of GD in the first 3 steps is due to small $\eta_1$ chosen for numerical stability).   
\textbf{(b)} Capacity of minibatch Muon vs.~total sample size $B\times T$; for each batch size $B$, we run minibatch Muon for $T=20$ steps with $\eta=\sqrt{d}$. Dashed red line indicates the information-theoretic rate $(BT)^{1/\alpha}$. 
} 
% \vspace{-0.5mm} 
\label{fig:multi-step-2} 
\end{figure}

In Figure~\ref{fig:multi-step-2}, we further compare the multi-step capacity scaling of GD and Muon with $N=100{,}000$ and $\alpha=1.5$. Figure~\ref{fig:GD-vs-muon-multistep} compares their performance in minimizing the population cross-entropy loss. Muon attains much higher capacity than GD in the first few steps. On the other hand, with the increasing learning rate schedule from Theorem~\ref{thm:gd-multi}, GD catches up later in training, and both methods eventually reach the optimal $d^2$ capacity (we however note that this increasing~$\eta_t$ schedule for GD is numerically unstable, especially when~$\alpha$ is large). These results suggest Muon’s acceleration is most significant early in training, matching the predictions in Theorems~\ref{thm:multi} and~\ref{thm:gd-multi}. 

Figure~\ref{fig:multistep-batch} shows minibatch Muon up to $T=20$ steps with batch sizes $B \in \{2^{11}\!,\dots,2^{16}\}$. For batch sizes below the critical threshold, the capacity stays close to the information-theoretic limit $(BT)^{1/\alpha}$, i.e., the number of items that can be observed after $T$ steps with batch size $B$. For larger batch sizes (darker curves), the capacity saturates and sample efficiency worsens. 

\begin{figure}[!b]
\vspace{-1mm} 
\centering
\begin{subfigure}[t]{0.325\linewidth}
\centering
{\includegraphics[height=0.86\textwidth]{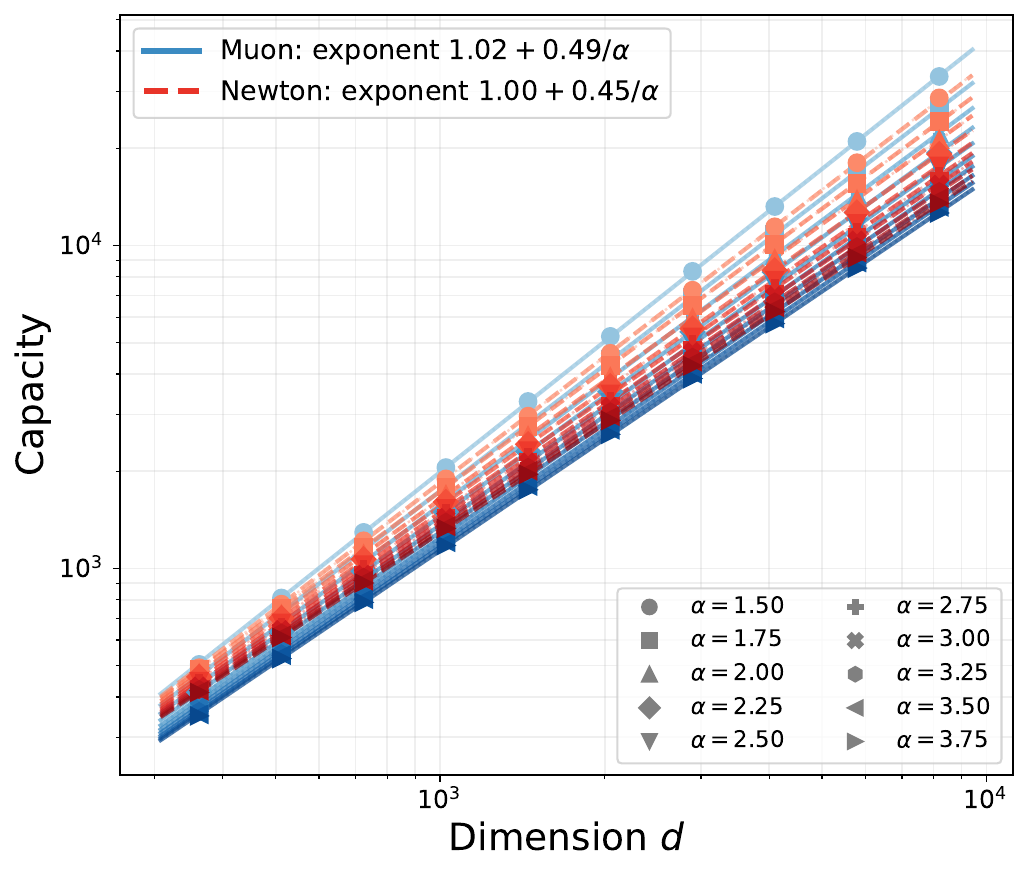}} 
%\vspace{-1mm}
\caption{Muon vs.~Newton ($\kappa=0$).}
\label{fig:kappa=0}
\end{subfigure}%
\begin{subfigure}[t]{0.325\linewidth}
\centering
{\includegraphics[height=0.86\textwidth]{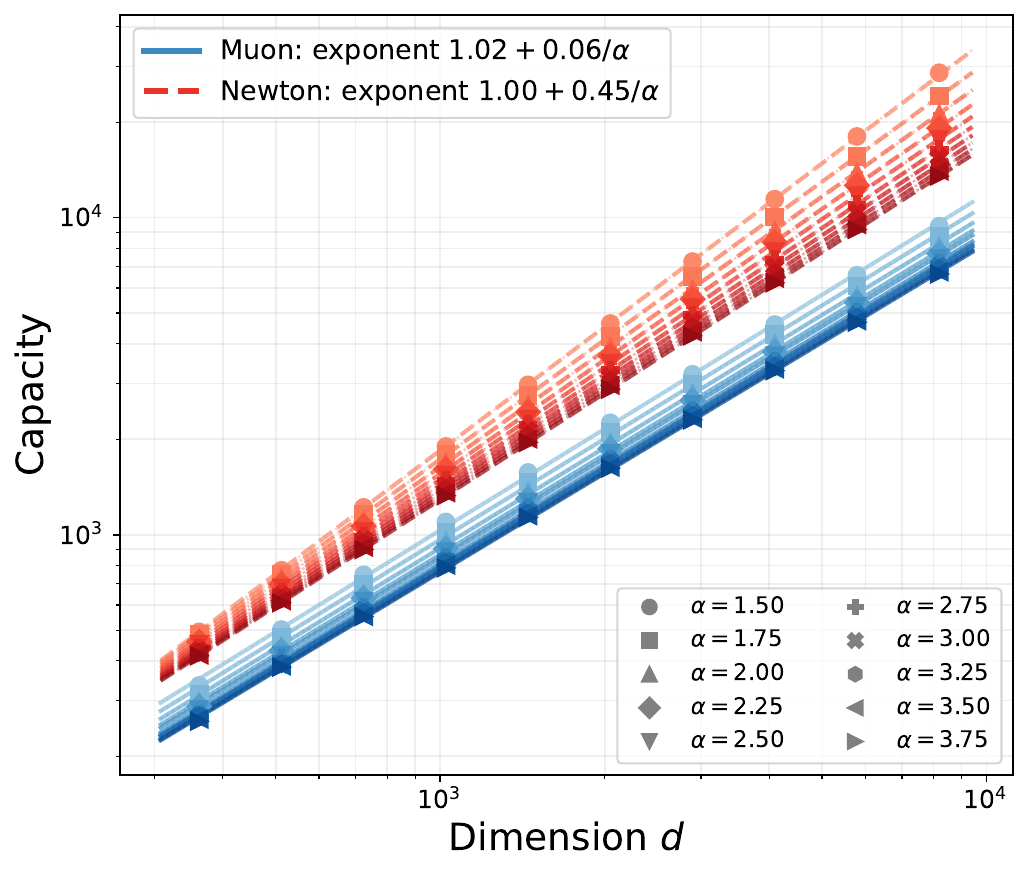}} 
%\vspace{-1mm}
\caption{Muon vs.~Newton ($\kappa=1$).}
\label{fig:kappa=1}
\end{subfigure}%
\begin{subfigure}[t]{0.325\linewidth}
\centering
{\includegraphics[height=0.86\textwidth]{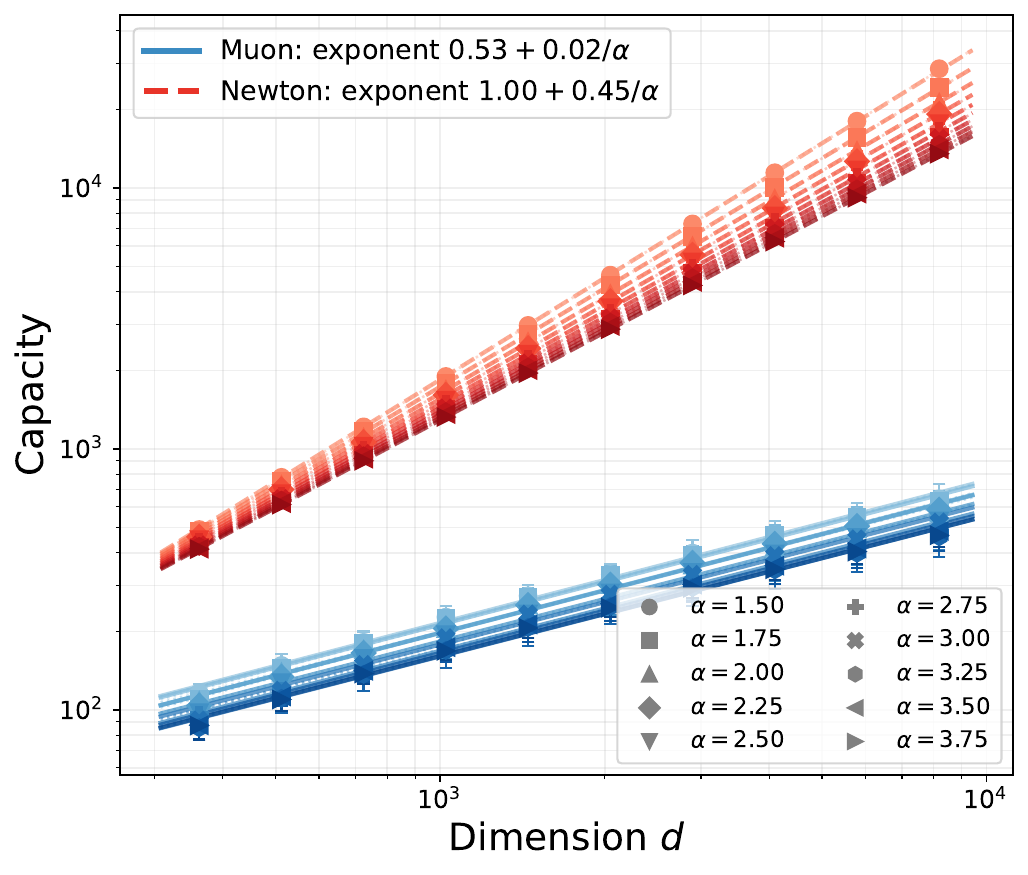}} 
%\vspace{-1mm}
\caption{Muon vs.~Newton ($\kappa=1.5$).}
\label{fig:kappa=2}
\end{subfigure}
\vspace{-1mm}
\caption{\small Capacity scaling after one (population) Muon and Newton step in the anisotropic setting: we choose $u_i \sim \cN(0,\frac{1}{d}\bI_d)$, $v_i \sim \cN(0,\bXi_v)$, where $\bXi_v$ is a trace-normalized diagonal matrix with $\lambda_{i}(\bXi_v)\asymp i^{-\kappa}$, $\kappa\ge 0$. 
We set $N=100,000$ and vary $d,\alpha$. For Newton's method we add a ridge regularization $\lambda=10^{-8}$ for numerical stability when the preconditioner is rank-deficient. Observe that when $\kappa=0$ (isotropic, Figure~\ref{fig:kappa=0}), Muon and Newton both achieve $d^{1+\frac{1}{2\alpha}}$ capacity, but as $\kappa$ increases (Figures \ref{fig:kappa=1}, \ref{fig:kappa=2}), the performance of Muon worsens while the Newton update remains invariant. 
}  
\label{fig:newton-population} 
\vspace{-1.5mm} 
\end{figure}

\paragraph{Comparison with Newton's method.} Figure~\ref{fig:newton-population} compares the storage efficiency of Muon and Newton's update in the population setting. As argued in Section~\ref{subsec:SGD-newton}, Muon should match Newton's method when the embedding and unembedding vectors are isotropic. To vary the anisotropy of the associative memory model, we consider
$
u_i \sim \cN (0,\tfrac{1}{d}\bI_d), v_i \sim \cN(0,\bXi_v),
$
where the covariance matrix $\bXi_v$ has eigenvalues $\lambda_{i}(\bXi_v)\asymp i^{-\kappa}$, $\kappa\ge 0$. We observe that in the isotropic case ($\kappa=0$), Muon matches the storage capacity of one Newton step; but as the data become more anisotropic, the gap between the two methods grows, and only Newton's method retains the $d^{1+\frac{1}{2\alpha}}$ capacity.

\subsection{In-context recall with transformers}

We next consider a simple associative recall task that can be solved by a two-layer transformer via the \emph{induction head} mechanism \citep{olsson2022incontextlearninginductionheads}. An induction head is a circuit composed of two attention heads that enables the model to copy a bigram from context, for example by predicting~$\bb$ after observing $[\ldots,\ba,\bb,\ldots,\ba]$. As shown by~\citet{bietti2023birth}, this mechanism can be implemented using a small number of associative memory matrices, making it a natural testbed for understanding how our perspective may extend to richer architectures such as multilayer transformers.

\paragraph{Data distribution.}
To study how optimizers interact with heavy-tailed data, we consider a variant of the synthetic model from~\citet{bietti2023birth} in which selected tokens follow power-law distributions. Specifically, we introduce two disjoint vocabularies~$\cQ$ and~$\cV$, each of size~$N$, together with three power-law distributions: the \emph{trigger} distribution~$p^{(t)}$, supported on~$\cQ$; the \emph{output} distribution~$p^{(o)}$, supported on~$\cV$; and the \emph{noise} distribution~$p^{(n)}$, also supported on~$\cV$ but potentially with a different frequency ordering from~$p^{(o)}$. 
Each sequence is generated by first sampling~$K$ triggers $q_1, \ldots, q_K \in \cQ$ from~$p^{(t)}$ without replacement, and~$K$ outputs $o_1, \ldots, o_K \in \cV$ from~$p^{(o)}$ with replacement. The resulting bigrams $(q_k, o_k)$ are then inserted into~$M$ random positions in a sequence of length~$T$. All remaining positions are filled with noise tokens~$n_j \in \cV$ sampled from~$p^{(n)}$. For instance, when $K=1$, $M=2$, and $T=8$, a sequence takes the form
\begin{align*}
[n_1, q_1, o_1, n_2, n_3, q_1, o_1, n_4].
\end{align*}
In this example, the second occurrence of~$o_1$ is perfectly predictable from the preceding context, and we train the two-layer transformer using cross-entropy loss only on such predictable output tokens. In all experiments, we set~$T=24$ and~$K=M=2$. For simplicity, we take~$p^{(n)}$ to be uniform over~$\cV$, while~$p^{(t)}$ and~$p^{(o)}$ follow power laws with exponents~$\alpha_t$ and~$\alpha_o$.

\paragraph{Architecture and optimizers.}
We use a two-layer transformer with single-head attention, Pre-LayerNorm, and basic RMS normalization, optionally augmented with feed-forward layers using a ReLU MLP. We compare Muon, SGD, and AdamW~\citep{loshchilov2019decoupledweightdecayregularization}, all trained with a constant step size and weight decay~$0.01$. For SGD and Muon, we use momentum~$0.9$, while for AdamW we set $(\beta_1,\beta_2)=(0.9,0.99)$. In the Muon implementation, we apply AdamW to the embedding and unembedding layers, and use $5$ Newton-Schulz iterations for the remaining layers.

\begin{figure}[!b]
  \centering
  \includegraphics[width=0.99\linewidth]{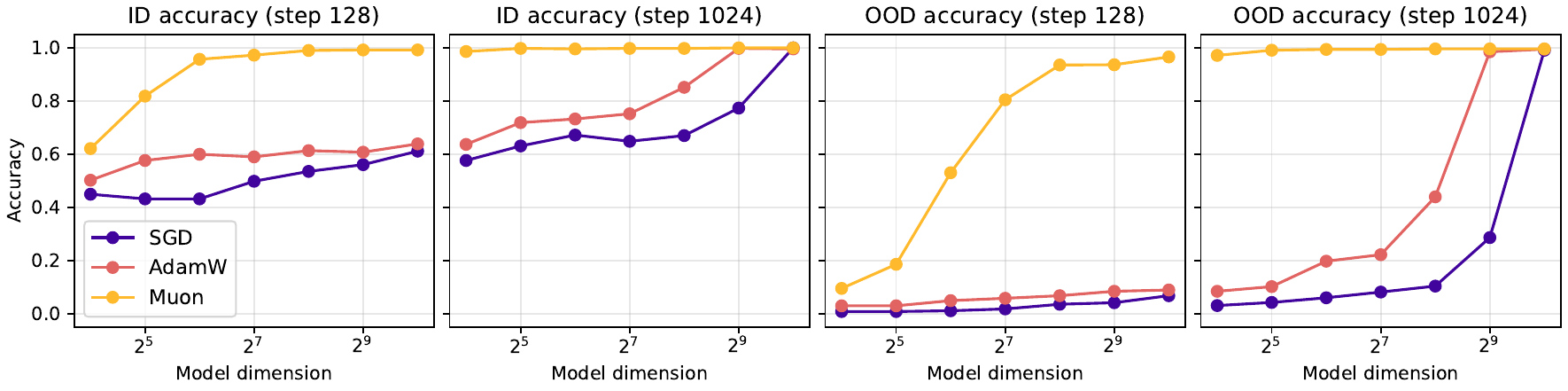}
  \caption{\small
    ID (left two) and OOD (right two) accuracy on the in-context recall task as a function of model dimension, for Muon, AdamW, and SGD, with batch size $256$ at iterations $128$ and $1024$. For each $(\mathrm{dim}, \mathrm{optimizer})$ pair, the learning rate and batch size are chosen to maximize accuracy.
  }
  \label{fig:acc_vs_dim}
  %\vspace{-1mm}
\end{figure}

\begin{figure}[!b]
\vspace{-1.5mm}
  \centering
  % Row 1: OOD accuracy at fixed iteration
  \begin{subfigure}{\textwidth}
    %\includegraphics[width=0.325\linewidth]{figures/ihead/acc_vs_batchsize_muon.pdf}
    %\hfill
    %\includegraphics[width=0.325\linewidth]{figures/ihead/acc_vs_batchsize_adamw.pdf}
    %\hfill
    %\includegraphics[width=0.325\linewidth]{figures/ihead/acc_vs_batchsize_sgd.pdf}
    \includegraphics[width=0.99\linewidth]{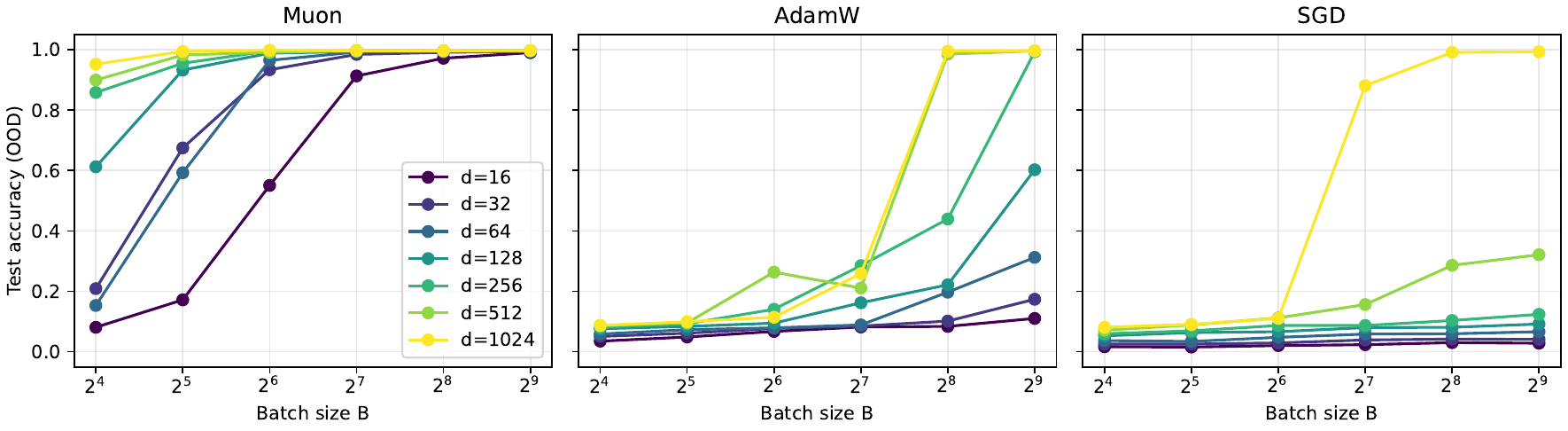}
    \vspace{-1mm}
    \subcaption{OOD accuracy against~$B$ ($\alpha_o = 1.5$).}
    \label{fig:batchsize-ood}
  \end{subfigure}

  \vspace{0.4em}

  % Row 2: Fixed token budget, n_seq = 16384
%\begin{subfigure}{\textwidth}
%\includegraphics[width=0.32\linewidth]{figures/ihead/acc_vs_batchsize_fixedtoks_nseq16384_muon.pdf}
%\hfill
%\includegraphics[width=0.32\linewidth]{figures/ihead/acc_vs_batchsize_fixedtoks_nseq16384_adamw.pdf}
%\hfill
%\includegraphics[width=0.32\linewidth]{figures/ihead/acc_vs_batchsize_fixedtoks_nseq16384_sgd.pdf}
%\subcaption{Fixed token budget $N_{\mathrm{seq}} = 16384$ ($\alpha_o = 1.1$).}
%\end{subfigure}

%\vspace{0.5em}

  % Row 3: Fixed token budget, n_seq = 65536
%\begin{subfigure}{\textwidth}
%\includegraphics[width=0.32\linewidth]{figures/ihead/acc_vs_batchsize_fixedtoks_nseq65536_muon.pdf}
%\hfill
%\includegraphics[width=0.32\linewidth]{figures/ihead/acc_vs_batchsize_fixedtoks_nseq65536_adamw.pdf}
%\hfill
%\includegraphics[width=0.32\linewidth]{figures/ihead/acc_vs_batchsize_fixedtoks_nseq65536_sgd.pdf}
%\subcaption{Fixed token budget $N_{\mathrm{seq}} = 65536$ ($\alpha_o = 1.1$).}
%\end{subfigure}

  \begin{subfigure}{\textwidth}
%\includegraphics[width=0.325\linewidth]{figures/ihead/wo1acc_vs_batchsize_muon.pdf}
   %\hfill
   %\includegraphics[width=0.325\linewidth]{figures/ihead/wo1acc_vs_batchsize_adamw.pdf}
   %\hfill
   %\includegraphics[width=0.325\linewidth]{figures/ihead/wo1acc_vs_batchsize_sgd.pdf}
   \includegraphics[width=0.99\linewidth]{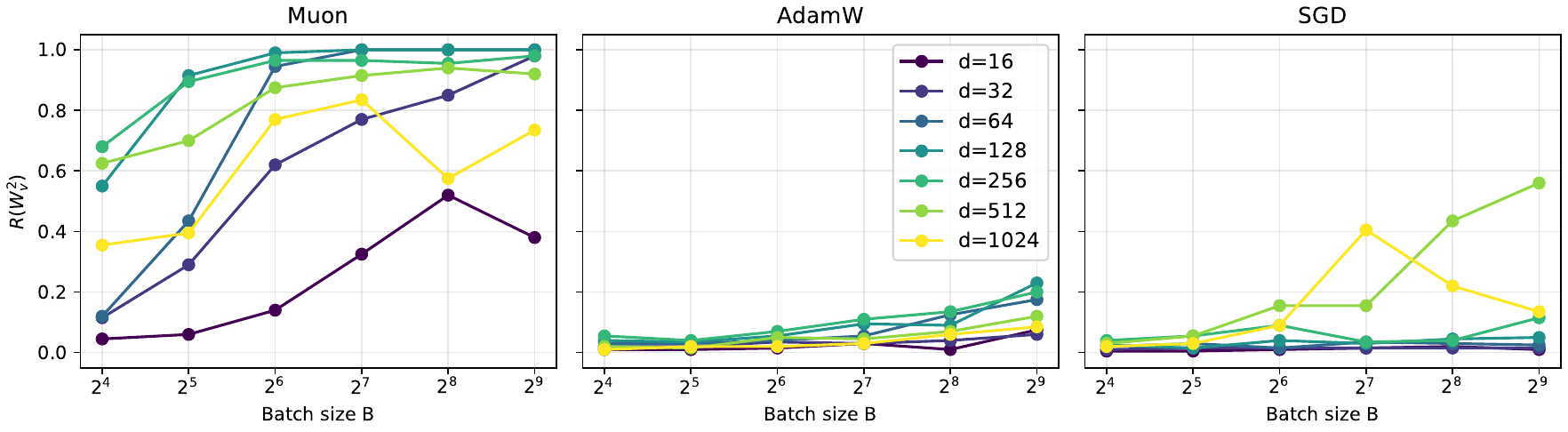}
    \vspace{-1mm}
   \caption{%
     Memory recall accuracy $R(\bW_V^2)$ against~$B$.
   }
   \label{fig:batchsize-r}
  \end{subfigure}
%\vspace{-6.6mm}
  \caption{\small
    OOD and memory recall accuracy as a function of batch size $B$, for Muon, AdamW, and SGD (columns left to right), with different curves per model dimension, at iteration $1024$. For Figure~\ref{fig:batchsize-r}, we use a two-layer transformer with no feed-forward layers to avoid redundancies between the value matrix and the subsequent MLP layer. For each $(B, \text{dim})$ pair, the learning rate is chosen to maximize accuracy.
  }
  \label{fig:batchsize}
  \vspace{-2.mm} 
\end{figure}

\paragraph{Evaluation metrics.}
To assess the capacity and robustness of these optimizers to different power laws, we evaluate using \textbf{out-of-distribution (OOD) accuracy} on the next-token predictions for the second output tokens, on a batch of out-of-distribution data generated with uniform~$p^{(t)},p^{(o)}$. This relates to capacity in the sense that it measures what fraction of the $N \times N$ pairs $(q, o)$ the model is able to recall in-context, even though the triggers and outputs seen during training are power-law distributed. We also evaluate the \textbf{memory recall accuracy} for the value matrix at the second layer, denoted~$\bW_V^{(2)}$, which is expected to map input token embeddings~$e_v$ to output token embeddings~$u_v$ for all~$v \in \cV$, as described in~\citet{bietti2023birth}. Concretely, we compute
\begin{align*}
R(\bW_V^{(2)}) = \textstyle\frac{1}{|\cV|} \sum_{v \in \cV} \1\{\argmax_{v'\in\cV} u_{v'}^\top \bW_V^{(2)} e_v = v\}.
\end{align*}

\paragraph{Results.}
Figure~\ref{fig:acc_vs_dim} reports the in-distribution (ID) and OOD accuracy of the three optimizers as the model dimension varies. Muon consistently outperforms SGD and AdamW across dimensions, training iterations, and both evaluation metrics. In Figures~\ref{fig:batchsize-ood} and~\ref{fig:batchsize-r}, we vary the batch size while fixing the number of training steps, and measure the resulting OOD accuracy and the recall accuracy of the value matrix, respectively. We again observe that Muon has the best performance; however, Muon attains near-perfect accuracy at a smaller batch size compared to SGD and AdamW, thus saturating more quickly. This discrepancy with our theoretical analysis of the critical batch size is likely due to the different task and optimizer setups, so that the information-theoretic rate is not the main bottleneck. We leave a quantitative investigation of this gap to future work.

Finally, Figure~\ref{fig:acc_vs_dim_alpha} examines how model performance changes with the power-law exponents of the output and trigger distributions, $\alpha_o$ and $\alpha_t$, while all other distributions are kept uniform. Larger values of $\alpha$ correspond to faster decay in item frequencies and therefore make learning more difficult. However, our discussion of signal amplification in Section~\ref{sec:multi_step} suggests that Muon should be more robust to this effect. Consistent with this intuition, we observe that Muon and AdamW perform similarly when $\alpha=0$, but as $\alpha$ increases, Muon remains remarkably robust whereas AdamW degrades much more sharply. This behavior also qualitatively aligns with our one-step analysis: as $\alpha$ grows larger, Muon still retains $\Omega(d)$ capacity, but the SGD recovery rate will collapse to zero.

\begin{figure}[h]
  \centering
% \vspace{-2mm}
  \begin{subfigure}{\textwidth}
    %\includegraphics[width=0.325\linewidth]{figures/ihead/acc_vs_dim_alphaoutput_muon.pdf}
    %\hfill
    %\includegraphics[width=0.325\linewidth]{figures/ihead/acc_vs_dim_alphaoutput_adamw.pdf}
    %\hfill
    %\includegraphics[width=0.325\linewidth]{figures/ihead/acc_vs_dim_alphaoutput_sgd.pdf}

    \includegraphics[width=0.99\linewidth]{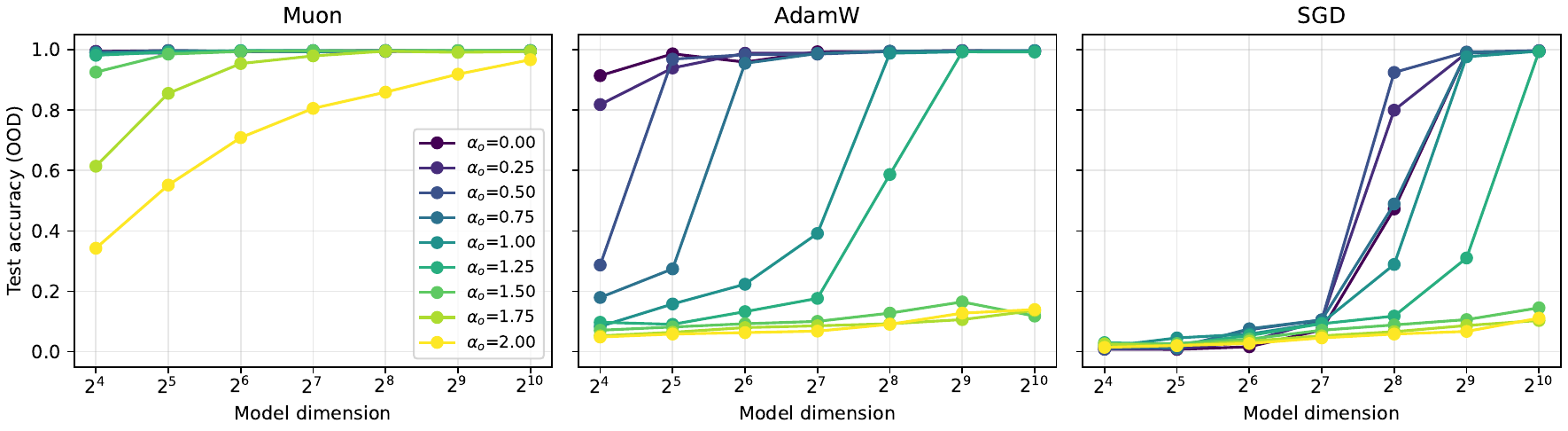}
    %\vspace{-0.5mm}
    \subcaption{Varying output token distribution~$\alpha_o$, with $\alpha_t = 0$.}
  \end{subfigure}

  \vspace{0.4em}

  \begin{subfigure}{\textwidth}
    %\includegraphics[width=0.325\linewidth]{figures/ihead/acc_vs_dim_alphatrigger_muon.pdf}
    %\hfill
    %\includegraphics[width=0.325\linewidth]{figures/ihead/acc_vs_dim_alphatrigger_adamw.pdf}
    %\hfill
    %\includegraphics[width=0.325\linewidth]{figures/ihead/acc_vs_dim_alphatrigger_sgd.pdf}

    \includegraphics[width=0.99\linewidth]{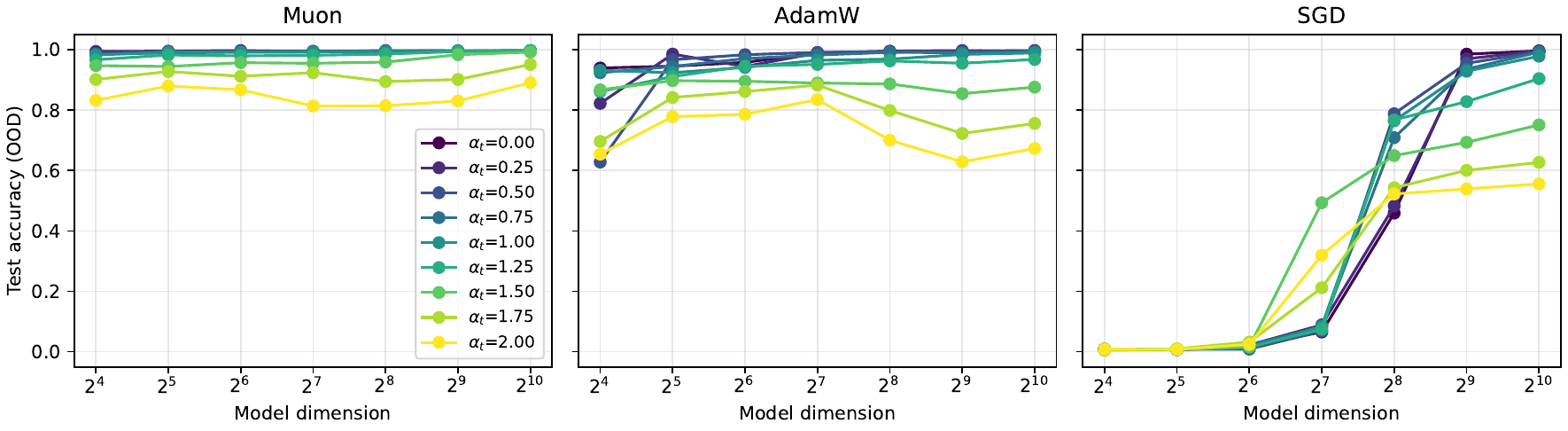}
    %\vspace{-0.5mm}
    \subcaption{Varying trigger distribution exponent~$\alpha_t$, with $\alpha_o = 0$.}
  \end{subfigure}
% \vspace{-6.6mm}
  \caption{\small
    OOD accuracy as a function of model dimension for Muon, AdamW, and SGD (columns left to right), with batch size $256$ at iteration $512$. 
    Each curve corresponds to a different power-law exponent for (a) the output distribution~$\alpha_0$; (b) the trigger distribution~$\alpha_t$, with $\alpha = 0$ being the uniform
    distribution and larger $\alpha$ concentrating probability mass on fewer tokens (where we expect adaptive optimizers to be beneficial).
    For each $(\text{dim}, \alpha)$ pair, the learning rate is chosen to
    maximize OOD accuracy.
  }
  \label{fig:acc_vs_dim_alpha}
\end{figure}

\section{Conclusion}

In this work, we study the storage capacities of Muon, SGD, and Newton's method in a linear associative memory model with random embeddings and power-law item frequencies. We sharply characterize the one-step recovery rate of each optimizer and show that Muon recovers substantially more items than SGD, even matching the Newton update. This gap also implies a much larger critical batch size for Muon compared to SGD, helping explain why its advantage is most visible in large-batch training. We further show that Muon’s gains are concentrated early in training, whereas over longer horizons its recovery dynamics become comparable to those of SGD. Overall, our analysis opens the door towards establishing scaling laws for spectral optimizers in realistic language modeling settings. Promising directions for future work include studying anisotropic embeddings, the role of momentum \citep{ferbach2025dimension}, and more complex language-modeling tasks such as compositional and multi-hop reasoning \citep{sanford2024transformers,wang2025learning}.

\bigskip

\section*{Acknowledgements}
The authors thank Elliot Paquette and Yue M.~Lu for discussion and feedback. JK, EN, and JDL acknowledge support of a Google Research Award, NSF IIS 2107304,  NSF CCF 2539753, NSF CAREER Award 2540142, and NSF CCF 2019844.

\bigskip

% \clearpage
\bibliography{main}

\clearpage

{\hypersetup{linkcolor=black}
\setlength{\parskip}{4pt}
\tableofcontents
}
\allowdisplaybreaks

\clearpage
\appendix

\section{Proof of Theorem~\ref{thm:main}}\label{sec:a}

The negative gradient at initialization is
\begin{align*}
\bG_0 = -\nabla_\bW L(\bW_0;\cB) = \sum_{i\in[N]} q_i(u_i - \bar u)v_i^\top
\end{align*}
where $\bar{u} = \frac1N \sum_{i\in[N]} u_i$ is a centering term. It will suffice to study the \emph{uncentered} gradient~$\bG$ and logits~$\gamma_{ij}$, given as
\begin{align*}
\gamma_{ij} := u_j^\top h_\lam(\bG)v_i, \quad \bG :=
\sum_{i\in[N]} q_iu_iv_i^\top.
\end{align*}
In Sections~\ref{sec:frechet}-\ref{sec:signal}, we lower bound the signal terms~$\gamma_{ii}$. In Section~\ref{sec:interaction}, we upper bound the magnitude of the interaction terms~$\gamma_{ij}$ for $i\ne j$. Finally in Section~\ref{sec:putting}, we conclude the proof of Theorem~\ref{thm:main}.

\subsection{Fr\'{e}chet derivative computations}\label{sec:frechet}

In this subsection, we work with general smooth nondecreasing functions $h:\RR_{\ge 0}\to\RR_{\ge 0}$. Let the leave-one-out gradient be $\bG_{-i} := \bG - q_i u_i v_i^\top$ and define the function
\begin{align*}
\phi(q) = u_i^\top h(\bG_{-i} + qu_iv_i^\top)v_i, \quad q\ge 0.
\end{align*}
%Crucially, the vector $v_i$ is independent of $\bG_{-i}$.
We aim to control the signal $\gamma_{ii} = \phi(q_i)$ via Taylor expansion. We will utilize the Daleckii--Krein formula for the Fr\'{e}chet derivative of matrix functions.

\begin{proposition}[Daleckii--Krein formula]\label{prop:daleckii}
Let $\bM$, $\bE$ be real symmetric matrices and $f$ be a $(2d-1)$-times continuously differentiable function. Denote by $Df(\bM) [\bE]$ the Fr\'{e}chet derivative of $f$ w.r.t.~$\bM$ in the~$\bE$ direction. Let $\bM = \bP \bLambda \bP^\top$ be the eigendecomposition of~$\bM$ with $\bLambda = \diag(\lam_1,\cdots,\lam_d)$. Let~$f^{(1)}$ be the first divided difference of~$f$,
\begin{align*}
f^{(1)}(x,y) = \begin{cases}
        \frac{f(x) - f(y)}{x-y} & x \ne y,\\
        f'(x) & x=y,
    \end{cases}
\end{align*}
and set $\bT_{ij} := f^{(1)}(\lambda_i, \lambda_j)$ for $1\le i,j\le d$. Then
\begin{align*}
    \bP^\top Df(\bM)[\bE] \bP = (\bP^\top \bE \bP) \circ \bT.
\end{align*}
Furthermore, let $D^2f(\bM)[\bE, \bE]$ be the second Fr\'{e}chet derivative of $f$ w.r.t. $\bM$ in the $\bE,\bE$ directions. Let $f^{(2)}$ be the second divided difference of~$f$,
\begin{align*}
f^{(2)}(x, y, z) = \begin{cases}
        \frac{f^{(1)}(x, z) - f^{(1)}(y, z)}{x - y} & x\ne y,\\
        \partial_xf^{(1)}(x,z) & x=y.
    \end{cases}
\end{align*}
    Then
    \begin{align*}
        (\bP^\top D^2f(\bM)[\bE, \bE] \bP)_{ij} = \sum_{k=1}^d f^{(2)}(\lambda_i, \lambda_j, \lambda_k)(\bP^\top \bE \bP)_{ik}(\bP^\top \bE \bP)_{jk}.
    \end{align*}
    \end{proposition}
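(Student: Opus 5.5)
The plan is to prove the formula first for polynomials by direct expansion, and then extend to general $f$ by polynomial approximation; linearity in $f$ makes the polynomial case reduce to monomials. We only use $f\in C^2$ on an interval containing the relevant spectra, which the stated $(2d-1)$-fold differentiability covers.

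\textbf{Reduction to diagonal $\bM$.} Since $f(\bP\bX\bP^\top)=\bP f(\bX)\bP^\top$ for symmetric $\bX$ and orthogonal $\bP$, we have $f(\bM+t\bE)=\bP f(\bLambda+t\tilde\bE)\bP^\top$ with $\tilde\bE=\bP^\top\bE\bP$. It therefore suffices to prove both identities with $\bM=\bLambda$ diagonal and $\bE$ replaced by the symmetric matrix $\tilde\bE$.

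\textbf{Monomials.} Take $f(x)=x^n$. Expanding $(\bLambda+t\tilde\bE)^n$ in $t$, the linear term is $\sum_{a+b=n-1}\bLambda^a\tilde\bE\bLambda^b$. Its $(i,j)$ entry is
\begin{align*}
\tilde\bE_{ij}\sum_{a+b=n-1}\lambda_i^a\lambda_j^b=\tilde\bE_{ij}\,f^{(1)}(\lambda_i,\lambda_j),
\end{align*}
by the geometric-sum identity $\frac{x^n-y^n}{x-y}=\sum_{a+b=n-1}x^ay^b$. This remains valid, with value $n\lambda^{n-1}$, when $\lambda_i=\lambda_j$.

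The quadratic term is $\sum_{a+b+c=n-2}\bLambda^a\tilde\bE\bLambda^b\tilde\bE\bLambda^c$. Its $(i,j)$ entry is
\begin{align*}
\sum_k\tilde\bE_{ik}\tilde\bE_{kj}\,h_{n-2}(\lambda_i,\lambda_k,\lambda_j),
\end{align*}
where $h_{n-2}$ is the complete homogeneous symmetric polynomial of degree $n-2$. One checks by induction on the recursive definition that $f^{(2)}(x,y,z)=h_{n-2}(x,y,z)$ for $f=x^n$, and in particular $f^{(2)}$ is symmetric in its arguments. Using $\tilde\bE_{kj}=\tilde\bE_{jk}$, this gives exactly the displayed expression with $f^{(2)}(\lambda_i,\lambda_j,\lambda_k)$.

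There is a normalization point to record. This quadratic term is the $t^2$ Taylor coefficient, so the displayed formula corresponds to the convention $f(\bM+t\bE)=f(\bM)+tDf(\bM)[\bE]+t^2D^2f(\bM)[\bE,\bE]+o(t^2)$. If one instead takes $D^2f$ to be the genuine second derivative, a factor of $2$ appears. In the proof I will state explicitly which convention is used; for the later Taylor-expansion application only the $t^2$ coefficient matters. Linearity in $f$ then gives both identities for all polynomials.

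\textbf{Approximation.} Fix a compact interval $I$ containing the spectra of $\bM+t\bE$ for $|t|\le1$. By Weierstrass applied to $f''$ and integrating twice, choose polynomials $p_m$ with $p_m^{(r)}\to f^{(r)}$ uniformly on $I$ for $r\le2$.

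The main obstacle is the behaviour of divided differences when eigenvalues coalesce. I will handle this with the Hermite--Genocchi representation
\begin{align*}
f^{(1)}(x,y)=\int_0^1 f'(x+s(y-x))\,\rd s,
\end{align*}
together with the analogous simplex integral of $f''$ for $f^{(2)}$. These representations show that $p_m^{(1)}\to f^{(1)}$ and $p_m^{(2)}\to f^{(2)}$ uniformly on $I^2$ and $I^3$, and that $f^{(1)},f^{(2)}$ are continuous, without any case distinctions.

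Now apply the polynomial identities at every base point $\bM+t\bE$ (diagonalizing at each $t$). They show that the maps $t\mapsto p_m(\bM+t\bE)$ have first and second $t$-derivatives given by the divided-difference formulas, and that these converge uniformly in $|t|\le1$. Hence $t\mapsto f(\bM+t\bE)$ is $C^2$, with derivatives given by the limiting formulas.

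Finally, the right-hand sides are jointly continuous in $(\bM,\bE)$, linear in $\bE$ for the first derivative, and quadratic in $\bE$ for the second. Gâteaux differentiability with a derivative that is continuous in the base point implies Fréchet differentiability, which yields both formulas as stated.
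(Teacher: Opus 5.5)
Your proof is correct. Its algebraic core (diagonalizing, expanding $(\bLambda+t\tilde\bE)^n$, and identifying the divided differences of $x^n$ with complete homogeneous polynomials) is exactly the paper's monomial computation.

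The difference is in scope. The paper proves only the polynomial case ``for illustrative purposes'' and defers general $f$ to Higham. You close that gap yourself with a $C^2$ density argument, using the Hermite--Genocchi integrals to get uniform convergence of $f^{(1)},f^{(2)}$ with no case split at coalescing eigenvalues. The paper's route is shorter. Yours is self-contained and shows that $f\in C^2$ suffices for symmetric matrices. The $(2d-1)$-fold hypothesis is inherited from the general-matrix setting of the cited theorem, and it implies $C^2$ only when $d\ge 2$.

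Two steps deserve a sentence more than you give them:
\begin{itemize}
\item Joint continuity of the right-hand side in $\bM$ is not visible from the formula itself, since $\bP$ can jump where eigenvalues collide. It does follow, because the right-hand side is the uniform limit of the polynomial maps $Dp_m(\bM)[\bE]$, via $\nnorm{(\bP^\top\bE\bP)\circ(\bT^{p_m}-\bT^{f})}_{\F}\le\max_{I^2}|p_m^{(1)}-f^{(1)}|\,\nnorm{\bE}_{\F}$.
\item A genuine second Fr\'{e}chet derivative needs the mixed terms $D^2f(\bM)[\bE,\bF]$. You obtain these by polarizing the quadratic formula, to which the same uniform convergence applies.
\end{itemize}

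Your normalization remark is correct, and the paper itself confirms it. Its own monomial computation gives $D^2f(\bM)[\bE,\bE]=2\sum_{\ell<k}\bM^{\ell-1}\bE\bM^{k-\ell-1}\bE\bM^{n-k}$, hence $2\sum_m f^{(2)}(\lam_i,\lam_j,\lam_m)(\bP^\top\bE\bP)_{im}(\bP^\top\bE\bP)_{jm}$. Lemma~\ref{lem:phi-prime-prime} then uses this factor $2$ together with the standard chain rule $\ddot\bR_q=D\xi(\bT_q)[\ddot\bT_q]+D^2\xi(\bT_q)[\dot\bT_q,\dot\bT_q]$. So the displayed second-order formula in the statement is missing a factor of $2$. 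Keep the standard meaning of $D^2f$ and state the result with the $2$. Adopting the $t^2$-coefficient convention instead would be inconsistent with that chain-rule expression.
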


\begin{proof}
We only prove the formula when $f$ is a polynomial for illustrative purposes; the full proof is given in Theorem 3.11 and Corollary 3.12 of \citet{higham2008functions}. By linearity, it suffices to consider the case where $f(z)=z^n$, $n\in\NN$ is a monomial. The Fr\'{e}chet derivative of~$f$ in the $\bE$ direction is
\begin{align}\label{eq:frechet-mono}
Df(\bM)(\bE) = \sum_{k=1}^n \bM^{k-1}\bE\bM^{n-k}
\end{align}
and the first divided difference is
\begin{align*}
\bT_{ij} = f^{(1)}(\lam_i,\lam_j) = \sum_{k=1}^n \lam_i^{k-1} \lam_j^{n-k}.
\end{align*}
Therefore we directly check that
\begin{align*}
(\bP^\top Df(\bM)(\bE)\bP)_{ij} &= \sum_{k=1}^n e_i^\top \bLambda^{k-1}\bP^\top \bE\bP \bLambda^{n-k} e_j\\
&= \sum_{k=1}^n \lam_i^{k-1}(\bP^\top \bE\bP)_{ij} \lam_j^{n-k} = (\bP^\top \bE\bP \circ\bT)_{ij}.
\end{align*}
%The second derivative is checked similarly.
For the second derivative, differentiating Eq.~\eqref{eq:frechet-mono} again gives
\begin{align*}
D^2f(\bM)[\bE,\bE]
= 2\sum_{1\le \ell < k \le n} \bM^{\ell-1}\bE\bM^{k-\ell-1}\bE\bM^{n-k}
\end{align*}
and
\begin{align*}
f^{(2)}(x,y,z) = \frac{f^{(1)}(x,z)-f^{(1)}(y,z)}{x-y} = \sum_{1\le \ell < k \le n} x^{\ell-1}y^{k-\ell-1}z^{n-k}.
\end{align*}
Hence
\begin{align*}
(\bP^\top D^2f(\bM)[\bE,\bE]\bP)_{ij}
&= 2\sum_{1\le \ell < k \le n} e_i^\top \bLambda^{\ell-1}\bP^\top \bE\bP \bLambda^{k-\ell-1}\bP^\top \bE\bP \bLambda^{n-k} e_j\\
&= 2\sum_{m=1}^d \sum_{1\le \ell < k \le n}
\lam_i^{\ell-1}\lam_m^{k-\ell-1}\lam_j^{n-k}
(\bP^\top \bE\bP)_{im}(\bP^\top \bE\bP)_{mj}\\
&= 2\sum_{m=1}^d f^{(2)}(\lam_i,\lam_m,\lam_j)
(\bP^\top \bE\bP)_{im}(\bP^\top \bE\bP)_{mj}\\
&= 2\sum_{m=1}^d f^{(2)}(\lam_i,\lam_j,\lam_m)
(\bP^\top \bE\bP)_{im}(\bP^\top \bE\bP)_{jm},
\end{align*}
where in the last step we used the symmetry of $f^{(2)}$ and $\bE$.
\end{proof}

The following two lemmas use the Daleckii--Krein formula to compute the first and second derivatives of~$\phi$.

\begin{lemma}\label{lem:phi-prime}
Let the SVD of the leave-one-out matrix be $\bG_{-i} = \bA\bS\bB^\top$ with singular values $\bS = \diag(s_1, \dots, s_d)$ in decreasing order and denote $a = \bA^\top u_i$, $b = \bB^\top v_i$. Then it holds that $\phi'(q) \ge 0$ for all $q$ and
\begin{align*}
    \phi'(0) &= \frac14 \sum_{k \neq \ell}\qty( \frac{h(s_k) + h(s_\ell)}{s_k + s_\ell} (a_kb_\ell - a_\ell b_k)^2 + \frac{h(s_k) - h(s_\ell)}{s_k - s_\ell} (a_kb_\ell + a_\ell b_k)^2)\\
    &\qquad + \sum_k h'(s_k) a_k^2 b_k^2.
\end{align*}
\end{lemma}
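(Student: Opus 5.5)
The plan is to reduce the rectangular spectral map to a symmetric matrix function via the Hermitian dilation, and then apply Proposition~\ref{prop:daleckii}.

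\textbf{Dilation.} Extend $h$ to an odd function $f(x)=\sgn(x)h(|x|)$ on $\RR$. For any $\bX\in\RR^{d\times d}$ set
\begin{align*}
\cD(\bX)=\bmat{0 & \bX\\ \bX^\top & 0}.
\end{align*}
If $\bX=\bA\bS\bB^\top$, then $\cD(\bX)$ has eigenvalues $\pm s_k$ with orthonormal eigenvectors $p_k^{\pm}=\frac{1}{\sqrt2}(\bA e_k;\pm\bB e_k)$. It follows that $f(\cD(\bX))=\cD(h(\bX))$.

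With $x=(u_i;v_i)$ and $y=(u_i;-v_i)$ we have
\begin{align*}
x^\top\cD(\bH)x=2u_i^\top\bH v_i, \qquad \cD(u_iv_i^\top)=\tfrac12(xx^\top-yy^\top).
\end{align*}
Hence $\phi(q)=\frac12 x^\top f(\bM_q)x$, where $\bM_q=\cD(\bG_{-i}+qu_iv_i^\top)$. Moreover $\bM_q$ depends affinely on $q$ with direction $\bE=\frac12(xx^\top-yy^\top)$.

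\textbf{Differentiation.} By Proposition~\ref{prop:daleckii}, $\phi'(q)=\frac12\, x^\top Df(\bM_q)[\bE]\,x$. Let $\bP$ be the eigenvector matrix, $\xi=\bP^\top x$ and $\zeta=\bP^\top y$. Then
\begin{align*}
\phi'(q)=\tfrac14\sum_{m,n} f^{(1)}(\lam_m,\lam_n)\big(\xi_m^2\xi_n^2-\xi_m\xi_n\zeta_m\zeta_n\big).
\end{align*}
In the coordinates $a,b$ (computed from the SVD of the current matrix) this reads
\begin{align*}
\xi_{k\pm}=\frac{a_k\pm b_k}{\sqrt2}, \qquad \zeta_{k\pm}=\frac{a_k\mp b_k}{\sqrt2}=\xi_{k\mp}.
\end{align*}
I will group the four index pairs $(k\pm,\ell\pm)$ for each $k\ne\ell$. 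The pairs with equal signs carry $f^{(1)}(s_k,s_\ell)=f^{(1)}(-s_k,-s_\ell)=\frac{h(s_k)-h(s_\ell)}{s_k-s_\ell}$. The pairs with opposite signs carry $f^{(1)}(s_k,-s_\ell)=\frac{h(s_k)+h(s_\ell)}{s_k+s_\ell}$, by oddness of $f$.

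A direct algebraic expansion should collapse each group into the squares $(a_kb_\ell+a_\ell b_k)^2$ and $(a_kb_\ell-a_\ell b_k)^2$ respectively. The diagonal $k=\ell$ terms involve $f'(s_k)=h'(s_k)$ and $f^{(1)}(s_k,-s_k)=h(s_k)/s_k$. I expect these to combine into $h'(s_k)a_k^2b_k^2$, with the $h(s_k)/s_k$ contributions cancelling because $\xi_{k+}\xi_{k-}=\zeta_{k+}\zeta_{k-}$. I expect this bookkeeping, including the exact constant $\frac14$, to be the main (though routine) obstacle. I would verify it on a $1\times1$ and a $2\times2$ example to fix the signs.

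\textbf{Nonnegativity and regularity.} The formula above holds at every $q$, not only at $q=0$: it is stated in terms of the SVD of $\bG_{-i}+qu_iv_i^\top$ and the corresponding rotated $a,b$. Every coefficient is then a divided difference of $f$. Since $h\ge0$ is nondecreasing with $h(0)=0$, its odd extension $f$ is nondecreasing, so all divided differences are nonnegative. Each term is therefore a nonnegative coefficient times a square, which gives $\phi'(q)\ge0$. Evaluating at $q=0$ gives the displayed identity.

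Two technical points remain. The first is smoothness of $f$ at $0$ and possibly repeated eigenvalues. For $h_\lam$ the odd extension $z/\sqrt{z^2+\lam^2}$ is analytic, so Proposition~\ref{prop:daleckii} applies directly. For general $h$ I would approximate by smooth odd nondecreasing functions and pass to the limit. The second is the zero or repeated singular values. These are handled by interpreting the divided differences as derivatives, exactly as in Proposition~\ref{prop:daleckii}.
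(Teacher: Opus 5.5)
Your proposal is correct, and it takes a genuinely different route from the paper.

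\textbf{The paper's route.} The paper stays in $d\times d$. It writes $h(\bG_q)=\bG_q\,\xi(\bG_q^\top\bG_q)$ with $\xi(z)=h(\sqrt z)/\sqrt z$ and differentiates by the product rule. It then applies Daleckii--Krein to the Gram matrix $\bT_q=\bG_q^\top\bG_q$, which gives coefficients $\xi^{(1)}(s_k^2,s_\ell^2)$. A fairly long rearrangement is needed to reach the symmetrized squares. Coincident singular values are handled by assuming distinctness and appealing to continuity.

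\textbf{Your route.} The Hermitian dilation turns the generalized matrix function into an ordinary function of a symmetric $2d\times 2d$ matrix. One first-order Daleckii--Krein step then suffices, and the structure of the formula becomes transparent. Same-sign eigenvalue pairs produce $\frac{h(s_k)-h(s_\ell)}{s_k-s_\ell}$, and the pairs $(s_k,-s_\ell)$ produce $\frac{h(s_k)+h(s_\ell)}{s_k+s_\ell}$. Nonnegativity is just monotonicity of the odd extension $f$. The paper's conventions for equal or zero singular values come out automatically as $f^{(1)}(s,s)=h'(s)$ and $f^{(1)}(0,0)=h'(0)$, with no limiting argument.

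\textbf{The deferred bookkeeping closes with constant $\frac14$.} Set $X=(a_k+b_k)(a_\ell+b_\ell)$, $Y=(a_k-b_k)(a_\ell-b_\ell)$, $X'=(a_k+b_k)(a_\ell-b_\ell)$ and $Y'=(a_k-b_k)(a_\ell+b_\ell)$. The subtracted term in every pair equals $XY/4=X'Y'/4=(a_k^2-b_k^2)(a_\ell^2-b_\ell^2)/4$. Hence the same-sign group sums to $(X-Y)^2/4=(a_kb_\ell+a_\ell b_k)^2$, and the opposite-sign group sums to $(X'-Y')^2/4=(a_kb_\ell-a_\ell b_k)^2$. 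At $k=\ell$ the opposite-sign group vanishes and the same-sign group gives $h'(s_k)a_k^2b_k^2$.

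\textbf{Trade-offs.} The regularity demands are essentially the same. You need the odd extension of $h$ to be smooth at $0$; the paper needs $\xi$ to be smooth at $z=0$; both hold for $h_\lam$. What the paper's route buys is reuse. The objects $\bR_q=\xi(\bT_q)$, $\xi^{(1)}$ and $\xi^{(2)}$ set up in this proof are exactly what Lemma~\ref{lem:phi-prime-prime} differentiates again to bound $\phi''$. On your route you would redo that bound with the second-order Daleckii--Krein formula for $f$ on the dilation.
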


Note that if $s_k=s_\ell$, the ratio $\frac{h(s_k) - h(s_\ell)}{s_k - s_\ell}$ is to be interpreted as the first divided difference $h^{(1)}(s_k,s_k) = h'(s_k)$; if $s_k=s_\ell=0$, $\frac{h(s_k) + h(s_\ell)}{s_k + s_\ell}$ is to be interpreted as the continuous limit~$h'(0)$.

\begin{proof}
For notational convenience, we define $u = u_i$, $v = v_i$ and
\begin{align*}
\bG_0 = \bG_{-i}, \quad \bG_q = \bG_0+quv^\top,
\end{align*}
so that $\phi(q) = u^\top h(\bG_q) v$. Let the auxiliary function $\xi(z) = h(\sqrt{z})/\sqrt{z}$ for $z>0$; for our stabilized version of Muon, $\xi(z) = 1/\sqrt{z+\lam^2}$. We also define the SVD of $\bG_q$ and related quantities
\begin{align*}
\bG_q &= \bA_q\bS_q\bB_q^\top,\\
\bY_q &= h (\bG_q) = \bA_q h (\bS_q)\bB_q^\top,\\
\bT_q &= \bG_q^\top\bG_q,\\
\bR_q &= \xi(\bT_q) = \bB_q h (\bS_q)\bS_q^{-1}\bB_q^\top.
\end{align*}
Finally, we redefine $a = \bA_q^\top u, b = \bB_q^\top v$, here dependent on $q$.

First, observe that $\bY_q = \bG_q\bR_q$. Using dot notation, differentiating w.r.t. the variable $q$ yields $\dot \bG_q=uv^\top$ and
\begin{align*}
\dot \bY_q &= \dot\bG_q \bR_q + \bG_q \dot \bR_q = uv^\top \bR_q + \bG_q \dot \bR_q.
\end{align*}
Therefore
\begin{align}
    \phi'(q) &= u^\top \dot \bY_q v\notag\\
    &= \norm{u}_2^2\cdot v^\top \bR_q v + u^\top \bG_q\dot \bR_q v \notag \\
    &= \norm{a}_2^2\cdot b^\top \xi(\bS_q^2) b + a^\top \bS_q \bB_q^\top \dot \bR_q \bB_q b.\label{eq:phi-prime-formula}
\end{align}
    Next, we have $\dot \bR_q = D\xi(\bT_q)[\dot \bT_q]$ where
    \begin{align*}
        \dot \bT_q = \dot \bG_q^\top \bG_q + \bG_q^\top \dot \bG_q = vu^\top \bG_q + \bG_q^\top uv^\top
    \end{align*}
and so
\begin{align*}
\bB_q^\top \dot \bT_q \bB_q = \bS_q ab^\top + ba^\top \bS_q.
\end{align*}
Since $\bT_q$ is diagonalized as $\bT_q = \bB_q\bS_q^2\bB_q^\top$, we compute via the Daleckii--Krein formula (Proposition~\ref{prop:daleckii}),
\begin{align*}
    (\bB_q^\top \dot \bR_q \bB_q)_{k\ell} = (\bB_q^\top \dot \bT_q \bB_q)_{k\ell}\cdot \xi^{(1)}(s_k^2, s_\ell^2) = (s_ka_kb_\ell + s_\ell a_\ell b_k) \xi^{(1)}(s_k^2, s_\ell^2)
\end{align*}
where $\bS_q=\diag(s_1,\cdots,s_d)$. We may assume all $s_k$ are distinct; the general case follows from continuity. Plugging into~\eqref{eq:phi-prime-formula}, we obtain:
\begin{align*}
&\phi'(q)\\
&= \norm{a}_2^2 \sum_{\ell} b_\ell^2 \xi(s_\ell^2) + \sum_{k,\ell} a_k s_k (\bB_q^\top \dot \bR_q \bB_q)_{k\ell} b_\ell\\
&= \norm{a}_2^2 \sum_{\ell} b_\ell^2 \xi(s_\ell^2) + \sum_{k,\ell} a_k s_k b_\ell(s_ka_kb_\ell + s_\ell a_\ell b_k) \xi^{(1)}(s_k^2,s_\ell^2)\\
&= \norm{a}_2^2 \sum_{\ell} b_\ell^2 \xi(s_\ell^2) + 2\sum_k a_k^2 b_k^2 s_k^2 \xi'(s_k^2)  +\sum_{k \neq \ell} a_k s_k b_\ell(s_ka_kb_\ell + s_\ell a_\ell b_k) \frac{\xi(s_k^2) - \xi(s_\ell^2)}{s_k^2 - s_\ell^2}\\
    &= \sum_{k,\ell} a_k^2 b_\ell^2 \frac{h(s_\ell)}{s_\ell} + \sum_{k\neq \ell} a_k s_k b_\ell(s_ka_kb_\ell + s_\ell a_\ell b_k) \frac{h(s_k)s_\ell - h(s_\ell)s_k}{s_ks_\ell(s_k^2 - s_\ell^2)} \\
    &\qquad + \sum_k a_k^2 b_k^2s_k^2\qty(\frac{h'(s_k)}{s_k^2} - \frac{h(s_k)}{s_k^3})\\
    &= \sum_{k\neq \ell} a_k^2 b_\ell^2\qty(\frac{h(s_\ell)}{s_\ell} + \frac{h(s_k)s_ks_\ell - h(s_\ell)s_k^2}{s_\ell(s_k^2-s_\ell^2)}) + a_k a_\ell b_k b_\ell\qty(\frac{h(s_k)s_\ell - h(s_\ell)s_k}{s_k^2 - s_\ell^2}) \\
    &\qquad + \sum_k a_k^2 b_k^2 h'(s_k)\\
        &= \sum_{k\neq \ell} a_k^2 b_\ell^2\qty(\frac{h(s_k)s_k - h(s_\ell)s_\ell}{s_k^2-s_\ell^2}) + a_\ell b_k b_\ell\qty(\frac{h(s_k)s_\ell - h(s_\ell)s_k}{s_k^2 - s_\ell^2}) \\
    &\qquad + \sum_k a_k^2 b_k^2 h'(s_k) \\
&= \frac12 \sum_{k\ne\ell} (a_k^2 b_\ell^2 + a_\ell^2 b_k^2) \qty(\frac{h(s_k) - h(s_\ell)}{s_k - s_\ell} + \frac{h(s_k) + h(s_\ell)}{s_k + s_\ell}) \\
&\qquad + \frac12 \sum_{k\ne\ell} \qty(a_k a_\ell b_k b_\ell + a_\ell a_k b_\ell b_k) \qty(\frac{h(s_k) - h(s_\ell)}{s_k - s_\ell} - \frac{h(s_k) + h(s_\ell)}{s_k + s_\ell}) \\
&\qquad + \sum_k a_k^2 b_k^2 h'(s_k) \\
        &= \frac14\sum_{k\neq \ell} (a_kb_\ell - a_\ell b_k)^2 \frac{h(s_k) + h(s_\ell)}{s_k + s_\ell} + (a_kb_\ell + a_\ell b_k)^2 \frac{h(s_k) - h(s_\ell)}{s_k - s_\ell}\\
        &\qquad + \sum_k a_k^2 b_k^2 h'(s_k).
    \end{align*}
We conclude that since $h$ is increasing, $\phi'(q) \ge 0$, and moreover taking $q=0$ gives the desired formula for $\phi'(0)$.
\end{proof}

%\begin{lemma}\label{lem:second-explicit}
%In the setting of Lemma~\ref{lem:phi-prime},
%\begin{align*}
%\phi''(0) &= 6\|a\|_2^2 \sum_{j,k} s_k a_k b_k b_j^2 \cdot \xi^{(1)}(s_k^2,s_j^2)\\ &\qquad + 2\sum_{j,k,\ell} s_k a_k b_j (s_k a_k b_\ell+s_\ell a_\ell b_k)(s_j a_j b_\ell+s_\ell a_\ell b_j) \cdot\xi^{(2)}(s_k^2,s_j^2,s_\ell^2).
%\end{align*}
%\end{lemma}

\begin{lemma}\label{lem:phi-prime-prime}
For $h(z) = h_\lam(z)=\frac{z}{\sqrt{z^2+\lam^2}}$, it holds that $\sup_{q\in[0,1]}\abs{\phi''(q)} \lesssim \lam^{-2}$ with probability $1-e^{-\Omega(d)}$.
\end{lemma}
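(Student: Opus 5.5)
The plan is to obtain a deterministic bound on $\phi''(q)$ in terms of $\nnorm{u_i}_2$ and $\nnorm{v_i}_2$ alone, uniformly in $q$ and in the leave-one-out matrix $\bG_{-i}$. The only probabilistic input is then the event $\nnorm{u_i}_2,\nnorm{v_i}_2\le 2$, which holds with probability $1-e^{-\Omega(d)}$ by Gaussian norm concentration.

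Rather than differentiating the Daleckii--Krein expression twice, I will use the resolvent integral
\begin{align*}
(\bT+\lam^2\bI)^{-1/2} = \frac1\pi\int_0^\infty t^{-1/2}(\bT+\mu_t\bI)^{-1}\rd t,\qquad \mu_t := \lam^2+t.
\end{align*}
Write $\bG_q=\bG_{-i}+quv^\top$ and $\bT_q=\bG_q^\top\bG_q$, and set $\bQ_\mu(q):=(\bT_q+\mu\bI)^{-1}$. Then
\begin{align*}
\phi(q)=\frac1\pi\int_0^\infty t^{-1/2}\,u^\top\bG_q\bQ_{\mu_t}(q)v\,\rd t.
\end{align*}
Next I differentiate under the integral. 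We have $\dot\bG_q=uv^\top$, $\dot\bT_q=vu^\top\bG_q+\bG_q^\top uv^\top$, $\ddot\bT_q=2\nnorm{u}_2^2vv^\top$ and $\dot\bQ=-\bQ\dot\bT\bQ$. This gives $\ddot\bQ=2\bQ\dot\bT\bQ\dot\bT\bQ-\bQ\ddot\bT\bQ$, and therefore
\begin{align*}
\frac{\rd^2}{\rd q^2}\,u^\top\bG_q\bQ v = 2\nnorm{u}_2^2\, v^\top\dot\bQ v + u^\top\bG_q\ddot\bQ v.
\end{align*}
Expanding $\dot\bT$ into its two rank-one pieces turns the integrand into a finite sum of products of scalar factors. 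Each factor is of the form $x^\top\bM y$ with $x,y\in\{u,v\}$ and $\bM$ drawn from $\bQ$, $\bG\bQ$, $\bQ\bG^\top$, $\bG\bQ\bG^\top$.

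The key deterministic estimates follow from the SVD of $\bG_q$:
\begin{align*}
\nnorm{\bQ}_\op\le\mu^{-1},\qquad \nnorm{\bG\bQ}_\op=\sup_s\frac{s}{s^2+\mu}\le\frac{1}{2\sqrt\mu},\qquad \nnorm{\bG\bQ\bG^\top}_\op\le 1.
\end{align*}
Each monomial is then controlled by power counting. In every monomial each copy of $\bG$ can be paired with an adjacent resolvent, so each factor $\bG$ reduces the $\mu$-power by one half.
\begin{itemize}
\item The term $\nnorm{u}_2^2 v^\top\bQ\dot\bT\bQ v$ has two resolvents and one $\bG$, so it is $O(\mu^{-3/2})$.
\item The term $u^\top\bG\bQ\dot\bT\bQ\dot\bT\bQ v$ has three resolvents and three $\bG$'s, so it is $O(\mu^{-3/2})$.
\item The term $u^\top\bG\bQ\ddot\bT\bQ v$ has two resolvents and one $\bG$, so it is $O(\mu^{-3/2})$.
\end{itemize}
In each case the constant is a fixed polynomial in $\nnorm{u}_2,\nnorm{v}_2$. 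Integrating then gives
\begin{align*}
\int_0^\infty t^{-1/2}(\lam^2+t)^{-3/2}\rd t = \lam^{-2}\int_0^\infty s^{-1/2}(1+s)^{-3/2}\rd s \lesssim \lam^{-2}.
\end{align*}
This yields $\sup_q|\phi''(q)|\lesssim\lam^{-2}$ uniformly in $q\in[0,1]$, indeed for all $q\ge0$.

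The main obstacle is the bookkeeping needed to confirm that every monomial admits such a pairing without stranding a bare $\bG$, which would only be bounded by $\nnorm{\bG_q}_\op$, or leaving an extra unpaired $\bQ$ that spoils integrability near $t=0$. Of the monomials in $\bG\bQ\dot\bT\bQ\dot\bT\bQ$, the one in which all three resolvents are sandwiched between $\bG$'s is the one I would check most carefully. It is $u^\top\bG\bQ\bG^\top uv^\top\bQ\bG^\top uv^\top\bQ v$, which is bounded by $1\cdot(2\sqrt\mu)^{-1}\mu^{-1}$ times norms, so it still scales as $\mu^{-3/2}$. Differentiation under the integral is justified by dominated convergence, since all integrands are bounded by $C t^{-1/2}\mu_t^{-3/2}$ uniformly in $q$.
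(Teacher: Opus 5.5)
Your proof is correct, and it takes a different route from the paper.

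\textbf{The paper's route.} The paper works in the singular basis of $\bG_q$. It writes $\bY_q=\bG_q\bR_q$ with $\bR_q=\xi(\bT_q)$ and $\xi(z)=(z+\lam^2)^{-1/2}$, and computes $\dot\bR_q,\ddot\bR_q$ via the first- and second-order Daleckii--Krein formulas. It then bounds $\nnorm{\ddot\bY_q}_\op$ through Frobenius norms of the resulting Hadamard-type matrices by
\[
\sup_{i,j}s_i\abs{\xi^{(1)}(s_i^2,s_j^2)}+\sup_{i,j,k}s_i(s_i+s_k)(s_j+s_k)\abs{\xi^{(2)}(s_i^2,s_j^2,s_k^2)},
\]
and finally evaluates these divided differences explicitly to get $\lam^{-2}$ and $\tfrac32\lam^{-2}$.

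\textbf{Your route.} You use the resolvent (Stieltjes) representation of $(\bT+\lam^2\bI)^{-1/2}$; the paper itself uses this only later, for the interaction terms and the operator-Lipschitz bound. With it, all $q$-derivatives are exact identities such as $\dot\bQ=-\bQ\dot\bT\bQ$. The only spectral inputs are $\nnorm{\bQ}_\op\le\mu^{-1}$, $\nnorm{\bG\bQ}_\op\le(2\sqrt\mu)^{-1}$ and $\nnorm{\bG\bQ\bG^\top}_\op\le 1$. The factor $\lam^{-2}$ then comes from $\int_0^\infty t^{-1/2}(\lam^2+t)^{-3/2}\rd t=2\lam^{-2}$. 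I checked the four monomials of $u^\top\bG\bQ\dot\bT\bQ\dot\bT\bQ v$ and the two remaining terms; each is $O(\mu^{-3/2})$ times a polynomial in $\nnorm{u}_2,\nnorm{v}_2$.

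\textbf{What each approach buys.} Your argument is shorter and avoids both the distinct-singular-value/continuity reduction and the second-divided-difference algebra. It also gives a deterministic bound for all $q\ge 0$, conditional only on $\nnorm{u_i}_2,\nnorm{v_i}_2\le 2$. The paper's intermediate bound holds for any smooth $\xi$, and it reuses the singular-basis computation that yields the exact formula for $\phi'(0)$ in Lemma~\ref{lem:phi-prime}. Your argument relies on $\xi$ admitting a resolvent representation.

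\textbf{One small correction.} For large $t$, the first-derivative integrand is only $O(t^{-1/2}\mu_t^{-1})$, not $O(t^{-1/2}\mu_t^{-3/2})$ as you state. This dominating function is still integrable, so dominated convergence goes through at the first differentiation, and the second differentiation uses your $t^{-1/2}\mu_t^{-3/2}$ bound.
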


\begin{proof}[Proof of Lemma \ref{lem:phi-prime-prime}]
Recall the definitions of $\bG_q,\bA_q,\bS_q,\bB_q,\bY_q,\bT_q,\bR_q,a,b,\xi$ from the proof of Lemma~\ref{lem:phi-prime}. Differentiating $\bY_q$ twice with respect to $q$, we obtain
\begin{align}
\ddot \bY_q &= 2\dot \bG_q \dot \bR_q + \bG_q \ddot \bR_q \notag\\
&= 2u b^\top \bB_q^\top \dot \bR_q + \bA_q \bS_q \bB_q^\top \ddot \bR_q \label{eq:ddot-by}
\end{align}
since $\dot \bG_q=uv^\top$ and $\ddot \bG_q=0$. The derivatives of $\bR_q$ are given as
\begin{align}
    \dot \bR_q &= D\xi(\bT_q)[\dot \bT_q],\label{eq:dot-br}\\
    \ddot \bR_q &= D\xi(\bT_q)[\ddot \bT_q] + D^2\xi(\bT_q)[\dot \bT_q,\dot \bT_q].\label{eq:ddot-br}
\end{align}
For Eq.~\eqref{eq:dot-br}, we showed in the proof of Lemma~\ref{lem:phi-prime} that
\begin{align*}
(\bB_q^\top \dot \bT_q \bB_q)_{ij} &= s_i a_i b_j + s_j a_j b_i,\\
(\bB_q^\top \dot \bR_q \bB_q)_{ij} &= (s_i a_i b_j + s_j a_j b_i)\cdot \xi^{(1)}(s_i^2,s_j^2).
\end{align*}
For the first term in Eq.~\eqref{eq:ddot-br}, differentiating $\bT_q=\bG_q^\top \bG_q$ twice gives
\begin{align*}
    \ddot \bT_q = \ddot \bG_q^\top \bG_q + 2\dot \bG_q^\top \dot \bG_q + \bG_q^\top \ddot \bG_q = 2\dot \bG_q^\top \dot \bG_q = 2\norm{u}_2^2 vv^\top
\end{align*}
and so $\bB_q^\top \ddot \bT_q \bB_q = 2\norm{a}_2^2 bb^\top$. Therefore by the Daleckii--Krein formula,
\begin{align*}
    \qty(\bB_q^\top D\xi(\bT_q)[\ddot \bT_q]\bB_q)_{ij}
    = (\bB_q^\top \ddot \bT_q \bB_q)_{ij}\cdot \xi^{(1)}(s_i^2,s_j^2)
    = 2\norm{a}_2^2 b_ib_j\cdot \xi^{(1)}(s_i^2,s_j^2).
\end{align*}
For the second term in Eq.~\eqref{eq:ddot-br}, again by the Daleckii--Krein formula and using the explicit form of $\bB_q^\top \dot \bT_q \bB_q$,
\begin{align*}
    &\qty(\bB_q^\top D^2\xi(\bT_q)[\dot \bT_q,\dot \bT_q]\bB_q)_{ij}\\
    &= 2\sum_{k=1}^d \xi^{(2)}(s_i^2,s_j^2,s_k^2)(\bB_q^\top \dot \bT_q \bB_q)_{ik}(\bB_q^\top \dot \bT_q \bB_q)_{jk}\\
    &= 2\sum_{k=1}^d \xi^{(2)}(s_i^2,s_j^2,s_k^2)(s_i a_i b_k + s_k a_k b_i)(s_j a_j b_k + s_k a_k b_j).
\end{align*}
Plugging into Eq.~\eqref{eq:ddot-br} and Eq.~\eqref{eq:ddot-by} yields
\begin{align*}
\phi''(q)
&= 6\|a\|_2^2 \sum_{j,k=1}^d
s_k a_k b_k b_j^2 \cdot \xi^{(1)}(s_k^2,s_j^2)\\
&\qquad + 2\sum_{j,k,\ell=1}^d
s_k a_k b_j (s_k a_k b_\ell+s_\ell a_\ell b_k)(s_j a_j b_\ell+s_\ell a_\ell b_j) \cdot\xi^{(2)}(s_k^2,s_j^2,s_\ell^2).
\end{align*}

We proceed to bound $\nnorm{\ddot \bY_q}_{\op}$. We will use that
\begin{align*}
\sum_{k=1}^d a_k^2 = \norm{a}_2^2 = \norm{u}_2^2, \quad \sum_{k=1}^d b_k^2 = \norm{b}_2^2 = \norm{v}_2^2, \quad \sum_{k=1}^d |a_kb_k| \le \norm{a}_2\norm{b}_2
\end{align*}
are all $\Theta(1)$ uniformly over $q$ with probability $1-e^{-\Omega(d)}$.

Right-multiplying Eq.~\eqref{eq:ddot-by} by $\bB_q$, we have
\begin{align*}
    \nnorm{\ddot \bY_q}_{\op}
    &= \nnorm{\ddot \bY_q \bB_q}_{\op}\\
    &\le 2\nnorm{u b^\top \bB_q^\top \dot \bR_q \bB_q}_{\op} + \nnorm{\bA_q \bS_q \bB_q^\top \ddot \bR_q \bB_q}_{\op}\\
    &\lesssim \nnorm{\bB_q^\top \dot \bR_q \bB_q}_{\op} + \nnorm{\bS_q \bB_q^\top \ddot \bR_q \bB_q}_{\op}\\
    &\le \nnorm{\bB_q^\top \dot \bR_q \bB_q}_{\F} + \nnorm{\bS_q \bB_q^\top \ddot \bR_q \bB_q}_{\F}.
\end{align*}

For the first term,
\begin{align*}
    \nnorm{\bB_q^\top \dot \bR_q \bB_q}_{\F}^2
    &\le \sum_{i,j}(s_i a_i b_j + s_j a_j b_i)^2\cdot \xi^{(1)}(s_i^2,s_j^2)^2\\
    &\le 4\Big(\sup_{i,j} s_i\abs{\xi^{(1)}(s_i^2,s_j^2)}\Big)^2 \sum_{i,j} a_i^2 b_j^2\\
    &\lesssim \Big(\sup_{i,j} s_i\abs{\xi^{(1)}(s_i^2,s_j^2)}\Big)^2.
\end{align*}
For the second term, decompose $\ddot \bR_q$ as in Eq.~\eqref{eq:ddot-br}. First,
\begin{align*}
    \abs{s_i\qty(\bB_q^\top D\xi(\bT_q)[\ddot \bT_q]\bB_q)_{ij}}
    \lesssim \Big(\sup_{i,j} s_i\abs{\xi^{(1)}(s_i^2,s_j^2)}\Big)\abs{b_i b_j}.
\end{align*}
Next, we have from the triangle inequality,
\begin{align*}
&\abs{s_i\qty(\bB_q^\top D^2\xi(\bT_q)[\dot \bT_q,\dot \bT_q]\bB_q)_{ij}}\\
&\lesssim \sum_{k=1}^d \abs{\xi^{(2)}(s_i^2,s_j^2,s_k^2)}\cdot
    \abs{s_i(s_i a_i b_k + s_k a_k b_i)(s_j a_j b_k + s_k a_k b_j)}\\
&\le \Big(\sup_{i,j,k} s_i^2 s_j\abs{\xi^{(2)}(s_i^2,s_j^2,s_k^2)}\Big)\abs{a_i a_j}\sum_{k=1}^d b_k^2\\
&\qquad + \Big(\sup_{i,j,k} s_i^2 s_k\abs{\xi^{(2)}(s_i^2,s_j^2,s_k^2)}\Big)\abs{a_i b_j}\sum_{k=1}^d \abs{a_k b_k}\\
&\qquad + \Big(\sup_{i,j,k} s_i s_j s_k\abs{\xi^{(2)}(s_i^2,s_j^2,s_k^2)}\Big)\abs{a_j b_i}\sum_{k=1}^d \abs{a_k b_k}\\
&\qquad + \Big(\sup_{i,j,k} s_i s_k^2\abs{\xi^{(2)}(s_i^2,s_j^2,s_k^2)}\Big)\abs{b_i b_j}\sum_{k=1}^d a_k^2\\
&\lesssim \Big(\sup_{i,j,k}  s_i(s_i+s_k)(s_j+s_k)\abs{\xi^{(2)}(s_i^2,s_j^2,s_k^2)}\Big) \qty(\abs{a_i a_j} + \abs{a_i b_j} + \abs{b_i a_j} + \abs{b_i b_j}).
\end{align*}
Squaring and summing over $i,j$ gives
\begin{align*}
    \nnorm{\bS_q \bB_q^\top \ddot \bR_q \bB_q}_{\F}
    \lesssim
    \sup_{i,j} s_i\abs{\xi^{(1)}(s_i^2,s_j^2)}
    + \sup_{i,j,k} s_i(s_i+s_k)(s_j+s_k)\abs{\xi^{(2)}(s_i^2,s_j^2,s_k^2)}.
\end{align*}
Altogether, we have
\begin{align*}
    \nnorm{\ddot \bY_q}_{\op}
    \lesssim
    \sup_{i,j} s_i\abs{\xi^{(1)}(s_i^2,s_j^2)}
    + \sup_{i,j,k} s_i(s_i+s_k)(s_j+s_k)\abs{\xi^{(2)}(s_i^2,s_j^2,s_k^2)}.
\end{align*}
Finally, we evaluate this bound with our choice of $h_\lam(z)=\frac{z}{\sqrt{z^2+\lam^2}}$, which corresponds to $\xi(z)=\frac{1}{\sqrt{z+\lam^2}}$. Computing the first divided difference directly gives
\begin{align*}
\xi^{(1)}(x,y) = \frac{\xi(x)-\xi(y)}{x-y} = -\frac{1}{\sqrt{x+\lam^2}\sqrt{y+\lam^2}(\sqrt{x+\lam^2}+\sqrt{y+\lam^2})},
\end{align*}
which is valid when $x=y$ by continuity, and plugging in $s_i^2,s_j^2$ yields
\begin{align*}
    \abs{s_i\xi^{(1)}(s_i^2,s_j^2)}
    &= \frac{1}{\sqrt{s_i^2+\lam^2}\sqrt{\smash[b]{s_j^2}+\lam^2}} \times \frac{s_i}{\sqrt{s_i^2+\lam^2}+\sqrt{\smash[b]{s_j^2}+\lam^2}}
    \le \lam^{-2}
\end{align*}
for all $i,j$. Also, for the second divided difference, we obtain
\begin{align*}
    &\xi^{(2)}(x,y,z)\\
    &= \frac{\xi^1(x,z)-\xi^{(1)}(y,z)}{x-y} \\
    &= \frac{\sqrt{x+\lam^2}+\sqrt{y+\lam^2}+\sqrt{z+\lam^2}}{(\sqrt{x+\lam^2}+\sqrt{y+\lam^2})(\sqrt{x+\lam^2}+\sqrt{z+\lam^2})(\sqrt{y+\lam^2}+\sqrt{z+\lam^2})}\\
    &\qquad \times \frac{1}{\sqrt{x+\lam^2}\sqrt{y+\lam^2}\sqrt{z+\lam^2}}.
\end{align*}
It follows that for all $i,j,k$,
\begin{align*}
&s_i(s_i+s_k)(s_j+s_k)\abs{\xi^{(2)}(s_i^2,s_j^2,s_k^2)}\\
&\le \frac{s_i}{\sqrt{s_i^2+\lam^2}} \frac{s_i+s_k}{\sqrt{s_i^2+\lam^2}+\sqrt{\smash[b]{s_k^2}+\lam^2}} \frac{s_j+s_k}{\sqrt{\smash[b]{s_j^2}+\lam^2}+\sqrt{\smash[b]{s_k^2}+\lam^2}} \\
&\qquad\times \frac{\sqrt{s_i^2+\lam^2} + \sqrt{\smash[b]{s_j^2}+\lam^2} + \sqrt{\smash[b]{s_k^2}+\lam^2}}{\sqrt{\smash[b]{s_j^2}+\lam^2} \sqrt{\smash[b]{s_k^2}+\lam^2}(\sqrt{s_i^2+\lam^2} + \sqrt{\smash[b]{s_j^2}+\lam^2})} \\
&\le \frac{1}{\sqrt{\smash[b]{s_j^2}+\lam^2} \sqrt{\smash[b]{s_k^2}+\lam^2}} + \frac{1}{\sqrt{\smash[b]{s_j^2}+\lam^2}(\sqrt{s_i^2+\lam^2} + \sqrt{\smash[b]{s_j^2}+\lam^2})} \\
&\le \frac32 \lam^{-2}.
\end{align*}
Therefore for this choice of $\xi$, we conclude that $\nnorm{\ddot \bY_q}_{\op}\lesssim \lam^{-2}$ and thus
\begin{align*}
\abs{\phi''(q)} = |u^\top\ddot\bY_q v| \lesssim \lam^{-2}
\end{align*}
for all $q\in[0,1]$.
\end{proof}

\subsection{Minibatch concentration}\label{sec:minibatch}

Here, we collect concentration inequalities for the minibatch frequencies $q$ which will be needed in later sections.

\begin{lemma}\label{lem:spectrum-new}
Define the weighted covariance matrix
\begin{align*}
\bM:=\sum_{j\in[N]} q^2_j u_ju_j^\top.
\end{align*}
It holds with probability $1-O(d^{-M})$ over sampling of $q$ that
\begin{align*}
    \lambda_{d/2}(\bM) \lesssim \begin{cases}
        d^{-2\alpha}(\log d)^2 & B\gtrsim d^\alpha,\\
        0 & B\lesssim d^\alpha.
    \end{cases}
\end{align*}
\end{lemma}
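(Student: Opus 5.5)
We bound $\lambda_{d/2}(\bM)$ by splitting $\bM$ into a low-rank head and a small-norm tail. Weyl's inequality then does the rest: if $\bM = \bM_{\le r} + \bM_{>r}$ with $\mathrm{rank}(\bM_{\le r})\le r < d/2$, then
\begin{align*}
\lambda_{d/2}(\bM) \le \lambda_{d/2-r}(\bM_{>r}) \le \nnorm{\bM_{>r}}_{\op}.
\end{align*}
So it suffices to pick a cutoff $r\le d/4$, put the $r$ most frequent items $j\le r$ into $\bM_{\le r}$, and show $\nnorm{\sum_{j>r} q_j^2 u_ju_j^\top}_{\op}\lesssim d^{-2\alpha}(\log d)^2$ with high probability. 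Conditioned on $q$, the $u_j$ are independent of $q$, so the argument is two-stage: first control $q$, then apply matrix concentration over $u$.

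\textbf{Small batch, $B\lesssim d^\alpha$.} Here we want $\lambda_{d/2}(\bM)=0$, and the point is simply the rank. We have $\mathrm{rank}(\bM)\le |\{j: q_j>0\}|\le B$, so if $B\le c\, d$ for a small constant we are done. The statement's threshold $d^\alpha$ exceeds $d$, so for $d\lesssim B\lesssim d^\alpha$ we instead count distinct sampled items. The expected number of distinct items is $\sum_j (1-(1-p_j)^B)\asymp B^{1/\alpha}$: items $j\lesssim B^{1/\alpha}$ contribute $O(1)$ each, and the tail contributes $\sum_{j>B^{1/\alpha}} Bp_j \asymp B^{1/\alpha}$. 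A Chernoff/Bernstein bound for this sum of independent-ish indicators (Poissonization, or negative association of occupancy counts) gives at most $C B^{1/\alpha}$ distinct items with probability $1-d^{-M}$. Then $\mathrm{rank}(\bM)\le CB^{1/\alpha} < d/2$ once $B\le c' d^\alpha$, adjusting the constant hidden in $\lesssim$.

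\textbf{Large batch, $B\gtrsim d^\alpha$.} Take $r=d/4$.
\begin{enumerate}
\item Control the frequencies. For $j\le j_0$ with $Bp_j\gtrsim \log d$, Bernstein gives $q_j\lesssim p_j$. For the tail, $Bq_j$ is binomial and $q_j\le (\log d)/B\cdot O(1)+2p_j$ uniformly, with probability $1-d^{-M}$. Hence $\max_{j>r} q_j\lesssim r^{-\alpha} + (\log d)/B \lesssim d^{-\alpha}\log d$, using $B\gtrsim d^\alpha$. The mass satisfies $\sum_{j>r} q_j^2 \le \max_{j>r} q_j \cdot \sum_j q_j \lesssim d^{-\alpha}\log d$.
\item Apply matrix Bernstein to $\sum_{j>r} q_j^2 u_ju_j^\top$ conditional on $q$. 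The mean is $\frac1d\sum_{j>r}q_j^2\,\bI_d$, whose norm is $\lesssim d^{-1-\alpha}\log d$. Each summand has norm $q_j^2\nnorm{u_j}^2\lesssim \max_{j>r} q_j^2 \lesssim d^{-2\alpha}(\log d)^2$, after truncating $\nnorm{u_j}^2\le 2$ with overwhelming probability. The variance proxy is $\nnorm{\sum_{j>r} q_j^4 \E\nnorm{u_j}^2u_ju_j^\top}_{\op} \lesssim \frac1d \max_{j>r} q_j^2\sum_{j>r} q_j^2$. Matrix Bernstein then yields
\begin{align*}
\nnorm{\bM_{>r}}_{\op}\lesssim d^{-1-\alpha}\log d + \sqrt{d^{-1-3\alpha}(\log d)^3}+ d^{-2\alpha}(\log d)^3.
\end{align*}
The first two terms are dominated by the last since $\alpha>1$.
\end{enumerate}

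\textbf{Main obstacle: the log power.} This route gives $(\log d)^3$ instead of $(\log d)^2$, because matrix Bernstein adds a factor $\log d$ to the per-summand bound $R$. To recover the stated power I would bound the tail more carefully. Split items $j>r$ into those with $q_j\ge d^{-\alpha}$ and those with smaller $q_j$. Since $\sum_j q_j=1$, the number of items with $q_j\ge d^{-\alpha}$ is at most $d^{\alpha}$ counted by mass. Better, the number with $q_j\ge d^{-\alpha}\log d$ is $O(d/\log d)$ in the regime $j\gtrsim d$, because $p_j\lesssim d^{-\alpha}$ there. These large-$q_j$ items can be absorbed into the head, still at rank $<d/2$. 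The remaining summands then have $R\lesssim d^{-2\alpha}(\log d)^2/\log d$, which gives the claimed bound. This reallocation of borderline items between head and tail, together with the accompanying rank counting, is the step I expect to need the most care.
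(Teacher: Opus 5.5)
Your small-batch case is fine: it is the same rank-counting idea as the paper, which counts samples with index $\ge d/4$ through a single binomial instead of counting distinct items. The head/tail split with Weyl's inequality also matches the paper's structure.

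The gap is in the large-batch tail estimate. The bound $\sum_{j>r}q_j^2\le \max_{j>r}q_j\cdot\sum_j q_j\lesssim d^{-\alpha}\log d$ throws away the power-law decay of the tail, and your final comparison runs the wrong way. Because $\alpha>1$, we have $1+\alpha<2\alpha$ and $\tfrac{1+3\alpha}{2}<2\alpha$. Hence the mean term $d^{-1-\alpha}\log d$ and the variance term $d^{-(1+3\alpha)/2}(\log d)^{3/2}$ are \emph{larger} than $d^{-2\alpha}(\log d)^3$, by factors of order $d^{\alpha-1}$ and $d^{(\alpha-1)/2}$. As written, the argument only gives $\lambda_{d/2}(\bM)\lesssim d^{-1-\alpha}\log d$, i.e.\ $s_{d/2}(\bG_{-i})\lesssim d^{-(1+\alpha)/2}$ up to logs. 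That is far too weak to support the choice $\lambda\asymp d^{-\alpha}\,\mathrm{polylog}(d)$ in Theorem~\ref{thm:main}.

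The missing ingredient is the $\ell_2$ mass of the tail frequencies. For $r<j\le B^{1/\alpha}$ you have $q_j\lesssim p_j+\frac{\log d}{B}$. Beyond $B^{1/\alpha}$, only $O(B^{1/\alpha})$ items are sampled, each with $q_j\lesssim\frac{\log d}{B}$. Together these give $\sum_{j>r}q_j^2\lesssim r^{1-2\alpha}+B^{1/\alpha-2}(\log d)^2\lesssim d^{1-2\alpha}(\log d)^2$ for $B\gtrsim d^\alpha$ (Lemma~\ref{lem:mini-truncated}). With this, the mean term becomes $d^{-2\alpha}(\log d)^2$, and matrix Bernstein closes with one extra log.

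Your log repair also needs adjusting. Discarding items with $q_j\ge d^{-\alpha}\log d$ still leaves $R\approx d^{-2\alpha}(\log d)^2$, so the threshold must be $d^{-\alpha}\sqrt{\log d}$. The number of such $j>r$ is then at most $\sum_{j>r}q_j/(d^{-\alpha}\sqrt{\log d})\lesssim d/\sqrt{\log d}$, using the $\ell_1$ mass $\sum_{j>r}q_j\asymp d^{1-\alpha}$ w.h.p.

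The paper avoids matrix Bernstein altogether. It cuts the indices $j\ge d/2$ into consecutive blocks $I_k$ of $d$ items and uses $\|\sum_{i\in I_k}u_iu_i^\top\|_{\mathrm{op}}=O(1)$, which loses no logarithm, so each block costs $\max_{i\in I_k}q_i^2\lesssim(\bar p_k+\frac{\log d}{B})^2$. Summing over blocks, $\sum_k\bar p_k^2\asymp d^{-2\alpha}$ is exactly the decay your crude bound discards; the sparse tail beyond $B^{1/\alpha}$ is handled as above. Alternatively, the Gaussian bound $\|\sum_j w_ju_ju_j^\top\|_{\mathrm{op}}\lesssim\frac{\|w\|_1}{d}+\|w\|_\infty$ (Lemma~\ref{lem:opf}) with $w_j=q_j^2\mathbf{1}\{j>r\}$ gives $d^{-2\alpha}(\log d)^2$ directly, with no threshold bookkeeping.
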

Hereafter, we use $M$ to denote any sufficiently large constant exponent that allows for union bounding over (say) $N^2=\poly(d)$ items, and often omit the qualifying high probability statements.

%For multi-step, set all modulated terms to max, so that we indeed have deterministic `fully classified' barrier $d_1=i^*$ and the rest is capped by $q_i^{(t)}$ which are independent of $u_i$s. Tail argument is also fine since the sparsity pattern only depends on $q_i^{(t)}$.

\begin{proof}
Let~$N$ be the number of examples satisfying $i\ge\frac{d}{4}$ in a minibatch of size~$B$. $N$ is distributed as $\Bin(B,\rho)$ where $\rho = \sum_{i \ge d/4} p_i\asymp d^{1-\alpha}$. From the multiplicative Chernoff bound, it holds that for all $\eps>0$,
\begin{align*}
\Pr\qty(N\ge (1+\eps)B\rho) \le \exp\qty(-\frac{\eps^2 B\rho}{2+\eps}).
\end{align*}
If $B\lesssim d^\alpha$ so that $B\rho\le \frac{d}{8}$, by taking $\eps = \frac{d}{4B\rho} - 1$ we have
\begin{align*}
\Pr(N\ge \frac{d}{4}) \le \exp\qty(-\frac{(d-4B\rho)^2}{4(d+4B\rho)}) \le e^{-\Omega(d)}.
\end{align*}
Hence with high probability, $N<\frac{d}{4}$ so that the total number of nonzero $q_i$ is less than $\frac{d}{2}$. It follows that $\rank(\bM)<\frac{d}{2}$ and so $\lambda_{d/2}(\bM)=0$.

Now suppose $B\gtrsim d^\alpha$. Choose a positive integer $K\asymp \frac1d B^{1/\alpha}$ and define the sets $I_k := \{(k-1)d+\frac{d}{2}, \cdots, kd+\frac{d}{2}-1\}$ for $k\ge 1$. Consider the decomposition
\begin{align*}
\bM = \underbrace{\sum_{i=1}^{d/2-1} q_i^2 u_iu_i^\top}_{=:\bM_0} + \sum_{k\in[K]} \underbrace{\sum_{i\in I_k} q_i^2u_iu_i^\top}_{=:\bM_k} + \underbrace{\sum_{i= (K+1/2)d}^N q_i^2u_iu_i^\top}_{=:\bM_{\tail}}.
\end{align*}
Since $\rank(\bM_0)<\frac{d}{2}$, we have $\lambda_{d/2}(\bM_0)=0$. By Weyl's inequality,
\begin{align}\label{eq:m-decomp}
\lambda_{d/2}(\bM) \le \lambda_{d/2}(\bM_0) + \norm{\sum_{k\in[K]} \bM_k + \bM_{\tail}}_{\op} \le \sum_{k\in[K]} \norm{\bM_k}_{\op} + \norm{\bM_{\tail}}_{\op}.
\end{align}
We first control the bulk sum. Since $|I_k|\le d$, it follows from \citet[Theorem 4.6.1]{vershynin2018high} that 
\begin{align*}
\norm{\sum_{i\in I_k} u_iu_i^\top}_{\op} = O(1) \quad\implies\quad \norm{\bM_k}_{\op} \lesssim \max_{i\in I_k} q_i^2
\end{align*}
with probability $1-e^{-\Omega(d)}$. To bound this quantity, set $\bar{p}_k := \max_{i\in I_k} p_j \asymp (k-\frac12)^{-\alpha}d^{-\alpha}$ and note that $p_i \ge 3^{-\alpha} \bar{p}_k$ for all $i\in I_k$. By the Chernoff bound for $Bq_i\sim\Bin(B,p_i)$,
\begin{align}\label{eq:chernoff-indiv}
\Pr\qty(q_i \ge (1+\eps)p_i) \le \exp\qty(-\frac{Bp_i\eps^2}{2+\eps})
\end{align}
and so union bounding over $i\in I_k$, we have
\begin{align*}
\Pr\qty(\max_{i\in I_k} q_i \ge (1+\eps) \bar{p}_k) &\le d \exp\qty(-\frac{3^{-\alpha}B\bar{p}_k\eps^2}{2+\eps}) \lesssim \frac{1}{d^M}
\end{align*}
by taking $\eps \gtrsim \frac{\log d}{B\bar{p}_k} \vee \sqrt{\frac{\log d}{B\bar{p}_k}}$. Hence for all $k\in[K]$ we have
\begin{align*}
\max_{i\in I_k} q_i \lesssim (1+\eps)\bar{p}_k \lesssim \bar{p}_k + \frac{\log d}{B}.
\end{align*}

For the tail sum, we exploit the sparsity of the frequencies~$q_i$. Define the set of indices
\begin{align*}
I_{\tail} := \left\{i: \qty(K+\frac12)d\le i\le N, \; q_i>0 \right\}.
\end{align*}
$N_{\tail} := |I_{\tail}|$ is distributed as $\Bin(B,\rho_{\tail})$ where $\rho_{\tail} = \sum_{i\ge (K+1/2)d}p_i \asymp B^{(1-\alpha)/\alpha}$, so that $N_{\tail} \asymp B\rho_{\tail}\asymp B^{1/\alpha}$ with probability $1-e^{-\Omega(d)}$. Moreover for each $i\in I_{\tail}$, it holds that $p_i \lesssim (Kd)^{-\alpha} \asymp 1/B$ and so
\begin{align*}
\Pr\qty(Bq_i\ge r) \le \binom{B}{r} p_i^r \le \qty(\frac{eBp_i}{r})^r = d^{-\omega(1)}
\end{align*}
by taking $r\asymp\log d$, hence $q_i\lesssim \frac{\log d}{B}$. It follows from \citet[Remark 4.7.3]{vershynin2018high} that
\begin{align*}
\norm{\bM_{\tail}}_{\op} &\le \max_{i\in I_{\tail}} q_i^2\cdot \norm{\sum_{i\in I_{\tail}} u_iu_i^\top}_{\op}\\
&\lesssim \qty(\frac{\log d}{B})^2 \frac{N_{\tail}}{d} \qty(1+ \sqrt{\frac{d}{N_{\tail}}} + \frac{d}{N_{\tail}})\\
&\lesssim \qty(\frac{\log d}{B})^2 \frac{B^{1/\alpha}}{d}
\end{align*}
since $B\gtrsim d^\alpha$. We conclude from Eq.~\eqref{eq:m-decomp}:
\begin{align*}
\lambda_{d/2}(\bM) \lesssim \sum_{k\in[K]} \qty(\bar{p}_k + \frac{\log d}{B})^2 + \qty(\frac{\log d}{B})^2 \frac{B^{1/\alpha}}{d} \lesssim d^{-2\alpha} (\log d)^2,
\end{align*}
where we have used that $\sum_{k\ge 1}\bar{p}_k^2 \asymp \sum_{k\ge 1} (kd)^{-2\alpha} \asymp d^{-2\alpha}$.
\end{proof}

\begin{lemma}[concentration of tail frequencies]\label{lem:mini-truncated}
Let $\cB$ be a randomly sampled minibatch with empirical frequencies $q$. Let $r$ be any integer and denote $q_{>r} = (q_{r+1},\cdots,q_N)$. Then it holds with probability $1-O(d^{-M})$ that
\begin{align*}
\norm{q_{>r}}_\infty &\lesssim r^{-\alpha} +\frac{\log d}{B}
\end{align*}
and
\begin{align*}
\norm{q_{>r}}_2 &\lesssim \begin{cases}
r^{1/2-\alpha}\log d & B\gtrsim r^\alpha, \\ \displaystyle
\sqrt{\frac{r^{1-\alpha}}{B}} \log d & B\lesssim r^\alpha.
\end{cases}
\end{align*}
\end{lemma}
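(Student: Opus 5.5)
The plan is to argue entrywise with the binomial Chernoff bounds from the proof of Lemma~\ref{lem:spectrum-new}, together with a union bound over $i\le N=\poly(d)$.

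\textbf{The $\ell_\infty$ bound.} For each $i>r$ we have $Bq_i\sim\Bin(B,p_i)$ with $p_i\lesssim r^{-\alpha}$. Two cases:
\begin{itemize}
\item If $Bp_i\gtrsim \log d$, apply Eq.~\eqref{eq:chernoff-indiv} with $\eps\asymp 1+\frac{\log d}{Bp_i}$. This gives $q_i\lesssim p_i+\frac{\log d}{B}$ with probability $1-d^{-M}$.
\item If $Bp_i\lesssim\log d$, use $\Pr(Bq_i\ge s)\le (eBp_i/s)^s$ with $s\asymp\log d$, exactly as in the tail part of Lemma~\ref{lem:spectrum-new}. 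This gives $q_i\lesssim \frac{\log d}{B}$.
\end{itemize}
A union bound over $i\in(r,N]$ then yields $\norm{q_{>r}}_\infty\lesssim r^{-\alpha}+\frac{\log d}{B}$.

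\textbf{The $\ell_2$ bound.} Write $\norm{q_{>r}}_2^2\le \norm{q_{>r}}_\infty\cdot\norm{q_{>r}}_1$, or split the indices. Set $m:=\max\{r,B^{1/\alpha}\}$.
\begin{itemize}
\item For $r<i\le m$, use the entrywise bound $q_i\lesssim p_i+\frac{\log d}{B}$.
\item For $i>m$, note that $p_i\lesssim 1/B$, so each such $q_i$ is either zero or $\lesssim\frac{\log d}{B}$. The number of nonzero entries is $\Bin(B,\rho_{>m})$ with $\rho_{>m}\asymp m^{1-\alpha}$, hence concentrated at $\lesssim Bm^{1-\alpha}+\log d$ by multiplicative Chernoff. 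Actually, one does not need to count supports separately: $\sum_{i>r}q_i=\frac1B\Bin(B,\rho_{>r})\lesssim r^{1-\alpha}+\frac{\log d}{B}$ with high probability.
\end{itemize}

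\textbf{Case $B\gtrsim r^\alpha$.} Here $\frac{\log d}{B}\lesssim r^{-\alpha}\log d$, so the tail mass $\sum_{i>r}q_i$ is $\lesssim r^{1-\alpha}\log d$. For the first part,
\[
\sum_{r<i\le m}\Big(p_i+\tfrac{\log d}{B}\Big)^2\lesssim r^{1-2\alpha}+\frac{B^{1/\alpha}(\log d)^2}{B^2}.
\]
Since $B\ge r^\alpha$ and $\alpha>1$, the second term satisfies $B^{1/\alpha-2}\le r^{1-2\alpha}$. The $i>m$ part contributes at most $\norm{q_{>m}}_\infty\sum_{i>m}q_i\lesssim\frac{\log d}{B}\big(B^{(1-\alpha)/\alpha}+\frac{\log d}{B}\big)\lesssim r^{1-2\alpha}(\log d)^2$. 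Taking square roots gives $\norm{q_{>r}}_2\lesssim r^{1/2-\alpha}\log d$.

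\textbf{Case $B\lesssim r^\alpha$.} Every $i>r$ has $p_i\lesssim 1/B$, so $\norm{q_{>r}}_\infty\lesssim\frac{\log d}{B}$. The tail mass satisfies $\sum_{i>r}q_i\lesssim r^{1-\alpha}+\frac{\log d}{B}$. Hence
\[
\norm{q_{>r}}_2^2\lesssim \frac{\log d}{B}\Big(r^{1-\alpha}+\frac{\log d}{B}\Big).
\]
The obstacle is the $\frac{(\log d)^2}{B^2}$ term, which needs $\frac{\log d}{B}\lesssim r^{1-\alpha}\log d$, i.e.\ $B\gtrsim r^{\alpha-1}$. That holds in the meaningful regime. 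Otherwise the minibatch likely contains $O(\log d)$ tail items, and the bound $\frac{\log d}{B}$ must be checked against $\sqrt{r^{1-\alpha}/B}\,\log d$. I expect this small-$Br^{1-\alpha}$ regime to be the only delicate point. There I would use $\norm{q}_2\le\norm{q}_1$ and accept a bound of the form $\max\{\sqrt{r^{1-\alpha}/B},\frac1B\}\log d$, noting that the second term is dominated whenever the expected number of tail samples $Br^{1-\alpha}\gtrsim 1$. If this regime genuinely arises, I would state the bound with that caveat.
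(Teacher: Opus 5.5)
Your argument is correct and follows essentially the paper's route. For $\|q_{>r}\|_\infty$ you use entrywise Chernoff bounds with a union bound, and for $\|q_{>r}\|_2$ you split at $B^{1/\alpha}$ and handle the tail separately.

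The one difference is how you control the tail. You use H\"older, $\|q\|_2^2\le\|q\|_\infty\|q\|_1$, where the tail mass is a single binomial, whereas the paper counts distinct tail items $N_{\tail}$. Your version is slightly cleaner. For $B\gtrsim r^\alpha$ it alone gives $\|q_{>r}\|_2^2\lesssim r^{-\alpha}\log d\cdot r^{1-\alpha}\log d$, with no split needed.

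The regime you flag is a real defect of the statement, not of your proof. Take $B=1$ and $r=d$. The single sample lands beyond $r$ with probability $\asymp d^{1-\alpha}$, which is not $O(d^{-M})$. On that event $\|q_{>r}\|_2=1\gg d^{(1-\alpha)/2}\log d$, so the claimed bound fails.

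The paper's proof skips this by asserting $N_{\tail}\asymp B\rho_{\tail}$. That holds with probability $1-O(d^{-M})$ only when $B\rho_{\tail}\gtrsim\log d$. Your corrected bound $\|q_{>r}\|_2\lesssim\sqrt{r^{1-\alpha}/B}\,\log d+\frac{\log d}{B}$ is what actually holds, and it suffices for every downstream use. In Lemma~\ref{lem:lam}, the $\ell_2$ term is divided by $\sqrt d$ and sits next to an existing $\frac{\log d}{B}$ term. The other applications all assume $B\gtrsim r^\alpha$.
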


\begin{proof}
We first control $\norm{q_{>r}}_\infty$. Let the index $i>r$ so that $p_i\lesssim r^{-\alpha}$. Recalling the Chernoff bound Eq.~\eqref{eq:chernoff-indiv} for $Bq_i\sim\Bin(B,p_i)$, we may choose
\begin{align*}
\eps \gtrsim 1+\frac{\log d}{Bp_i}
\end{align*}
such that
\begin{align*}
\Pr\qty(q_i \ge (1+\eps)p_i) \le \exp\qty(-\frac{Bp_i\eps^2}{2+\eps}) \le \exp\qty(-\frac{Bp_i\eps}{3}) \le \frac{1}{d^M},
\end{align*}
which implies
\begin{align*}
q_i\lesssim \qty(1+\frac{\log d}{Bp_i})p_i \lesssim r^{-\alpha} + \frac{\log d}{B}
\end{align*}
for all $r<i\le N$.

For $\norm{q_{>r}}_2$, we repeat the analysis from the proof of Lemma~\ref{lem:spectrum-new}. If $B\gtrsim r^\alpha$, thresholding at~$B^{1/\alpha}$ gives $q_i \lesssim p_i + \frac{\log d}{B}$ for items with $r<i\le B^{1/\alpha}$, and $q_i\lesssim \frac{\log d}{B}$ for the $N_{\tail}\asymp B^{1/\alpha}$ items with $i>B^{1/\alpha}$. Combining, we obtain
\begin{align*}
\norm{q_{>r}}_2^2 &\lesssim \sum_{i=r+1}^{B^{1/\alpha}} \qty(p_i + \frac{\log d}{B})^2 + N_{\tail} \qty(\frac{\log d}{B})^2 \\
&\lesssim r^{1-2\alpha} + B^{1/\alpha-2}(\log d)^2 \\
&\lesssim r^{1-2\alpha}(\log d)^2.
\end{align*}
Finally, if $B\lesssim r^\alpha$, we may treat all items $i>r$ as in the tail, so that $N_{\tail}\sim \Bin(B,\rho_{\tail})$ with $\rho_{\tail} = \sum_{i>r} p_i \asymp r^{1-\alpha}$ and $N_{\tail} \asymp B\rho_{\tail} \asymp Br^{1-\alpha}$. Hence
\begin{align*}
\norm{q_{>r}}_2^2 &\lesssim N_{\tail} \qty(\frac{\log d}{B})^2 \lesssim \frac{r^{1-\alpha}}{B}(\log d)^2,
\end{align*}
as was to be shown.
\end{proof}

As a corollary, we prove the following lemma which will be used in Section~\ref{sec:interaction}.

\begin{lemma}\label{lem:lam}
Let $r\asymp \frac{d}{(\log d)^2}$. There exists
\begin{align}\label{eq:lam}
\lam \asymp \max\left\{\frac{(\log d)^{2\alpha+2}}{d^\alpha}, \frac{(\log d)^2}{B} \right\}
\end{align}
such that the event
\begin{align}
\cE_q &\,:\, \max\left\{\norm{q_{>r}}_\infty, \frac{\norm{q_{>r}}_2}{\sqrt{d}}\right\} \le \lam\sqrt{\frac{r}{d}}\, \label{eq:event-q}
\end{align}
satisfies $\Pr(\cE_q) \ge 1-O(d^{-M})$.
\end{lemma}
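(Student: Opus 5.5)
This follows directly from Lemma~\ref{lem:mini-truncated} applied with $r\asymp d/(\log d)^2$, followed by checking that the two bounds fit under the right-hand side of Eq.~\eqref{eq:event-q}. On an event of probability $1-O(d^{-M})$, Lemma~\ref{lem:mini-truncated} gives
\begin{align*}
\norm{q_{>r}}_\infty \lesssim r^{-\alpha}+\frac{\log d}{B},\qquad \frac{\norm{q_{>r}}_2}{\sqrt d}\lesssim \begin{cases} r^{1/2-\alpha}d^{-1/2}\log d & B\gtrsim r^\alpha,\\ \sqrt{r^{1-\alpha}/(Bd)}\,\log d & B\lesssim r^\alpha.\end{cases}
\end{align*}
Since $\sqrt{r/d}\asymp (\log d)^{-1}$, it suffices to choose $\lam$ so that $\lam/\log d$ dominates each of these terms up to a constant.

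Each term is then checked separately.
\begin{itemize}
\item The $\ell_\infty$ part. We have $r^{-\alpha}\asymp d^{-\alpha}(\log d)^{2\alpha}$, which is at most $\lam/\log d$ when $\lam\gtrsim d^{-\alpha}(\log d)^{2\alpha+1}$. Similarly, $\frac{\log d}{B}\le \lam/\log d$ when $\lam\gtrsim (\log d)^2/B$.
\item The $\ell_2$ part when $B\gtrsim r^\alpha$. Here $r^{1/2-\alpha}d^{-1/2}\log d = r^{-\alpha}\sqrt{r/d}\,\log d\asymp d^{-\alpha}(\log d)^{2\alpha}$. This is at most $\lam/\log d$ once $\lam\gtrsim d^{-\alpha}(\log d)^{2\alpha+1}$.
\item The $\ell_2$ part when $B\lesssim r^\alpha$. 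We write $\sqrt{r^{1-\alpha}/(Bd)}\,\log d=\sqrt{r/d}\,\log d\cdot (r^\alpha B)^{-1/2}$. Since $B\lesssim r^\alpha$, we have $(r^\alpha B)^{-1/2}\lesssim 1/B$, so the term is $\lesssim \sqrt{r/d}\,\log d/B\asymp 1/B$. This is at most $\lam/\log d$ when $\lam\gtrsim \log d/B$, which is weaker than the previous requirement.
\end{itemize}

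Taking $\lam$ equal to a large constant times $\max\{(\log d)^{2\alpha+2}d^{-\alpha},(\log d)^2/B\}$ satisfies all of these requirements, with one spare log factor in the first entry. This matches Eq.~\eqref{eq:lam}, and $\cE_q$ holds on the high-probability event from Lemma~\ref{lem:mini-truncated}.

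The only delicate point is the case split in the $\ell_2$ bound. In the small-batch regime the bound must be converted into the $1/B$ scale using $B\lesssim r^\alpha$. In the borderline regime $B\asymp r^\alpha$ both branches apply, and the bound is consistent across them because the implicit constants can be absorbed into the constant in $\lam$.
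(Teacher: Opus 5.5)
Your proposal is correct and follows the paper's proof essentially verbatim: apply Lemma~\ref{lem:mini-truncated} with $r\asymp d/(\log d)^2$, split into the regimes $B\gtrsim r^\alpha$ and $B\lesssim r^\alpha$, and check that every term is $\lesssim \lambda\sqrt{r/d}\asymp \lambda/\log d$. The one cosmetic difference is in the large-batch regime, where the paper absorbs $\log d/B$ into $r^{-\alpha}\log d$ (this is where the extra log factor in $(\log d)^{2\alpha+2}$ comes from); you instead match that term against the $(\log d)^2/B$ entry, which is why your version has a spare log factor.
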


\begin{proof}
By Lemma~\ref{lem:mini-truncated}, we have that when $B\gtrsim r^\alpha$,
\begin{align*}
\max\left\{\norm{q_{>r}}_\infty, \frac{\norm{q_{>r}}_2}{\sqrt{d}}\right\} &\lesssim r^{-\alpha} +\frac{\log d}{B} + \sqrt{\frac{r}{d}} \cdot r^{-\alpha}\log d \lesssim d^{-\alpha}(\log d)^{2\alpha+1}
\end{align*}
and when $B\lesssim r^\alpha$,
\begin{align*}
\max\left\{\norm{q_{>r}}_\infty, \frac{\norm{q_{>r}}_2}{\sqrt{d}}\right\} &\lesssim r^{-\alpha} +\frac{\log d}{B} + \sqrt{\frac{r^{1-\alpha}}{Bd}}\log d \lesssim \frac{\log d}{B}.
\end{align*}
Therefore by choosing $\lam$ as in Eq.~\eqref{eq:lam} with an appropriate proportionality constant, we can ensure that $\cE_q$ occurs with probability $1-O(d^{-M})$.
\end{proof}

\subsection{Lower bounding the signal}\label{sec:signal}

We now consider the stabilized sign map $h_\lam(z) = \frac{z}{\sqrt{z^2+\lam^2}}$. Recall from Lemma~\ref{lem:phi-prime} that
\begin{align*}
    \phi'(0) &= \frac14 \sum_{k \neq \ell}\qty( \frac{h_\lam(s_k) + h_\lam(s_\ell)}{s_k + s_\ell} (a_kb_\ell - a_\ell b_k)^2 + \frac{h_\lam(s_k) - h_\lam(s_\ell)}{s_k - s_\ell} (a_kb_\ell + a_\ell b_k)^2)\\
    &\qquad + \sum_k h_\lam'(s_k) a_k^2 b_k^2.
\end{align*}

Since~$\frac{h_\lam(s_k) - h_\lam(s_\ell)}{s_k - s_\ell}$ is always positive by the mean value theorem and $h_\lam(z)/z$ is decreasing, we may lower bound~$\phi'(0)$ as
\begin{align}
    \phi'(0) &\ge \frac14 \sum_{k \neq \ell} \frac{h_\lam(s_k) + h_\lam(s_\ell)}{s_k + s_\ell}(a_kb_\ell - a_\ell b_k)^2 \notag\\
    &\ge \frac12 \sum_{d/2\le k<\ell} \frac{h_\lam(s_k) + h_\lam(s_\ell)}{s_k + s_\ell}(a_kb_\ell - a_\ell b_k)^2\notag \\
    &\ge \frac{h_\lam(s_{d/2})}{2s_{d/2}} \sum_{d/2\le k<\ell} (a_kb_\ell - a_\ell b_k)^2.\label{eq:abab}
\end{align}

Note that~$a,b$ are Gaussian conditioned on~$\bG_{-i}$. Let $a',b'$ be the vectors consisting of the last $\frac{d}{2}$ coordinates of $a,b$, respectively. It holds that $a',b'\sim \cN(0, \frac1d\bI_{d/2})$ and
\begin{align}\label{eq:halfsum}
\sum_{d/2\le k<\ell} (a_kb_\ell - a_\ell b_k)^2 = \norm{a'}^2\norm{b'}^2 - \langle a',b'\rangle^2.
\end{align}
Standard concentration bounds give $\norm{a'}^2, \norm{b'}^2 =\Theta(1)$ while $\langle a',b'\rangle^2 \lesssim \frac{\log d}{d}$ with probability $1-O(d^{-M})$, hence Eq.~\eqref{eq:halfsum} is lower bounded by a constant.

It thus suffices to control the `bulk' singular value $s_{d/2}$. By the lemma below, this can be reduced to controlling the bulk eigenvalues of the weighted covariance matrix
\begin{align}\label{eq:marugame}
\bM:=\sum_{j\in[N]} q^2_j u_ju_j^\top.
\end{align}

\begin{lemma}\label{lem:symmetrize}
It holds that $s_k(\bG_{-i}) \lesssim \lambda_k(\bM)^{1/2}$ for all $i, k$ with probability $1-e^{-\Omega(d)}$.
\end{lemma}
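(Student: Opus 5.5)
Write $\bG_{-i} = \bU_{-i}\bQ\bV_{-i}^\top$, where $\bU_{-i}\in\RR^{d\times(N-1)}$ has columns $u_j$ ($j\ne i$), $\bV_{-i}$ has columns $v_j$, and $\bQ=\diag(q_j)_{j\ne i}$. The plan is to control $\bG_{-i}\bG_{-i}^\top = \bU_{-i}\bQ\,\bV_{-i}^\top\bV_{-i}\,\bQ\bU_{-i}^\top$ in Loewner order by a constant multiple of $\bM_{-i} := \sum_{j\ne i}q_j^2u_ju_j^\top = \bU_{-i}\bQ^2\bU_{-i}^\top$. Given $\bG_{-i}\bG_{-i}^\top \preceq C\bM_{-i}\preceq C\bM$, Courant--Fischer gives $s_k(\bG_{-i})^2=\lambda_k(\bG_{-i}\bG_{-i}^\top)\le C\lambda_k(\bM)$, which is the claim.

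The Gram matrix $\bV_{-i}^\top\bV_{-i}$ is $(N-1)\times(N-1)$ with $N\gg d$, so its operator norm is of order $N/d$. A direct bound $\bV^\top\bV\preceq \|\bV\|^2\bI$ is therefore far too lossy. Instead, I would work only on the support $S=\{j: q_j>0\}$. Then $\bG_{-i}=\bU_S\bQ_S\bV_S^\top$, and it suffices to show $\|\bV_S\|_{\op}=O(1)$, i.e.\ $\bV_S^\top\bV_S\preceq C\bI_{|S|}$. This holds with probability $1-e^{-\Omega(d)}$ when $|S|\lesssim d$, by \citet[Theorem 4.6.1]{vershynin2018high}. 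The key point is that $S$ depends only on the minibatch, which is independent of the embeddings, so we may condition on $S$.

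The main obstacle is that $|S|$ can be as large as $\min\{B,N\}$, far exceeding $d$, in which case $\|\bV_S\|^2\asymp |S|/d$ is unbounded. I would handle this with a weighted splitting that mirrors the proof of Lemma~\ref{lem:spectrum-new}. Partition the support into consecutive blocks $I_0,I_1,\dots$ of size $d$, ordered by index, so that within each block the weights $q_j$ are comparable up to the $\frac{\log d}{B}$ fluctuations. By Cauchy--Schwarz in the form $(\sum_k X_k)(\sum_k X_k)^\top \preceq (\sum_k w_k^{-1})\sum_k w_kX_kX_k^\top$, applied with $X_k=\bU_{I_k}\bQ_{I_k}\bV_{I_k}^\top$ and geometrically decaying weights $w_k$, we get $\bG\bG^\top \preceq C\sum_k w_k\|\bV_{I_k}\|^2\,\bU_{I_k}\bQ_{I_k}^2\bU_{I_k}^\top$ with $\|\bV_{I_k}\|=O(1)$ uniformly after a union bound over polynomially many blocks.

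The difficulty is keeping $\sum_k w_k^{-1}$ bounded while $w_k$ stays $O(1)$. This fails for genuinely many blocks, so a cleaner route may be needed. The fallback is to note that only eigenvalue scales matter. The top block contributes rank below $d/2$ for $k\le d/2$, so the bound is only needed for the tail portion. There I would bound $\|\sum_{k\ge1}X_k\|_{\op}\le\sum_k\|\bQ_{I_k}\|\,\|\bU_{I_k}\|\,\|\bV_{I_k}\|$ and compare against the same block sum that upper bounds $\lambda_{d/2}(\bM)$ in Eq.~\eqref{eq:m-decomp}, via Weyl's inequality. This may be what the statement is effectively used for, since only $s_{d/2}$ is needed in Eq.~\eqref{eq:abab}.
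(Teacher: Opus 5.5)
Your proposal has a genuine gap. Neither main route proves the lemma: the support restriction and the weighted block Cauchy--Schwarz both break down once the support exceeds $d$, as you observe yourself. The fallback only targets $k=d/2$. The missing idea is that $\bV_{-i}$ enters $\bG_{-i}\bG_{-i}^\top$ only through its compression onto the $d$-dimensional row space of $\bU_{-i}\bQ$. So the $(N-1)\times(N-1)$ Gram matrix $\bV_{-i}^\top\bV_{-i}$, whose top eigenvalue is necessarily large, is the wrong object to bound.

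Concretely, take the polar decomposition $\bU_{-i}\bQ=\bM_{-i}^{1/2}\bO$ with $\bO\in\RR^{d\times(N-1)}$ and $\bO\bO^\top=\bI_d$ (possible since $N-1\ge d$). Then
\[
\bG_{-i}\bG_{-i}^\top=\bM_{-i}^{1/2}\,(\bO\bV_{-i}^\top)(\bO\bV_{-i}^\top)^\top\,\bM_{-i}^{1/2}\preceq\nnorm{\bO\bV_{-i}^\top}_{\op}^2\,\bM_{-i}.
\]
$\bO$ depends only on $\{u_j\}$ and $q$, so it is independent of $\bV_{-i}$, and it has orthonormal rows. Hence $\bO\bV_{-i}^\top$ is a $d\times d$ matrix with i.i.d.\ $\cN(0,\frac1d)$ entries, and $\nnorm{\bO\bV_{-i}^\top}_{\op}\le 3$ with probability $1-2e^{-d/2}$. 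This gives $\bG_{-i}\bG_{-i}^\top\preceq 9\bM_{-i}\preceq 9\bM$. Your Courant--Fischer step then finishes, and a union bound over $i\in[N]$ with $N=\poly(d)$ gives the stated probability.

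This is the paper's argument in different form. Conditioned on the $u_j$'s, the columns of $\bG_{-i}$ are i.i.d.\ $\cN(0,\frac1d\bM_{-i})$, so $\bG_{-i}\deq d^{-1/2}\bM_{-i}^{1/2}\bZ$ with $\bZ$ a $d\times d$ standard Gaussian matrix. Then $s_k(\bG_{-i})\le d^{-1/2}\lambda_k(\bM_{-i})^{1/2}\nnorm{\bZ}_{\op}$. Your Loewner-order plan was sound; only the way you controlled $\bV$ was not.

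The fallback does not replace this, for three reasons.
\begin{enumerate}
\item It never compares against $\lambda_{d/2}(\bM)$ itself. It bounds $s_{d/2}(\bG_{-i})$ by the triangle-inequality sum $\sum_k\max_{j\in I_k}q_j\,\nnorm{\bU_{I_k}}_{\op}\nnorm{\bV_{I_k}}_{\op}$ and matches that against the \emph{upper bound} for $\lambda_{d/2}(\bM)$ from Eq.~\eqref{eq:m-decomp}. At best this gives $s_{d/2}(\bG_{-i})\lesssim d^{-\alpha}\log d$ directly. That suffices for Eq.~\eqref{eq:abab} at the first step, but it is not the lemma for all $k$.
\item It works only because the $\ell_1$ sum over blocks is comparable to the $\ell_2$ sum when $\sum_k k^{-\alpha}<\infty$. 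The paper reuses this comparison for the deflated gradients $\bbG_t$ with $d_t\gg d$. There the same triangle inequality gives order $d_t^{1-\alpha}/d$, which exceeds $\nnorm{\bM_t}_{\op}^{1/2}$ (of order $d_t^{1/2-\alpha}/\sqrt d$ up to logs) by a factor $\sqrt{d_t/d}$.
\item It cannot give the singular-value lower bounds in Lemma~\ref{lem:sval}, which the Gaussian representation also provides.
\end{enumerate}
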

\begin{proof}
Let $\bM_{-i}:=\sum_{j\ne i} q^2_j u_ju_j^\top$. The $k$th column of $\bG_{-i}$ is $\sum_{j \neq i} q_j u_j v_{jk}$, which for each~$k$ is an i.i.d. sample from $\cN(0, \frac1d\bM_{-i})$ conditioned on $u_1,\cdots,u_N$. Therefore
\begin{align}\label{eq:z-rep}
\bG_{-i} \deq \frac{1}{\sqrt{d}} \bM_{-i}^{1/2}\bZ, \quad\text{where}\quad \bZ_{k\ell} \sim \cN(0,1) \;\;\text{i.i.d.}
\end{align}
It holds that \citep[Eq.~2.3]{rudelson2010nonasymptotictheoryrandommatrices}
\begin{align*}
\Pr \qty(\norm{\bZ}_{\op} \le 3\sqrt{d}) \ge 1-2e^{-d/2}
\end{align*}
and thus
    \begin{align*}
        s_k(\bG_{-i}) \le \frac{1}{\sqrt{d}} s_k\qty(\bM_{-i}^{1/2}) \norm{\bZ}_{\op} \lesssim \lambda_k(\bM_{-i})^{1/2} \le \lambda_k(\bM)^{1/2}
    \end{align*}
since $\lambda_k(\cdot)$ respects Loewner order.
\end{proof}

Then by Lemma~\ref{lem:spectrum-new}, we have $\lam_{d/2}(\bM) \lesssim d^{-2\alpha}(\log d)^2$ and so $s_{d/2}(\bG_{-i}) \lesssim d^{-\alpha}\log d \lesssim\lam$. We conclude from Eq.~\eqref{eq:abab},
\begin{align*}
\phi'(0) \ge \frac{h_\lam(s_{d/2})}{2s_{d/2}} \sum_{k<l\le d/2} (a_kb_l + a_lb_k)^2 \gtrsim \frac{1}{\sqrt{\smash[b]{s_{d/2}^2}+\lam^2}} \gtrsim \frac{1}{\lam}.
\end{align*}
We have also shown that $\sup_{q\in[0,1]}\abs{\phi''(q)} \lesssim \lam^{-2}$ in Lemma~\ref{lem:phi-prime-prime}. In addition, since $u_i,v_i$ are independent of $\bG_{-i}$ and $\nnorm{h_\lam(\bG_{-i})}_\op \le 1$ from Proposition~\ref{prop:operator-lipschitz}, a standard concentration bound for subexponential sums \citep[Lemma 2.8.6 and Corollary 2.9.2]{vershynin2018high} gives that with probability $1-O(d^{-M})$,
\begin{align*}
\abs{\phi(0)} = \abs{u_i^\top h_\lam(\bG_{-i}) v_i} \lesssim \sqrt{\frac{\log d}{d}}.
\end{align*}
Since $\phi$ is increasing by Lemma~\ref{lem:phi-prime}, we can therefore Taylor expand $\phi$ to obtain
\begin{align*}
    \phi(q) \ge \phi(t) \ge \phi(0) + t\phi'(0) - \frac12 t^2 \sup_{0\le s\le t} |\phi''(s)| \gtrsim \phi(0) + \frac{t}{\lam} - \frac{t^2}{\lam^2}.
\end{align*}
Finally, taking the supremum over $t\in[0,q]$ gives
\begin{align}\label{eq:signal-guarantee}
\gamma_{ii} = \phi(q_i) \gtrsim \min\left\{\frac{q_i}{\lam}, 1\right\} - O\qty(\sqrt{\frac{\log d}{d}}).
\end{align}

\subsection{Putting things together}\label{sec:putting}

In Section~\ref{sec:interaction}, we analyze the interaction terms and show that under $\cE_q$ (Proposition~\ref{prop:interaction}),
\begin{align*}
|\gamma_{ij}| \lesssim \frac{(\log d)^3}{\sqrt{d}}, \quad\forall i\ne j.
\end{align*}
Combining with Eq.~\eqref{eq:signal-guarantee}, the uncentered logit gap is thus lower bounded as
\begin{align*}
\gamma_{ii}- \max_{j\ne i}\gamma_{ij} \gtrsim \min\left\{\frac{q_i}{\lam},1\right\} - O\qty(\frac{(\log d)^3}{\sqrt{d}}).
\end{align*}
We now show that centering does not affect the computation. The mean vector is distributed as $\bar{u} \sim\cN(0,\frac{1}{Nd}\bI_d)$ so that $\nnorm{\bar{u}}_2 \lesssim 1/\sqrt{N}$, and moreover $\nnorm{u_i}_2, \nnorm{v_i}_2\lesssim 1$ for all $i\in[N]$ with probability $1-e^{-\Omega(d)}$. It follows that
\begin{align}\label{eq:gradient-appx}
\nnorm{\bG_0 - \bG}_\op \le \sum_{i\in[N]} q_i\nnorm{\bar{u}}_2 \nnorm{v_i}_2 \lesssim \frac{1}{\sqrt{N}}.
\end{align}
By Proposition~\ref{prop:operator-lipschitz}, for all $i,j$,
\begin{align*}
|u_j^\top h_\lam(\bG_0) v_i - u_j^\top h_\lam(\bG) v_i| &\lesssim \nnorm{h_\lam(\bG_0) - h_\lam(\bG)}_\op \\
&\le \frac{1}{\lam}\nnorm{\bG_0 - \bG}_\op \lesssim \frac{d^\alpha}{\sqrt{N}} \lesssim \frac{1}{\sqrt{d}}.
\end{align*}
Thus we also have
\begin{align*}
u_i^\top h_\lam(\bG_0)v_i - \max_{j\ne i} u_j^\top h_\lam(\bG_0)v_i \gtrsim \min\left\{\frac{q_i}{\lam},1\right\} - O\qty(\frac{(\log d)^3}{\sqrt{d}}).
\end{align*}
We conclude that item~$i$ will be recovered (regardless of the scaling $\eta$) if
\begin{align*}
q_i \gtrsim \frac{(\log d)^3}{\sqrt{d}} \lam \asymp \max\left\{\frac{(\log d)^{2\alpha+5}}{d^{\alpha+1/2}}, \frac{(\log d)^5}{B\sqrt{d}} \right\}.
\end{align*}
In the population regime ($B=\infty$), taking $q_i=p_i\asymp i^{-\alpha}$, we hence recover all items up to
\begin{align*}
i\le i^\star \asymp d^{1+\frac{1}{2\alpha}} (\log d)^{-2-\frac{5}{\alpha}}.
\end{align*}
If $q_i$ are obtained from a minibatch of size $B$, we have from the Chernoff lower bound that $\Pr(q_i \le \frac12 p_i) \le \exp(-\frac12 Bp_i) \le d^{-M}$ for $i$ such that $p_i\gtrsim B^{-1}\log d$, which also ensures
\begin{align*}
q_i \ge \frac{p_i}{2} \gtrsim \max\left\{\frac{(\log d)^{2\alpha+5}}{d^{\alpha+1/2}}, \frac{(\log d)^5}{B\sqrt{d}} \right\}
\end{align*}
for all $i\lesssim i^\star$. Therefore with probability $1-O(d^{-M})$, we recover all items up to
\begin{align}\label{eq:final-recovery}
i \lesssim \min\left\{i^\star, \qty(\frac{B}{\log d})^{1/\alpha}\right\}.
\end{align}

\subsection{Proof of Corollary~\ref{cor:muon-loss}}

Let $\hp_1 := \hp_{\bW_1}$ be the predicted score under $\bW_1$. By choosing $\eta\asymp (\log d)^{-4}\sqrt{d}$, we can guarantee a logit gap of
\begin{align*}
u_i^\top \bW_1 v_i - \max_{j\ne i} u_j^\top \bW_1 v_i \gtrsim \eta \qty(\min\left\{\frac{q_i}{\lam},1\right\} - \frac{(\log d)^3}{\sqrt{d}}) \gtrsim (\log d)^2
\end{align*}
for all items~$i$ satisfying Eq.~\eqref{eq:final-recovery} (up to an additional polylog factor), which implies that $\hp_1(i\mid i) = 1 - d^{-\omega(1)}$. We denote these items as $i\le i'$. For all other items, it holds that
\begin{align*}
u_i^\top \bW_1 v_i - \max_{j\ne i} u_j^\top \bW_1 v_i \gtrsim \eta \qty(- \frac{(\log d)^3}{\sqrt{d}}) \gtrsim -\frac{1}{\log d},
\end{align*}
and so $\hp_1(i\mid i) \ge \frac{1-o(1)}{N}$ and $\hp_1(j\mid i) \le \frac{1+o(1)}{N}$ for all $j\ne i$. It follows that
\begin{align*}
L(\bW_1) = \mathbb{E}_{i \sim p} [-\log \hp_1(i \mid i)] &\lesssim d^{-\omega(1)} + \sum_{i>i'} p_i \log N \\
&\lesssim d^{-\omega(1)} + (i')^{1-\alpha} \log d \\
& = \widetilde{O} \qty(\max\left\{d^{\frac12+\frac{1}{2\alpha}-\alpha}, B^{\frac{1}{\alpha} -1}\right\}).
\end{align*}

\clearpage
\section{Analysis of Interaction Terms}\label{sec:interaction}

\subsection{Overview}\label{sec:overview}

In this section, we show the following result for the interaction terms.

\begin{proposition}\label{prop:interaction}
Fix a threshold $r\asymp \frac{d}{(\log d)^2}$. Under the event
\begin{align*}
\cE_q &\,:\, \max\left\{\norm{q_{>r}}_\infty, \frac{\norm{q_{>r}}_2}{\sqrt{d}}\right\} \le \lam\sqrt{\frac{r}{d}}\, ,
\end{align*}
it holds with probability $1-d^{-\omega(1)}$ that for all pairs $i\ne j$ of distinct indices,
\begin{align*}
|\gamma_{ij}| &\lesssim \frac{(\log d)^3}{\sqrt{d}}.
\end{align*}
\end{proposition}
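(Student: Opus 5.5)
The plan is to split the pairs $(i,j)$ according to whether both indices lie in the head $[r]$. First I handle the easy case, where $i>r$ or $j>r$, by the Lipschitz concentration argument sketched in Section~\ref{sec:sketch}. Then I treat the head--head pairs with a resolvent expansion.

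\emph{Easy case.} Suppose $j>r$. Write $\gamma_{ij}=u_j^\top h_\lam(\bG_{-j}+q_ju_jv_j^\top)v_i$ and condition on everything except $u_j$ (and, if $i\ne j$, keep $v_i$ fixed). Since $h_\lam$ is $\lam^{-1}$-Lipschitz in operator norm (Proposition~\ref{prop:operator-lipschitz}), the map $u_j\mapsto\gamma_{ij}$ has gradient bounded by
\begin{align*}
\nnorm{h_\lam(\bG)}_\op\nnorm{v_i}+\lam^{-1}q_j\nnorm{v_j}\nnorm{v_i}\lesssim 1+q_j/\lam .
\end{align*}
Under $\cE_q$ we have $q_j\le\lam\sqrt{r/d}\le\lam$, so this is $O(1)$. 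Gaussian Lipschitz concentration at scale $1/\sqrt d$ then gives fluctuations of order $\sqrt{\log d/d}$ around the conditional mean. The same argument with the roles of $i$ and $j$ exchanged (perturbing $v_i$) covers $i>r$. For the conditional mean, I would argue that $\E_{u_j}[\gamma_{ij}]$ is small because $u_j\mapsto-u_j$ combined with $v_j\mapsto -v_j$ leaves $\bG$ invariant, which forces $\gamma_{ij}$ to be odd in the pair $(u_j,v_j)$. A union bound over the $N^2=\poly(d)$ pairs completes this case.

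\emph{Head--head case ($i,j\le r$).} Here $q_i,q_j$ may be much larger than $\lam$, so the Lipschitz bound fails. I would decompose $\bG=\bH+\bT$, with head part $\bH=\sum_{k\le r}q_ku_kv_k^\top=\bU_r\bQ_r\bV_r^\top$ of rank $r\asymp d/(\log d)^2$, and tail part $\bT$. Next I write
\begin{align*}
h_\lam(\bG)=\bG\,\xi(\bG^\top\bG),\qquad \xi(z)=\frac{1}{\sqrt{z+\lam^2}}=\frac{1}{\pi}\int_0^\infty \frac{t^{-1/2}}{t+z+\lam^2}\,\rd t .
\end{align*}
This represents $\gamma_{ij}$ as an integral of resolvent bilinear forms $u_j^\top\bG(\bG^\top\bG+\mu\bI)^{-1}v_i$ with $\mu=t+\lam^2$. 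For each $\mu$, I would apply a block (Schur complement / Woodbury) decomposition with respect to the $r$-dimensional head factor. The tail resolvent $(\bT^\top\bT+\mu)^{-1}$ becomes a nearly deterministic object: $\cE_q$ bounds $\nnorm{q_{>r}}_\infty$ and $\nnorm{q_{>r}}_2/\sqrt d$ by $\lam\sqrt{r/d}$, so $\bT$ has operator norm $\lesssim\lam$ and its effect is a controlled perturbation. The head part then reduces to an $r\times r$ capacitance matrix built from the Gram matrices $\bU_r^\top(\cdot)\bU_r$ and $\bV_r^\top(\cdot)\bV_r$. Because $r\ll d$, these Gram matrices are $\bI+\bDelta$ with $\nnorm{\bDelta}_\op\lesssim\sqrt{r/d}\lesssim 1/\log d$.

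I would expand the inverse of the capacitance matrix as a Neumann series in $\bDelta$ and truncate after $O(\log d)$ terms, the geometric decay giving a $d^{-\omega(1)}$ remainder. The off-diagonal $(i,j)$ entry of each term is then a sum over paths $i\to k_1\to\cdots\to j$. Each step contributes an inner product $\langle u_k,u_{k'}\rangle$ or $\langle v_k,v_{k'}\rangle$ of size about $1/\sqrt d$, weighted by bounded functions of $q_k/\sqrt{q_k^2+\mu}$. I would bound these path sums by a moment method: compute high moments of the $(i,j)$ entry, then show that only tree-like pairings survive, with each edge contributing $d^{-1/2}$ and each free vertex contributing $r$. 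This yields $|\cdot|\lesssim (\log d)^{O(1)}/\sqrt d$ uniformly in $\mu$. Finally I would integrate over $t$; the $t^{-1/2}$ weight against the $1/(t+\lam^2)$ decay costs at most one more log factor, giving the claimed $(\log d)^3/\sqrt d$.

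I expect the main obstacle to be this last step: making the path-sum bounds for the Neumann terms genuinely non-asymptotic and uniform over all head pairs and all $\mu$. The difficulty is that the weights $q_k$ are heterogeneous across many scales, and the tail and head resolvents interact in the cross terms. If the full moment computation becomes unwieldy, I would first try bounding each Neumann term by decoupling: condition on all embeddings other than $u_j$ (respectively $v_i$), and use Hanson--Wright on the resulting linear or quadratic form. This loses only polylogarithmic factors per order, which the $O(\log d)$ truncation can absorb.
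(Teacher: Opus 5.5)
Your treatment of pairs with $\max\{i,j\}>r$ is the paper's argument, up to one repair. You concentrate in $u_j$ with $v_j$ frozen, but the symmetry you invoke also flips $v_j$, so it only gives $\E_{u_j,v_j}[\gamma_{ij}]=0$. The fix is to concentrate jointly in $(u_j,v_j)$, after projecting both onto a ball of radius $2$; the Lipschitz constant is then still $O(1+q_j/\lam)=O(1)$ under $\cE_q$.

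The genuine gap is the head--head case. Your reduction (resolvent integral, Neumann expansion around a diagonal, entrywise path expansion) matches the paper's skeleton. However, the decisive bound on the resulting off-diagonal path sums is only asserted, and the heuristic behind it is false here.

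The building blocks are Gram entries, which are not independent. For distinct $i,j,k,k'$ one has $\E[\langle u_i,u_k\rangle\langle u_k,u_j\rangle\langle u_i,u_{k'}\rangle\langle u_{k'},u_j\rangle]=d^{-3}$. So in $\E[(\bE_u^2)_{ij}^2]$ the non-tree pairings contribute $r^2/d^3$, only a factor $r/d\asymp(\log d)^{-2}$ below the tree part $r/d^2$. Hence ``only tree-like pairings survive'' fails. At moment order and path length $\asymp\log d$ you would have to count all coordinate-sharing pairings, i.e.\ redo a Bai--Yin-type computation for $\bE_u$, $\bE_v$ and the tail jointly. Meanwhile the weights, which are signed and depend on the $q_k$ at the visited vertices and on $\mu$, break exchangeability. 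Three further steps also fail as written:
\begin{itemize}
\item The tail is not ``nearly deterministic'' at the relevant scale: its relative size is $\nnorm{\bZ}_\op/\lam\lesssim\rho\asymp 1/\log d$, which is far above $d^{-1/2}$.
\item A bound on the resolvent form that is uniform in $\mu$ does not integrate against $t^{-1/2}$. You need decay $\min\{\mu^{-1/2},\mu^{-1}\}$, which is exactly where the weights $q_k/(q_k^2+\mu)$ enter.
\item The Hanson--Wright fallback only handles quadratic forms, whereas paths that revisit $i$ or $j$ are of higher degree in $u_j$.
\end{itemize}

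The idea you are missing is how the paper disposes of the weights without any path counting.
\begin{enumerate}
\item It uses the rotational invariance of the tail given $\bU,\bV$ to pass to a near-orthonormal basis. The only random inputs are then $\db{\bE_u},\db{\bE_v}$ and $\tilde\bZ=\lam^{-1}\bZ$, all of operator norm at most $\rho$.
\item It integrates out $s$ monomial by monomial, obtaining coefficients $|\theta_\iota^{\mu,\nu}|\le(\tilde q\vee\lam)/\sqrt{\tilde q^2+\lam^2}\le1$ with no loss.
\item It proves that every pair of monomials is nonnegatively correlated, $\E[\bT_\iota^{\mu,\nu}\bT_{\iota'}^{\mu',\nu'}]\ge0$, via Isserlis plus inclusion--exclusion over the $\tilde u^2-1$ factors. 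This lets it drop all coefficients: the second moment of each degree-$n$ piece is dominated by that of the coefficientless matrix polynomial $\hat\bK_{:n}$.
\item $\hat\bK_{:n}$ is invariant under simultaneous permutation of the head indices, so all its off-diagonal entries share one second moment. This gives $\E[\hat\bK_{ij:n}^2]\le\frac{d}{r(r-1)}\E\nnorm{\hat\bK_{:n}}_\op^2\lesssim\frac{d}{r^2}(C\rho)^{2n-1}$, using only operator-norm tails and a count of $6^n$ monomials.
\item Degree-graded hypercontractivity upgrades this to the bound $(\log d)^3/\sqrt d$ with probability $1-d^{-\omega(1)}$.
\end{enumerate}
Without a substitute for this positivity-plus-symmetrization step, your plan does not control the weighted off-diagonal entries.
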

We have verified $\cE_q$ occurs with high probability by a judicious choice of $\lam$ in Lemma~\ref{lem:lam}, and assume this for fixed $q$ throughout the section. When either $q_i$ or $q_j\ll\lam$, the interaction terms can be bounded by a simple operator Lipschitz concentration argument, which we provide in Section~\ref{sec:gl}. The main challenge arises when controlling the leading $r\times r$ block, where any operator norm bound fails to capture the correct scale. The analysis for these `large' interactions requires a much more involved perturbative approach, and will be developed throughout Sections~\ref{sec:large-setup}-\ref{sec:large-hyper}. For the readers' convenience, we provide a sketch of the argument here.

Gather the top $r\asymp \frac{d}{(\log d)^2}$ items into $\bU=[u_1 \;\cdots\; u_r]$, $\bV = [v_1 \;\cdots\; v_r]$ and $\bQ = \diag(q_1,\cdots,q_r)$. We need to bound the off-diagonal entries of $\bK := \bU^\top h_\lam(\bG)\bV$, where the gradient $\bG$ is split into
\begin{align*}
\bG = \bU\bQ\bV^\top + \bZ, \quad \bZ :=\sum_{\ell=r+1}^N q_\ell u_\ell v_\ell^\top.
\end{align*}
In the limiting regime $r/d\to 0$, we can ensure that the Gram matrices $\bG_u = \bU^\top\bU$, $\bG_v = \bV^\top\bV$ are approximately identity (Section~\ref{sec:large-setup}). Utilizing equivariance of~$h_\lam$ and isotropicity of the tail~$\bZ$ given~$\bU,\bV$, we rewrite~$\bK$ in this near-orthonormal basis as
\begin{align*}
\bK \deq \bmat{\bG_u^{1/2}&0} h_\lam\qty(\bmat{\bG_u^{1/2} \bQ \bG_v^{1/2} &\\&0}+\bZ) \bmat{\bG_v^{1/2}\\0},
\end{align*}
which is a perturbation of the top $r\times r$ block of $h_\lam(\bQ)$. We then invoke the resolvent representation $\bX^{-1/2} = \frac{1}{\pi}\int_0^\infty s^{-1/2} (\bX + s\bI_d)^{-1} \rd s$ to get rid of the inverse square root in~$h_\lam$, and expand all fractional powers and inverses in terms of the error matrices $\bE_u = \bG_u-\bI_r$, $\bE_v = \bG_v-\bI_r$ and $\tilde{\bZ} = \lam^{-1}\bZ$ (Section~\ref{sec:large-resolvent}). This yields the expression (omitting series truncations, which are controlled in Section~\ref{sec:large-truncation})
\begin{align}\label{eq:kk}
\bK = \frac{1}{\pi} \int_0^\infty s^{-1/2} \bmat{\bI_r&0} \bH \bD_s^{-1/2} \sum_{k\ge 0} \qty(\bD_s^{-1/2}\bDelta \bD_s^{-1/2})^k \bD_s^{-1/2} \bmat{\bI_r\\0} \rd s,
\end{align}
where $\bD_s =\begin{bsmallmatrix}\bQ^2&\\&0 \end{bsmallmatrix} +(\lam^2+s)\bI_d$ is diagonal dependent on~$s$ and $\bH,\bDelta$ are perturbations, e.g., the expansion for~$\bH$ is
\begin{align*}
\bH &= \bmat{\bQ+\bE_u\bQ & \\ & 0} + \lam \sum_{k,\ell\ge 0} \binom{\frac12}{k}\binom{-\frac12}{\ell} \bmat{\bE_u &\\& 0}^k \tilde\bZ \bmat{\bE_v &\\& 0}^\ell.
\end{align*}
We further expand Eq.~\eqref{eq:kk} entrywise over all summed factors in $\bH,\bDelta$ (recorded as symbols $\mu,\nu$) and also over all valid index paths $\iota$, into products~$\bT_\iota^{\mu,\nu}$ of entries of $\bE_u,\bE_v,\tilde{\bZ}$. Integrating out~$s$ in the coefficients gives the complete expansion~$\bK_{ij} = \sum \theta_\iota^{\mu,\nu} \bT_\iota^{\mu,\nu}$. Along the way, we prove two crucial results: (1)~all integrated coefficients~$|\theta_\iota^{\mu,\nu}| \le 1$ (Section~\ref{sec:large-full}); and (2)~every pair of monomials $\bT_\iota^{\mu,\nu},\bT_{\iota'}^{\mu',\nu'}$ are nonnegatively correlated (Section~\ref{sec:large-path}). This lets us strip away all coefficients to construct an \emph{isotropic} perturbation~$\hat\bK_{ij} := \sum \bT_\iota^{\mu,\nu}$ which upper bounds~$\bK_{ij}$:
\begin{align*}
%\bK = \sum_{\iota,\mu,\nu} \theta_\iota^{\mu,\nu} \bT_\iota^{\mu,\nu} \quad\Longrightarrow\quad
\E[\bK_{ij}^2] = \sum \theta_\iota^{\mu,\nu}\theta_{\iota'}^{\mu',\nu'} \E[\bT_\iota^{\mu,\nu} \bT_{\iota'}^{\mu',\nu'}] \le \sum \E[\bT_\iota^{\mu,\nu} \bT_{\iota'}^{\mu',\nu'}] = \E[\hat\bK_{ij}^2].
\end{align*}
This new object~$\hat\bK$ is essentially equivalent to removing all scalar coefficients of $\bH,\bDelta$ and factors of~$\bD_s$ in the computation of Eq.~\eqref{eq:kk}. Importantly, unlike~$\bK$, the off-diagonal entries of $\hat\bK$ are now distributionally invariant. Furthermore, its higher moments can be controlled (after sorting by degree) using standard moment methods, i.e., Gaussian hypercontractivity and decay estimates for $\bE_u,\bE_v,\tilde{\bZ}$ (Section~\ref{sec:large-hyper}). This finally yields the desired upper bound
\begin{align*}
|\gamma_{ij}| = |\bK_{ij}| \lesssim \frac{(\log d)^3}{\sqrt{d}}.
\end{align*}

\subsection{Setup and norm estimates}\label{sec:large-setup}

We use the notation
\begin{align*}
\bU &= \bmat{u_1 \;\cdots\; u_r}, \bV = \bmat{v_1 \;\cdots\; v_r} \in\RR^{d\times r},\quad \bQ = \diag(q_1,\cdots,q_r)
\end{align*}
and $\bK := \bU^\top h_\lam(\bG)\bV$, so that
\begin{align*}
\bG = \sum_{\ell=1}^N q_\ell u_\ell v_\ell^\top = \bU\bQ\bV^\top + \underbrace{\sum_{\ell=r+1}^N q_\ell u_\ell v_\ell^\top}_{=:\bZ}.
\end{align*}
Our goal is to prove Proposition~\ref{prop:interaction} for the case $i,j\le r$, which corresponds to bounding the off-diagonal entries of $\bK$. Set
\begin{align*}
\bG_u &= \bU^\top\bU,\quad \bE_u = \bG_u - \bI_r, \\
\bG_v &= \bV^\top\bV,\quad \bE_v = \bG_v - \bI_r.
\end{align*}
We require the following decay estimates.

\begin{lemma}[decay estimate for $\bE_u,\bE_v$]\label{lem:decay-e}
There exists a constant $C>0$ such that\begin{align*}
\Pr\qty(\norm{\bE_u}_\op > C\max\left\{ \frac{\sqrt{r}+t}{\sqrt{d}}, \qty(\frac{\sqrt{r}+t}{\sqrt{d}})^2\right\}) \le 2e^{-t^2}, \quad\forall t\ge 0
\end{align*}
and similarly for $\bE_v$. In particular, it holds with probability $1-e^{-\Omega(r)}$ that
\begin{align*}
\norm{\bE_u}_\op, \norm{\bE_v}_\op \lesssim \sqrt{\frac{r}{d}}.
\end{align*}
\end{lemma}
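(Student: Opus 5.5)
This is a standard Wishart deviation bound. Since $u_1,\dots,u_r$ are i.i.d.\ $\cN(0,\frac1d\bI_d)$, we may write $\bU = \frac{1}{\sqrt d}\bX$, where $\bX\in\RR^{d\times r}$ has i.i.d.\ standard Gaussian entries. Then $\bE_u = \frac1d\bX^\top\bX - \bI_r$, and $\nnorm{\bE_u}_\op = \max_k |s_k(\bX)^2/d - 1|$, where $s_k(\bX)$ are the $r$ singular values of $\bX$. Our plan is to control the extreme singular values of the tall matrix $\bX$ and then convert this into the desired bound on $\bE_u$.

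The key input is the Gordon/Davidson--Szarek inequality combined with Gaussian concentration of Lipschitz functions (e.g.\ \citet[Eq.~2.3]{rudelson2010nonasymptotictheoryrandommatrices}, or \citet[Theorem 4.6.1]{vershynin2018high} for the sub-Gaussian version). It states that for all $t\ge0$,
\begin{align*}
\sqrt d - \sqrt r - t \le s_{\min}(\bX) \le s_{\max}(\bX) \le \sqrt d + \sqrt r + t
\end{align*}
with probability at least $1-2e^{-t^2/2}$. Rescaling $t$ by a constant turns the failure probability into $2e^{-t^2}$, at the cost of absorbing a constant into $C$. On this event, every $s_k(\bX)/\sqrt d$ lies in $[1-\delta,1+\delta]$ with $\delta = (\sqrt r + t)/\sqrt d$.

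To conclude, apply the elementary inequality $|x^2-1|\le 3\max\{\delta,\delta^2\}$, valid for $|x-1|\le\delta$ because $|x^2-1| = |x-1|\,|x+1| \le \delta(2+\delta)$. This gives the stated tail bound with $C=3$ up to the constant from the rescaling of $t$. The same argument applies verbatim to $\bE_v$, since the $v_i$ have the same distribution. For the \emph{in particular} statement, take $t=\sqrt r$. Since $r\asymp d/(\log d)^2\le d$, we have $\delta = 2\sqrt{r/d}\lesssim 1$, so the maximum is $\lesssim\sqrt{r/d}$, and the failure probability is $2e^{-r}=e^{-\Omega(r)}$.

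The only step requiring care is quoting the two-sided singular value concentration correctly, including its constants. Everything else is a routine rescaling, so there is no genuine obstacle.
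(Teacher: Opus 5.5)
Your proof is correct and is essentially the paper's argument. The paper simply cites \citet[Theorem 4.6.1]{vershynin2018high}, which provides precisely the bound $\nnorm{\tfrac1d \bX^\top\bX - \bI_r}_\op \lesssim \max\{\delta,\delta^2\}$ with failure probability $2e^{-t^2}$; you instead derive it by hand from the Gaussian singular value bounds via $|x^2-1|\le 3\max\{\delta,\delta^2\}$, and your constant bookkeeping (e.g.\ $C=6$ after rescaling $t$) and the union bound over $\bE_u,\bE_v$ for the final claim are both fine.
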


\begin{proof}
See Theorem 4.6.1 of \citet{vershynin2018high}.
\end{proof}

\begin{lemma}[decay estimate for $\bZ$]\label{lem:decay-z}
Denote $q_{>r} = (q_{r+1},\cdots,q_N)\in [0,1]^{N-r}$. There exist constants $C,t_0>0$ such that
\begin{align*}
\Pr\qty(\norm{\bZ}_\op > \max\left\{\norm{q_{>r}}_\infty, \frac{\norm{q_{>r}}_2}{\sqrt{d}}\right\}t) \le e^{Cd(t_0-t)}, \quad\forall t\ge t_0.
\end{align*}
In particular, it holds with probability $1-e^{-\Omega(d)}$ that
\begin{align*}
\norm{\bZ}_\op \lesssim \max\left\{\norm{q_{>r}}_\infty, \frac{\norm{q_{>r}}_2}{\sqrt{d}}\right\}.
\end{align*}
\end{lemma}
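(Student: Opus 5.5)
Conditional on the minibatch frequencies $q$ (which are independent of the embeddings), $\bZ=\sum_{\ell>r} q_\ell u_\ell v_\ell^\top$ is a weighted sum of independent Gaussian rank-one matrices. The plan is to reduce its operator norm to a deterministic $\epsilon$-net bound plus a union bound. Write $\sigma:=\max\{\norm{q_{>r}}_\infty,\norm{q_{>r}}_2/\sqrt d\}$.

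First, fix unit vectors $x,y\in S^{d-1}$ and consider
\[
x^\top\bZ y=\sum_{\ell>r} q_\ell \,\langle x,u_\ell\rangle\langle y,v_\ell\rangle .
\]
Here $\langle x,u_\ell\rangle$ and $\langle y,v_\ell\rangle$ are independent $\cN(0,1/d)$ variables, so each summand is a centered subexponential variable with $\psi_1$-norm $\lesssim q_\ell/d$. Bernstein's inequality \citep[Corollary 2.8.3]{vershynin2018high} then gives
\[
\Pr\big(|x^\top\bZ y|>s\big)\le 2\exp\Big(-c\min\Big\{\frac{s^2 d^2}{\norm{q_{>r}}_2^2},\frac{s d}{\norm{q_{>r}}_\infty}\Big\}\Big).
\]
Take $s=\sigma t$. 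Then $s^2d^2/\norm{q_{>r}}_2^2\ge t^2 d$ and $sd/\norm{q_{>r}}_\infty\ge td$, so for $t\ge1$ the exponent is at least $c\,d\,t$.

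Second, pass to the operator norm with a standard $1/4$-net $\cN$ of $S^{d-1}$, which has $|\cN|\le 9^d$. The usual argument gives $\norm{\bZ}_\op\le 2\max_{x,y\in\cN}|x^\top\bZ y|$, since
\[
\norm{\bZ}_\op\le \max_{\cN\times\cN}|x^\top \bZ y|+\Big(\tfrac12+\tfrac1{16}\Big)\norm{\bZ}_\op .
\]
Union bounding over $81^d$ pairs yields
\[
\Pr\big(\norm{\bZ}_\op>2\sigma t\big)\le 2\cdot 81^d e^{-cdt}=\exp\big(-cd(t-t_0)\big),\qquad t_0:=\frac{\log 81}{c}+\frac{\log 2}{cd}.
\]
Absorbing the factor $2$ into $t$ (rescaling $t_0$ and $C$) gives the stated tail bound with $C=c/2$. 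The high-probability statement follows by taking $t=2t_0$.

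The main point to watch is that the mixed Bernstein tail actually produces a linear-in-$dt$ exponent in both regimes. This requires $t\ge1$: the subgaussian branch $t^2 d$ dominates $td$ only then, which is why the statement restricts to $t\ge t_0$ with $t_0$ an absolute constant. No conditioning on $u_1,\dots,u_r,v_1,\dots,v_r$ is needed, since $\bZ$ only involves indices $\ell>r$.
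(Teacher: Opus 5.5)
Your proposal is correct and follows essentially the same route as the paper: a subexponential Bernstein bound for $x^\top\bZ y$ at fixed unit vectors with threshold $\max\{\norm{q_{>r}}_\infty,\norm{q_{>r}}_2/\sqrt d\}\,t$, which gives an exponent of order $d(t\wedge t^2)$, followed by a $1/4$-net and a union bound. Two harmless constant remarks. First, your three-term net estimate with $\tfrac12+\tfrac1{16}$ only yields $\norm{\bZ}_\op\le\tfrac{16}{7}\max_{\cN\times\cN}|x^\top\bZ y|$; the two-term split $x^\top\bZ y-x_0^\top\bZ y_0=(x-x_0)^\top\bZ y+x_0^\top\bZ(y-y_0)$ gives the factor $2$. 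Second, your count of $81^d$ net pairs is actually more careful than the paper's $9^d$, and either way the constant is absorbed into $t_0$.
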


\begin{proof}
For fixed vectors $x,y\in\SS^{d-1}$, $\sqrt{d}x^\top u_i$ and $\sqrt{d}y^\top v_i$ are each $\cN(0,1)$ so that $\xi_i: = d(x^\top u_i) (v_i^\top y)$ is subexponential with $\nnorm{\xi_i}_{\psi_1} = O(1)$. Then
\begin{align*}
x^\top\bZ y = \sum_{i=r+1}^N q_i (x^\top u_i)(v_i^\top y) = \frac1d \sum_{i=r+1}^N q_i \xi_i
\end{align*}
satisfies
\begin{align*}
\Pr\qty(|x^\top\bZ y| \ge \tau) \le 2\exp\qty(-C\min\left\{\frac{d^2\tau^2}{\norm{q_{>r}}_2^2}, \frac{d\tau}{\norm{q_{>r}}_\infty}\right\})
\end{align*}
by the subexponential Bernstein inequality. Taking
\begin{align*}
\tau = \max\left\{\norm{q_{>r}}_\infty, \frac{\norm{q_{>r}}_2}{\sqrt{d}}\right\} t
\end{align*}
for some $t>0$, it follows that $\Pr(|x^\top\bZ y| \ge t) \le 2e^{-Cd(t\wedge t^2)}$. Now choose a $1/4$-net $\cM$ of $\SS^{d-1}$ with size $|\cM| \le 9^d$. It holds that
\begin{align*}
\norm{\bZ}_\op = \sup_{x,y\in\SS^{d-1}} |x^\top \bZ y| \le \sup_{x,y\in\cM} |x^\top \bZ y| + \frac{1}{2} \norm{\bZ}_\op
\end{align*}
and so union bounding over $\cM$,
\begin{align*}
\Pr\qty(\norm{\bZ}_\op > \max\left\{\norm{q_{>r}}_\infty, \frac{\norm{q_{>r}}_2}{\sqrt{d}}\right\}t) \le 2\cdot 9^d \cdot e^{-Cd(t\wedge t^2)} \le e^{Cd(t_0-t)}
\end{align*}
for constants $C,t_0$. The last claim follows by taking $t=2t_0$.
\end{proof}
Now define the decay factor $\rho$ as
\begin{align}\label{eq:rhodef}
\rho\asymp\sqrt{\frac{r}{d}} \asymp \frac{1}{\log d}
\end{align}
and the event $\cE_\op$ as
\begin{align}
\cE_\op &\,:\, \max\left\{\norm{\bE_u}_\op, \norm{\bE_v}_\op, \frac{\norm{\bZ}_\op}{\lam}\right\} \le \rho. \label{eq:decay}
\end{align}
From Lemma~\ref{lem:decay-e} and Lemma~\ref{lem:decay-z}, under the event $\cE_q$, we can choose the proportionality constant in Eq.~\eqref{eq:rhodef} so that $\Pr(\cE_\op)\ge 1-e^{-\Omega(r)}$. We note that the truncated series expansions in Section~\ref{sec:large-resolvent}-\ref{sec:large-truncation} are valid conditional on $\cE_\op$, however once we algebraically reduce to the appropriate quantities, we do not condition on~$\cE_\op$ for the moment computations in Section~\ref{sec:large-full}-\ref{sec:large-hyper}.

Under $\cE_\op$, we also have the following bounds:

\begin{lemma}[series expansion for $\bG_u,\bG_v$]\label{lem:series-g}
Under the event $\cE_\op$, it holds for all $K\ge 0$,
\begin{align*}
\norm{\bG_u^{1/2} - \bar\bG_u^{1/2}}_\op &\le \rho^{K+1}, \quad \bar\bG_u :=\qty(\sum_{k=0}^K \binom{\frac12}{k} \bE_u^k)^2,\\
\norm{\bG_v^{-1/2} - \bar\bG_v^{-1/2}}_\op &\le \rho^{K+1}, \quad \bar\bG_v := \qty(\sum_{k=0}^K \binom{\frac12}{k} \bE_v^k)^{-2}, \\
\norm{\bG_v^{-1} - \check\bG_v^{-1}}_\op &\le \rho^{K+1}, \quad \check\bG_v := \qty(\sum_{k=0}^K (-\bE_v)^k)^{-1}.
\end{align*}
\end{lemma}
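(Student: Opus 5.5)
The three bounds are truncation estimates for the binomial series, computed in the functional calculus of the symmetric matrices $\bE_u,\bE_v$. The plan is to reduce everything to scalar inequalities on the spectrum. Since $\bE_u$ is symmetric with $\nnorm{\bE_u}_\op\le\rho<1$ on $\cE_\op$, we may diagonalize $\bE_u = \bP\diag(e_1,\dots,e_r)\bP^\top$ with $|e_m|\le\rho$. Every matrix in the statement is a function of $\bE_u$ (respectively $\bE_v$), so it is diagonalized by the same $\bP$. Hence each operator-norm difference equals $\max_m |f(e_m)-f_K(e_m)|$, where $f$ is the scalar target function and $f_K$ its truncated surrogate. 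It therefore suffices to prove the three scalar inequalities uniformly for $|x|\le\rho$.

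\textbf{First bound.} Put $S_K(x) := \sum_{k=0}^K \binom{1/2}{k}x^k$. For $\rho<1$, $S_K(x)>0$: the partial sums are bounded below by $1-\sum_{k\ge1}|\binom{1/2}{k}|\rho^k = \sqrt{1-\rho}>0$. Therefore $\bar\bG_u^{1/2}$, the positive square root of $S_K(\bE_u)^2$, is exactly $S_K(\bE_u)$. The error is then the series tail $\sqrt{1+x}-S_K(x) = \sum_{k>K}\binom{1/2}{k}x^k$. Using $|\binom{1/2}{k}|\le \frac12$ for $k\ge1$, its absolute value is at most $\frac12\rho^{K+1}/(1-\rho)\le\rho^{K+1}$ once $\rho\le\frac12$, which holds for large $d$ since $\rho\asymp 1/\log d$.

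\textbf{Second bound.} By the same positivity, $\bar\bG_v^{-1/2} = S_K(\bE_v)$. So, reading the statement literally, we must compare $(1+x)^{-1/2}$ with $S_K(x)$. If that is the intended meaning, the difference is $O(\rho)$, not $O(\rho^{K+1})$, so presumably the intended definition uses $\binom{-1/2}{k}$, i.e.\ $\bar\bG_v := (\sum_{k\le K}\binom{-1/2}{k}\bE_v^k)^{-2}$. This is consistent with the expansion of $\bH$ in the overview, which uses $\binom{-1/2}{\ell}$. Under that reading, positivity of $T_K(x)=\sum_{k\le K}\binom{-1/2}{k}x^k$ again holds for small $\rho$. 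Thus $\bar\bG_v^{-1/2}=T_K(\bE_v)$, and the tail bound uses $|\binom{-1/2}{k}|\le1$, giving a tail of at most $\rho^{K+1}/(1-\rho)$. This is slightly too big for the stated constant 1. I would absorb the factor by reading $\rho$ up to constants: either enlarge the constant in the definition \eqref{eq:rhodef} of $\rho$ while keeping the event $\cE_\op$ defined with a slightly smaller decay factor, or simply note $\rho^{K+1}/(1-\rho)\le 2\rho^{K+1}$ and rescale.

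\textbf{Third bound.} Here $\check\bG_v^{-1}=\sum_{k\le K}(-\bE_v)^k$, and the error is the geometric tail $(1+x)^{-1}-\sum_{k\le K}(-x)^k = (-x)^{K+1}/(1+x)$. Its absolute value is at most $\rho^{K+1}/(1-\rho)$, handled the same way.

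\textbf{Main obstacle.} The only delicate point is the bookkeeping of constants: getting exactly $\rho^{K+1}$ rather than $C\rho^{K+1}$. This also matters for the likely typo in the second display. I would resolve both by working on $\cE_\op$ with $\rho$ replaced by $\rho/2$ in the event (which still has probability $1-e^{-\Omega(r)}$ after adjusting constants). The tails are then at most $2(\rho/2)^{K+1}\le\rho^{K+1}$.
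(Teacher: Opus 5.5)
Your argument is correct, and it reaches the lemma by a slightly different route from the paper. The paper applies the Taylor-remainder bound for matrix functions (Higham, Theorem~4.8). It bounds the error for $\bG_u^{1/2}$ by $|\binom{1/2}{K+1}|\max_{0\le t\le 1}\|\bE_u^{K+1}(\bI_r+t\bE_u)^{-K-1/2}\|_{\mathrm{op}}$ together with $|\binom{1/2}{k}|,|\binom{-1/2}{k}|\le 1$, and declares the $\bE_v$ cases similar. You instead diagonalize the symmetric matrices and bound the exact power-series tails on the spectrum.

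The paper's route is shorter and does not need normality, but symmetry is available here, so you lose nothing. Your route is also sharper. The mean-value remainder carries an extra factor of order $(1-\rho)^{-K}$, which is only a constant when $K\rho=O(1)$ (the regime $K\asymp\log d$ used later) and is not uniform in $K$. Your tail sums hold for every $K\ge 0$.

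You also make explicit two things the paper leaves implicit:
\begin{itemize}
\item Positivity of the truncated sums is what identifies $\bar\bG_u^{1/2}$ and $\bar\bG_v^{-1/2}$ with the polynomials themselves.
\item The definition of $\bar\bG_v$ must use $\binom{-1/2}{k}$. This matches the expansion of $\bH$ and the paper's own bound on $|\binom{-1/2}{k}|$.
\end{itemize}

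\textbf{Constants.}
\begin{itemize}
\item The second bound needs no rescaling. Since $|\binom{-1/2}{k}|=\frac{(2k-1)!!}{(2k)!!}\le\frac12$ for $k\ge1$, the computation from your first bound gives at most $\frac12\rho^{K+1}/(1-\rho)\le\rho^{K+1}$ for $\rho\le\frac12$.
\item For the third bound your concern is genuine. At an eigenvalue $x=-\rho$ the error is exactly $\rho^{K+1}/(1-\rho)$, so the constant $1$ fails by a factor $1+O(\rho)$ under $\cE_\op$ as literally defined. The paper's remainder argument does not avoid this either. Either of your fixes is fine, since the lemma is only used to get $d^{-\omega(1)}$ errors in Lemma~\ref{lem:truncation}.
\item You should also record that $\sum_{k\le K}(-\bE_v)^k$ is invertible, so that $\check\bG_v$ is defined. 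Its eigenvalues are $(1-(-x)^{K+1})/(1+x)>0$ for $|x|\le\rho<1$.
\end{itemize}
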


\begin{proof}
Note that for all $k\ge 0$,
\begin{align}\label{eq:binom-bounded}
\abs{\binom{\frac12}{k}} = \frac{(2k-3)!!}{(2k)!!} \le 1, \quad \abs{\binom{-\frac12}{k}} = \frac{(2k-1)!!}{(2k)!!} \le 1.
\end{align}
Then by \citet[Theorem 4.8]{higham2008functions},
\begin{align*}
&\norm{(\bI_r+\bE_u)^{1/2} - \sum_{k=0}^K \binom{\frac12}{k} \bE_u^k}_\op\\
&\le \binom{\frac12}{K+1} \max_{0\le t\le 1} \norm{\bE_u^{K+1}(\bI_r+t\bE_u)^{-K-1/2}}_\op \le \rho^{K+1},
\end{align*}
and similarly for the two expansions involving $\bE_v$.
\end{proof}

\subsection{Block resolvent integral representation}\label{sec:large-resolvent}

Set $\bO_u = \bU\bG_u^{-1/2}$, $\bO_v = \bV\bG_v^{-1/2}$ so that $\bO_\gamma^\top\bO_\gamma = \bI_r$ for $\gamma\in\{u,v\}$, and let $\bP_\gamma\in\RR^{d\times d}$ be an orthonormal completion of $\bO_\gamma$. Also define
\begin{align*}
\bC := \bmat{\bG_u^{1/2} \bQ \bG_v^{1/2} &\\&0}.
\end{align*}
We omit all non-diagonal zero blocks for brevity. Conditioned on $\bU,\bV$, it holds that $\bP_u\bZ\bP_v^\top \deq \bZ$, and so
\begin{align*}
\bK &= \bU^\top h_\lam(\bG)\bV \\
&= \bU^\top h_\lam(\bU\bQ\bV^\top+\bZ)\bV \\
&= \bG_u^{1/2}\bO_u^\top h_\lam \qty(\bO_u\bG_u^{1/2} \bQ \bG_v^{1/2}\bO_v^\top + \bZ)  \bO_v\bG_v^{1/2} \\
&= \bG_u^{1/2}\bO_u^\top h_\lam \qty(\bP_u \bmat{\bG_u^{1/2} \bQ \bG_v^{1/2} &\\&0} \bP_v^\top + \bZ)  \bO_v\bG_v^{1/2} \\
&\deq \bmat{\bG_u^{1/2}&0} h_\lam\qty(\bC+\bZ) \bmat{\bG_v^{1/2}\\0} \\
&= \bmat{\bG_u^{1/2}&0} \qty(\bC +\bZ) \qty((\bC +\bZ)^\top (\bC +\bZ) +\lam^2 \bI_d)^{-1/2} \bmat{\bG_v^{1/2}\\0}.
\end{align*}
We invoke the following resolvent integral representation
\begin{align*}
    \bX^{-1/2} = \frac{1}{\pi}\int_0^\infty s^{-1/2} (\bX + s\bI_d)^{-1} \rd s.
\end{align*}
Applying to $\bX = (\bC +\bZ)^\top (\bC +\bZ) +\lam^2 \bI_d$, we have
\begin{align}
\bK &= \frac{1}{\pi}\int_0^\infty s^{-1/2} \bmat{\bG_u^{1/2}&0} \qty(\bC +\bZ) (\bX + s\bI_d)^{-1} \bmat{\bG_v^{1/2}\\0} \rd s \notag\\
&= \frac{1}{\pi}\int_0^\infty s^{-1/2} \qty(\bmat{\bG_u\bQ \bG_v^{1/2}&0} + \bmat{\bG_u^{1/2}&0}\bZ) (\bX + s\bI_d)^{-1} \bmat{\bG_v^{1/2}\\0} \rd s \notag\\
&= \frac{1}{\pi}\int_0^\infty s^{-1/2} \qty(\bmat{\bG_u\bQ&0} + \bmat{\bG_u^{1/2}&0}\bZ \bmat{\bG_v^{-1/2}&\\&\bI_{d-r}}) \label{eq:resolvent-head}\\
&\qquad\times \qty(\bmat{\bG_v^{-1/2}&\\&\bI_{d-r}} (\bX+s\bI_d) \bmat{\bG_v^{-1/2}&\\&\bI_{d-r}})^{-1} \bmat{\bI_r\\0} \rd s. \label{eq:resolvent-init}
\end{align}
Let us further define $\beta_s := \sqrt{\lambda^2 + s}$, $\tilde\bZ := \lam^{-1}\bZ$ and denote the zero-padded matrix
\begin{align*}
\db{\bA} := \bmat{\bA&\\&0} \in\RR^{d\times d}, \quad\bA\in\RR^{r\times r}.
\end{align*}
Note that $\db{\bA}^k=\db{\bA^k}$ for $k\ge 1$ but $\db{\bA}^0=\bI_d \ne \db{\bA^0}$.

Expanding Eq.~\eqref{eq:resolvent-head} via Lemma~\ref{lem:series-g}, we have
\begin{align*}
&\bmat{\bG_u\bQ&0} + \bmat{\bG_u^{1/2}&0}\bZ \bmat{\bG_v^{-1/2}&\\&\bI_{d-r}} \\
&= \bmat{\bI_r&0} \qty(\bmat{\bQ+\bE_u\bQ&\\&0} + \bmat{\bG_u^{1/2}&\\&\bI_{d-r}} \bZ \bmat{\bG_v^{-1/2}&\\&\bI_{d-r}}) \\
&= \bmat{\bI_r&0} (\bH + \bR_h),
\end{align*}
where
\begin{align}
\bH &:= \bmat{\bQ+\bE_u\bQ&\\&0} + \bmat{\bar\bG_u^{1/2}&\\&\bI_{d-r}} \bZ \bmat{\bar\bG_v^{-1/2}&\\&\bI_{d-r}} \notag\\
&= \db{\bQ}+\db{\bE_u\bQ} + \lam \sum_{k,\ell=0}^K \binom{\frac12}{k}\binom{-\frac12}{\ell} \db{\bE_u}^k \tilde\bZ \db{\bE_v}^\ell \label{eq:def-bh}
\end{align}
and $\bR_h$ is the error term due to applying the series truncation in Lemma~\ref{lem:series-g}. We control truncation errors in Lemma~\ref{lem:truncation} below. Next, from
\begin{align*}
\bmat{\bG_v^{-1/2}&\\&\bI_{d-r}}\bC^\top = \db{\bQ\bG_u^{1/2}} = \db{\bQ} \bmat{\bG_u^{1/2}&\\&\bI_{d-r}},
\end{align*}
the term in the inverse can be expressed as
\begin{align*}
&\bmat{\bG_v^{-1/2}&\\&\bI_{d-r}} (\bX+s\bI_d) \bmat{\bG_v^{-1/2}&\\&\bI_{d-r}} \\
&=\bmat{\bG_v^{-1/2}&\\&\bI_{d-r}} (\bC^\top\bC + \bC^\top\bZ + \bZ^\top\bC  + \bZ^\top\bZ+\beta_s^2 \bI_d) \bmat{\bG_v^{-1/2}&\\&\bI_{d-r}} \\
&= \bmat{\bQ\bG_u\bQ&\\&0} + \db{\bQ} \bmat{\bG_u^{1/2}&\\&\bI_{d-r}} \bZ \bmat{\bG_v^{-1/2}&\\&\bI_{d-r}} \\
&\qquad + \bmat{\bG_v^{-1/2}&\\&\bI_{d-r}} \bZ^\top \bmat{\bG_u^{1/2}&\\&\bI_{d-r}} \db{\bQ}\\
&\qquad + \bmat{\bG_v^{-1/2}&\\&\bI_{d-r}} \bZ^\top\bZ \bmat{\bG_v^{-1/2}&\\&\bI_{d-r}} + \beta_s^2 \bmat{\bG_v^{-1}&\\&\bI_{d-r}}\\
&= \db{\bQ^2} +\beta_s^2 \bI_d + \db{\bQ\bE_u\bQ} + \beta_s^2 \sum_{k=1}^K \db{-\bE_v}^k \\
&\qquad + \sum_{k,\ell=0}^K \binom{\frac12}{k}\binom{-\frac12}{\ell} \qty(\db{\bQ} \db{\bE_u}^k \bZ \db{\bE_v}^\ell + \db{\bE_v}^\ell \bZ^\top \db{\bE_u}^k \db{\bQ})\\
&\qquad + \sum_{k,\ell=0}^K \binom{-\frac12}{k}\binom{-\frac12}{\ell} \db{\bE_v}^k \bZ^\top\bZ \db{\bE_v}^\ell \\
& \qquad + \bR_{\delta,s},
\end{align*}
where the error $\bR_{\delta,s}$ (here dependent on $s$) is also controlled in Lemma~\ref{lem:truncation}. Hence,
\begin{align*}
\bmat{\bG_v^{-1/2}&\\&\bI_{d-r}} (\bX+s\bI_d) \bmat{\bG_v^{-1/2}&\\&\bI_{d-r}} = \bD_s + \bDelta + \bR_{\delta,s}
\end{align*}
where
\begin{align*}
\bD_s = \diag(d_{1,s},\cdots,d_{d,s}) := \db{\bQ^2} +\beta_s^2 \bI_d
\end{align*}
is diagonal positive-definite and
\begin{align}
\bDelta &:= \db{\bQ\bE_u\bQ} + \beta_s^2 \sum_{k=1}^K \db{-\bE_v}^k \notag\\
&\qquad + \lam\sum_{k,\ell=0}^K \binom{\frac12}{k}\binom{-\frac12}{\ell} \qty(\db{\bQ} \db{\bE_u}^k \tilde\bZ \db{\bE_v}^\ell + \db{\bE_v}^\ell \tilde\bZ^\top \db{\bE_u}^k \db{\bQ}) \notag\\
&\qquad + \lam^2\sum_{k,\ell=0}^K \binom{-\frac12}{k}\binom{-\frac12}{\ell} \db{\bE_v}^k \tilde\bZ^\top\tilde\bZ \db{\bE_v}^\ell. \label{eq:def-bdelta}
\end{align}
Plugging back into Eq.~\eqref{eq:resolvent-init}, we obtain the expression
\begin{align*}
\bK &= \frac{1}{\pi} \int_0^\infty s^{-1/2} \bmat{\bI_r&0} (\bH+\bR_h)(\bD_s+\bDelta+\bR_{\delta,s})^{-1} \bmat{\bI_r\\0} \rd s.
\end{align*}
Compare to the quantity obtained by ignoring the truncation errors $\bR_h,\bR_{\delta,s}$ and expanding the inverse using the (again truncated) Neumann series:
\begin{align}
\tilde\bK = \frac{1}{\pi} \int_0^\infty s^{-1/2} \bmat{\bI_r&0} \underbrace{\bH \bD_s^{-1/2} \sum_{k=0}^K \qty(-\bD_s^{-1/2}\bDelta \bD_s^{-1/2})^k \bD_s^{-1/2}}_{=:\bPsi_s(\bH,\bDelta)} \bmat{\bI_r\\0} \rd s. \label{eq:g-hd-expansion}
\end{align}
We justify this expansion in Lemma~\ref{lem:neumann} and show $\nnorm{\bK-\tilde\bK}_\op =d^{-\omega(1)}$ in Lemma~\ref{lem:truncation}. Hence it suffices to bound the off-diagonal entries of $\tilde\bK$.

\subsection{Truncation error bounds}\label{sec:large-truncation}

Here, we show that the errors from truncating the series for $\bG_u^{1/2},\bG_v^{-1/2},\bG_v^{-1}$ and the Neumann series in Eq.~\eqref{eq:g-hd-expansion} are ignorable.

\begin{lemma}[Neumann series stability]\label{lem:neumann}
Under the event $\cE_\op$, there exists a constant~$C$ such that for every $s\ge 0$,
\begin{align*}
\norm{\bD_s^{-1/2}\bDelta\bD_s^{-1/2}}_\op \le C\rho.
\end{align*}
\end{lemma}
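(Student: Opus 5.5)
Write $\bD := \bD_s$ and $\beta := \beta_s$. The plan is to bound each of the four groups of terms in the definition \eqref{eq:def-bdelta} of $\bDelta$ separately after sandwiching by $\bD^{-1/2}$. Two facts drive everything. First, $\bD$ is diagonal, with entries $q_k^2+\beta^2$ on the first $r$ coordinates and $\beta^2$ elsewhere, and $\beta\ge\lam$. Second, every zero-padded block $\db{\cdot}$ and every $\db{\bQ}$ lives on the first $r$ coordinates. This gives three elementary operator-norm bounds:
\begin{align*}
\nnorm{\bD^{-1/2}\db{\bQ}}_\op\le 1,\qquad \nnorm{\bD^{-1/2}}_\op\le \beta^{-1}\le\lam^{-1},\qquad \nnorm{\bD^{-1/2}\db{\bQ}\db{\bA}\db{\bQ}\bD^{-1/2}}_\op\le \nnorm{\bA}_\op .
\end{align*}
The first holds since $q_k/\sqrt{q_k^2+\beta^2}\le 1$, and the third follows from the first.

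Term by term, all on $\cE_\op$:

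\textbf{(i)} For $\bD^{-1/2}\db{\bQ\bE_u\bQ}\bD^{-1/2}$, the third bound gives norm at most $\nnorm{\bE_u}_\op\le\rho$.

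\textbf{(ii)} For $\beta^2\sum_{k=1}^K\bD^{-1/2}\db{-\bE_v}^k\bD^{-1/2}$, the second bound gives $\beta^2\nnorm{\bD^{-1/2}}_\op^2\le 1$. The geometric sum is then at most $\sum_{k\ge1}\rho^k\le 2\rho$ for $\rho\le 1/2$.

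\textbf{(iii)} For the cross terms, bound $\bD^{-1/2}\db{\bQ}$ by $1$ and the trailing $\bD^{-1/2}$ by $\lam^{-1}$. Each summand is then at most $\lam\cdot\lam^{-1}\nnorm{\bE_u}_\op^k\nnorm{\tilde\bZ}_\op\nnorm{\bE_v}_\op^\ell\le\rho^{k+\ell+1}$, using $|\binom{\pm1/2}{k}|\le1$ from Eq.~\eqref{eq:binom-bounded}. Summing over $k,\ell$ gives $O(\rho)$, and the transposed term is handled the same way.

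\textbf{(iv)} For the quadratic term, use $\lam^2\nnorm{\bD^{-1/2}}_\op^2\le1$. Each summand is at most $\rho^{k+\ell}\nnorm{\tilde\bZ}_\op^2\le\rho^{k+\ell+2}$, so the sum is $O(\rho^2)$.

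Adding the four pieces gives $\nnorm{\bD^{-1/2}\bDelta\bD^{-1/2}}_\op\le C\rho$ uniformly in $s$, with $C$ an absolute constant once $\rho\le 1/2$. That condition holds for large $d$ since $\rho\asymp1/\log d$.

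The only point needing care is the $k=0$ or $\ell=0$ terms in (iii)–(iv). There $\db{\bE}^0=\bI_d$ is not zero-padded, so $\tilde\bZ$ acts on the full space. This is harmless because I only use operator norms and never use the block support of $\tilde\bZ$. The thing to check is that the prefactor $\db{\bQ}$ (or its transpose) always sits adjacent to a $\bD^{-1/2}$, which it does by the form of \eqref{eq:def-bdelta}. Hence no factor $q_k/\lam$, which could be large, ever appears unmatched. This bookkeeping is the main obstacle; the rest is routine.
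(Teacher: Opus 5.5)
Your proposal is correct and follows the paper's proof essentially step for step. The paper also bounds the four groups in \eqref{eq:def-bdelta} separately, packaging your scalar bounds as the diagonal matrices $\bA_s=(\bQ^2+\beta_s^2\bI_r)^{-1/2}\bQ$ and $\bB_s=\beta_s(\bQ^2+\beta_s^2\bI_r)^{-1/2}$ of operator norm at most $1$; it then uses $\lam\le\beta_s$ and Eq.~\eqref{eq:binom-bounded} to obtain geometric sums of order $\rho$, $\rho/(1-\rho)$, $\rho/(1-\rho)^2$ and $\rho^2/(1-\rho)^2$.
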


\begin{proof}
Define the diagonal matrices
\begin{align*}
\bA_s := (\bQ^2+\beta_s^2\bI_r)^{-1/2}\bQ, \quad \bB_s := \beta_s (\bQ^2+\beta_s^2\bI_r)^{-1/2}
\end{align*}
so that $\norm{\bA_s}_\op, \norm{\bB_s}_\op\le 1$. We bound each of the four terms in \eqref{eq:def-bdelta} separately. For the first term,
\begin{align*}
\norm{\bD_s^{-1/2}\db{\bQ\bE_u\bQ}\bD_s^{-1/2}}_\op = \norm{\bA_s\bE_u\bA_s}_\op \le \rho.
\end{align*}
For the second term, noting that the sum starts from $k=1$,
\begin{align*}
\norm{\bD_s^{-1/2} \qty(\beta_s^2 \sum_{k=1}^K \db{-\bE_v}^k) \bD_s^{-1/2}}_\op \le \sum_{k=1}^K \norm{\bB_s\bE_v\bB_s}_\op^k \le \sum_{k=1}^K \rho^k \le \frac{\rho}{1-\rho}.
\end{align*}
For the third term, we have
\begin{align*}
&\norm{\bD_s^{-1/2}\db{\bQ}\db{\bE_u}^k}_\op = \norm{\bA_s\bE_u^k}_\op \le\rho^k,\\
&\norm{\db{\bE_v}^\ell \bD_s^{-1/2}}_\op = \beta_s^{-1} \norm{\db{\bE_v}^\ell \bB_s}_\op \le \beta_s^{-1} \rho^\ell.
\end{align*}
Then noting that $\lam\le\beta_s$,
\begin{align*}
&\norm{\bD_s^{-1/2}\qty(\lam\sum_{k,\ell=0}^K \binom{\frac12}{k}\binom{-\frac12}{\ell} \db{\bQ} \db{\bE_u}^k \tilde\bZ \db{\bE_v}^\ell)\bD_s^{-1/2}}_\op \\
&\le \lam \sum_{k,\ell=0}^K \norm{\bD_s^{-1/2}\db{\bQ}\db{\bE_u}^k \tilde\bZ \db{\bE_v}^\ell \bD_s^{-1/2}}_\op \\
&\le \lam\beta_s^{-1} \sum_{k,\ell=0}^K \rho^{k+\ell+1} \le \frac{\rho}{(1-\rho)^2}
\end{align*}
and similarly for the transposed term. Finally for the fourth term,
\begin{align*}
&\norm{\bD_s^{-1/2}\qty(\lam^2\sum_{k,\ell=0}^K \binom{-\frac12}{k}\binom{-\frac12}{\ell} \db{\bE_v}^k \tilde\bZ^\top\tilde\bZ \db{\bE_v}^\ell)\bD_s^{-1/2}}_\op \\
&\le \lam^2\sum_{k,\ell=0}^K \norm{\bD_s^{-1/2} \db{\bE_v}^k \tilde\bZ^\top\tilde\bZ \db{\bE_v}^\ell \bD_s^{-1/2}}_\op \\
&\le \lam^2\beta_s^{-2} \sum_{k,\ell=0}^K \rho^{k+\ell+2} \le \frac{\rho^2}{(1-\rho)^2}.
\end{align*}
Combining the errors concludes the proof.
\end{proof}

\begin{lemma}[truncation error bound]\label{lem:truncation}
Suppose the decay factor satisfies $\rho\lesssim \frac{1}{\log d}$ and the truncation threshold $K\gtrsim\log d$. Under the event $\cE_\op$, it holds that
\begin{align*}
\nnorm{\bK-\tilde\bK}_\op =d^{-\omega(1)}.
\end{align*}
\end{lemma}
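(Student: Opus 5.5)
We split $\bK-\tilde\bK$ into the two truncation sources and bound each by a geometric tail of order $\rho^{K}$, uniformly in $s$ after weighting by the $s$-integral. Throughout we work under $\cE_\op$. With $\rho\lesssim 1/\log d$ and $K\gtrsim\log d$, any quantity of size $\poly(d)\cdot\rho^{K}$ is $\poly(d)\,(\log d)^{-C\log d}=d^{-\omega(1)}$. So it suffices to show $\nnorm{\bK-\tilde\bK}_\op\lesssim \poly(d)\,(C'\rho)^{K}$.

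\textbf{Step 1: bounds on $\bR_h$ and $\bR_{\delta,s}$.} By Lemma~\ref{lem:series-g}, each truncated series for $\bG_u^{1/2}$, $\bG_v^{-1/2}$ and $\bG_v^{-1}$ differs from the true matrix by at most $\rho^{K+1}$ in operator norm, while the truncated series themselves are bounded by $O(1)$. Writing each product in $\bH$ and in the conjugated $\bX+s\bI_d$ as a telescoping difference gives $\nnorm{\bR_h}_\op\lesssim \lam\rho^{K+1}$, using $\nnorm{\bZ}_\op\le\lam\rho$ and $\nnorm{\bQ}_\op\le1$. A small extra term appears in the $\db{\bE_u\bQ}$ piece only if one expands $\bG_u$ exactly; since $\bG_u=\bI_r+\bE_u$, there is no error there. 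Similarly,
\[
\nnorm{\bD_s^{-1/2}\bR_{\delta,s}\bD_s^{-1/2}}_\op\lesssim\rho^{K+1},
\]
uniformly in $s$. This is the key normalization step. It follows because each error term carries either a factor $\beta_s^2$ (the $\bG_v^{-1}$ truncation), a factor $\db{\bQ}\bZ$, or a factor $\bZ^\top\bZ$, and these are absorbed by $\bD_s^{-1/2}$ exactly as in the proof of Lemma~\ref{lem:neumann}: via $\bA_s,\bB_s$ and $\lam\le\beta_s$.

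\textbf{Step 2: the inverse and the Neumann tail.} Write $\bD_s+\bDelta+\bR_{\delta,s}=\bD_s^{1/2}(\bI+\bN_s+\bR'_s)\bD_s^{1/2}$, where $\bN_s=\bD_s^{-1/2}\bDelta\bD_s^{-1/2}$ and $\bR'_s=\bD_s^{-1/2}\bR_{\delta,s}\bD_s^{-1/2}$. Lemma~\ref{lem:neumann} gives $\nnorm{\bN_s}_\op\le C\rho<1/2$, and Step 1 gives $\nnorm{\bR'_s}_\op\lesssim\rho^{K+1}$. Then the resolvent identity yields
\[
\nnorm{(\bI+\bN_s+\bR'_s)^{-1}-(\bI+\bN_s)^{-1}}_\op\lesssim\rho^{K+1}.
\]
The Neumann truncation gives $\nnorm{(\bI+\bN_s)^{-1}-\sum_{k\le K}(-\bN_s)^k}_\op\le 2(C\rho)^{K+1}$. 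Hence the integrands of $\bK$ and $\tilde\bK$ differ by at most
\[
\nnorm{(\bH+\bR_h)\bD_s^{-1/2}}_\op\cdot (C\rho)^{K+1}\cdot\nnorm{\bD_s^{-1/2}}_\op\;+\;\nnorm{\bR_h}_\op\nnorm{\bD_s^{-1}}_\op\cdot O(1).
\]

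\textbf{Step 3: integrating in $s$ (the main obstacle).} We need an $s$-integrable envelope for these bounds. We have $\nnorm{\bD_s^{-1/2}}_\op\le\beta_s^{-1}$. Also $\nnorm{\bH\bD_s^{-1/2}}_\op\lesssim 1$, because $\db{\bQ}\bD_s^{-1/2}=\db{\bA_s}$ and $\lam\tilde\bZ\bD_s^{-1/2}$ has norm at most $\lam\rho/\beta_s\le\rho$. So the first term is at most $(C\rho)^{K+1}/\beta_s$. The second term is at most $\lam\rho^{K+1}/\beta_s^2\le\rho^{K+1}/\beta_s$. Thus the integrand difference is $\lesssim s^{-1/2}(C\rho)^{K+1}(\lam^2+s)^{-1/2}$. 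This decays like $s^{-1}$ at infinity, which is only log-divergent, so the crude envelope is not enough as stated.

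To close this, I would use that for large $s$ both integrands decay faster. At the level of $\bX^{-1/2}$, the true $\bK$ has a tail contribution of order $\nnorm{\bC+\bZ}_\op/s$ for $s\ge S$. I would therefore split the integral at $S=d^{C}$. The region $s\le S$ contributes at most $(C\rho)^{K+1}\log(S/\lam^2)=\poly\log(d)\,(C\rho)^{K+1}$. For $s>S$, we bound each of $\bK$ and $\tilde\bK$ separately by $\int_S^\infty s^{-3/2}\cdot O(1)\,\rd s\lesssim S^{-1/2}=d^{-C/2}$. This uses $\nnorm{\bH}_\op=O(1)$ and $\nnorm{\bD_s^{-1/2}}_\op\le s^{-1/2}$, together with $\nnorm{(\bI+\bN_s)^{-1}}_\op\le2$ and the analogous truncated-sum bound. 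Taking $C$ arbitrary, the combined error is $d^{-\omega(1)}$.
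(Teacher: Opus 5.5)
Your proposal is correct and has the same structure as the paper's proof. You bound $\bR_h$ and the normalized error $\bD_s^{-1/2}\bR_{\delta,s}\bD_s^{-1/2}$ via Lemma~\ref{lem:series-g}, compare the exact inverse, the untruncated Neumann inverse and the truncated sum using the resolvent identity and Lemma~\ref{lem:neumann}, and then integrate in $s$.

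The one divergence is Step 3, and that obstacle is self-inflicted. Keep $\nnorm{\bH+\bR_h}_\op\lesssim 1$ together with both factors $\nnorm{\bD_s^{-1/2}}_\op\le\beta_s^{-1}$, rather than absorbing one factor into $\nnorm{\bH\bD_s^{-1/2}}_\op\lesssim 1$. The envelope is then $s^{-1/2}\beta_s^{-2}(C\rho)^{K+1}$, and $\frac1\pi\int_0^\infty s^{-1/2}(\lam^2+s)^{-1}\rd s=\lam^{-1}=\poly(d)$. Your own opening remark already absorbs this $\poly(d)$ factor into $d^{-\omega(1)}$. This is exactly the paper's argument, so your split at $S$ is valid but unnecessary.
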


\begin{proof}
First we control the errors $\bR_h,\bR_{\delta,s}$. For $\bR_h$, we have that
\begin{align*}
\bR_h &= \bmat{\bG_u^{1/2}&\\&\bI_{d-r}} \bZ \bmat{\bG_v^{-1/2}&\\&\bI_{d-r}} - \bmat{\bar\bG_u^{1/2}&\\&\bI_{d-r}} \bZ \bmat{\bar\bG_v^{-1/2}&\\&\bI_{d-r}} \\
&= \bmat{\bG_u^{1/2}&\\&\bI_{d-r}} \bZ \db{\bG_v^{-1/2} - \bar\bG_v^{-1/2}} + \db{\bG_u^{1/2} - \bar\bG_u^{1/2}} \bZ \bmat{\bar\bG_v^{-1/2}&\\&\bI_{d-r}}.
\end{align*}
Then by Lemma~\ref{lem:series-g},
\begin{align*}
\nnorm{\bR_h}_\op &\le \norm{\bG_u^{1/2}\bZ}_\op \norm{\bG_v^{-1/2} - \bar\bG_v^{-1/2}}_\op + \norm{\bG_u^{1/2} - \bar\bG_u^{1/2}}_\op \norm{\bZ\bar\bG_v^{-1/2}}_\op \\
&\le \qty(2\sqrt{1+\rho} +\rho^{K+1}) \lam\rho \cdot \rho^{K+1} = d^{-\omega(1)}.
\end{align*}
For $\bR_{\delta,s}$, we have that
\begin{align*}
\bR_{\delta,s} &= \beta_s^2 \db{\bG_v^{-1} - \check\bG_v^{-1}}\\
&\qquad + \db{\bQ} \qty(\bmat{\bG_u^{1/2}&\\&\bI_{d-r}} \bZ \bmat{\bG_v^{-1/2}&\\&\bI_{d-r}} - \bmat{\bar\bG_u^{1/2}&\\&\bI_{d-r}} \bZ \bmat{\bar\bG_v^{-1/2}&\\&\bI_{d-r}})\\
&\qquad + \qty(\bmat{\bG_v^{-1/2}&\\&\bI_{d-r}} \bZ^\top \bmat{\bG_u^{1/2}&\\&\bI_{d-r}} - \bmat{\bar\bG_v^{-1/2}&\\&\bI_{d-r}} \bZ^\top \bmat{\bar\bG_u^{1/2}&\\&\bI_{d-r}}) \db{\bQ}\\
&\qquad + \bmat{\bG_v^{-1/2}&\\&\bI_{d-r}} \bZ^\top\bZ \bmat{\bG_v^{-1/2}&\\&\bI_{d-r}} - \bmat{\bar\bG_v^{-1/2}&\\&\bI_{d-r}} \bZ^\top\bZ \bmat{\bar\bG_v^{-1/2}&\\&\bI_{d-r}}
\end{align*}
and we can similarly bound, using that $\nnorm{\bQ}_\op \le 1$ and $\lam = \poly(d^{-1})$,
\begin{align*}
\nnorm{\bR_{\delta,s}}_\op = \beta_s^2\cdot d^{-\omega(1)}.
\end{align*}
It follows that
\begin{align}\label{eq:jaffa-cakes}
\norm{\bD_s^{-1/2}\bR_{\delta,s}\bD_s^{-1/2}}_\op = d^{-\omega(1)}
\end{align}
uniformly over $s$. Now define
\begin{align*}
\bN_1 &= \qty(\bI_d + \bD_s^{-1/2}\bDelta\bD_s^{-1/2} + \bD_s^{-1/2}\bR_{\delta,s}\bD_s^{-1/2})^{-1}, \\
\bN_2 &= \qty(\bI_d + \bD_s^{-1/2}\bDelta\bD_s^{-1/2})^{-1}, \\
\bN_3 &= \sum_{k=0}^K \qty(-\bD_s^{-1/2}\bDelta\bD_s^{-1/2})^k.
\end{align*}
We have that $\nnorm{\bN_1}_\op, \nnorm{\bN_2}_\op = 1+o(1)$ due to Lemma~\ref{lem:neumann} and Eq.~\eqref{eq:jaffa-cakes}. Moreover,
\begin{align*}
\norm{\bN_1 - \bN_2}_\op &= \norm{\bN_2(\bN_2^{-1}-\bN_1^{-1})\bN_1}_\op \lesssim \norm{\bD_s^{-1/2}\bR_{\delta,s}\bD_s^{-1/2}}_\op = d^{-\omega(1)}
\end{align*}
and by the Neumann series,
\begin{align*}
\norm{\bN_2-\bN_3}_\op \le \sum_{k=K+1}^\infty (C\rho)^k = d^{-\omega(1)}.
\end{align*}
Thus from
\begin{align*}
\bK &= \frac{1}{\pi} \int_0^\infty s^{-1/2} \bmat{\bI_r&0} (\bH+\bR_h) \bD_s^{-1/2}\bN_1\bD_s^{-1/2} \bmat{\bI_r\\0} \rd s, \\
\tilde\bK &= \frac{1}{\pi} \int_0^\infty s^{-1/2} \bmat{\bI_r&0} \bH \bD_s^{-1/2} \bN_3 \bD_s^{-1/2} \bmat{\bI_r\\0} \rd s,
\end{align*}
and $\norm{\bH}_\op \lesssim 1$, it follows that
\begin{align*}
&\nnorm{\bK-\tilde\bK}_\op\\
&\le \frac{1}{\pi} \int_0^\infty s^{-1/2} \norm{(\bH+\bR_h) \bD_s^{-1/2}\bN_1\bD_s^{-1/2} - \bH \bD_s^{-1/2} \bN_3 \bD_s^{-1/2}}_\op \rd s \\
&\le \frac{1}{\pi} \int_0^\infty s^{-1/2} \norm{\bR_h \bD_s^{-1/2}\bN_1\bD_s^{-1/2}}_\op \rd s\\
&\qquad + \frac{1}{\pi} \int_0^\infty s^{-1/2} \norm{\bH \bD_s^{-1/2}(\bN_1-\bN_3)\bD_s^{-1/2}}_\op \rd s \\
&\le \frac{1}{\pi} \int_0^\infty s^{-1/2} \beta_s^{-2}\rd s \cdot \qty(\norm{\bR_h}_\op\norm{\bN_1}_\op + \norm{\bH}_\op \nnorm{\bN_1-\bN_3}_\op)\\
&= \lam^{-1} d^{-\omega(1)} = d^{-\omega(1)},
\end{align*}
as was to be shown.
\end{proof}

\subsection{Complete perturbative expansion}\label{sec:large-full}

We will now fully multiply out $\bPsi_s(\bH,\bDelta)$ in Eq.~\eqref{eq:g-hd-expansion} by plugging in Eq.~\eqref{eq:def-bh} and Eq.~\eqref{eq:def-bdelta} into each instance of $\bH,\bDelta$ and further expanding all matrix products entrywise. Since there are many different types of terms, we will keep track of all terms and coefficients by introducing \emph{symbols} $\mu\in\cS_\mu$ and $\nu\in\cS_\nu$ for $\bH$ and $\bDelta$, respectively.

For the rest of the section, we set
\begin{align*}
\tq = (\tq_1,\cdots,\tq_d) := (q_1,\cdots,q_r,0,\cdots,0)
\end{align*}
so that $\db{\bQ} = \diag(\tq)$. We will also denote by $\cI^m$ the set of length $m$ index sequences or \emph{paths} $\iota=(i_1,\cdots,i_m)\in [d]^{m}$, and by $\cI_{ij}^m$ the set of augmented paths $\iota=(i_0,\cdots,i_{m+1})\in [d]^{m+2}$ with the restriction that $i_0=i$ and $i_{m+1}=j$.

For $\bH$, let
\begin{align*}
\cS_\mu := \{1,2\} \cup \left\{(3,k,\ell,\iota): 0\le k,\ell\le K, \; \iota\in\cI^{k+\ell}\right\}.
\end{align*}
From Eq.~\eqref{eq:def-bh}, we can decompose
\begin{align}\label{eq:symbol-bh}
\bH_{ij} = \sum_{\mu\in\cS_\mu} a_{ij}^\mu\bH_{ij}^\mu
\end{align}
where
\begin{enumerate}
    \item $\mu=1$: $(a_{ij}^1, \bH_{ij}^1) = (\tq_j, \db{\bI_r}_{ij})$
    \item $\mu=2$: $(a_{ij}^2, \bH_{ij}^2) = (\tq_j, \db{\bE_u}_{ij})$
    \item $\mu=(3,k,\ell,\iota)$: recalling $\iota=(i_1,\cdots,i_{k+\ell})\in [d]^{k+\ell}$,
    \begin{align*}
    (a_{ij}^\mu, \bH_{ij}^\mu) &= \qty(\binom{\frac12}{k}\binom{-\frac12}{\ell} \lam, \prod_{m=1}^k\db{\bE_u}_{i_{m-1}i_{m}} \tilde\bZ_{i_ki_{k+1}}\prod_{m=1}^\ell \db{\bE_v}_{i_{k+m}i_{k+m+1}}),
    \end{align*}
    here with the convention that $i_0=i$, $i_{m+1}=j$ depending on the pair $(i,j)$ being expanded.
\end{enumerate}

For $\bDelta$, let
\begin{align*}
\cS_\nu := \{1\} &\cup \left\{(2,k,\iota): 1\le k\le K, \;\iota\in \cI^{k-1}\right\} \\
&\cup \left\{(3,k,\ell,\iota): 0\le k,\ell\le K, \;\iota\in\cI^{k+\ell}\right\} \\
&\cup \left\{(4,k,\ell,\iota): 0\le k,\ell\le K, \;\iota\in \cI^{k+\ell}\right\} \\
&\cup \left\{(5,k,\ell,\iota): 0\le k,\ell\le K, \;\iota\in \cI^{k+\ell+1}\right\}.
\end{align*}
From Eq.~\eqref{eq:def-bdelta}, we can decompose $\bDelta_{ij}$ with coefficients in the following bilinear form:
\begin{align}\label{eq:symbol-bdelta}
\bDelta_{ij} = \sum_{\nu\in\cS_\nu} b_i^\nu\bDelta_{ij}^\nu c_j^\nu
\end{align}
where
\begin{enumerate}
    \item $\nu=1$:
    \begin{align*}
    (b_i^1,c_j^1,\bDelta_{ij}^1) = (\tq_i,\tq_j,\db{\bE_u}_{ij})
    \end{align*}
    \item $\nu=(2,k,\iota)$:
    \begin{align*}
    (b_i^\nu,c_j^\nu,\bDelta_{ij}^\nu) = \qty((-1)^k\beta_s,\beta_s,\prod_{m=1}^k \db{\bE_u}_{i_{m-1}i_m})
    \end{align*}
    \item $\nu=(3,k,\ell,\iota)$:
    \begin{align*}
    (b_i^\nu,c_j^\nu,\bDelta_{ij}^\nu) = \qty(\binom{\frac12}{k}\tq_i,\binom{-\frac12}{\ell}\lam, \prod_{m=1}^k\db{\bE_u}_{i_{m-1}i_{m}} \tilde\bZ_{i_ki_{k+1}}\prod_{m=1}^\ell \db{\bE_v}_{i_{k+m}i_{k+m+1}})
    \end{align*}
    \item $\nu=(4,k,\ell,\iota)$:
    \begin{align*}
    (b_i^\nu,c_j^\nu,\bDelta_{ij}^\nu) = \qty(\binom{-\frac12}{\ell}\lam, \binom{\frac12}{k}\tq_j, \prod_{m=1}^\ell \db{\bE_v}_{i_{m-1}i_{m}} \tilde\bZ_{i_{\ell+1}i_\ell}\prod_{m=1}^k \db{\bE_u}_{i_{\ell+m}i_{\ell+m+1}})
    \end{align*}
    \item $\nu=(5,k,\ell,\iota)$:
    \begin{align*}
    &(b_i^\nu,c_j^\nu,\bDelta_{ij}^\nu) = \Bigg(\binom{-\frac12}{k}\lam, \binom{-\frac12}{\ell}\lam, \prod_{m=1}^k\db{\bE_v}_{i_{m-1}i_{m}} \tilde\bZ_{i_{k+1}i_k} \tilde\bZ_{i_{k+1}i_{k+2}}\prod_{m=2}^{\ell+1} \db{\bE_v}_{i_{k+m}i_{k+m+1}}\Bigg).
    \end{align*}
\end{enumerate}
Observe that every $\bH_{ij}^\mu$ and $\bDelta_{ij}^\nu$ are purely products of entries of $\db{\bE_u},\db{\bE_v}$ or $\tilde\bZ$, without any numerical coefficients.

Returning to Eq.~\eqref{eq:g-hd-expansion}, fix a pair of indices $i,j \in[r]$, so that
\begin{align}\label{eq:to-integrate-out}
\tilde\bK_{ij} = \frac{1}{\pi} \int_0^\infty s^{-1/2} \bPsi_s(\bH,\bDelta)_{ij} \rd s.
\end{align}
By expanding each power $\big(-\bD_s^{-1/2}\bDelta \bD_s^{-1/2}\big)^k$ along paths $\iota\in\cI_{ij}^k$ and plugging in Eq.~\eqref{eq:symbol-bh} and Eq.~\eqref{eq:symbol-bdelta}, we obtain
\begin{align}
\bPsi_s(\bH,\bDelta)_{ij}
&= \sum_{k=0}^K \left[ \bH \bD_s^{-1/2} \qty(-\bD_s^{-1/2}\bDelta \bD_s^{-1/2})^k \bD_s^{-1/2} \right]_{ij} \notag \\
&= \sum_{k=0}^K (-1)^k \sum_{\iota\in\cI_{ij}^k} \frac{\bH_{i_0i_1}}{\sqrt{d_{i_1,s}}} \qty(\prod_{\ell=1}^k \frac{\bDelta_{i_\ell i_{\ell+1}}}{\sqrt{d_{i_\ell,s}d_{i_{\ell+1},s}}}) \frac{1}{\sqrt{d_{i_{k+1},s}}} \notag \\
&= \sum_{k=0}^K (-1)^k \sum_{\iota\in\cI_{ij}^k} \sum_{\mu\in\cS_\mu} \frac{a_{i_0i_1}^\mu \bH_{i_0i_1}^\mu}{\sqrt{d_{i_1,s}}} \qty(\prod_{\ell=1}^k \sum_{\nu_\ell\in\cS_\nu} \frac{b_{i_\ell}^{\nu_\ell} \bDelta_{i_\ell i_{\ell+1}}^{\nu_\ell} c_{i_{\ell+1}}^{\nu_\ell}}{\sqrt{d_{i_\ell,s}d_{i_{\ell+1},s}}} ) \frac{1}{\sqrt{d_{i_{k+1},s}}} \notag \\
&= \sum_{k=0}^K (-1)^k \sum_{\iota\in\cI_{ij}^k} \sum_{\mu\in\cS_\mu} \sum_{\nu\in\cS_\nu^k} \zeta_{\iota,s}^{\mu,\nu} \bT_\iota^{\mu,\nu}, \label{eq:zeta-expansion}
\end{align} 
where we have defined for each $\mu\in\cS_\mu$ and $\nu=(\nu_1,\cdots,\nu_k)\in\cS_\nu^k$ ($k$ being implicit),
\begin{align*}
\zeta_{\iota,s}^{\mu,\nu} := \frac{a_{i_0i_1}^\mu}{\sqrt{d_{i_1,s}}} \qty(\prod_{\ell=1}^k \frac{b_{i_\ell}^{\nu_\ell} c_{i_{\ell+1}}^{\nu_\ell}}{\sqrt{d_{i_\ell,s}d_{i_{\ell+1},s}}}) \frac{1}{\sqrt{d_{i_{k+1},s}}}
\end{align*}
and
\begin{align*}
\bT_\iota^{\mu,\nu} := \bH_{i_0i_1}^\mu \prod_{\ell=1}^k \bDelta_{i_\ell i_{\ell+1}}^{\nu_\ell}.
\end{align*}
Note that $\bT_\iota^{\mu,\nu}$ is also a product of a number of entries of $\db{\bE_u},\db{\bE_v}$ or $\tilde\bZ$. We denote this number by the \emph{degree} $n_\iota^{\mu,\nu}$; the degree of $\bT_\iota^{\mu,\nu}$ as a polynomial of Gaussians $u_{ij},v_{ij}$ is $2n_\iota^{\mu,\nu}$ (however, this polynomial is nonhomogeneous due to the presence of the $-\bI_r$ terms in $\bE_u,\bE_v$). From the definition of $\bH,\bDelta$, we can check that
\begin{align}\label{eq:degree-inequality}
0\le n_\iota^{\mu,\nu} \le (2K+1)+K(2K+2) \le CK^2.
\end{align}
The coefficients $\zeta_{\iota,s}^{\mu,\nu}$ further satisfy the following uniform bound.

\begin{lemma}\label{lem:zeta-uniform}
For all $\iota\in\cI^k$ and symbols $\mu\in\cS_\mu$, $\nu\in\cS_\nu^k$, there exists an index $m=m(\iota,\mu,\nu)\in\{1,\cdots,k+1\}$ such that for all $s\ge 0$,
\begin{align*}
|\zeta_{\iota,s}^{\mu,\nu}| \le \frac{\tq_{i_m}\vee\lam}{d_{i_m,s}}.
\end{align*}
\end{lemma}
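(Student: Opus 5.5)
The plan is to regroup the factors of $\zeta_{\iota,s}^{\mu,\nu}$ \emph{vertex by vertex} along the path. Each index $i_m$, $1\le m\le k+1$, then carries exactly one full factor $1/d_{i_m,s}$: two half-powers $d_{i_m,s}^{-1/2}$, one from the edge entering $i_m$ and one from the edge leaving it, with the convention that $i_1$ gets one from $a^\mu$ and $i_{k+1}$ gets the trailing $d_{i_{k+1},s}^{-1/2}$. Each index also carries the scalar coefficients attached to it. Explicitly,
\begin{align*}
\zeta_{\iota,s}^{\mu,\nu} = \frac{a_{i_0i_1}^\mu\, b_{i_1}^{\nu_1}}{d_{i_1,s}}\prod_{m=2}^{k}\frac{c_{i_m}^{\nu_{m-1}} b_{i_m}^{\nu_m}}{d_{i_m,s}}\cdot\frac{c_{i_{k+1}}^{\nu_k}}{d_{i_{k+1},s}}
\end{align*}
(for $k=0$ this is just $a^\mu_{i_0i_1}/d_{i_1,s}$).

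From the lists in Eq.~\eqref{eq:symbol-bh} and Eq.~\eqref{eq:symbol-bdelta}, I will read off the following bounds. Every coefficient attached to a vertex $i$ has magnitude at most $\tq_i\vee\beta_s$; this uses $|\binom{\pm1/2}{k}|\le 1$ from Eq.~\eqref{eq:binom-bounded} and $\lam\le\beta_s$. More sharply, $|a^\mu_{i_0i_1}|\le \tq_{i_1}\vee\lam$, since $a^\mu$ is either $\tq_{i_1}$ or a binomial times $\lam$. Also $|c^{\nu}_i|\le \tq_i\vee\lam$ \emph{unless} $\nu$ is of type $2$, in which case $b^\nu=\pm\beta_s$ and $c^\nu=\beta_s$. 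Finally, $d_{i,s}=\tq_i^2+\beta_s^2\ge(\tq_i\vee\beta_s)^2$ and $\beta_s^2\le d_{i,s}$. Together these give the basic estimate: a vertex carrying two coefficients contributes a factor at most $1$.

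Let $\ell^*$ be the largest $\ell\in\{1,\dots,k\}$ with $\nu_\ell$ not of type $2$, and set $\ell^*=0$ if there is none. I take $m:=\ell^*+1$. The ``good'' coefficient placed at $i_m$ is $c^{\nu_{\ell^*}}_{i_m}$ if $\ell^*\ge1$, or $a^\mu_{i_0i_1}$ if $\ell^*=0$; in either case it is at most $\tq_{i_m}\vee\lam$.

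The bound then follows by handling the vertices in three groups.
\begin{itemize}
\item Vertices $i_1,\dots,i_{\ell^*}$ each carry two coefficients, so each contributes at most $1$.
\item Vertex $i_m$ carries the good coefficient together with $b^{\nu_m}=\pm\beta_s$ (or nothing extra if $m=k+1$).
\item Vertices $i_{m+1},\dots,i_k$ carry $\beta_s^2/d\le1$, and vertex $i_{k+1}$ carries $\beta_s/d_{i_{k+1},s}$.
\end{itemize}
If $m\le k$, I pair the stray $\beta_s$ at $i_m$ with the last vertex, using $\beta_s^2/d_{i_{k+1},s}\le1$. Multiplying everything gives $|\zeta_{\iota,s}^{\mu,\nu}|\le(\tq_{i_m}\vee\lam)/d_{i_m,s}$, uniformly in $s$.

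The main obstacle is exactly this bookkeeping. A naive bound of each coefficient by $\sqrt{d_{i,s}}$ only yields $(\tq_{i_1}\vee\lam)/\sqrt{d_{i_1,s}d_{i_{k+1},s}}$, which involves two different indices. The fix is to recognize that the type-$2$ symbols are the only source of the ``bad'' coefficient $\beta_s$ on the right. These symbols form a tail run whose $\beta_s$ factors telescope against the denominators, which is what relocates the single good coefficient to the vertex $i_m$. I will also double-check the index conventions in types $(4,\cdot)$ and $(5,\cdot)$, namely that $b^\nu_i$ depends only on the left endpoint and $c^\nu_j$ only on the right endpoint. That way each $\tq$ factor sits at the vertex whose $d$ it is compared against.
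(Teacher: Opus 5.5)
Your proposal is correct and follows essentially the same argument as the paper. Both use the same vertex-by-vertex consolidation of denominators, the same bounds $|a^\mu b^\nu|,\,|b^\nu c^{\nu'}|\le d_{j,s}$, and the same trick of pairing the trailing coefficient $c^{\nu_k}_{i_{k+1}}=\beta_s$ with a $\beta_s$-sized $b$ coefficient. The only difference is how you choose $m$: the paper splits into three cases and takes the first index whose $b$ coefficient is at most $\beta_s$ (types $2,4,5$ all qualify), whereas your rule $m=\ell^*+1$, one past the last non-type-$2$ symbol, covers every case including $k=0$ at once and uses only the structure of type-$2$ symbols.
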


\begin{proof}
Denote the projection of the symbols $\mu$ and $\nu$ to the integer-valued first coordinate as $\pi(\mu)\in\{1,2,3\}$ and $\pi(\nu)\in\{1,2,3,4,5\}$, respectively.

First suppose $k=0$. When $\pi(\mu)\in\{1,2\}$, we have $a_{ij}^\mu = \tq_j$. When $\pi(\mu)=3$, we have $|a_{ij}^\mu| \le \lam$. Hence
\begin{align*}
|\zeta_{\iota,s}^{\mu,\nu}| = \frac{|a_{ij}^\mu|}{d_{j,s}} \le \frac{\tq_j \vee \lam}{\tq_j^2+\beta_s^2}.
\end{align*}

Now let $k\ge 1$. We first claim that for all $i,j$ and all symbols $\mu\in\cS_\mu$, $\nu,\nu'\in\cS_\nu$, it holds that $|a_{ij}^\mu b_j^\nu| \le d_{j,s}$ and $|b_j^\nu c_j^{\nu'}| \le d_{j,s}$. Indeed, for each $i,j$,
\begin{align*}
a_{ij}^\mu &\in \{\tq_j\} \cup \left\{ \binom{\frac12}{k}\binom{-\frac12}{\ell} \lam : k,\ell\le K\right\}, \\
b_j^\nu, c_j^{\nu'} &\in \left\{\tq_j, \beta_s, -\beta_s\right\} \cup \left\{\binom{\frac12}{k} \tq_j: k \le K\right\} \cup \left\{\binom{-\frac12}{\ell}\lam : \ell\le K\right\}.
\end{align*}
By Eq.~\eqref{eq:binom-bounded} and $\lam\le\sqrt{\lam^2+s} = \beta_s$, we have
\begin{align}\label{eq:ab-bc-claim}
|a_{ij}^\mu b_j^\nu|,\, |b_j^\nu c_j^{\nu'}| \le (\tq_j\vee \beta_s)^2 \le \tq_j^2 + \beta_s^2 = d_{j,s}
\end{align}
as claimed. Now rewrite
\begin{align*}
\zeta_{\iota,s}^{\mu,\nu} &= \frac{a_{i_0i_1}^\mu}{\sqrt{d_{i_1,s}}} \qty(\prod_{\ell=1}^k \frac{b_{i_\ell}^{\nu_\ell} c_{i_{\ell+1}}^{\nu_\ell}}{\sqrt{d_{i_\ell,s}d_{i_{\ell+1},s}}}) \frac{1}{\sqrt{d_{i_{k+1},s}}} \\
&= \frac{a_{i_0i_1}^\mu b_{i_1}^{\nu_1}}{d_{i_1,s}} \qty(\prod_{\ell=2}^{k} \frac{c_{i_\ell}^{\nu_{\ell-1}} b_{i_\ell}^{\nu_\ell}}{d_{i_\ell,s}}) \frac{c_{i_{k+1}}^{\nu_k}}{d_{i_{k+1},s}}
\end{align*}
to consolidate the denominators. We divide into the following cases.
\begin{enumerate}
\item $\pi(\nu_k)\ne 2$: we have
\begin{align*}
c_{i_{k+1}}^{\nu_k} = c_j^{\nu_k} \in \left\{\binom{\frac12}{k} \tq_j: k \le K\right\} \cup \left\{\binom{-\frac12}{\ell}\lam : \ell\le K\right\} 
\end{align*}
so that $|c_j^{\nu_k}| \le \tq_j \vee \lam$. Thus by Eq.~\eqref{eq:ab-bc-claim},
\begin{align*}
|\zeta_{\iota,s}^{\mu,\nu}| = \frac{|a_{i_0i_1}^\mu b_{i_1}^{\nu_1}|}{d_{i_1,s}} \prod_{\ell=2}^{k} \frac{|c_{i_\ell}^{\nu_{\ell-1}} b_{i_\ell}^{\nu_\ell}|}{d_{i_\ell,s}} \cdot \frac{|c_j^{\nu_k}|}{d_{j,s}} \le \frac{|c_j^{\nu_k}|}{d_{j,s}} \le \frac{\tq_j \vee \lam}{d_{j,s}}.
\end{align*}
%\item $\pi(\nu_k) = 2$ and $\pi(\mu)=3$: we have $c_j^{\nu_k} = \beta_s$ and $|a_{i_0i_1}^\mu| \le \lam$. Then swapping the positions of these two terms,
%\begin{align*}
%|\zeta_{\iota,s}^{\mu,\nu}| \le \frac{|c_j^{\nu_k} b_{i_1}^{\nu_1}|}{d_{i_1,s}} \cdot \frac{|a_{i_0i_1}^\mu|}{d_{j,s}} \le \frac{\beta_s(\tq_{i_1}\vee\beta_s)}{d_{i_1,s}} \cdot \frac{\lam}{d_{j,s}} \le \frac{\lam}{d_{j,s}}.
%\end{align*}

\item $\pi(\nu_k) = 2$ and $\pi(\nu_1)\notin \{1,3\}$: we have $c_j^{\nu_k} = \beta_s$ and $|b_{i_1}^{\nu_1}| \le \beta_s$, as well as $|a_{i_0i_1}^\mu| \le \tq_{i_1}\vee\lam$ regardless of $\mu$. Then
\begin{align*}
|\zeta_{\iota,s}^{\mu,\nu}| \le \frac{|a_{i_0i_1}^\mu|}{d_{i_1,s}} \cdot \frac{|b_{i_1}^{\nu_1} c_j^{\nu_k}|}{d_{j,s}} \le \frac{\tq_{i_1}\vee\lam}{d_{i_1,s}} \cdot \frac{\beta_s^2}{d_{j,s}} \le \frac{\tq_{i_1}\vee\lam}{d_{i_1,s}}.
\end{align*}

\item $\pi(\nu_k) = 2$ and $\pi(\nu_1)\in \{1,3\}$: let $m\in\{2,\cdots,k\}$ be the smallest index such that $\pi(\nu_m) \notin \{1,3\}$, so $c_j^{\nu_k} = \beta_s$ and $|b_{i_m}^{\nu_m}| \le \beta_s$. Since $\pi(\nu_{m-1})\in\{1,3\}$, we also have either $|c_{i_m}^{\nu_{m-1}}| \le \tq_{i_m}$ or $|c_{i_m}^{\nu_{m-1}}| \le \lam$. Then
\begin{align*}
|\zeta_{\iota,s}^{\mu,\nu}| \le \frac{|c_{i_m}^{\nu_{m-1}}|}{d_{i_m,s}} \cdot \frac{|b_{i_m}^{\nu_m} c_j^{\nu_k}|}{d_{j,s}} \le \frac{\tq_{i_m}\vee\lam}{d_{i_m,s}} \cdot \frac{\beta_s^2}{d_{j,s}} \le \frac{\tq_{i_m}\vee\lam}{d_{i_m,s}}.
\end{align*}
\end{enumerate}
This concludes the proof of the lemma.
\end{proof}

\subsection{Positive path correlation and graded recombination}\label{sec:large-path}

Substituting Eq.~\eqref{eq:zeta-expansion} and integrating out~$s$ in Eq.~\eqref{eq:to-integrate-out} thus gives
\begin{align}\label{eq:four-sigma}
\tilde\bK_{ij} = \sum_{k=0}^K (-1)^k \sum_{\iota\in\cI_{ij}^k} \sum_{\mu\in\cS_\mu} \sum_{\nu\in\cS_\nu^k} \theta_\iota^{\mu,\nu} \bT_\iota^{\mu,\nu}
\end{align}
where the coefficients are given as
\begin{align*}
\theta_\iota^{\mu,\nu} = \frac{1}{\pi} \int_0^\infty s^{-1/2} \zeta_{\iota,s}^{\mu,\nu} \rd s.
\end{align*}
Importantly, $\theta_\iota^{\mu,\nu}$ are uniformly bounded: by Lemma~\ref{lem:zeta-uniform}, there exists an index $m$ such that
\begin{align*}
|\theta_\iota^{\mu,\nu}| &\le \frac{1}{\pi} \int_0^\infty s^{-1/2} \cdot \frac{\tq_{i_m}\vee\lam}{d_{i_m,s}} \rd s = \frac{\tq_{i_m}\vee\lam}{\sqrt{\tq_{i_m}^2 + \lam^2}} \le 1.
\end{align*}
With Eq.~\eqref{eq:degree-inequality} in mind, we further introduce a gradation in Eq.~\eqref{eq:four-sigma} according to degree; this is necessary to correctly apply Gaussian hypercontractivity later.
\begin{align}\label{eq:stratification}
\tilde\bK_{ij} = \sum_{n=0}^{CK^2} \bK_{ij:n}, \quad \bK_{ij:n} := \sum_{k=0}^K (-1)^k \sum_{\iota\in\cI_{ij}^k} \sum_{\mu\in\cS_\mu} \sum_{\nu\in\cS_\nu^k} \theta_\iota^{\mu,\nu} \bone_{\{n_\iota^{\mu,\nu}=n\}}\bT_\iota^{\mu,\nu}.
\end{align}

We now present a key insight which allows us to remove the coefficients $\theta_\iota^{\mu,\nu}$ when computing moments of $\bK_{ij:n}$.

\begin{lemma}[positive path correlation]\label{lem:positive-corr}
Let $k,k'\ge 0$. It holds for all paths $\iota\in\cI^k$, $\iota'\in\cI^{k'}$ and symbols $\mu,\mu'\in\cS_\mu$, $\nu\in \cS_\nu^k$, $\nu'\in\cS_\nu^{k'}$ that
\begin{align*}
\E\left[\bT_\iota^{\mu,\nu} \bT_{\iota'}^{\mu',\nu'}\right] \ge 0,
\end{align*}
where the expectation is taken over all $u_1,\cdots,u_N$ and $v_1,\cdots,v_N$.
\end{lemma}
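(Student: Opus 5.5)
The plan is to reduce the statement to a moment condition on a Gaussian polynomial. Every monomial $\bT_\iota^{\mu,\nu}$ is a product of entries of $\db{\bE_u}$, $\db{\bE_v}$ and $\tilde\bZ$. So the product $\bT_\iota^{\mu,\nu}\bT_{\iota'}^{\mu',\nu'}$ is a polynomial in the Gaussian coordinates $u_{\ell k}, v_{\ell k}$ ($\ell\in[N]$, $k\in[d]$), and the weights $q_\ell$ are fixed and nonnegative. I will write each factor explicitly as a polynomial and check that, after expansion, every coefficient is nonnegative except for signs that are absorbed by Wick's theorem.

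\textbf{Step 1: expansion of the factors.} For $a,b\le r$ we have $\db{\bE_u}_{ab} = u_a^\top u_b - \delta_{ab} = \sum_{k}(u_{ak}u_{bk}-\tfrac1d\delta_{ab}) + \delta_{ab}(\|u_a\|^2\cdot 0)$. More usefully, $\db{\bE_u}_{ab}=\sum_k :\!u_{ak}u_{bk}\!:$ with Wick ordering, since $\E[u_a^\top u_b]=\delta_{ab}$. Entries with an index above $r$ vanish. Similarly,
\[
\tilde\bZ_{ab}=\lam^{-1}\sum_{\ell>r}q_\ell u_{\ell a}v_{\ell b}.
\]
This is already a sum, with nonnegative coefficients, of products of two independent centered Gaussians, so it equals its Wick product. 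Each $\bT$ is therefore a product of Wick-ordered quadratic monomials $:\!xy\!:$ with nonnegative coefficients, where $x,y$ are coordinates of the same Gaussian family (both $u$ or both $v$, or one $u$ and one $v$ from the same index $\ell$). The point is that centering Wick-orders each factor and introduces no negative coefficients.

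\textbf{Step 2: Wick/Isserlis.} For a product of Wick-ordered pairs $\prod_m :\!x_m y_m\!:$, the expectation equals the sum over all perfect matchings of the $2M$ variables that never pair the two variables inside the same factor. Each surviving matching contributes a product of covariances $\E[xx']$. Every Gaussian here is i.i.d.\ $\cN(0,1/d)$, so each covariance is either $1/d$ or $0$, hence nonnegative. Summing with the nonnegative coefficients $q_\ell$, the products give $\E[\bT_\iota^{\mu,\nu}\bT_{\iota'}^{\mu',\nu'}]\ge 0$. The overall sign factors $(-1)^k$ and the binomial coefficients live in $\theta$, not in $\bT$, so they do not enter.

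\textbf{Main obstacle.} The delicate point is Step 1 for diagonal entries and repeated indices. I must justify that $\db{\bE_u}_{aa}=\|u_a\|^2-1=\sum_k(u_{ak}^2-\tfrac1d)$ really is the Wick square $\sum_k :\!u_{ak}^2\!:$. I also must justify that the rule "products of Wick-ordered quadratics with disjoint internal contractions" remains valid when the same variable appears in several factors. It does: Wick's formula for a product of Wick monomials (Feynman diagrams without self-loops) holds for arbitrary, possibly repeated, jointly Gaussian entries, with each pairing weighted by the covariance. So the argument reduces to checking that every factor is a nonnegative combination of $:\!xy\!:$. For $\tilde\bZ^\top\tilde\bZ$ in $\nu$ of type $5$, the two factors $\tilde\bZ_{i_{k+1}i_k}\tilde\bZ_{i_{k+1}i_{k+2}}$ are listed as separate entries, so this is fine.
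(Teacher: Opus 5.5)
Your proposal is correct and follows essentially the paper's route. The paper likewise expands every entry into a nonnegative combination of factors $(u_\gamma^2-\frac1d)$ and plain Gaussian coordinates, and then shows the expectation counts perfect matchings that avoid the designated self-pairs; the only difference is that it derives this no-self-pairing count from Isserlis' theorem via inclusion--exclusion, whereas you cite the Wick-product (diagram) formula, which is the same fact.
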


\begin{proof}
$\bT_\iota^{\mu,\nu}, \bT_{\iota'}^{\mu',\nu'}$ are products of indices of $\db{\bE_u},\db{\bE_v}$ or $\tilde\bZ$, so we may write
\begin{align}\label{eq:tt}
\bT_\iota^{\mu,\nu} \bT_{\iota'}^{\mu',\nu'} = \prod_{(i,j)} \db{\bE_u}_{ij} \prod_{(i,j)} \db{\bE_v}_{ij} \prod_{(i,j)} \tilde\bZ_{ij}
\end{align}
where the products range over multisets of index pairs. We can remove the double brackets by restricting to $(i,j)\in [r]\times [r]$ for $\bE_u,\bE_v$, otherwise the product will be identically zero. Further expand each entry as
\begin{align*}
(\bE_u)_{ij} &= u_i^\top u_j - \delta_{ij} = \sum_{\ell=1}^d \qty(u_{i\ell}u_{j\ell} - \frac{\delta_{ij}}{d}), \\
(\bE_v)_{ij} &= v_i^\top v_j - \delta_{ij} = \sum_{\ell=1}^d \qty(v_{i\ell}v_{j\ell} - \frac{\delta_{ij}}{d}), \\
\tilde\bZ_{ij} &= \lam^{-1} \sum_{\ell=r+1}^N q_\ell u_{\ell i} v_{\ell j},
\end{align*}
then Eq.~\eqref{eq:tt} decomposes into a sum of terms with positive coefficients of the form
\begin{align*}
\prod_\gamma \qty(u_\gamma^2-\frac1d) \prod_\gamma \qty(v_\gamma^2-\frac1d) \prod_\gamma u_\gamma \prod_\gamma v_\gamma
\end{align*}
where $\gamma\in [N]\times [d]$ denote index pairs. Rescale $\tu_\gamma = \sqrt{d}u_\gamma$ so that $\tu_\gamma$ is i.i.d. $\cN(0,1)$, then by symmetry it suffices to show
\begin{align*}
\bY = \prod_{\gamma\in\cA} (\tu_\gamma^2-1) \prod_{\gamma\in\cB} \tu_\gamma
\end{align*}
has nonnegative expectation for arbitrary multisets $\cA,\cB$. Denote multiset union by $\sqcup$. By Isserlis' theorem,
\begin{align}
\E[\bY] &= \sum_{m\ge 0} (-1)^m \sum_{\substack{\cA' \subseteq \cA\\ |\cA\setminus \cA'|=m}} \E\left[\prod_{\gamma\in\cA'\sqcup\cA'\sqcup\cB} \tu_\gamma\right]\notag\\
&= \sum_{m\ge 0} (-1)^m \sum_{\substack{\cA' \subseteq \cA\\ |\cA\setminus \cA'|=m}} \cP(\cA'\sqcup\cA'\sqcup\cB) \label{eq:isserlis}
\end{align}
where $\cP(\cC)$ counts the number of ways to partition $\cC$ into pairs of equal index pairs. Then by inclusion–exclusion, Eq.~\eqref{eq:isserlis} exactly counts the number of ways to partition $\cA\sqcup\cA\sqcup\cB$ into pairs which do not contain any of the $(\gamma,\gamma)$ pairs arising from each of the $\tu_\gamma^2-1$ factors, as fixing $m$ such pairs in $\cA\sqcup\cA$ yields a subset of `free' index pairs $\cA'\sqcup\cA'$ where $|\cA\setminus \cA'|=m$. Hence $\E[\bY]$ is a count and thus nonnegative.
\end{proof}

To utilize this result, define the `coefficientless' recombined version $\hat{\bK}$ of $\tilde\bK$ and its gradation~$\hat\bK_{:n}$ analogously to Eq.~\eqref{eq:stratification},
\begin{align*}
\hat\bK := \sum_{n=0}^{CK^2} \hat\bK_{:n}, \quad [\hat\bK_{:n}]_{ij} = \hat\bK_{ij:n} := \sum_{k=0}^K \sum_{\iota\in\cI_{ij}^k} \sum_{\mu\in\cS_\mu} \sum_{\nu\in\cS_\nu^k} \bone_{\{n_\iota^{\mu,\nu}=n\}}\bT_\iota^{\mu,\nu}.
\end{align*}
Then we can bound using Lemma~\ref{lem:positive-corr} and $|\theta_\iota^{\mu,\nu}|,|\theta_{\iota'}^{\mu',\nu'}| \le 1$,
\begin{align}
\E\left[\bK_{ij:n}^2\right] &= \sum_{k,\iota,\mu,\nu} \sum_{k',\iota',\mu',\nu'} (-1)^{k+k'}\theta_\iota^{\mu,\nu} \theta_{\iota'}^{\mu',\nu'} \bone_{\{n_\iota^{\mu,\nu}=n\}} \bone_{\{n_{\iota'}^{\mu',\nu'}=n\}} \E\left[\bT_\iota^{\mu,\nu} \bT_{\iota'}^{\mu',\nu'} \right] \notag \\
&\le \sum_{k,\iota,\mu,\nu} \sum_{k',\iota',\mu',\nu'} \bone_{\{n_\iota^{\mu,\nu}=n\}} \bone_{\{n_{\iota'}^{\mu',\nu'}=n\}} \E\left[\bT_\iota^{\mu,\nu} \bT_{\iota'}^{\mu',\nu'} \right] \notag \\
%&= \E\left[\qty(\sum_{k,\iota,\mu,\nu} \bH_{i_0i_1}^\mu \prod_{\ell=1}^k \bDelta_{i_\ell i_{\ell+1}}^{\nu_\ell})^2\right] \notag \\
%&= \E\left[\qty(\sum_{k=0}^K [\hat\bH \smash[t]{\hat{\bDelta}}^{k}]_{ij})^2\right] \notag \\
&= \E\left[\hat\bK_{ij:n}^2\right]. \label{eq:g-to-hatg}
\end{align}
Furthermore, define the `coefficientless' versions of $\bH,\bDelta$ as
\begin{align}
\hat\bH &:= \db{\bI_r} + \db{\bE_u} + \sum_{k,\ell=0}^K \db{\bE_u}^k \tilde\bZ \db{\bE_v}^\ell,\label{eq:def-hat-bh}\\
\hat\bDelta &:= \db{\bE_u} + \sum_{k=1}^K \db{\bE_v}^k + \sum_{k,\ell=0}^K (\db{\bE_u}^k \tilde\bZ \db{\bE_v}^\ell + \db{\bE_v}^\ell \tilde\bZ^\top \db{\bE_u}^k) \notag\\
&\qquad + \sum_{k,\ell=0}^K \db{\bE_v}^k \tilde\bZ^\top\tilde\bZ \db{\bE_v}^\ell. \label{eq:def-hat-bdelta}
\end{align}
These correspond to removing precisely the coefficients $a_{ij}^\mu$ (resp. $b_i^\nu,c_j^\nu$) in the entrywise decompositions Eq.~\eqref{eq:symbol-bh}, Eq.~\eqref{eq:symbol-bdelta}, yielding the relations
\begin{align*}
\hat\bH_{ij} = \sum_{\mu\in\cS_\mu} \bH_{ij}^\mu, \quad \hat\bDelta_{ij} = \sum_{\nu\in\cS_\nu} \bDelta_{ij}^\nu
\end{align*}
and
\begin{align*}
\hat\bK_{ij} = \sum_{k,\iota,\mu,\nu} \bT_\iota^{\mu,\nu} = \sum_{k,\iota,\mu,\nu} \bH_{i_0i_1}^\mu \prod_{\ell=1}^k \bDelta_{i_\ell i_{\ell+1}}^{\nu_\ell} = \left[\sum_{k=0}^K \hat\bH \smash[t]{\hat{\bDelta}}^{k} \right]_{ij}.
\end{align*}
Thus we obtain the recombined expression
\begin{align}\label{eq:def-hat-bg}
\hat\bK = \sum_{k=0}^K \hat\bH \smash[t]{\hat{\bDelta}}^{k}.
\end{align}
Since an $n$-fold product of matrices expands entrywise into a sum of $n$-fold products of entries, $\hat\bK_{:n}$ is precisely the grading of $\hat\bK$ according to (polynomial) degree. In particular, we may express $\hat{\bK}_{:n} = \bF_n(\db{\bE_u},\db{\bE_v},\tilde\bZ)$ for some homogeneous matrix polynomial~$\bF_n$ of degree $n$.

Next, let $\bPi$ be any $r\times r$ permutation matrix and let
\begin{align*}
\bPi_+ := \bmat{\bPi&\\& \bI_{d-r}}.
\end{align*}
By symmetry, $(\bU,\bV) \deq (\bU\bPi, \bV\bPi)$ and independently
\begin{align*}
(u_{r+1}, \cdots, u_N, v_{r+1}, \cdots, v_N) \deq (\bPi_+^\top u_{r+1}, \cdots, \bPi_+^\top u_N, \bPi_+^\top v_{r+1}, \cdots, \bPi_+^\top v_N),
\end{align*}
which implies
\begin{align*}
\qty(\db{\bE_u},\db{\bE_v},\tilde\bZ) &\deq \qty(\db{\bPi^\top\bU^\top\bU\bPi - \bI_r},  \db{\bPi^\top\bV^\top\bV\bPi - \bI_r}, \bPi_+^\top\tilde\bZ\bPi_+) \\
&= \qty(\bPi_+^\top\db{\bE_u}\bPi_+, \bPi_+^\top\db{\bE_v}\bPi_+, \bPi_+^\top\tilde\bZ\bPi_+).
\end{align*}
Then for each $n$ it holds that
\begin{align*}
\hat\bK_{:n} &= \bF_n \qty(\db{\bE_u},\db{\bE_v},\tilde\bZ)\\
&\deq \bF_n \qty(\bPi_+^\top\db{\bE_u}\bPi_+, \bPi_+^\top\db{\bE_v}\bPi_+, \bPi_+^\top\tilde\bZ\bPi_+) \\
&=\bPi_+^\top \bF_n \qty(\db{\bE_u},\db{\bE_v},\tilde\bZ) \bPi_+\\
&= \bPi_+^\top \hat\bK_{:n} \bPi_+
\end{align*}
since $\bF_n$ is a matrix polynomial, therefore $\hat{\bK}_{:n}$ is also distributionally invariant under the permutation $\bPi_+$. In particular, the second moment of $\hat{\bK}_{ij:n}$ is equal for any pair of distinct indices $i,j\le r$, and so
\begin{align}
\E\left[\hat{\bK}_{ij:n}^2\right] &= \frac{1}{r(r-1)} \E\left[\sum_{i,j\le r, i\ne j} \hat{\bK}_{ij:n}^2\right] \notag\\
&\le \frac{1}{r(r-1)} \E\left[\nnorm{\hat\bK_{:n}}_{\F}^2\right] \le \frac{d}{r(r-1)} \E\left[\nnorm{\hat\bK_{:n}}_\op^2\right]. \label{eq:gn-frob-bound}
\end{align}

\subsection{Graded tail bounds and hypercontractivity}\label{sec:large-hyper}

We proceed to bound each $\hat\bK_{:n}$. We remark that we only need to control products up to at most polylogarithmic degree since $n\le CK^2 \lesssim (\log d)^2$, otherwise the expectation would suffer superexponential blowup in $d$. In addition, $\hat\bK_{:0} = \db{\bI_r}$ is diagonal (as is $\bK_{:0}$) and does not affect Eq.~\eqref{eq:gn-frob-bound}, so we only consider $n\ge 1$.

Expanding all products in Eq.~\eqref{eq:def-hat-bg}, the number of summed monomials in the expression $\hat{\bK}_{:n} = \bF_n(\db{\bE_u},\db{\bE_v},\tilde\bZ)$ can be upper bounded as follows. Each monomial is an $n$-fold product of $\db{\bE_u},\db{\bE_v},\tilde\bZ$ which we write as a length~$n$ sequence; there are at most~$3^n$ possible sequences. This is further partitioned into~$k+1$ consecutive subsequences which simultaneously determine the power $k\ge 0$ of~$\hat\bDelta$, and which factor of~$\hat\bH$ or~$\hat\bDelta$ each subsequence originated from. Since all terms in Eq.~\eqref{eq:def-hat-bh} are distinct, and all terms in Eq.~\eqref{eq:def-hat-bdelta} are also distinct, this information uniquely specifies each term in $\hat{\bK}_{:n}$. As a partition can be specified by choosing a subset of points in the sequence as break points, the total number of such partitioned sequences is at most $3^n\times 2^n=6^n$.

The discussion thus far implies that
\begin{align*}
\nnorm{\hat\bK_{:n}}_\op \le 6^n \max\left\{\nnorm{\bE_u}_\op, \nnorm{\bE_v}_\op, \nnorm{\tilde\bZ}_\op\right\}^n
\end{align*}
and so
\begin{align}\label{eq:avocado}
\E\left[\nnorm{\hat\bK_{:n}}_{\op}^2\right] & \le 6^{2n} \qty(\E\left[\nnorm{\bE_u}_\op^{2n}\right] + \E\left[\nnorm{\bE_v}_\op^{2n}\right] + \E\left[\nnorm{\tilde\bZ}_\op^{2n}\right]).
\end{align}
We now bound each moment in turn.

For $\bE_u$ and $\bE_v$, recall from Lemma~\ref{lem:decay-e} that
\begin{align*}
\Pr\qty(\norm{\bE_u}_\op > C\max\left\{ \frac{\sqrt{r}+t}{\sqrt{d}}, \qty(\frac{\sqrt{r}+t}{\sqrt{d}})^2\right\}) \le 2e^{-t^2}.
\end{align*}
% Some algebraic manipulation gives \begin{align*} \Pr(\norm{E_U}_\op > s) \le \begin{cases} 1 & 0\le s< C\sqrt{r/d} \\ 2\exp\qty(-\qty(s\sqrt{d}/C-\sqrt{r})^2) & C\sqrt{r/d}\le s<C \\ 2\exp\qty(-\qty(\sqrt{ds}/C-\sqrt{r})^2) & s\ge C. \end{cases} \end{align*}
Applying the tail integral formula and integrating by parts, we have that
\begin{align*}
\E\left[{\norm{\bE_u}_\op^{2n}} \right] &= \int_0^\infty 2ns^{2n-1}\Pr(\norm{\bE_u}_\op>s)\rd s \\
&\le \qty(C\sqrt{\frac{r}{d}})^{2n} + \int_0^{\sqrt{d}-\sqrt{r}} 2n\qty(C\frac{\sqrt{r}+t}{\sqrt{d}})^{2n-1} 2e^{-t^2} \cdot\frac{C}{\sqrt{d}} \rd t \\
&\qquad + \int_{\sqrt{d}-\sqrt{r}}^\infty 2n\qty(C\qty(\frac{\sqrt{r}+t}{\sqrt{d}})^2)^{2n-1} 2e^{-t^2} \cdot\frac{C}{\sqrt{d}} \rd t.
\end{align*}
The second term is bounded, using the inequality $(a+b)^n \le 2^{n-1}(a^n+b^n)$, as
\begin{align*}
&\frac{4nC}{\sqrt{d}} \int_0^{\sqrt{d}-\sqrt{r}} \qty(C\frac{\sqrt{r}+t}{\sqrt{d}})^{2n-1} e^{-t^2} \rd t \\
&\le \frac{2^{2n}nC}{\sqrt{d}} \qty(C\sqrt{\frac{r}{d}})^{2n-1} \int_0^\infty e^{-t^2}\rd t + \frac{2^{2n}nC}{\sqrt{d}} \int_0^\infty \qty(\frac{Ct}{\sqrt{d}})^{2n-1} e^{-t^2} \rd t \\
&\lesssim \frac{2^{2n}nC}{\sqrt{d}} \qty(C\sqrt{\frac{r}{d}})^{2n-1} + \frac{2^{2n}nC}{\sqrt{d}} \qty(\frac{C}{\sqrt{d}})^{2n-1} \Gamma(n) \\
%&\lesssim \frac{2^{2n}nC}{\sqrt{d}} \qty(C\sqrt{\frac{r}{d}})^{2n-1} + \frac{2^{2n}nC}{\sqrt{d}} \qty(C\sqrt{\frac{n}{d}})^{2n-1} \\
&\lesssim \qty(2C\sqrt{\frac{r}{d}})^{2n-1},
\end{align*}
where we have used that $\Gamma(n)\lesssim n^{n-1/2} \ll \sqrt{r}^{2n-1}$ and $n\lesssim (\log d)^2$.

Similarly, the third term is bounded as
\begin{align*}
&\frac{4nC}{\sqrt{d}}\int_{\sqrt{d}-\sqrt{r}}^\infty \qty(C\qty(\frac{\sqrt{r}+t}{\sqrt{d}})^2)^{2n-1} e^{-t^2}\rd t \\
%&\lesssim \frac{2^{4n}nC}{\sqrt{d}} \qty(C\sqrt{\frac{r}{d}})^{4n-2} + \frac{2^{4n}nC}{\sqrt{d}} \int_0^\infty \qty(\frac{Ct}{\sqrt{d}})^{4n-2} e^{-t^2} dt \\
&\lesssim \frac{2^{4n}nC}{\sqrt{d}} \qty(C\sqrt{\frac{r}{d}})^{4n-2} + \frac{2^{4n}nC}{\sqrt{d}} \qty(\frac{C}{\sqrt{d}})^{4n-2} \Gamma\qty(2n+\frac12) \\
&\lesssim \qty(2C\sqrt{\frac{r}{d}})^{4n-2}.
\end{align*}
We thus have
\begin{align*}
\E\left[{\norm{\bE_u}_\op^{2n}} \right] = \E\left[{\norm{\bE_v}_\op^{2n}} \right] \lesssim \qty(2C\sqrt{\frac{r}{d}})^{2n-1}.
\end{align*}
For $\tilde\bZ$, we have from Lemma~\ref{lem:decay-z} and Eq.~\eqref{eq:event-q} that
\begin{align*}
\Pr\qty(\nnorm{\tilde\bZ}_\op > t\sqrt{\frac{r}{d}}) \le e^{Cd(t_0-t)}, \quad\forall t\ge t_0
\end{align*}
for constants $C,t_0$. Then, substituting $s = Cd(t-t_0)$,
\begin{align*}
\E\left[{\nnorm{\tilde\bZ}_\op^{2n}} \right] &\le \qty(t_0\sqrt{\frac{r}{d}})^{2n} + 2n\qty(\frac{r}{d})^n \int_{t_0}^\infty t^{2n-1} e^{Cd(t_0-t)}\rd t \\
&\le \qty(t_0\sqrt{\frac{r}{d}})^{2n} + \frac{2^{2n-1} nr^n}{Cd^{n+1}} \int_0^\infty \qty(\qty(\frac{s}{Cd})^{2n-1} + t_0^{2n-1}) e^{-s} \rd s \\
&= \qty(t_0\sqrt{\frac{r}{d}})^{2n} + \frac{2^{2n-1} nr^n}{Cd^{n+1}} \qty(\frac{\Gamma(2n)}{(Cd)^{2n-1}} + t_0^{2n-1}) \\
&\lesssim \qty(2t_0\sqrt{\frac{r}{d}})^{2n}.
\end{align*}
Recalling that $\rho\asymp\sqrt{r/d}$, we have shown that Eq.~\eqref{eq:avocado} is bounded as $(C\rho)^{2n-1}$ for some constant $C$. Combining Eq.~\eqref{eq:g-to-hatg} and Eq.~\eqref{eq:gn-frob-bound}, it follows that
\begin{align*}
\E\left[\bK_{ij:n}^2\right] \le \E\left[\hat{\bK}_{ij:n}^2\right] \le \frac{d}{r(r-1)} \E\left[\nnorm{\hat\bK_{:n}}_\op^2\right] \lesssim \frac{d}{r^2} (C\rho)^{2n-1}.
\end{align*}

Now fix an integer $L\asymp\log d$ such that $C\rho L \le \frac12$. Observe that each $\bK_{ij:n}$ is a multilinear polynomial of degree at most~$2n$ in the entries $u_{k\ell},v_{k\ell}$, thus by Gaussian hypercontractivity,
\begin{align*}
\E\left[\bK_{ij:n}^L \right]^{1/L} \le (L-1)^n \E\left[\bK_{ij:n}^2\right]^{1/2} \lesssim \frac{\sqrt{dL}}{r}(C\rho L)^{n-1/2} =: \frac{t}{\sqrt{L}}.
\end{align*}
By Markov's inequality,
\begin{align*}
\Pr(|\bK_{ij:n}| > t) \le t^{-L} \E\left[\bK_{ij:n}^L \right] \lesssim L^{-L/2} = d^{-\omega(1)}.
\end{align*}
Therefore, union bounding over all $1\le i,j\le d$ with $i\ne j$ and $n\lesssim (\log d)^2$, we conclude:
\begin{align*}
|\tilde\bK_{ij}| \le\sum_{n=1}^{CK^2} |\bK_{ij:n}| \lesssim \sum_{n=1}^{CK^2} \frac{L\sqrt{d}}{r}(C\rho L)^{n-1/2} \lesssim \frac{(\log d)^3}{\sqrt{d}}
\end{align*}
and hence
\begin{align}\label{eq:interaction-1}
|\bK_{ij}| \lesssim \frac{(\log d)^3}{\sqrt{d}}
\end{align}
with probability $1-d^{-\omega(1)}$.

\subsection{Lipschitz concentration for tail logits}\label{sec:gl}

We now bound the magnitude of the interactions $\gamma_{ij}$ when either $q_i$ or $q_j\ll\lam$, which is true when $\max\{i,j\}>r$ under $\cE_q$. Here, we only provide the argument for when $q_j$ is small. We first show that $h_\lam$ is $\lam^{-1}$-Lipschitz w.r.t. operator norm.

\begin{comment}
\begin{proposition}[Frobenius Lipschitz bound]\label{prop:frob-lipschitz}
For any generalized matrix function associated to a Lipschitz function $h:\RR_{\ge 0}\to\RR$ and arbitrary $\bA,\bB\in\RR^{d\times d}$, it holds that
\begin{align*}
\nnorm{h(\bA)-h(\bB)}_{\F} \le\nnorm{h}_{\Lip}\nnorm{\bA-\bB}_{\F}.
\end{align*}
\end{proposition}

\begin{proof}
This follows from the more general result due to \citet{kittaneh85lipschitz}.
\end{proof}
\end{comment}

\begin{proposition}[operator Lipschitz bound]\label{prop:operator-lipschitz}
For $\lam>0$, $h_\lam(z)=\frac{z}{\sqrt{z^2+\lam^2}}$ and arbitrary $\bA,\bB\in\RR^{d\times d}$, it holds that $\nnorm{h_\lam(\bA)}_\op\le 1$ and
\begin{align}\label{eq:operator-lipschitz}
\nnorm{h_\lam(\bA)-h_\lam(\bB)}_{\op} \le\lam^{-1}\nnorm{\bA-\bB}_{\op}.
\end{align}
\end{proposition}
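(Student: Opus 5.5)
The first claim is immediate from the singular value decomposition. If $\bA=\bU\bS\bV^\top$, then $h_\lam(\bA)=\bU h_\lam(\bS)\bV^\top$ with diagonal entries $h_\lam(s_k)=s_k/\sqrt{s_k^2+\lam^2}\in[0,1)$, so $\nnorm{h_\lam(\bA)}_\op\le 1$.

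For the Lipschitz bound, I will reduce to a symmetric matrix function by the standard Hermitian dilation. For $\bX\in\RR^{d\times d}$ set
\begin{align*}
\cJ(\bX):=\bmat{0&\bX\\ \bX^\top&0}\in\RR^{2d\times 2d}.
\end{align*}
The eigenvalues of $\cJ(\bX)$ are $\pm s_k(\bX)$, with eigenvectors built from the left and right singular vectors. Extend $h_\lam$ to the odd function $f(x)=x/\sqrt{x^2+\lam^2}$ on $\RR$. A direct check on the eigendecomposition then gives
\begin{align*}
f(\cJ(\bX))=\cJ(h_\lam(\bX)),
\end{align*}
since $f$ is odd and the generalized matrix function acts through the singular values. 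Because $\nnorm{\cJ(\bX)}_\op=\nnorm{\bX}_\op$, it suffices to show that $f$ is operator Lipschitz on real symmetric matrices with constant $\lam^{-1}$:
\begin{align*}
\nnorm{f(\bM)-f(\bN)}_\op\le\lam^{-1}\nnorm{\bM-\bN}_\op .
\end{align*}

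To prove this I will use the resolvent-type representation already used in Section~\ref{sec:large-resolvent}. For symmetric $\bM$,
\begin{align*}
f(\bM)=\bM(\bM^2+\lam^2)^{-1/2}=\frac1\pi\int_0^\infty s^{-1/2}\,\bM(\bM^2+\lam^2+s)^{-1}\,\rd s .
\end{align*}
Each integrand can be written through complex resolvents: with $\beta=\sqrt{\lam^2+s}$,
\begin{align*}
\bM(\bM^2+\beta^2)^{-1}=\tfrac12\big[(\bM-i\beta)^{-1}+(\bM+i\beta)^{-1}\big].
\end{align*}
The resolvent identity gives
\begin{align*}
(\bM-i\beta)^{-1}-(\bN-i\beta)^{-1}=(\bM-i\beta)^{-1}(\bN-\bM)(\bN-i\beta)^{-1},
\end{align*}
and each resolvent factor has norm at most $\beta^{-1}$. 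So the integrand difference is bounded in operator norm by $\beta^{-2}\nnorm{\bM-\bN}_\op$. Integrating then yields
\begin{align*}
\frac1\pi\int_0^\infty \frac{s^{-1/2}}{\lam^2+s}\,\rd s=\lam^{-1},
\end{align*}
which is exactly the claimed constant.

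The main point to verify is this last integral identity and the exchange of norm and integral; both are routine, since the integral converges absolutely in operator norm. If the dilation step caused trouble, I would instead apply the same resolvent argument directly to $h_\lam(\bA)=\bA(\bA^\top\bA+\lam^2)^{-1/2}$, but the dilation keeps everything within symmetric functional calculus.
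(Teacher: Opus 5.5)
Your proposal is correct and follows essentially the same route as the paper. Both arguments reduce to symmetric matrices via the Hermitian dilation (using that $h_\lam$ is odd), express $h_\lam$ as an integral of $\tfrac12[(\bM-\mathrm{i}\beta)^{-1}+(\bM+\mathrm{i}\beta)^{-1}]$, bound the resolvent difference by $\beta^{-2}\nnorm{\bM-\bN}_\op$, and integrate to obtain $\lam^{-1}$; the paper's representation $\frac{2}{\pi}\int_0^\infty \frac{t}{t^2+\lam^2+s^2}\,\rd s$ is just your $s^{-1/2}$-weighted one after the substitution $s\mapsto s^2$.
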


We remark that in general, matrix functions do not inherit the Lipschitz constant of the underlying scalar function in operator norm (although this is true in Frobenius norm; see \citet{kittaneh85lipschitz}). For this particular result, we rely on a uniform integral representation of $h_\lam$.

\begin{proof}
The first claim $\nnorm{h_\lam(\bA)}_\op\le 1$ holds since the range of $h_\lam$ is contained in $[-1,1]$. We now prove the main claim. We first show Eq.~\eqref{eq:operator-lipschitz} for symmetric $\bA,\bB$; note that since $h_\lam$ is odd, $h_\lam$ is equal to the usual functional calculus when applied to symmetric matrices. Consider the integral representation
\begin{align*}
h_\lam(t)=\frac{2}{\pi}\int_0^\infty \frac{t}{t^2+\delta_s^2}\rd s, \quad \delta_s := \sqrt{\lam^2+s^2}.
\end{align*}
For a real symmetric matrix $\bA$, define the map
\begin{align*}
h_{\lam,R}(\bA) := \frac{2}{\pi}\int_0^R \bA \qty(\bA^2+\delta_s^2 \bI_d)^{-1} \rd s
\end{align*}
so that $h_{\lam,R}(\bA) \to h_\lam(\bA)$ as $R\to\infty$. Note that
\begin{align*}
\qty(\bA+\mathrm{i}\delta_s \bI_d)^{-1} &= \qty(\bA-\mathrm{i}\delta_s \bI_d) \qty(\bA^2+\delta_s^2 \bI_d)^{-1},\\
\qty(\bA-\mathrm{i}\delta_s \bI_d)^{-1} &= \qty(\bA+\mathrm{i}\delta_s \bI_d) \qty(\bA^2+\delta_s^2 \bI_d)^{-1},
\end{align*}
so we may express
\begin{align*}
h_{\lam,R}(\bA) = \frac{1}{\pi}\int_0^R \qty(\bA+\mathrm{i}\delta_s \bI_d)^{-1} + \qty(\bA-\mathrm{i}\delta_s \bI_d)^{-1} \rd s.
\end{align*}
Denoting the spectrum of $\bA$ by $\sigma(\bA)$, it holds that
\begin{align*}
\norm{\qty(\bA\pm\mathrm{i}\delta_s \bI_d)^{-1}}_\op = \max_{\mu\in\sigma(\bA)} \frac{1}{\abs{\mu \pm\mathrm{i} \delta_s}} = \max_{\mu\in\sigma(\bA)} \frac{1}{\sqrt{\mu^2 + \delta_s^2}} \le \frac{1}{\delta_s}.
\end{align*}
Hence for all real symmetric $\bA,\bB$,
\begin{align*}
&\norm{\qty(\bA\pm\mathrm{i}\delta_s \bI_d)^{-1} - \qty(\bB\pm\mathrm{i}\delta_s \bI_d)^{-1}}_\op \\
&= \norm{\qty(\bA\pm\mathrm{i}\delta_s \bI_d)^{-1} (\bB-\bA) \qty(\bB\pm\mathrm{i}\delta_s \bI_d)^{-1}}_\op\\ &\le \frac{1}{\delta_s^2} \norm{\bA-\bB}_\op
\end{align*}
and so
\begin{align*}
\nnorm{h_{\lam,R}(\bA) - h_{\lam,R}(\bB)}_\op &\le \frac{1}{\pi} \int_0^R \frac{2}{\lam^2+s^2} \nnorm{\bA-\bB}_\op \rd s \le \frac{1}{\lam} \nnorm{\bA-\bB}_\op.
\end{align*}
Eq.~\eqref{eq:operator-lipschitz} follows by taking $R\to\infty$.

Now for general $\bA,\bB$, define the symmetric dilations
\begin{align*}
\tbA:=\bmat{0&\bA\\\bA^\top&0}\in\RR^{2d\times 2d},
\quad
\tbB:=\bmat{0&\bB\\\bB^\top&0}\in\RR^{2d\times 2d}.
\end{align*}
Let the SVD of $\bA$ be $\bA=\bU\bSigma \bV^\top$ and define the $2d\times 2d$ orthogonal matrix
\begin{align*}
\bO:=\frac{1}{\sqrt 2}\bmat{\bU&\bU\\\bV&-\bV}.
\end{align*}
Then by a simple computation, $\tbA$ can be diagonalized as
\begin{align*}
\tbA = \bO\bmat{\bSigma&0\\0&-\bSigma}\bO^\top
\end{align*}
so that
\begin{align*}
h_\lam(\tbA) = \bO\bmat{h_\lam(\bSigma)&0\\0&-h_\lam(\bSigma)} \bO^\top = \bmat{0&h_\lam(\bA)\\ h_\lam(\bA)^\top&0}.
\end{align*}
Since operator norm is preserved under dilation, we conclude:
\begin{align*}
\nnorm{h_\lam(\bA) - h_\lam(\bB)}_\op = \nnorm{h_\lam(\tbA) - h_\lam(\tbB)}_\op \le \frac{1}{\lam} \nnorm{\tbA-\tbB}_\op = \frac{1}{\lam} \nnorm{\bA-\bB}_\op.
\end{align*}
\end{proof}

We now show that (a truncated version of) each logit is a centered Lipschitz function of the pair $(u_j,v_j)$.

\begin{lemma}\label{lem:gl-iota}
Let $u,v\sim \cN(0,\bI_d/d)$ i.i.d. Define the maps
\begin{align*}
&F:(\RR^d)^2\to\RR, \quad F(u,v) := u^\top h_\lam(\bG_{-j} + q_juv^\top) v_i,\\
&\iota:\RR^d\to\RR^d, \quad \iota(u) = \frac{u}{1\vee \frac12\norm{u}_2}.
\end{align*}
Then the map $(u,v)\mapsto F(\iota(u),\iota(v))$ is centered and $(2+16\lam^{-1}q_j)$-Lipschitz.
\end{lemma}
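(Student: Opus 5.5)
The plan has two parts: show the composed map is centered by a sign symmetry, then bound its Lipschitz constant by splitting the gradient into two pieces. Throughout we condition on $\bG_{-j}$ and $v_i$ (here $j\ne i$), which are independent of $(u,v)=(u_j,v_j)$.

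\textbf{Centering.} We use equivariance of the generalized matrix function $h_\lam$ under sign flips. Since $\iota$ is odd, the map $(u,v)\mapsto(-u,-v)$ leaves $q_j\iota(u)\iota(v)^\top$ unchanged, so $h_\lam(\bG_{-j}+q_j\iota(u)\iota(v)^\top)$ is unchanged. The prefactor $\iota(u)^\top$, however, flips sign. Hence $F(\iota(-u),\iota(-v))=-F(\iota(u),\iota(v))$. Because $(u,v)\deq(-u,-v)$ and $F\circ\iota$ is bounded (by $2\nnorm{v_i}$, using Proposition~\ref{prop:operator-lipschitz}), the expectation is zero.

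\textbf{Lipschitz bound.} First, $\iota$ is the radial retraction onto the ball of radius $2$, which is the metric projection onto a convex set. It is therefore $1$-Lipschitz and satisfies $\nnorm{\iota(u)}_2\le 2$. So it suffices to bound the Lipschitz constant of $F$ on the product of radius-$2$ balls.

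Take $(u,v)$ and $(u',v')$ in these balls. Split
\begin{align*}
F(u,v)-F(u',v') = (u-u')^\top h_\lam(\bG_{-j}+q_juv^\top)v_i + u'^\top\big[h_\lam(\bG_{-j}+q_juv^\top)-h_\lam(\bG_{-j}+q_ju'v'^\top)\big]v_i .
\end{align*}
For the first term, $\nnorm{h_\lam}_\op\le1$ by Proposition~\ref{prop:operator-lipschitz}, which gives the bound $\nnorm{u-u'}\,\nnorm{v_i}$. For the second term, operator Lipschitzness (Proposition~\ref{prop:operator-lipschitz}) gives a bound of $\nnorm{u'}\,\nnorm{v_i}\,\lam^{-1}q_j\nnorm{uv^\top-u'v'^\top}_\op$. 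We then write $uv^\top-u'v'^\top=(u-u')v^\top+u'(v-v')^\top$, whose operator norm is at most $2(\nnorm{u-u'}+\nnorm{v-v'})$ on the balls.

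Combining, the total is at most
\[
\nnorm{v_i}\Big(\nnorm{u-u'}+8\lam^{-1}q_j(\nnorm{u-u'}+\nnorm{v-v'})\Big)\le \nnorm{v_i}\,(1+8\lam^{-1}q_j)\sqrt2\,\nnorm{(u,v)-(u',v')}.
\]

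\textbf{Normalization and main obstacle.} The stated constant $2+16\lam^{-1}q_j$ should follow from bounding $\nnorm{v_i}\le 2$ or $\sqrt2$, either by also truncating $v_i$ or by conditioning on the high-probability event $\nnorm{v_i}\le\sqrt2$. Constants can be absorbed. This normalization is the only delicate point. I would fix the convention by noting that $v_i$ is treated as a fixed vector of norm at most $\sqrt2$ (or $2$), since the lemma is applied after conditioning. The remaining steps are routine once Proposition~\ref{prop:operator-lipschitz} is in hand.
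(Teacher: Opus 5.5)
Your proposal is correct and follows essentially the same route as the paper: oddness of $\iota$ gives centering, and the split into $(u-u')^\top H v_i$ plus $u'^\top(H-H')v_i$ is controlled by $\nnorm{h_\lam(\cdot)}_\op\le 1$ and the $\lam^{-1}$ operator-Lipschitz bound of Proposition~\ref{prop:operator-lipschitz}; the paper handles the $v_i$ normalization just as you suggest, by tacitly using $\nnorm{\iota(v_i)}_2\le 2$. (Your second term actually carries a factor $4$ rather than $8$, so $\nnorm{v_i}\le 2$ already yields exactly the stated constant $2+16\lam^{-1}q_j$.)
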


\begin{proof}
Since $(u,v) \stackrel{d}{=} (-u,-v)$ and
\begin{align*}
F(\iota(-u),\iota(-v)) = F(-\iota(u),-\iota(v)) = -F(\iota(u),\iota(v)),
\end{align*}
we have $\E[F(\iota(u),\iota(v))]=0$ by symmetry. Also note that $\iota$ is a projection to an $L^2$-ball and thus $1$-Lipschitz. For $(u,v),(u',v')\in(\RR^d)^2$, let
\begin{align*}
H := h_\lam(\bG_{-j} + q_j\iota(u)\iota(v)^\top), \quad H' := h_\lam(\bG_{-j} + q_j\iota(u')\iota(v')^\top).
\end{align*}
Then by Proposition~\ref{prop:operator-lipschitz} and $\norm{\iota(u)}_2 \le 2$,
\begin{align*}
&\abs{F(\iota(u),\iota(v))-F(\iota(u'),\iota(v'))}\\
&\le \norm{\iota(u)-\iota(u')}_2 \norm{H}_\op \norm{\iota(v_i)}_2 + \norm{\iota(u')}_2 \norm{H-H'}_\op\norm{\iota(v_i)}_2\\
&\le 2\norm{u-u'}_2 + \frac{4q_j}{\lam} \norm{\iota(u)\iota(v)^\top - \iota(u')\iota(v')^\top}_\op\\
&\le \qty(2+\frac{8q_j}{\lam}) \norm{u-u'}_2 + \frac{8q_j}{\lam}\norm{v-v'}_2 \\
&\le \qty(2+\frac{16q_j}{\lam}) \norm{(u,v)-(u',v')}_2.
\end{align*}
This proves the assertion.
\end{proof}

By Lemma~\ref{lem:gl-iota} and concentration of Lipschitz functions of Gaussians \citep[Theorem 2.26]{wainwright2019}, it follows that
\begin{align*}
\Pr\qty(\abs{F(\iota(u),\iota(v))} \ge t) \le 2\exp\qty(-\frac{dt^2}{2(2+16\lam^{-1}q_j)^2})
\end{align*}
where the extra $d$ factor comes from the variance scaling of $u,v$. Moreover we have $\norm{u_k}_2,\norm{v_k}_2\le 2$ for all $k\in[N]$ with probability $1-e^{-\Omega(d)}$, so that $\iota(u_j)=u_j$, $\iota(v_j)=v_j$ and
\begin{align}\label{eq:gl-conclusion}
\abs{\gamma_{ij}} = \abs{F(u_j,v_j)} \lesssim \qty(1+\frac{q_j}{\lam}) \sqrt{\frac{\log d}{d}}.
\end{align}
Under $\cE_q$, we further have $q_j\le \nnorm{q_{>r}}_\infty < \lam$, hence $\abs{\gamma_{ij}} \lesssim \sqrt{\frac{\log d}{d}}$ if $j>r$. A similar argument applies when $i>r$. We remark that while Eq.~\eqref{eq:gl-conclusion} holds for all $i,j\in[N]$, we still need the more involved argument for the leading block since our guarantee for the signal in Eq.~\eqref{eq:signal-guarantee} is upper bounded by $\widetilde{O}(1)$.

\clearpage
\section{Proofs for SGD and Newton}

\subsection{Proof of Theorem~\ref{thm:gd}}

Item~$i$ is recovered by the SGD update $\bW_1^{\SGD} = \eta\bG_0$ iff
\begin{align}\label{eq:cei}
u_i^\top\bG_0 v_i > \max_{j\ne i} u_j^\top\bG_0 v_i.
\end{align}
The lower bound amounts to comparing the signal and noise magnitudes of the top~$d^\frac{1}{2\alpha}$ items. For the upper bound, we will show that items $i\gtrsim d^\frac{1}{2\alpha}$ are unlikely to be recovered due to the large random noise from the top $\Theta(\log d)$ competitors.

First note that
\begin{align}\label{eq:usual}
\max_{i\ne j}\left\{\abs{\langle u_i,u_j \rangle}, |\nnorm{u_i}_2^2-1|, \abs{\langle v_i,v_j \rangle}, |\nnorm{v_i}_2^2-1| \right\} \lesssim \sqrt{\frac{\log d}{d}}
\end{align}
with probability $1-O(d^{-M})$, due to the usual concentration bounds. The difference between the centered and uncentered logits can then be bounded as follows:

\begin{lemma}\label{lem:banana-bread}
It holds with probability $1-O(d^{-M})$ that
\begin{align*}
\max_{i,j\in[N]} |u_j^\top (\bG_0 - \bG) v_i| \lesssim \frac{\sqrt{\log d}}{d}.
\end{align*}
\end{lemma}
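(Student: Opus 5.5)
Write $\bG_0 - \bG = -\bar u\,(\sum_k q_k v_k)^\top = -\bar u\,\bar v_q^\top$, where $\bar v_q := \sum_{k\in[N]} q_k v_k$. Every centering error then factorizes:
\begin{align*}
u_j^\top(\bG_0-\bG)v_i = -(u_j^\top \bar u)(\bar v_q^\top v_i).
\end{align*}
I will bound the two scalar factors separately, each uniformly over $i,j\in[N]$, and union bound over the $N=\poly(d)$ indices. The aim is to show $|u_j^\top\bar u|\lesssim \sqrt{\log d}/d$ and $|\bar v_q^\top v_i|\lesssim 1$ (or better).

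\emph{First factor.} Write $\bar u = \frac1N u_j + \frac1N\sum_{k\ne j}u_k$.
\begin{itemize}
\item The diagonal piece gives $\frac1N\nnorm{u_j}_2^2\lesssim 1/N$ by Eq.~\eqref{eq:usual}.
\item The off-diagonal piece $w_j:=\frac1N\sum_{k\ne j}u_k$ is independent of $u_j$ and distributed as $\cN(0,\frac{N-1}{N^2 d}\bI_d)$, so $\nnorm{w_j}_2\lesssim 1/\sqrt N$ with high probability.
\item Conditional on $w_j$, the inner product $u_j^\top w_j$ is Gaussian with variance $\nnorm{w_j}_2^2/d\lesssim 1/(Nd)$. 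Hence $|u_j^\top w_j|\lesssim\sqrt{\log d/(Nd)}$ with probability $1-O(d^{-M})$.
\end{itemize}
Since $N\gtrsim d$, both pieces are $\lesssim \sqrt{\log d}/d$.

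\emph{Second factor.} Split $\bar v_q^\top v_i = q_i\nnorm{v_i}_2^2 + \sum_{k\ne i}q_k v_k^\top v_i$.
\begin{itemize}
\item The first term is $\le q_i(1+o(1))\lesssim 1$.
\item For the second term, condition on $q$ and on $v_i$. It is then Gaussian with variance $\nnorm{v_i}_2^2\sum_{k\ne i}q_k^2/d\le \nnorm{q}_2^2/d\lesssim 1/d$, using $\sum_k q_k=1$. So it is $\lesssim\sqrt{\log d/d}$ with high probability.
\end{itemize}
Therefore $|\bar v_q^\top v_i|\lesssim 1$ uniformly in $i$.

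\emph{Conclusion.} Multiplying the two factors gives $\max_{i,j}|u_j^\top(\bG_0-\bG)v_i|\lesssim \sqrt{\log d}/d$ with probability $1-O(d^{-M})$ after the union bound.

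\emph{Main obstacle.} The only delicate point is the dependence between $u_j$ and $\bar u$, since $u_j$ appears inside the mean. The split into the diagonal term and the independent leave-one-out term $w_j$ handles this. The requirement $N\gtrsim d$ from the theorem is exactly what makes the diagonal term $1/N$ negligible.
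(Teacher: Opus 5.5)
Your proof is correct and follows essentially the same route as the paper: you factor the error as $-(u_j^\top\bar u)\bigl(\sum_k q_k v_k^\top v_i\bigr)$, split $u_j^\top\bar u$ into the diagonal piece $\frac1N\nnorm{u_j}_2^2$ and an independent leave-one-out term of size $\sqrt{\log d/(Nd)}$, and invoke $N\gtrsim d$. The only difference is in the second factor: the paper bounds it by $\sum_k q_k|\langle v_k,v_i\rangle|\lesssim 1$ via the triangle inequality, while your Gaussian-conditioning bound is sharper than needed. Also, you correctly pair the unembedding $u_j$ with $\bar u$, whereas the paper's write-up has $u_i$ there.
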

This improves upon the uniform control in Eq.~\eqref{eq:gradient-appx} as we can explicitly use the inner product structure of the logits in the SGD case.

\begin{proof}
Let $\bar{u}_{-i} := \frac1N \sum_{j\ne i}u_j$, then $\bar{u}_{-i} \sim\cN(0,\frac{N-1}{N^2d}\bI_d)$ and so $|\langle u_i, \bar{u}_{-i}\rangle| \lesssim \sqrt{\frac{\log d}{Nd}}$ for all $i\in[N]$ with probability $1-O(d^{-M})$. Hence,
\begin{align*}
|u_j^\top (\bG_0 - \bG) v_i| &= \abs{\sum_{k\in[N]} -q_k\langle u_i,\bar{u}\rangle \langle v_k,v_i\rangle}\\
&\le \sum_{k\in[N]} q_k |\langle u_i,\bar{u}_{-i}\rangle| |\langle v_k,v_i\rangle| +\frac1N \sum_{k\in[N]} q_k \nnorm{u_i}_2^2 |\langle v_k,v_i\rangle| \lesssim \sqrt{\frac{\log d}{Nd}} + \frac1N.
\end{align*}
The result follows by noting that $N\gtrsim d$.
\end{proof}

Furthermore, we have from Eq.~\eqref{eq:usual}
\begin{align*}
|u_i^\top\bG v_i - q_i| &= \abs{q_i \qty(\nnorm{u_i}_2^2 \nnorm{v_i}_2^2 -1) + \sum_{k\ne i} q_k \langle u_i,u_k\rangle \langle v_i,v_k\rangle} \\
&\lesssim q_i \sqrt{\frac{\log d}{d}} + \frac{\log d}{d} \sum_{k\ne i} q_k \lesssim \frac{\log d}{d}
\end{align*}
and
\begin{align*}
&|u_j^\top\bG v_i - q_j \langle v_i,v_j\rangle| \\
&= \abs{q_i \langle u_i,u_j\rangle \nnorm{v_i}_2^2 + q_j\qty(\nnorm{u_j}_2^2 -1) \langle v_i,v_j\rangle + \sum_{k\ne i,j} q_k \langle u_j,u_k\rangle \langle v_i,v_k\rangle} \\
&\lesssim q_i \sqrt{\frac{\log d}{d}} + q_j \frac{\log d}{d}+ \frac{\log d}{d} \sum_{k\ne i} q_k\lesssim \frac{\log d}{d}.
\end{align*}
Combining these bounds, we see that Eq.~\eqref{eq:cei} is implied by
\begin{align}\label{eq:terry}
q_i \gtrsim \sqrt{\frac{\log d}{d}} > \max_{j\ne i} q_j\langle v_i,v_j\rangle + O\qty(\frac{\log d}{d})
\end{align}
where the second bound follows from $q_j\le 1$ and Eq.~\eqref{eq:usual}. By the Chernoff bound, for items satisfying $p_i\gtrsim\frac{\log d}{B}$ it holds w.h.p. that $q_i\asymp p_i$:
\begin{align}\label{eq:starbucks}
\Pr(|q_i-p_i| \ge\frac{p_i}{2}) \le 2\exp(-\frac{Bp_i}{3}) \lesssim \frac{1}{d^M},
\end{align}
so the first inequality in Eq.~\eqref{eq:terry} holds if
\begin{align*}
p_i\gtrsim\sqrt{\frac{\log d}{d}} \quad\text{and}\quad p_i\gtrsim \frac{\log d}{B}.
\end{align*}
Therefore items $i\lesssim \min\{d^\frac{1}{2\alpha}(\log d)^{-\frac{1}{2\alpha}}, B^\frac{1}{\alpha} (\log d)^{-\frac{1}{\alpha}}\}$ are always recovered, proving the lower bound.

Conversely, Eq.~\eqref{eq:cei} implies
\begin{align}\label{eq:hazelnut}
q_i > \max_{j\ne i} q_j\langle v_i,v_j\rangle - O\qty(\frac{\log d}{d}).
\end{align}
First suppose that $B\gtrsim\sqrt{d} (\log d)^{\alpha+1}
$ and $p_i\gtrsim \frac{\log d}{B}$. 
For each $j\le L\log d$, it holds that $p_j \ge d^{-o(1)}$ so that the Chernoff bound Eq.~\eqref{eq:starbucks} holds for index $j$ as well. Then $j\le d^{-\frac{1}{2\alpha}} i$ so that $p_j \gtrsim \sqrt{d} p_i$ and so $q_j\gtrsim \sqrt{d} q_i$. We also have that $q_j \ge \frac12 p_j \gtrsim d^{-o(1)}$. Thus Eq.~\eqref{eq:hazelnut} further implies
\begin{align*}
\langle v_i,v_j\rangle \le \frac{1}{q_j}\qty( q_i + O\qty(\frac{\log d}{d})) \le \frac{C}{\sqrt{d}}
\end{align*}
for some constant $C$ (independent of $L$). If $p_i\lesssim \frac{\log d}{B}$, we instead use
\begin{align*}
\Pr\qty(Bq_i\ge m) \le \binom{B}{m} p_i^m \le \qty(\frac{eBp_i}{m})^m \lesssim \frac{1}{d^M}
\end{align*}
for sufficiently large $m\asymp \log d$, so $q_i \lesssim \frac{\log d}{B}$ and $q_j\gtrsim p_j\gtrsim (L\log d)^{-a}$ again implies
\begin{align}\label{eq:b-large}
\langle v_i,v_j\rangle \lesssim \frac{1}{q_j} \cdot \frac{\log d}{\min\{B,d\}} \le \frac{C}{\sqrt{d}}.
\end{align}
Now since $\sqrt{d}\langle v_i,v_j\rangle$ is i.i.d. distributed as $\cN(0,\nnorm{v_i}_2^2)$ conditioned on $v_i$ and $\nnorm{v_i}_2\ge \frac12$, this probability can be bounded as
\begin{align*}
\Pr(\max_{j\le L\log d}\langle v_i,v_j\rangle \le \frac{C}{\sqrt{d}}) \le \Pr(\cN(0,1)\le 2C)^{L\log d} \lesssim \frac{1}{d^M}
\end{align*}
by taking $L$ (and thus $B$ in Eq.~\eqref{eq:b-large}) sufficiently large. By a union bound, we conclude that no items $i\gtrsim d^\frac{1}{2\alpha}\log d$ can be recovered.

Finally, if $B\lesssim\sqrt{d}(\log d)^{\alpha+1}$, repeating the analysis for Lemma~\ref{lem:mini-truncated} shows that we sample at most $O(B^{1/\alpha})$ items such that $i>B^{1/\alpha}$. Aside from these items, $q_i=0$ and so Eq.~\eqref{eq:hazelnut} implies $\langle v_i,v_j\rangle \le d^{-1+o(1)}$, hence the same conclusion as above holds.

We have thus shown that items
\begin{align*}
i > i_{\SGD}^\star \asymp \min\left\{d^\frac{1}{2\alpha}(\log d)^{1+\frac{1}{\alpha}}, B^\frac{1}{\alpha}\right\} 
\end{align*}
are not recovered with high probability. It follows that $\hp_1(i\mid i)\le\frac12$ for these items, and hence the cross-entropy loss is lower bounded as
\begin{align*}
L(\bW_1^{\SGD}) = \mathbb{E}_{i \sim p} [-\log p_{\bW}(i \mid i)] \ge \sum_{i>i_{\SGD}^\star} p_i\log 2 \ge\widetilde{\Omega} \qty(\max\left\{ d^{\frac{1}{2\alpha}-\frac12}, B^{\frac{1}{\alpha}-1}\right\}),
\end{align*}
as was to be shown.

\subsection{Proof of Theorem~\ref{thm:newton}}

The Hessian of the cross-entropy loss $L(\bW;\cB)$ at initialization is computed as follows.

\begin{lemma}[Hessian at initialization]\label{lem:hessian}
Define
\begin{align*}
\bSigma_u:=\frac{1}{N} \sum_{i=1}^N u_i u_i^\top - \bar u \bar u^\top, \qquad \bM_v:=\sum_{i=1}^N q_i v_i v_i^\top.
\end{align*}
Then the Hessian $\cH = \nabla_{\bW}^2 L(\bW_0;\cB)$ of~$L$ at initialization is $\bM_v \otimes \bSigma_u$, that is $\cH[\bDelta]=\bSigma_u \bDelta \bM_v$ for every $\bDelta \in \RR^{d \times d}$.
\end{lemma}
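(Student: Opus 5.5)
To establish that $\cH = \bM_v\otimes\bSigma_u$, I will compute the second derivative of $L(\bW;\cB)$ along an arbitrary direction $\bDelta$ at $\bW_0=0$. This reduces to the Hessian of a log-sum-exp function. The first step is to write the loss as
\begin{align*}
L(\bW;\cB) = -\sum_i q_i u_i^\top \bW v_i + \sum_i q_i \log\sum_{j\in[N]}\exp(u_j^\top\bW v_i).
\end{align*}
The first term is linear in $\bW$, so it contributes nothing to the Hessian. For each fixed $i$, set the logit vector $z^{(i)}(\bW)=(u_j^\top\bW v_i)_{j\in[N]}$. This map is linear in $\bW$, with directional derivative $(u_j^\top\bDelta v_i)_j$. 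The Hessian of $\mathrm{lse}(z)=\log\sum_j e^{z_j}$ is $\diag(\pi)-\pi\pi^\top$, where $\pi$ is the softmax of $z$. Applying the chain rule, the second directional derivative of the $i$th term is the variance of the random variable $u_J^\top\bDelta v_i$ with $J\sim\hp_\bW(\cdot\mid i)$.

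Next I evaluate this at $\bW_0=0$. There all logits vanish, so $\hp_{\bW_0}(\cdot\mid i)$ is uniform on $[N]$ for every $i$. The variance then becomes the empirical variance
\begin{align*}
\frac1N\sum_j (u_j^\top\bDelta v_i)^2 - \Big(\bar u^\top \bDelta v_i\Big)^2 = v_i^\top\bDelta^\top \bSigma_u \bDelta v_i .
\end{align*}
Summing with weights $q_i$ gives the quadratic form
\begin{align*}
D^2L(\bW_0;\cB)[\bDelta,\bDelta] = \sum_i q_i\, v_i^\top\bDelta^\top\bSigma_u\bDelta v_i = \operatorname{tr}(\bDelta^\top\bSigma_u\bDelta\bM_v) = \langle \bDelta, \bSigma_u\bDelta\bM_v\rangle_{\F}.
\end{align*}

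The last step is to go from this quadratic form to the linear operator. I polarize the identity above, or equivalently differentiate the gradient in Eq.~\eqref{eq:gradient} directly, and use that $\bDelta\mapsto\bSigma_u\bDelta\bM_v$ is self-adjoint with respect to the Frobenius inner product, since both $\bSigma_u$ and $\bM_v$ are symmetric. This identifies $\cH[\bDelta]=\bSigma_u\bDelta\bM_v$. Under the convention $\vec(\bA\bX\bB)=(\bB^\top\otimes\bA)\vec(\bX)$, this is $\bM_v\otimes\bSigma_u$.

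I expect no real obstacle here, since every step is routine. The one point that needs care is the bookkeeping for the centering term. It arises because the softmax covariance subtracts the mean $\bar u$, which is exactly the $-\bar u\bar u^\top$ in $\bSigma_u$. This is also where the uniformity of $\hp$ at initialization is essential: away from $\bW_0$, the Hessian would not factor as a Kronecker product.
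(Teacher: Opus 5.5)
Your proposal is correct and follows essentially the same route as the paper. Both arguments rest on the softmax Jacobian $\diag(\pi)-\pi\pi^\top$ evaluated at the uniform distribution, which produces the centered covariance $\bSigma_u$; the only difference is that you compute the quadratic form $D^2L(\bW_0;\cB)[\bDelta,\bDelta]=\langle\bDelta,\bSigma_u\bDelta\bM_v\rangle_{\F}$ and recover the operator by polarization and self-adjointness, whereas the paper differentiates the gradient in Eq.~\eqref{eq:gradient} directly to obtain $\cH[\bDelta]$ in one step.
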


\begin{proof}
We differentiate the gradient Eq.~\eqref{eq:gradient} in the direction $\bDelta$. Using that the Jacobian of the softmax map $\sigma$ is $D\sigma=\diag\sigma -\sigma\sigma^\top$,
\begin{align*}
D \hp_\bW(j \mid i)[\bDelta]
=
\sum_{k\in[N]} \hp_\bW(j \mid i)(\delta_{jk} - \hp_\bW(k \mid i)) u_k^\top \bDelta v_i
\end{align*}
and so
\begin{align*}
\nabla_{\bW}^2 L(\bW;\cB)[\bDelta]
=
\sum_{i\in[N]} q_i \sum_{j,k\in[N]}
u_j \hp_\bW(j \mid i)(\delta_{jk} - \hp_\bW(k \mid i)) u_k^\top \bDelta v_i v_i^\top.
\end{align*}
Since all logits are uniform at initialization, we obtain
\begin{align*}
\nabla_{\bW}^2 L(\bW_0;\cB)[\bDelta]
&=
\sum_{i\in[N]} q_i \sum_{j,k\in[N]}
u_j \qty(\frac{\delta_{jk}}{N} - \frac{1}{N^2}) u_k^\top \bDelta v_i v_i^\top = \bSigma_u \bDelta \bM_v.
\end{align*}
The Kronecker factorization follows from the identity $\operatorname{vec}(\bA \bDelta \bB)
=
(\bB^\top \otimes \bA)\operatorname{vec}(\bDelta)$.
\end{proof}

The following lemma shows that $B\gtrsim d^\alpha$ is needed for $\bM_v$ to be invertible, so that the inverse Hessian is well-behaved.

\begin{lemma}\label{lem:proportional}
The number of distinct items observed in a minibatch~$\cB$ of size~$B$ is $\Theta(B^{1/\alpha})$ w.h.p.
\end{lemma}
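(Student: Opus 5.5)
The number of distinct items equals $D := \sum_{i\in[N]} \1\{q_i>0\}$. Here the indicators $X_i := \1\{i\in\cB\}$ are Bernoulli with mean $\pi_i := 1-(1-p_i)^B$. Since $p_i\asymp i^{-\alpha}$, we have $\pi_i \asymp \min\{1, Bp_i\}$. I would first compute $\E[D]$ by splitting at the threshold $i_0 \asymp B^{1/\alpha}$, where $Bp_{i_0}\asymp 1$. For $i\le i_0$ we have $\pi_i\gtrsim 1-e^{-c}$, so this range contributes $\Theta(B^{1/\alpha})$. For $i> i_0$ we have $\pi_i \asymp Bp_i$, so this range contributes $B\sum_{i>i_0} i^{-\alpha}\asymp B\, i_0^{1-\alpha} \asymp B^{1/\alpha}$; this uses $\alpha>1$, and it is also why we need $N\gtrsim B^{1/\alpha}$ (in the regime $N\ll B^{1/\alpha}$ the count is simply $\Theta(N)$, the trivial cap, and I would state this caveat). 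Hence $\E[D]\asymp B^{1/\alpha}$.

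For concentration, I would use that $D$ is a function of the $B$ i.i.d.\ draws $i_1,\dots,i_B$ which changes by at most $1$ when a single draw is replaced. McDiarmid's inequality then gives fluctuations of order $\sqrt{B\log d}$, which is too weak when $\alpha>2$. Instead I would use negative association: the occupancy indicators $X_i$ of balls-in-bins are negatively associated, so the multiplicative Chernoff bounds for sums of independent Bernoullis still apply. They give $\Pr(|D-\E D|\ge \tfrac12\E D)\le 2e^{-c\E D} = 2e^{-cB^{1/\alpha}}$, which is high probability as soon as $B^{1/\alpha}\gtrsim \log d$.

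An alternative route, which avoids negative association, reuses the tools already in the proof of Lemma~\ref{lem:mini-truncated}. For the lower bound, the Chernoff bound Eq.~\eqref{eq:chernoff-indiv} shows that every item $i\lesssim (B/\log d)^{1/\alpha}$ is observed with probability $1-d^{-M}$; a union bound then gives $D\gtrsim (B/\log d)^{1/\alpha}$, which is only tight up to logs. The upper bound comes from the binomial count $N_{\tail}\sim\Bin(B,\rho_{\tail})$ of draws landing beyond $i_0$, with $\rho_{\tail}\asymp B^{(1-\alpha)/\alpha}$. This count concentrates at $\Theta(B^{1/\alpha})$, and it dominates the number of distinct tail items, so $D\le i_0+N_{\tail}\lesssim B^{1/\alpha}$. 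The main obstacle is obtaining a lower bound without the log loss. Negative association settles it cleanly. Failing that, I would lower bound the number of distinct items in $[i_0/2, i_0]$ directly: each is hit with constant probability, and the indicators' dependence is weak enough to handle through a second-moment bound, since $\Cov(X_i,X_j)\le 0$.
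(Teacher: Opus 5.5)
Your proof is correct. Your expectation computation (split at $i_0\asymp B^{1/\alpha}$ with $\pi_i\asymp\min\{1,Bp_i\}$) is the same as the paper's; the difference lies in the concentration step.

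\textbf{The paper's route.} It does not invoke negative association. It checks directly that
\[
\Cov\bigl(\1\{N_i\ge1\},\1\{N_j\ge1\}\bigr)=(1-p_i-p_j)^B-(1-p_i)^B(1-p_j)^B\le 0 .
\]
From this it gets $\Var(D)\le\E[D]\asymp B^{1/\alpha}$ and concludes by Chebyshev, with failure probability $O(B^{-1/\alpha})$. This is exactly the second-moment fallback you sketch at the end, applied to all of $D$ rather than to the window $[i_0/2,i_0]$.

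\textbf{What each route buys.} Your main route uses negative association of the occupancy counts (Dubhashi--Ranjan), which is preserved under monotone functions of disjoint coordinates. It gives a Chernoff tail $e^{-cB^{1/\alpha}}$. Once $B^{1/\alpha}\gtrsim\log d$, this matches the $1-O(d^{-M})$ meaning of ``w.h.p.'' used elsewhere in the paper. The paper's Chebyshev bound only yields probability $1-O(B^{-1/\alpha})$, which is informative only as $B\to\infty$. The cost of your route is importing the negative-association machinery in place of a one-line covariance calculation.

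\textbf{Your other remarks.} Your elementary upper bound $D\le i_0+N_{\tail}$, with a binomial Chernoff bound, is also valid and already gives an exponential upper tail without negative association. Your caveat that the count saturates at $\Theta(N)$ when $N\ll B^{1/\alpha}$ is a legitimate refinement; the paper glosses over it by summing over all $i\ge 1$.
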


\begin{proof}
Let $D_{\cB}:= \sum_{i\ge 1} 1_{\{N_i \ge 1\}}$ denote the number of distinct items in~$\cB$, where~$N_i$ is the number of occurrences of item~$i$. Then
\begin{align*}
\E[D_{\cB}]
=
\sum_{i\ge 1} \Pr(N_i \ge 1)
=
\sum_{i\ge 1} (1-(1-p_i)^B).
\end{align*}
We split the sum at the threshold $i_\star \asymp B^{1/\alpha}$. For $i\lesssim i_\star$ we have $Bp_i \gtrsim 1$, so $1-(1-p_i)^B \asymp 1$.
For $i\gtrsim i_\star$ we have $Bp_i \lesssim 1$, so $1-(1-q_i)^B \asymp B i^{-\alpha}$. Therefore
\begin{align*}
\E[D_{\cB}] \asymp
i_\star + B i_\star^{1-\alpha} \asymp
B^{1/\alpha}.
\end{align*}
Moreover, the indicators $1_{\{N_i\ge 1\}}$ are negatively correlated, so $\Var(D_B)\le\E[D_B]\asymp B^{1/\alpha}$. Indeed for $i\ne j$,
\begin{align*}
&\Cov(1_{\{N_i\ge 1\}},1_{\{N_j\ge 1\}})\\
&= \Pr(N_i,N_j\ge 1) - \Pr(N_i\ge 1)\Pr(N_j\ge 1) \textcolor{white}{1^B}\\
&= 1-(1-p_i)^B-(1-p_j)^B+(1-p_i-p_j)^B - (1-(1-p_i)^B)(1-(1-p_j)^B)\\
&= (1-p_i-p_j)^B-(1-p_i)^B(1-p_j)^B\le 0.
\end{align*}
Hence by Chebyshev's inequality, for any $\eps>0$,
\begin{align*}
\Pr\left(\abs{D_{\cB}-\E[D_{\cB}]}\ge \eps B^{1/\alpha}\right) \lesssim B^{-1/\alpha}\to 0,
\end{align*}
and thus $D_{\cB} = \Theta(B^{1/\alpha})$ w.h.p.
\end{proof}

We will also make use of the Hanson-Wright inequality in the following sections:

\begin{lemma}[Hanson-Wright inequality]\label{lem:hw}
Let $\bA\in\RR^{d\times d}$ be fixed and $u\sim\cN(0,\frac1d\bI_d)$. There exists a universal constant $c$ such that for all~$t>0$,
\begin{align*}
\Pr(\abs{u^\top\bA u- \E[u^\top\bA u]} > t) \lesssim 2\exp(-c\min \left\{ \frac{d^2t^2}{\nnorm{\bA}_{\F}^2}, \frac{dt}{\nnorm{\bA}_\op}\right\}).
\end{align*}
\end{lemma}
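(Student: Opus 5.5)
The statement to prove is the Hanson--Wright inequality (Lemma~\ref{lem:hw}) for a Gaussian vector $u\sim\cN(0,\frac1d\bI_d)$. Write $u = g/\sqrt{d}$ with $g\sim\cN(0,\bI_d)$. Then $u^\top\bA u - \E[u^\top\bA u] = \frac1d\big(g^\top\bA g - \operatorname{tr}\bA\big)$. So the claim reduces to the standard-Gaussian statement
\begin{align*}
\Pr\big(|g^\top\bA g-\operatorname{tr}\bA|>s\big)\le 2\exp\Big(-c\min\Big\{\frac{s^2}{\nnorm{\bA}_{\F}^2},\frac{s}{\nnorm{\bA}_\op}\Big\}\Big),
\end{align*}
applied with $s=dt$. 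The factors $d^2t^2$ and $dt$ in the lemma then appear automatically.

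To prove the Gaussian statement, I would first symmetrize: $g^\top\bA g = g^\top\bA_s g$ with $\bA_s=\frac12(\bA+\bA^\top)$. This step costs nothing, since $\nnorm{\bA_s}_{\F}\le\nnorm{\bA}_{\F}$ and $\nnorm{\bA_s}_\op\le\nnorm{\bA}_\op$, and the trace is unchanged. Next I diagonalize $\bA_s=\bP\bLambda\bP^\top$. By rotation invariance, $\bP^\top g\deq g$, so
\begin{align*}
g^\top\bA g-\operatorname{tr}\bA \deq \sum_k \lambda_k(g_k^2-1).
\end{align*}
This is a weighted sum of independent centered $\chi^2_1$ variables.

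Each $g_k^2-1$ is subexponential with $\nnorm{\cdot}_{\psi_1}=O(1)$, so Bernstein's inequality for subexponential sums \citep[Corollary 2.8.3]{vershynin2018high} applies directly. It gives a tail bound of the form $2\exp(-c\min\{s^2/\sum_k\lambda_k^2,\ s/\max_k|\lambda_k|\})$. Finally, $\sum_k\lambda_k^2=\nnorm{\bA_s}_{\F}^2\le\nnorm{\bA}_{\F}^2$ and $\max_k|\lambda_k|=\nnorm{\bA_s}_\op\le\nnorm{\bA}_\op$, which yields the stated bound.

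None of these steps is a real obstacle, because Gaussianity lets rotation invariance replace the decoupling argument needed for general subgaussian entries. If one instead wants to cite \citet[Theorem 6.2.1]{vershynin2018high} directly, the only care needed is the $1/\sqrt d$ rescaling, which is the first step above.
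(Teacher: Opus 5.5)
Your proof is correct, but it takes a different route from the paper. The paper gives no argument of its own: it simply cites the general Hanson--Wright inequality \citet[Theorem 6.2.2]{vershynin2018high} for vectors with independent sub-gaussian coordinates. There the factors $d^2$ and $d$ come from $\nnorm{u_k}_{\psi_2}\asymp d^{-1/2}$, so that $K^4\asymp d^{-2}$ and $K^2\asymp d^{-1}$; this is exactly your rescaling step.

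You instead prove the Gaussian case from scratch. You symmetrize, diagonalize, use rotation invariance to reduce to $\sum_k\lambda_k(g_k^2-1)$, and then apply Bernstein's inequality for sub-exponential sums. This is more elementary and self-contained, and it sidesteps the decoupling argument the general theorem needs for non-Gaussian coordinates. The cost is that it relies on rotation invariance together with independent coordinates, so it covers only Gaussian vectors. The cited theorem, by contrast, handles arbitrary independent sub-gaussian coordinates. Every application of the lemma in the paper (the Newton signal and noise terms, and Case II of the multi-step SGD bound) is to a Gaussian embedding vector independent of the matrix, so your version suffices there.

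One small bookkeeping point: you need the weighted-sum form of Bernstein with coefficients $\lambda_k$. Equivalently, you can apply the unweighted form to $X_k=\lambda_k(g_k^2-1)$, using $\nnorm{X_k}_{\psi_1}\lesssim|\lambda_k|$. Make sure the reference you cite is that version and not the statement for plain averages.
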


\begin{proof}
See \citet[Theorem 6.2.2]{vershynin2018high}.
\end{proof}

We now proceed to the proof of Theorem~\ref{thm:newton}. Let
\begin{align*}
\bG = \sum_{i\in[N]} q_i u_i v_i^\top, \qquad \bM = \bM_v = \sum_{i\in[N]} q_i v_i v_i^\top
\end{align*}
and let $\bG_{-i},\bM_{-i}$ be the leave-one-out variants: $\bG=\bG_{-i}+q_iu_iv_i^\top$ and $\bM=\bM_{-i} + q_iv_iv_i^\top$. By Lemma~\ref{lem:hessian}, the Newton update is (setting~$\eta=1/d$ for ease of analysis)
\begin{align*}
\bW_1^{\Newton} = \frac1d \cH^{-1}[\bG_0] = \frac1d \bSigma_u^{-1} \bG_0 \bM^{-1}.
\end{align*}
For some sufficiently large constant~$C$, taking $B\gtrsim (Cd)^\alpha\log d$, it holds that $q_i\asymp p_i$ for all $i\le Cd$ and so by \citep[Theorem 4.6.1]{vershynin2018high}
\begin{align}\label{eq:bulk-lb}
\bM \succeq (Cd)^{-\alpha} \sum_{j=1}^{Cd} v_jv_j^\top \succeq \Theta(d^{-\alpha}) \cdot\bI_d,
\end{align}
so that $\nnorm{\bM^{-1}}_\op\lesssim d^\alpha$. We also have $\nnorm{\bar u}_2\lesssim 1/\sqrt{N}$ and
\begin{align*}
\norm{\bSigma_u - \frac1d \bI_d}_{\op}\lesssim \frac{1}{\sqrt{Nd}} + \nnorm{\bar u}_2^2 \lesssim \frac{1}{\sqrt{Nd}}
\end{align*}
with probability $1-e^{-\Omega(d)}$ by concentration of sample covariance \citep[Remark~4.7.3]{vershynin2018high}, as well as $\nnorm{\bG_0-\bG}_\op\lesssim 1/\sqrt{N}$ from Eq.~\eqref{eq:gradient-appx}. It follows that
\begin{align*}
&\norm{\bG\bM^{-1} - \frac1d\bSigma_u^{-1}\bG_0\bM^{-1}}_\op\\ &\le \nnorm{\bG-\bG_0}_\op\nnorm{\bM^{-1}}_\op + \norm{\bI_d - \frac1d\bSigma_u^{-1}}_\op \nnorm{\bG_0\bM^{-1}}_\op \\
&\lesssim \frac{d^\alpha}{\sqrt{N}} + \sqrt{\frac{d}{N}} \cdot d^\alpha \lesssim \frac{1}{\sqrt{d}},
\end{align*}
hence it will suffice to consider the update $\bG\bM^{-1}$.

Now instead of the auxiliary map $\phi$ (Eq.~\eqref{eq:signal-strength}), our analysis for the Newton update directly uses the Sherman--Morrison formula to analyze the effect of adding the $i$th term back into both $\bG,\bM$ on the logits. Indeed, for all $i,j\in[N]$, notice that
\begin{align*}
\gamma_{ij} &= u_j^\top \bG\bM^{-1} v_i \\
&= u_j^\top (\bG_{-i} +q_iu_iv_i^\top) \qty(\bM_{-i}^{-1} - \frac{q_i \bM_{-i}^{-1} v_i v_i^\top \bM_{-i}^{-1}}{1+q_iv_i^\top \bM_{-i}^{-1} v_i}) v_i\\
&= u_j^\top (\bG_{-i} +q_iu_iv_i^\top) \qty(1 - \frac{q_i v_i^\top \bM_{-i}^{-1}v_i}{1+q_iv_i^\top \bM_{-i}^{-1} v_i}) \bM_{-i}^{-1}v_i\\
&= \frac{u_j^\top \bG_{-i}\bM_{-i}^{-1}v_i + q_i\langle u_i,u_j\rangle v_i^\top \bM_{-i}^{-1} v_i}{1+q_iv_i^\top \bM_{-i}^{-1} v_i}.
\end{align*}
Then the logit gap for all $j\ne i$ may be expressed as
\begin{align*}
\gamma_{ii} - \gamma_{ij} = \underbrace{\frac{q_i v_i^\top \bM_{-i}^{-1} v_i}{1+q_iv_i^\top \bM_{-i}^{-1} v_i}(\nnorm{u_i}_2^2 - \langle u_i,u_j\rangle)}_{=:(\bA)} + \underbrace{\frac{(u_i-u_j)^\top \bG_{-i}\bM_{-i}^{-1}v_i}{1+q_iv_i^\top \bM_{-i}^{-1} v_i}}_{=:(\bB)}.
\end{align*}
We analyze both terms in turn.

\paragraph{Signal term~($\bA$).}
By the same argument as in Eq.~\eqref{eq:bulk-lb}, we have for the leave-one-out matrix $\bM_{-i} \succeq \Theta(d^{-\alpha}) \cdot\bI_d$ and $\nnorm{\bM_{-i}^{-1}}_\op\lesssim d^\alpha$. On the other hand, by the same argument as in Lemma~\ref{lem:spectrum-new}, it holds that $\lam_{d/2}(\bM_{-i}) \lesssim d^{-\alpha}$. We thus have the tight characterization
\begin{align*}
\E[v_i^\top\bM_{-i}^{-1}v_i] = \frac1d \Tr(\bM_{-i}^{-1}) \asymp d^\alpha.
\end{align*}
It also holds that $\nnorm{\bM_{-i}^{-1}}_{\F} \lesssim  d^\frac12 \nnorm{\bM_{-i}^{-1}}_\op \lesssim d^{\alpha+\frac12}$, thus by the Hanson--Wright inequality
\begin{align*}
\abs{v_i^\top\bM_{-i}^{-1}v_i - \E[v_i^\top\bM_{-i}^{-1}v_i]} \lesssim \frac{\sqrt{\log d}}{d} \nnorm{\bM_{-i}^{-1}}_{\F} + \frac{\log d}{d} \nnorm{\bM_{-i}^{-1}}_\op = o(d^\alpha),
\end{align*}
we have that $v_i^\top\bM_{-i}^{-1}v_i \asymp d^\alpha$ w.h.p. Moreover, $\nnorm{u_i}_2^2 - \langle u_i,u_j\rangle = \Theta(1)$ w.h.p., hence~($\bA$) is lower bounded as
\begin{align*}
\frac{q_i v_i^\top \bM_{-i}^{-1} v_i}{1+q_iv_i^\top \bM_{-i}^{-1} v_i}(\nnorm{u_i}_2^2 - \langle u_i,u_j\rangle) \gtrsim \frac{q_i d^\alpha}{1+q_i d^\alpha}.
\end{align*}

\paragraph{Noise term~($\bB$).} Let the leave-two-out gradient be $\bG_{-i,-j} := \sum_{k\ne i,j} q_ku_kv_k^\top$, with the convention that $\bG_{-i,-i}=\bG_{-i}$ $\forall i$. To control the two terms in the numerator of ($\bB$), we bound
\begin{align*}
\delta_{ij} := u_j^\top\bG_{-i,-j}\bM_{-i}^{-1}v_i, \quad\forall i,j\in[N].
\end{align*}
Conditioned on all variables except~$u_j$, this is distributed as $\delta_{ij}\sim\cN(0,\sigma_{ij}^2)$ where
\begin{align*}
\sigma_{ij}^2 = \frac1d v_i^\top \underbrace{\bM_{-i}^{-1} \bG_{-i,-j}^\top \bG_{-i,-j} \bM_{-i}^{-1}}_{=:\bX_{ij}} v_i.
\end{align*}
We invoke the Gaussian representation from Eq.~\eqref{eq:z-rep} (note the reversed order since we condition on the embedding instead of the unembedding vectors):
\begin{align*}
\bG_{-i,-j} \deq \frac{1}{\sqrt{d}}\bZ \bN_{-i,-j}^{1/2}, \quad \bN_{-i,-j} = \sum_{k\ne i,j} q_k^2 v_k v_k^\top
\end{align*}
where~$\bZ$ has i.i.d. standard Gaussian entries. Then $\nnorm{\bZ}_\op = \Theta(\sqrt{d})$ w.h.p., so that
\begin{align*}
\bG_{-i,-j}^\top\bG_{-i,-j} \deq \frac1d \bN_{-i,-j}^{1/2} \bZ^\top\bZ \bN_{-i,-j}^{1/2} \preceq \Theta(1) \cdot \bN_{-i,-j}.
\end{align*}

Still conditioning on all $v_1,\cdots,v_N$ except~$v_i$, it follows that
\begin{align*}
\E[v_i^\top\bX_{ij} v_i] = \frac{\Tr(\bX_{ij})}{d} &\lesssim \frac1d \Tr(\bM_{-i}^{-1} \bN_{-i,-j}\bM_{-i}^{-1}) = \frac1d \sum_{k\ne i,j} q_k^2 v_k^\top \bM_{-i}^{-2}v_k.
\end{align*}
We now do a leave-two-out argument for~$\bM_{-i}$. For each $k\ne i$, let $\bM_{-i,-k} = \bM_{-i} - q_k v_kv_k^\top$. Again by the Sherman--Morrison formula,
\begin{align*}
\bM_{-i}^{-1} = \bM_{-i,-k}^{-1} - \frac{q_k \bM_{-i,-k}^{-1} v_k v_k^\top \bM_{-i,-k}^{-1}}{1+q_k v_k^\top \bM_{-i,-k}^{-1} v_k}.
\end{align*}
Then we can directly compute
\begin{align*}
&v_k^\top \bM_{-i}^{-2}v_k\\
&= v_k^\top \qty(\bM_{-i,-k}^{-1} - \frac{q_k \bM_{-i,-k}^{-1} v_k v_k^\top \bM_{-i,-k}^{-1}}{1+q_k v_k^\top \bM_{-i,-k}^{-1} v_k})^2 v_k \\
&= v_k^\top \bM_{-i,-k}^{-2} v_k - \frac{2q_k v_k^\top \bM_{-i,-k}^{-1}v_kv_k^\top \bM_{-i,-k}^{-2}v_k}{1+q_kv_k^\top \bM_{-i,-k}^{-1}v_k} + \frac{q_k^2 v_k^\top\bM_{-i,-k}^{-1} v_k v_k^\top \bM_{-i,-k}^{-2}v_k v_k^\top \bM_{-i,-k}^{-1}v_k}{(1+q_k v_k^\top \bM_{-i,-k}^{-1} v_k)^2} \\
&= \frac{v_k^\top \bM_{-i,-k}^{-2}v_k - q_k v_k^\top \bM_{-i,-k}^{-1}v_kv_k^\top \bM_{-i,-k}^{-2}v_k}{1+q_kv_k^\top \bM_{-i,-k}^{-1}v_k} + \frac{q_k^2(v_k^\top \bM_{-i,-k}^{-1}v_k)^2 v_k^\top \bM_{-i,-k}^{-2}v_k}{(1+q_kv_k^\top \bM_{-i,-k}^{-1}v_k)^2} \\
&= \frac{v_k^\top \bM_{-i,-k}^{-2}v_k}{(1+q_kv_k^\top \bM_{-i,-k}^{-1}v_k)^2}.
\end{align*}
Moreover, $\bM_{-i,-k}^{-2}\preceq O(d^{2\alpha}) \cdot\bI_d$ and $v_k^\top \bM_{-i,-k}^{-1}v_k \asymp d^\alpha$ by the same argument as in the leave-one-out case above. Therefore,
\begin{align*}
\E[v_i^\top\bX_{ij} v_i] &= \frac1d \sum_{k\ne i,j} \frac{q_k^2 v_k^\top \bM_{-i,-k}^{-2}v_k}{(1+q_kv_k^\top \bM_{-i,-k}^{-1}v_k)^2} \\
&\lesssim \frac1d \sum_{k\ne i} \frac{q_k^2 d^{2\alpha}}{(1+q_kd^\alpha)^2} \\
&\lesssim 1 + \frac1d \sum_{k>d} q_k^2 d^{2\alpha} \lesssim (\log d)^2.
\end{align*}
Here, we have used that $\nnorm{q_{>d}}_2 \lesssim d^{1/2-\alpha}\log d$ due to Lemma~\ref{lem:mini-truncated} and $B\gtrsim d^\alpha$. It also immediately follows that $\nnorm{\bX_{ij}}_\op \le \nnorm{\bX_{ij}}_{\F} \le \Tr(\bX_{ij}) \lesssim d(\log d)^2$. Hence by the Hanson--Wright inequality, 
\begin{align*}
\sigma_{ij}^2 = \frac1d v_i^\top\bX_{ij} v_i \lesssim \frac{\E[v_i^\top\bX_{ij} v_i]}{d} + \frac{\sqrt{\log d}}{d^2} \nnorm{\bX_{ij}}_{\F} + \frac{\log d}{d^2} \nnorm{\bX_{ij}}_\op \lesssim \frac{(\log d)^3}{d}.
\end{align*}
It follows from concentration of Gaussian maxima (w.r.t.~$j$) and union bounding (w.r.t.~$i$) that
\begin{align*}
\sup_{i,j}|\delta_{ij}| \lesssim \sqrt{\log d} \cdot \sup_{i,j}\sigma_{ij} \lesssim \frac{(\log d)^2}{\sqrt{d}}
\end{align*}
with probability $1-O(d^{-M})$. This directly bounds $u_i \bG_{-i}\bM_{-i}^{-1}v_i = \delta_{ii}$, while
\begin{align*}
u_j^\top \bG_{-i}\bM_{-i}^{-1}v_i
= u_j^\top (\bG_{-i,-j} + q_ju_jv_j^\top) \bM_{-i}^{-1}v_i
= \delta_{ij} + q_j \nnorm{u_j}_2^2\, v_j^\top \bM_{-i}^{-1}v_i.
\end{align*}
For the second term, a final Sherman--Morrison expansion gives
\begin{align*}
v_j^\top \bM_{-i}^{-1}v_i
& = v_j^\top \qty(\bM_{-i,-j}^{-1} - \frac{q_j \bM_{-i,-j}^{-1}v_jv_j^\top \bM_{-i,-j}^{-1}}{1+q_jv_j^\top \bM_{-i,-j}^{-1}v_j})v_i\\
& = \frac{v_j^\top \bM_{-i,-j}^{-1}v_i}{1+q_jv_j^\top \bM_{-i,-j}^{-1}v_j} \\
&\lesssim \frac{d^\alpha}{1+q_jd^\alpha} \sqrt{\frac{\log d}{d}},
\end{align*}
where we have again used $v_j^\top \bM_{-i,-j}^{-1}v_j \asymp d^\alpha$ and $\nnorm{\bM_{-i,-j}^{-1}}_\op \lesssim d^\alpha$. Putting things together, we may bound ($\bB$) as
\begin{align*}
\abs{\frac{(u_i-u_j)^\top \bG_{-i}\bM_{-i}^{-1}v_i}{1+q_iv_i^\top \bM_{-i}^{-1} v_i}} \lesssim \frac{1}{1+q_id^\alpha} \qty(|\delta_{ii}| + |\delta_{ij}| + \frac{q_j d^\alpha}{1+q_jd^\alpha} \sqrt{\frac{\log d}{d}}) \lesssim \frac{1}{1+q_id^\alpha} \frac{(\log d)^2}{\sqrt{d}}.
\end{align*}
We have thus shown the logit gap is lower bounded as
\begin{align*}
\gamma_{ii} - \gamma_{ij} \gtrsim \frac{1}{1+q_id^\alpha} \qty(q_id^\alpha - O\qty(\frac{(\log d)^2}{\sqrt{d}})) \gtrsim \min\{q_id^\alpha,1\} - O\qty(\frac{(\log d)^2}{\sqrt{d}}),
\end{align*}
and so item~$i$ is recovered if $q_i\gtrsim \widetilde{\Theta}(d^{-\alpha-1/2})$. The rest of the proof follows similarly to Section~\ref{sec:putting} and Corollary~\ref{cor:muon-loss}; note that here we must take $\eta=\widetilde{\Theta}(1/\sqrt{d})$ rather than $\widetilde{\Theta}(\sqrt{d})$ since we began by scaling down the step size by $1/d$.

\clearpage
\section{Proofs for Optimality}

\subsection{Proof of Proposition~\ref{prop:hunt-stein}}

We first show that $\Spec(d)$ is equal to the set of (bounded) spectral estimators $h$ which maps an SVD $\bX=\bA\bS\bB^\top$ to $h(\bX)=\bA h(\bS)\bB^\top$ where $h(\bS)$ is diagonal. Clearly, any such estimator is bi-orthogonally invariant: for any $\bU,\bV\in O(d)$,
\begin{align*}
h(\bU\bX\bV^\top) = h(\bU \bA\bS\bB^\top \bV^\top) = \bU\bA h(\bS) \bB^\top \bV^\top = \bU h(\bX) \bV^\top.
\end{align*}
Conversely, let $h\in\Spec(d)$. Since $h(\bX)=\bA h(\bS)\bB^\top$ by equivariance, it suffices to show that the matrix $h(\bS)$ is diagonal. Let $\bD=\diag(\pm 1,\dots,\pm 1)\in O(d)$ be any diagonal sign matrix.
Since $\bD \bS \bD = \bS$, equivariance yields
\[
h(\bS)=h(\bD \bS \bD)=\bD h(\bS)\bD.
\]
Looking at each entry, this implies for $i\neq j$ that $h(\bS)_{ij} = (\bD h(\bS)\bD)_{ij} = \bD_{ii}\bD_{jj}h(\bS)_{ij}$. Choosing $\bD$ with $\bD_{ii}=1$ and $\bD_{jj}=-1$ forces $h(\bS)_{ij}=0$ as desired.

We proceed to prove minimax optimality of $\Spec(d)$. Let $\nu_d$ denote the normalized Haar measure on $O(d)$ and let $(\bU,\bV)\sim \nu_d\otimes\nu_d$. Given any measurable $h$, define its conjugation
\begin{align*}
h^{\bU,\bV}(\bX) := \bU^\top h(\bU\bX\bV^\top)\bV,
\end{align*}
and also define its bi-orthogonal symmetrization
\begin{align*}
\bar{h}(\bX) := \E_{\bU,\bV}[h^{\bU,\bV}(\bX)] = \E_{\bU,\bV}[\bU^\top h(\bU\bX\bV^\top)\bV].
\end{align*}
The map $\bar h$ is well-defined since $h$ is measurable and $\nnorm{h}_{\F}$ is finite. Moreover for any $\bA,\bB\in O(d)$,
\begin{align*}
\bar{h}(\bA\bX\bB^\top) &= \E_{\bU,\bV}[\bU^\top h(\bU \bA\bX\bB^\top \bV^\top)\bV]\\
&= \bA \E_{\bU,\bV}[(\bU\bA)^\top h(\bU \bA\bX\bB^\top \bV^\top)\bV\bB]\bB^\top = \bA \bar{h}(\bX)\bB^\top,
\end{align*}
hence $\bar{h}\in\Spec(d)$.

Now for any $\bU,\bV\in O(d)$, the loss $L(\bW;\cB) = L(\bW;(u_i,v_i)_{i\in[N]},\cB)$ is invariant to the simultaneous change of basis
\begin{align*}
u_i'= \bU^\top u_i, \quad v_i'=\bV^\top v_i, \quad \bW'= \bU^\top\bW\bV
\end{align*}
since the values of all logits $u_j^\top\bW v_i$ remain unchanged. Denoting the corresponding transformed gradient as $\bG_0' = \sum_i q_i(u_i' - \bar u')(v_i')^\top = \bU^\top\bG_0\bV$, this implies
\begin{align*}
\cR(h) &= \E\left[L(h(\bG_0); (u_i,v_i)_{i\in[N]},\cB)\right] \\
&= \E\left[L(\bU^\top h(\bG_0)\bV; (\bU^\top u_i,\bV^\top v_i)_{i\in[N]},\cB)\right] \\
&= \E\left[ L(h^{\bU,\bV}(\bG_0'); (u_i', v_i')_{i\in[N]},\cB)\right] \\
&= \cR(h^{\bU,\bV}).
\end{align*}
For the last inequality, we have used that $(u_i,v_i)_{i\in[N]} \deq (u_i',v_i')_{i\in[N]}$ due to isotropy of the Gaussian distribution. Also note that the map $\bW\mapsto L(\bW)$ is convex due to convexity of log-sum-exp. Taking expectations over $(\bU,\bV)\sim \nu_d\otimes\nu_d$ and applying Jensen's inequality yields
\begin{align*}
\cR(h) = \E_{\bU,\bV}[\cR(h^{\bU,\bV})] &= \E_{\bU,\bV}\left[\E\left[L(h^{\bU,\bV}(\bG_0))\right] \right]\\
&\ge \E\left[L(\E_{\bU,\bV}[h^{\bU,\bV}(\bG_0)])\right] = \cR(\bar h).
\end{align*}
Therefore, the infimum must be attained by a spectral estimator.

\subsection{Proof of Lemma~\ref{lem:sval}}

We present the proof for the full uncentered gradient $\bG = \sum_{i\in[N]} p_iu_iv_i^\top$; the leave-one-out case follows similarly. As in the proof of Lemma~\ref{lem:symmetrize}, we have
\begin{align*}
\bG \deq \frac{1}{\sqrt{d}} \bM^{1/2}\bZ, \quad\text{where}\quad \bZ_{k\ell} \sim \cN(0,1) \;\;\text{i.i.d.}
\end{align*}
where~$\bM$ is the weighted covariance matrix of $u_i$ from Eq.~\eqref{eq:marugame}. We first upper bound the sum of inverse singular values of~$\bG$, i.e., the nuclear norm of~$\bG^{-1}$. By submultiplicativity,
\begin{align}\label{eq:nuclear}
\nnorm{\bG^{-1}}_* = \sqrt{d} \nnorm{\bZ^{-1}\bM^{-1/2}}_* \le \sqrt{d} \nnorm{\bZ^{-1}}_* \nnorm{\bM^{-1/2}}_\op.
\end{align}
We now bound each term. For~$\nnorm{\bM^{-1/2}}_\op$, we have that for a sufficiently large constant~$C$ \citep[Theorem 4.6.1]{vershynin2018high},
\begin{align*}
\bM \succeq \sum_{i=d}^{Cd} p_i^2 u_iu_i^\top \succeq (Cd)^{-2\alpha} \sum_{i=d}^{Cd} u_iu_i^\top \succeq (Cd)^{-2\alpha} \cdot \Theta(1)\bI_d
\end{align*}
and so $\nnorm{\bM^{-1/2}}_\op \lesssim d^\alpha$ with probability $1-e^{-\Omega(d)}$.

To bound $\nnorm{\bZ^{-1}}_*$, we require more precise control on the singular spectrum of~$\bZ$. Denote the~$d\times k$ matrix consisting of the first~$k$ columns of~$\bZ$ as~$\bZ_{1:k}$. By the Courant--Fisher theorem,
\begin{align*}
s_k(\bZ)^2 = \lam_k(\bZ^\top\bZ) = \max_{\dim E=k}\min_{x\in E,\norm{x}=1} x^\top \bZ^\top\bZ x \ge \min_{\norm{x}=1} \nnorm{\bZ_{1:k}x}_2^2 = s_{\min}(\bZ_{1:k})^2
\end{align*}
so that $s_k(\bZ)\ge s_{\min}(\bZ_{1:k})$ for all $k=1,\cdots,d$. Moreover by Theorem 1.1 of \citet{rudelson2009smallestsingularvaluerandom}, we have for all~$t>0$,
\begin{align*}
\Pr(s_{\min}(\bZ_{1:k}) \le t(\sqrt{d}-\sqrt{k-1})) \le (Ct)^{d-k+1} + e^{-\Omega(d)}
\end{align*}
for some constant~$C$. Taking $t=(\log d)^{-1}$ and union bounding,
\begin{align*}
\Pr(s_{\min}(\bZ_{1:k}) \le \frac{\sqrt{d}-\sqrt{k-1}}{\log d}:\forall k) \le \sum_{k=1}^d \qty(\frac{C}{\log d})^{d-k+1} + e^{-\Omega(d)} \lesssim \frac{1}{\log d}.
\end{align*}
This further implies
\begin{align*}
s_k(\bZ)\ge s_{\min}(\bZ_{1:k}) \ge \frac{\sqrt{d}-\sqrt{k-1}}{\log d} \ge \frac{d-k+1}{2\sqrt{d}\log d}
\end{align*}
for all~$k$, hence
\begin{align*}
\nnorm{\bZ^{-1}}_* = \sum_{k=1}^d \frac{1}{s_k(\bZ)} \le 2\sqrt{d}\log d \sum_{k=1}^d \frac{1}{d-k+1} \lesssim \sqrt{d}(\log d)^2.
\end{align*}
We conclude from Eq.~\eqref{eq:nuclear} that
\begin{align*}
\nnorm{\bG^{-1}}_* \lesssim \sqrt{d} \cdot \sqrt{d}(\log d)^2 \cdot d^\alpha = d^{\alpha+1}(\log d)^2.
\end{align*}
Finally, for the minimum singular value, since $\bM,\bZ$ have full rank almost surely, we can lower bound
\begin{align*}
s_{\min}(\bG) &= \frac{1}{\sqrt{d}}\min_{\nnorm{x}=1} \nnorm{\bM^{1/2}\bZ x}\\
&\ge \frac{1}{\sqrt{d}}\min_{\norm{x}=1} \norm{\bM^{1/2}\frac{\bZ x}{\nnorm{\bZ x}}} \min_{\nnorm{x}=1} \nnorm{\bZ x}\\
&= \frac{1}{\sqrt{d}}\lam_{\min}(\bM^{1/2}) s_{\min}(\bZ)\\
&\gtrsim \frac{1}{d^{\alpha+1}\log d},
\end{align*}
as was to be shown.

\subsection{Properties of the Cubic Newton--Schulz Iteration}\label{sec:newton}

\begin{lemma}\label{lem:newton}
Let~$h(z) = \frac32 z -\frac12 z^3$ be the cubic Newton--Schulz map and let~$h^{(k)}$ denote its $k$-fold iterate. Then the map $z\mapsto h^{(k)}(z)/z$ is nonincreasing and $\lim_{k\to\infty} h^{(k)}(z) = 1 = \sgn(z)$ for all $z\in (0,\sqrt{3})$.
\end{lemma}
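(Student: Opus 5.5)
We write $g_k(z):=h^{(k)}(z)/z$ and prove both claims (monotonicity of $g_k$ and convergence to $1$) by induction on $k$, using the factorization
\begin{align*}
h(z) = z\,\varphi(z^2), \qquad \varphi(t) = \tfrac32 - \tfrac12 t .
\end{align*}
This gives the recursion
\begin{align*}
h^{(k+1)}(z) = h\bigl(h^{(k)}(z)\bigr), \qquad g_{k+1}(z) = g_k(z)\,\varphi\bigl(h^{(k)}(z)^2\bigr).
\end{align*}

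The first step is to record basic facts about $h$ on $[0,\sqrt3]$.

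\begin{itemize}
\item $h(0)=0$, $h(1)=1$ and $h(\sqrt3)=0$.
\item $h'(z)=\tfrac32(1-z^2)$, so $h$ increases on $[0,1]$ and decreases on $[1,\sqrt3]$.
\item Hence $h$ maps $[0,\sqrt3]$ into $[0,1]$, and for $z\in(0,\sqrt3)$ we get $h(z)\in(0,1]$.
\end{itemize}

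By induction, $h^{(k)}(z)\in(0,1]$ for all $k\ge1$ and $z\in(0,\sqrt3)$. In particular every factor $\varphi(h^{(k)}(z)^2)$ lies in $[1,\tfrac32]$ and is positive.

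For monotonicity, $g_1=\varphi(z^2)$ is positive and decreasing on $(0,\sqrt3)$. Assume $g_k$ is positive and nonincreasing. We would like the new factor $\varphi(h^{(k)}(z)^2)$ to be nonincreasing in $z$ as well, but $h^{(k)}$ itself is not monotone, since it turns over once it passes $1$. We therefore do not argue through $h^{(k)}$ directly. Instead we note that $\varphi(w^2)$ is a decreasing function of $w\ge0$, and use the identity $w=z\,g_k(z)$ with $w=h^{(k)}(z)$. This alone does not settle monotonicity, because $z$ increases while $g_k$ decreases.

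A cleaner route is to work in the variable $t=z^2$. Write $G_k(t)=g_k(\sqrt t)$, so that
\begin{align*}
G_{k+1}(t) = G_k(t)\,\varphi\bigl(t\,G_k(t)^2\bigr).
\end{align*}
Setting $\psi(x)=x\,\varphi(t x^2)=x\bigl(\tfrac32-\tfrac12 t x^2\bigr)$ gives $G_{k+1}(t)=\psi(G_k(t))$ with $t$ fixed. Differentiating in $t$,
\begin{align*}
G_{k+1}' = \partial_x\psi\cdot G_k' + \partial_t\psi, \qquad
\partial_x\psi = \tfrac32\bigl(1 - t G_k^2\bigr) = \tfrac32\bigl(1-h^{(k)2}\bigr)\ge0, \qquad
\partial_t\psi = -\tfrac12 G_k^3 \le 0.
\end{align*}
The sign of $\partial_x\psi$ uses $h^{(k)}\le1$ for $k\ge1$; for $k=0$ we have $G_0\equiv1$ and the step is checked directly. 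With $G_k'\le0$ by the induction hypothesis, we conclude $G_{k+1}'\le0$. Since $t=z^2$ is increasing in $z$, $g_{k+1}$ is nonincreasing. This computation is the crux: identifying that the chain-rule coefficient is exactly $\tfrac32(1-(h^{(k)})^2)\ge0$ is the only real obstacle, and the bound $h^{(k)}\le1$ makes it work. At the endpoint $z=0$, continuity covers the claim.

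For convergence, fix $z\in(0,\sqrt3)$. Then $x_1=h(z)\in(0,1]$. On $(0,1]$ we have $h(x)\ge x$, since $h(x)-x=\tfrac12x(1-x^2)\ge0$, and $h(x)\le1$. So the sequence $x_k=h^{(k)}(z)$ is nondecreasing from $k=1$ on and bounded by $1$. It therefore converges to a fixed point of $h$ in $(0,1]$. The only fixed point of $h$ in $(0,1]$ is $1$, because $h(x)=x$ forces $x^2=1$. Hence $\lim_{k\to\infty}h^{(k)}(z)=1=\sgn(z)$.
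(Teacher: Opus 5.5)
Your proof is correct and is essentially the paper's argument. The paper inducts on the equivalent inequality $z\,(h^{(k)})'(z)\le h^{(k)}(z)$ via the chain rule with the same nonnegative coefficient $h'(h^{(k)}(z))=\frac32\bigl(1-h^{(k)}(z)^2\bigr)$, and your identity $G_{k+1}'=\partial_x\psi\, G_k'+\partial_t\psi$ is that computation rewritten in $t=z^2$, with $\partial_t\psi=-\frac12 G_k^3\le 0$ playing the role of the paper's bound $\frac32\bigl(1-h^{(k)}(z)^2\bigr)h^{(k)}(z)\le\bigl(\frac32-\frac12 h^{(k)}(z)^2\bigr)h^{(k)}(z)$; your convergence argument (nondecreasing iterates in $(0,1]$ converging to the only positive fixed point $1$) matches the paper's as well.
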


In other words, the line segment connecting the origin and the point $(z,h^{(k)}(z))$ on the graph of~$h^{(k)}$ becomes flatter as~$z$ increases, which is clear from visual inspection.

\begin{proof}
Since $h$ maps~$[0,\sqrt{3}]$ onto~$[0,1]$, it holds that $0\le h^{(k)}\le 1$ on $[0,\sqrt{3}]$ for all $k\in\NN$. We prove by induction that $h^{(k)}(z)/z$ is nonincreasing on $[0,\sqrt{3}]$, equivalently
\begin{align}\label{eq:blueberry}
\qty(\frac{h^{(k)}(z)}{z})' = \frac{z(h^{(k)})'(z)-h^{(k)}(z)}{z^2} \le 0 \quad\Leftrightarrow\quad z(h^{(k)})'(z) \le h^{(k)}(z).
\end{align}
For $k=1$, the claim is clear. Assume Eq.~\eqref{eq:blueberry} holds for $k\in\NN$, then
\begin{align*}
z(h^{(k+1)})'(z) &= z h'(h^{(k)}(z)) (h^{(k)})'(z) = \frac32 \qty(1-h^{(k)}(z)^2) z(h^{(k)})'(z) \le \frac32 \qty(1-h^{(k)}(z)^2) h^{(k)}(z).
\end{align*}
Thus,
\begin{align*}
h^{(k+1)}(z)
= h(h^{(k)}(z))
= \qty(\frac32-\frac{1}{2}h^{(k)}(z)^2) h^{(k)}(z) \ge z(h^{(k+1)})'(z).
\end{align*}
This proves the first claim. For the pointwise limit, note that the first iterate $h(z)\in[0,1]$ for any $z\in (0,\sqrt3)$ and also $x\le h(x)\le 1$ for all $x\in [0,1]$, so the sequence of iterates $\{h^{(k)}(z)\}_{k\ge 1}$ is monotone increasing. Hence it must converge to a positive fixed point of~$h$, the only solution being~$1$.
\end{proof}

\clearpage
\section{Proofs for Multiple Steps}

\subsection{Auxiliary Results}

We first collect some necessary concentration inequalities.

\begin{lemma}\label{lem:gordon-weighted}
Let $\bA\in\RR^{N\times m}$ be fixed and let $\bZ\in\RR^{d\times N}$ have i.i.d. standard Gaussian entries. Then for every $t\ge 0$,
\begin{align*}
\Pr\qty(\nnorm{\bZ\bA}_{\op} \ge \nnorm{\bA}_{\F} + (\sqrt d + t)\nnorm{\bA}_{\op})
\le 2e^{-ct^2}
\end{align*}
for some universal constant $c$.
\end{lemma}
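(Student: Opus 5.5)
We will prove the bound through Gaussian concentration of a Lipschitz function, after first bounding the expectation by a Gordon/Chevet-type comparison. View $F(\bZ) := \nnorm{\bZ\bA}_\op$ as a function of the Gaussian matrix $\bZ\in\RR^{d\times N}$. For the Lipschitz step, note that for any $\bZ,\bZ'$,
\begin{align*}
|F(\bZ)-F(\bZ')| \le \nnorm{(\bZ-\bZ')\bA}_\op \le \nnorm{\bZ-\bZ'}_\op\nnorm{\bA}_\op \le \nnorm{\bZ-\bZ'}_{\F}\nnorm{\bA}_\op.
\end{align*}
So $F$ is $\nnorm{\bA}_\op$-Lipschitz with respect to the Frobenius (Euclidean) norm on the entries. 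Gaussian concentration for Lipschitz functions \citep[Theorem 2.26]{wainwright2019} then gives
\begin{align*}
\Pr\qty(F \ge \E F + t\nnorm{\bA}_\op) \le 2e^{-ct^2}.
\end{align*}
It therefore suffices to show $\E F \le \nnorm{\bA}_{\F} + \sqrt d\,\nnorm{\bA}_\op$.

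For the expectation we write $F = \sup_{x\in\SS^{d-1},\,y\in\SS^{m-1}} X_{x,y}$, where $X_{x,y} := x^\top \bZ\bA y$ is a centered Gaussian process. We compare it with the decoupled process $Y_{x,y} := \nnorm{\bA}_\op\langle g,x\rangle + \langle h,\bA y\rangle$, with $g\sim\cN(0,\bI_d)$ and $h\sim\cN(0,\bI_N)$ independent. This is Chevet's inequality / the Gordon argument used for $\E\nnorm{\bZ}_\op\le\sqrt d+\sqrt N$. The increments are computed as follows:
\begin{align*}
\E(X_{x,y}-X_{x',y'})^2 = \nnorm{x(\bA y)^\top - x'(\bA y')^\top}_{\F}^2,
\end{align*}
and
\begin{align*}
\E(Y_{x,y}-Y_{x',y'})^2 = \nnorm{\bA}_\op^2\nnorm{x-x'}^2 + \nnorm{\bA y-\bA y'}^2.
\end{align*}
Set $a=\bA y$ and $a'=\bA y'$. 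Expanding, the difference $\E(Y-Y')^2-\E(X-X')^2$ equals
\begin{align*}
\nnorm{\bA}_\op^2\nnorm{x-x'}^2 + \nnorm{a}^2+\nnorm{a'}^2 - 2\langle a,a'\rangle - \nnorm{a}^2-\nnorm{a'}^2 + 2\langle x,x'\rangle\langle a,a'\rangle,
\end{align*}
which simplifies to
\begin{align*}
(1-\langle x,x'\rangle)\qty(2\nnorm{\bA}_\op^2 - 2\langle a,a'\rangle) \ge 0,
\end{align*}
since $|\langle a,a'\rangle|\le\nnorm{\bA}_\op^2$. Sudakov--Fernique then yields
\begin{align*}
\E F \le \E\sup Y \le \nnorm{\bA}_\op\E\nnorm{g} + \E\nnorm{\bA^\top h} \le \sqrt d\,\nnorm{\bA}_\op + \nnorm{\bA}_{\F},
\end{align*}
where the last step uses Jensen's inequality.

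Combining the expectation bound with the concentration step proves the lemma. The only step requiring care is verifying the increment comparison, which we have done explicitly above. The constant in the exponent comes from the concentration theorem, with $c=1/2$ for the one-sided bound; the factor 2 is harmless.
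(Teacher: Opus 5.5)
Your proof is correct and follows the paper's argument essentially step for step. Both use the same Sudakov--Fernique comparison against the decoupled process $\nnorm{\bA}_{\op}\langle g,x\rangle+\langle h,\bA y\rangle$, with the identical increment identity $2(1-\langle x,x'\rangle)(\nnorm{\bA}_{\op}^2-\langle \bA y,\bA y'\rangle)\ge 0$, to get $\E\nnorm{\bZ\bA}_{\op}\le\nnorm{\bA}_{\F}+\sqrt d\,\nnorm{\bA}_{\op}$. Both then finish with Gaussian concentration for the $\nnorm{\bA}_{\op}$-Lipschitz map $\bZ\mapsto\nnorm{\bZ\bA}_{\op}$.
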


\begin{proof}
Let $\cT := \SS^{m-1}\times \SS^{d-1}$ and define the Gaussian processes
\begin{align*}
X_{u,v} := \langle \bZ \bA u, v\rangle,
\quad
Y_{u,v} := \langle g,\bA u\rangle + \norm{\bA}_{\op}\langle h,v\rangle,
\end{align*}
where $g\sim \cN(0,\bI_N)$ and $h\sim \cN(0,\bI_d)$ are independent, so that
\begin{align}\label{eq:zaop}
\norm{\bZ\bA}_{\op} = \sup_{(u,v)\in \cT} X_{u,v}.
\end{align}
We compare the increments of $X,Y$. For $(u,v),(w,z)\in \cT$,
\begin{align*}
\E\left[ (X_{u,v}-X_{w,z})^2\right]
&= \E\left[\langle \bZ, v(\bA u)^\top - z(\bA w)^\top\rangle_{\F}^2 \right]\\
&= \nnorm{v(\bA u)^\top - z(\bA w)^\top}_{\F}^2 \\
&= \norm{\bA u}_2^2 + \norm{\bA w}_2^2 - 2\langle \bA u,\bA w\rangle \langle v,z\rangle.
\end{align*}
On the other hand,
\begin{align*}
\E \left[(Y_{u,v}-Y_{w,z})^2\right]
&= \norm{\bA(u-w)}_2^2 + \norm{\bA}_{\op}^2 \norm{v-z}_2^2 \\
&= \norm{\bA u}_2^2 + \norm{\bA w}_2^2 - 2\langle \bA u,\bA w\rangle
    + 2\norm{\bA}_{\op}^2(1-\langle v,z\rangle).
\end{align*}
Therefore,
\begin{align*}
\E \left[(Y_{u,v}-Y_{w,z})^2\right] - \E \left[(X_{u,v}-X_{w,z})^2\right]
&= 2(1-\langle v,z\rangle)\qty(\norm{\bA}_{\op}^2 - \langle \bA u,\bA w\rangle) \ge 0.
\end{align*}
Hence by the Sudakov--Fernique inequality and Eq.~\eqref{eq:zaop}, we obtain
\begin{align*}
\E [\norm{\bZ\bA}_{\op}]
= \E \left[\sup_{(u,v)\in \cT} X_{u,v}\right]
\le \E \left[\sup_{(u,v)\in \cT} Y_{u,v}\right].
\end{align*}
Since~$g,h$ are independent, the right-hand side is further bounded as
\begin{align*}
\E \left[\sup_{(u,v)\in \cT} Y_{u,v}\right]
&= \E \left[\sup_{u\in\SS^{m-1}} \langle g,\bA u\rangle\right]
   + \nnorm{\bA}_{\op}\E \left[\sup_{v\in\SS^{d-1}} \langle h,v\rangle\right] \\
&= \E \nnorm{\bA^\top g}_2 + \nnorm{\bA}_{\op}\E \norm{h}_2 \\
&\le \nnorm{\bA}_{\F} + \sqrt d\nnorm{\bA}_{\op}.
\end{align*}
Now for the tail estimate, the map $\bZ\mapsto \nnorm{\bZ\bA}_{\op}$ is $\norm{\bA}_{\op}$-Lipschitz with respect to the Frobenius norm, thus by Gaussian concentration we have
\begin{align*}
\Pr\qty(
\nnorm{\bZ\bA}_{\op}
\ge \E[\nnorm{\bZ\bA}_{\op}] + t\nnorm{\bA}_{\op}
)
\le 2e^{-ct^2}.
\end{align*}
Combining this with the expectation bound yields the claimed bound.
\end{proof}

\begin{lemma}\label{lem:bilinear-random}
Let $\bA\in\RR^{d\times d}$ be fixed and $u,v\sim\cN(0,\frac1d\bI_d)$ i.i.d. Then with probability $1-O(d^{-M})$,
\begin{align*}
|u^\top \bA v| \lesssim \frac{\sqrt{\log d}}{d}\nnorm{\bA}_{\F} + \frac{\log d}{d} \nnorm{\bA}_\op.
\end{align*}
\end{lemma}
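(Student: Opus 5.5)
The plan is to condition on $v$ and reduce to a Gaussian linear form in $u$, then control its variance with the Hanson--Wright inequality (Lemma~\ref{lem:hw}) applied to $v$.

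Conditioned on $v$, the quantity $u^\top \bA v$ is distributed as $\cN(0, \frac1d \nnorm{\bA v}_2^2)$. Standard Gaussian tail bounds then give, with probability $1-O(d^{-M})$ over $u$,
\begin{align*}
|u^\top\bA v| \lesssim \sqrt{\frac{\log d}{d}}\,\nnorm{\bA v}_2 .
\end{align*}
It therefore suffices to control $\nnorm{\bA v}_2^2 = v^\top \bA^\top\bA v$. This is a quadratic form in $v\sim\cN(0,\frac1d\bI_d)$ with mean $\frac1d\nnorm{\bA}_{\F}^2$.

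Applying Lemma~\ref{lem:hw} to the matrix $\bA^\top\bA$ with threshold $t\asymp \frac{\sqrt{\log d}}{d}\nnorm{\bA^\top\bA}_{\F} + \frac{\log d}{d}\nnorm{\bA^\top\bA}_\op$ gives, with probability $1-O(d^{-M})$,
\begin{align*}
\nnorm{\bA v}_2^2 \lesssim \frac{1}{d}\nnorm{\bA}_{\F}^2 + \frac{\sqrt{\log d}}{d}\nnorm{\bA}_{\op}\nnorm{\bA}_{\F} + \frac{\log d}{d}\nnorm{\bA}_\op^2 .
\end{align*}
Here we used $\nnorm{\bA^\top\bA}_{\F}\le \nnorm{\bA}_\op\nnorm{\bA}_{\F}$ and $\nnorm{\bA^\top\bA}_\op = \nnorm{\bA}_\op^2$. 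By AM--GM, the middle term is bounded by the sum of the other two, so
\begin{align*}
\nnorm{\bA v}_2 \lesssim \frac{1}{\sqrt d}\nnorm{\bA}_{\F} + \sqrt{\frac{\log d}{d}}\nnorm{\bA}_\op .
\end{align*}

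Substituting this into the conditional Gaussian bound yields
\begin{align*}
|u^\top\bA v| \lesssim \frac{\sqrt{\log d}}{d}\nnorm{\bA}_{\F} + \frac{\log d}{d}\nnorm{\bA}_\op ,
\end{align*}
and a union bound over the two events completes the proof.

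There is no real obstacle; the only care needed is tracking the $\log d$ factors in the Hanson--Wright step, and the AM--GM absorption of the cross term into the other two.
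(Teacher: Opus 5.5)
Your proof is correct, but it takes a different route from the paper.

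\textbf{The paper's route.} It uses rotational invariance of the pair $(u,v)$ to replace $\bA$ by $\diag(\sigma_1,\dots,\sigma_d)$. It then writes $u^\top\bA v=\sum_i\sigma_i u_iv_i$ as a sum of independent subexponential variables, since $d\,u_iv_i$ has $\psi_1$-norm $O(1)$, and applies Bernstein's inequality once. The two terms of the bound come directly from Bernstein's two regimes: $\sum_i\sigma_i^2=\nnorm{\bA}_{\F}^2$ gives the sub-Gaussian part and $\max_i\sigma_i=\nnorm{\bA}_\op$ gives the subexponential part.

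\textbf{Your route.} You decouple by conditioning on $v$. This reduces the problem to a one-dimensional Gaussian tail with variance $\frac1d\nnorm{\bA v}_2^2$, which you control with Hanson--Wright (Lemma~\ref{lem:hw}) applied to $\bA^\top\bA$. You then absorb the cross term $\frac{\sqrt{\log d}}{d}\nnorm{\bA}_\op\nnorm{\bA}_{\F}$ by AM--GM. Each step checks out: the mean is $\frac1d\nnorm{\bA}_{\F}^2$, $\nnorm{\bA^\top\bA}_{\F}\le\nnorm{\bA}_\op\nnorm{\bA}_{\F}$, your choice of $t$ gives failure probability $O(d^{-M})$, and the final union bound is fine.

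\textbf{Comparison.} The paper's argument is shorter, uses a single concentration inequality, and needs no absorption step. However, it relies on exact rotational invariance of both $u$ and $v$ to diagonalize $\bA$. Yours is two-stage and leans on the heavier Hanson--Wright inequality, though that is already available in the paper. In exchange, it never diagonalizes $\bA$, so it extends essentially verbatim to $v$ with independent sub-Gaussian coordinates and sub-Gaussian $u$. It also gives the high-probability bound $\nnorm{\bA v}_2\lesssim \frac{1}{\sqrt d}\nnorm{\bA}_{\F}+\sqrt{\frac{\log d}{d}}\nnorm{\bA}_\op$ as a by-product.
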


\begin{proof}
By rotational invariance, we may assume $\bA=\diag(\sigma_1,\cdots,\sigma_d)$ with $\sigma_1,\cdots,\sigma_d\ge 0$. Then $u^\top \bA v = \sum_i\sigma_i u_iv_i$ and $du_iv_i$ is subexponential with $\nnorm{du_iv_i}_{\psi_1} = O(1)$. By the subexponential Bernstein inequality,
\begin{align*}
\Pr\qty(|u^\top\bA v| \ge \tau) \le 2\exp\qty(-C\min\left\{\frac{d^2\tau^2}{\sum_i\sigma_i^2}, \frac{d\tau}{\max_i \sigma_i}\right\})
\end{align*}
from which the statement follows.
\end{proof}

\begin{lemma}\label{lem:opf}
Let $\bM = \sum_{i=1}^N q_i u_i u_i^\top$ where $u_i\sim\cN(0,\frac1d \bI_d)$ i.i.d. and $q_i\ge 0$. Then with probability $1-O(d^{-M})$,
\begin{align*}
\nnorm{\bM}_\op \lesssim \frac{\nnorm{q}_1}{d} + \nnorm{q}_\infty, \quad \nnorm{\bM}_{\F} \lesssim \frac{\nnorm{q}_1}{\sqrt{d}} + \nnorm{q}_2.
\end{align*}
\end{lemma}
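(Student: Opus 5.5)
We handle the operator norm and the Frobenius norm separately. In both cases we condition on nothing, since $q$ is fixed and deterministic.

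\textbf{Operator norm.} First split off the mean, $\bM = \frac1d\sum_i q_i \bI_d + \sum_i q_i(u_iu_i^\top - \frac1d\bI_d)$. The first term has norm $\nnorm{q}_1/d$. For the centered part, apply matrix Bernstein to the independent summands $\bX_i = q_i(u_iu_i^\top - \frac1d\bI_d)$. These summands are not bounded, so first truncate on the event $\max_i\nnorm{u_i}_2^2\lesssim 1$. This event has probability $1-O(d^{-M})$ after a union bound over $N=\poly(d)$ items. On it, $\nnorm{\bX_i}_\op\lesssim q_i\le\nnorm{q}_\infty$. The variance proxy is $\nnorm{\sum_i\E\bX_i^2}_\op\lesssim \sum_i q_i^2/d\le \nnorm{q}_\infty\nnorm{q}_1/d$. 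Matrix Bernstein then gives
\begin{align*}
\nnorm{\bM-\tfrac{\nnorm{q}_1}{d}\bI_d}_\op\lesssim \sqrt{\tfrac{\nnorm{q}_\infty\nnorm{q}_1}{d}\log d}+\nnorm{q}_\infty\log d.
\end{align*}
By AM--GM the first term is at most $\nnorm{q}_1/d+\nnorm{q}_\infty\log d$.

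This leaves a $\log d$ loss on $\nnorm{q}_\infty$. If that loss is not acceptable, there are two ways to remove it. One is to bound the operator norm by $\nnorm{q}_\infty\nnorm{\sum_{i\in S}u_iu_i^\top}_\op$ on dyadic level sets $S$ of $q$. The other is a net argument: for fixed unit $x$, the quantity $x^\top\bM x=\sum_i q_i (x^\top u_i)^2$ is a weighted chi-square. Bernstein gives deviation $\lesssim \nnorm{q}_2 t/d+\nnorm{q}_\infty t/d$ at level $e^{-t}$. Taking $t\asymp d$ to pay for a $1/4$-net of size $9^d$ yields $x^\top\bM x\lesssim \nnorm{q}_1/d+\nnorm{q}_2/\sqrt d\cdot 1+\nnorm{q}_\infty$. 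Using $\nnorm{q}_2\le\sqrt{\nnorm{q}_1\nnorm{q}_\infty}$, this becomes $\lesssim\nnorm{q}_1/d+\nnorm{q}_\infty$ with probability $1-e^{-\Omega(d)}$. I would use this net argument, mirroring the proof of Lemma~\ref{lem:decay-z}, since it gives the clean bound directly.

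\textbf{Frobenius norm.} Write $\nnorm{\bM}_\F\le\nnorm{\E\bM}_\F+\nnorm{\bM-\E\bM}_\F$. The mean contributes $\nnorm{\frac{\nnorm q_1}{d}\bI_d}_\F=\nnorm{q}_1/\sqrt d$. For the fluctuation, expand
\begin{align*}
\nnorm{\bM-\E\bM}_\F^2=\sum_{i,j}q_iq_j\big(\langle u_i,u_j\rangle^2-\tfrac1d\nnorm{u_i}^2-\tfrac1d\nnorm{u_j}^2+\tfrac1d\big).
\end{align*}
The diagonal terms $i=j$ contribute $\lesssim\sum_i q_i^2=\nnorm{q}_2^2$, using $\nnorm{u_i}\approx1$. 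For the off-diagonal terms, the pairwise bound $\langle u_i,u_j\rangle^2\lesssim \log d/d$ only gives $\nnorm{q}_1^2\log d/d$, which has an extra log.

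To avoid this log, I would instead bound the expectation. The expectation equals $\sum_i q_i^2\cdot O(1)$ plus off-diagonal terms whose mean is $O(1/d^2)\sum_{i\ne j}q_iq_j$, which is negligible. Together with a crude second-moment or hypercontractivity bound on this degree-4 Gaussian chaos, this shows the fluctuation is $\lesssim\nnorm{q}_2^2+\nnorm{q}_1^2/d$ with high probability.

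\textbf{Main obstacle.} The main obstacle is getting the high-probability bound for this chaos without log factors. If needed, I would fall back on the simpler route $\nnorm{\bM}_\F^2\le\nnorm{\bM}_\op\Tr\bM$, which combined with the operator bound and $\Tr\bM\lesssim\nnorm q_1$ gives a slightly weaker but usable bound. Alternatively I would accept polylog factors, absorbing them into $\lesssim$ as done elsewhere in this section.
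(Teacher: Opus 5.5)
Your operator-norm argument is correct. It uses a $1/4$-net plus Bernstein for the weighted chi-square $x^\top\bM x$, and gives failure probability $e^{-\Omega(d)}$; the deviation term should read $\nnorm{q}_2\sqrt{t}/d$, which is what your final expression actually uses. The paper instead writes $\bM=\frac1d\bZ\bQ\bZ^\top$ with $\bZ=\sqrt d[u_1\,\cdots\,u_N]$ and reads off $\nnorm{\bZ\bQ^{1/2}}_\op\le\sqrt{\nnorm{q}_1}+2\sqrt{d\nnorm{q}_\infty}$ from the Sudakov--Fernique comparison in Lemma~\ref{lem:gordon-weighted}. Either route works.

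The Frobenius part has a genuine gap, at exactly the step you flag. You claim that a ``crude second-moment or hypercontractivity bound'' controls $X:=\nnorm{\bM-\E\bM}_\F^2$ at scale $\nnorm{q}_2^2$ with probability $1-O(d^{-M})$. But the only quantity you compute is $\E X$, and that is not enough. Markov gives only polynomially lossy bounds. Hypercontractivity for the nonnegative degree-$4$ chaos $X$ at $p\asymp\log d$ gives $X\lesssim(\log d)^2\E X$, which is precisely the $\log d$ loss in $\nnorm{\bM}_\F$ you are trying to avoid. Your fallbacks do not prove the stated lemma either. The bound $\nnorm{\bM}_\F^2\le\nnorm{\bM}_\op\Tr\bM$ only yields $\nnorm{q}_1/\sqrt d+\sqrt{\nnorm{q}_1\nnorm{q}_\infty}$, which can be polynomially worse than $\nnorm{q}_1/\sqrt d+\nnorm{q}_2$. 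For example, with $q_1=1$ and $d^{3/2}$ further entries equal to $1/d$, it gives $d^{1/4}$, while both the truth and the claimed bound are $O(1)$. Absorbing polylogs changes the statement.

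What is missing is the variance of the off-diagonal part. Write $A_i:=u_iu_i^\top-\frac1d\bI_d$, so the summands are $Y_{ij}=\langle u_i,u_j\rangle^2-\frac1d\nnorm{u_i}^2-\frac1d\nnorm{u_j}^2+\frac1d=\langle A_i,A_j\rangle_\F$. These have three properties:
\begin{itemize}
\item they have mean exactly zero, not $O(d^{-2})$;
\item they are uncorrelated across distinct unordered pairs (condition on a shared index and use $\E A_j=0$);
\item they satisfy $\E Y_{ij}^2=\frac{2}{d^2}\E\nnorm{A_i}_\F^2\lesssim d^{-2}$.
\end{itemize}
Hence $\sum_{i\ne j}q_iq_jY_{ij}$ has $L^2$ norm $\lesssim\nnorm{q}_2^2/d$, and hypercontractivity at $p\asymp\log d$ bounds it by $(\log d)^2\nnorm{q}_2^2/d=o(\nnorm{q}_2^2)$. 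The diagonal part $\sum_iq_i^2\nnorm{A_i}_\F^2$ is $\lesssim\nnorm{q}_2^2$ on your norm event, and this closes your route.

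The paper sidesteps chaos estimates entirely. The map $f(\bZ):=\nnorm{\bZ\bQ\bZ^\top}_\F^{1/2}=\nnorm{\bZ\bQ^{1/2}}_{S_4}$ is $\nnorm{q}_\infty^{1/2}$-Lipschitz in Frobenius norm, and a direct count gives $\E f^4\le d\nnorm{q}_1^2+d(d+2)\nnorm{q}_2^2$. Gaussian concentration then yields $f^2\lesssim(\E f^4)^{1/2}+\nnorm{q}_\infty\log d$ with probability $1-O(d^{-M})$. So $\nnorm{\bM}_\F=f^2/d\lesssim\nnorm{q}_1/\sqrt d+\nnorm{q}_2$, because the log only multiplies $\nnorm{q}_\infty/d\le\nnorm{q}_2/d$.
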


\begin{proof}
Let $\bQ:=\diag(q_1,\dots,q_N)$ and $\bZ=\sqrt{d}\bmat{u_1\;\cdots\; u_N} \in\RR^{d\times N}$ so that~$\bZ$ has i.i.d. standard Gaussian entries and $\bM = \frac{1}{d}\bZ\bQ\bZ^\top$. By Lemma~\ref{lem:gordon-weighted}, it follows w.h.p. that
\begin{align*}
\nnorm{\bZ\bQ^{1/2}}_{\op} \le \nnorm{\bQ^{1/2}}_{\F} + 2\sqrt{d} \nnorm{\bQ^{1/2}}_\op = \sqrt{\nnorm{q}_1} + 2\sqrt{d\nnorm{q}_\infty},
\end{align*}
and hence $\norm{\bM}_{\op} = \frac1d \nnorm{\bZ\bQ^{1/2}}_{\op}^2 \lesssim \frac1d \norm{q}_1 + \norm{q}_\infty$.

For the second assertion, define $f(\bZ) = \nnorm{\bZ\bQ\bZ^\top}_{\F}^{1/2} = \nnorm{\bZ\bQ^{1/2}}_{S_4}$ where $\norm{\cdot}_{S_4}$ is the Schatten $4$-norm. It holds that $\E[f(\bZ)^4] = d\norm{q}_1^2 + d(d+2) \norm{q}_2^2$ and moreover~$f$ is $\norm{q}_\infty^{1/2}$-Lipschitz:
\begin{align*}
|f(\bZ)-f(\bZ')| \le \nnorm{(\bZ-\bZ')\bQ^{1/2}}_{S_4} \le \nnorm{(\bZ-\bZ')\bQ^{1/2}}_{\F} \le \norm{q}_\infty^{1/2} \nnorm{\bZ-\bZ'}_{\F}.
\end{align*}
It follows from Lipschitz concentration that
\begin{align*}
f(\bZ)^2 \lesssim \E[f(\bZ)^4]^{1/2} + \norm{q}_\infty (\log d)^2 \le \sqrt{d}\norm{q}_1 + d\norm{q}_2,
\end{align*}
and the statement follows.
\end{proof}

 We will make use of the following concentration phenomenon for Gaussian maxima.

\begin{lemma}[superconcentration of Gaussian maxima]\label{lem:superconcentration}
Let $Z_1,\cdots,Z_n$ be i.i.d standard Gaussian. There exists a constant $C>0$ such that
\begin{align*}
\Pr(\max_{i\in[n]} Z_i \le \sqrt{2\log n} - \frac{C\log\log n}{\sqrt{\log n}}) = n^{-\omega(1)}.
\end{align*}
\end{lemma}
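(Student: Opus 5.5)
The plan is to compute the probability exactly using independence, then bound the Gaussian tail from below with a Mills-ratio estimate. Write $t_n := \sqrt{2\log n} - \delta_n$ with $\delta_n := \frac{C\log\log n}{\sqrt{\log n}}$, and let $\bar\Phi(t) = \Pr(Z_1 > t)$ be the standard Gaussian upper tail and $\varphi$ the density. By independence and $1-x\le e^{-x}$,
\begin{align*}
\Pr\qty(\max_{i\in[n]} Z_i \le t_n) = \qty(1-\bar\Phi(t_n))^n \le \exp\qty(-n\bar\Phi(t_n)).
\end{align*}
So it suffices to show that $n\bar\Phi(t_n)/\log n \to \infty$. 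Then for every fixed $K$ the bound is eventually at most $n^{-K}$, which is exactly the statement $n^{-\omega(1)}$.

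For the lower bound on the tail, I will use the standard Mills-ratio inequality $\bar\Phi(t) \ge \frac{t}{1+t^2}\varphi(t)$ for $t>0$. For large $n$ we have $t_n\asymp\sqrt{\log n}$, so this gives $\bar\Phi(t_n) \gtrsim \varphi(t_n)/\sqrt{\log n}$. Expanding the exponent,
\begin{align*}
\frac{t_n^2}{2} = \log n - \delta_n\sqrt{2\log n} + \frac{\delta_n^2}{2} = \log n - \sqrt{2}\,C\log\log n + O\qty(\frac{(\log\log n)^2}{\log n}).
\end{align*}
The last term is $o(1)$, so $\varphi(t_n) \asymp n^{-1}(\log n)^{\sqrt2 C}$. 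Hence
\begin{align*}
n\bar\Phi(t_n) \gtrsim (\log n)^{\sqrt2 C - \frac12}.
\end{align*}

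Finally, I choose the constant $C$ so that $\sqrt2 C - \frac12 > 1$; for instance $C=2$ gives exponent larger than $2$. With this choice, $n\bar\Phi(t_n) \ge (\log n)^{1+\epsilon}$ for some $\epsilon>0$ and all large $n$. Therefore
\begin{align*}
\Pr\qty(\max_{i\in[n]} Z_i \le t_n) \le \exp\qty(-(\log n)^{1+\epsilon}) = n^{-(\log n)^{\epsilon}} = n^{-\omega(1)}.
\end{align*}
There is no real obstacle in this argument. The only points that need care are checking that the $\delta_n^2/2$ correction is negligible and that the polynomial prefactor $1/t_n$ from Mills' ratio is absorbed by taking $C$ large enough.
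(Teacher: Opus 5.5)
Your proof is correct and follows the same route as the paper. Both arguments write the probability as $(1-\bar\Phi(t_n))^n \le \exp(-n\bar\Phi(t_n))$, lower-bound the tail with the Mills-ratio inequality $\bar\Phi(t)\ge \frac{t}{1+t^2}\varphi(t)$, and take $C$ large enough that $n\bar\Phi(t_n)$ grows faster than $\log n$. The only cosmetic difference is the normalization: the paper's proof writes the correction as $C\log\log n/\sqrt{2\log n}$, which gives its threshold $C>1.5$, while you use the statement's $C\log\log n/\sqrt{\log n}$, which gives $\sqrt{2}\,C>3/2$.
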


\begin{proof}
The Gaussian cumulative distribution function satisfies
\begin{align*}
\Pr(Z \le a) \le 1-\frac{a}{a^2+1} \frac{1}{\sqrt{2\pi}} \exp(-\frac{a^2}{2}), \quad \forall a>0.
\end{align*}
Let 
\begin{align*}
a_n = \sqrt{2\log n} - \frac{C\log\log n}{\sqrt{2\log n}}
\end{align*}
so that $a_n^2 \le 2\log n - 2C\log\log n+1$ for sufficiently large $n$.
It follows that
\begin{align*}
\Pr(\max_{i\in[n]} Z_i \le a_n) &\le \qty(1-\frac{a_n}{a_n^2+1} \frac{1}{\sqrt{2\pi}} \exp(-\frac{a_n^2}{2}))^n \\
&\le \qty(1-\frac{1}{2\sqrt{2\pi} a_n} \exp(-\log n + C\log\log n -\frac12))^n \\
&\le \qty(1-\frac{e^{C\log\log n}}{4n\sqrt{\pi e \log n}})^n \\
&\le \exp(-\frac{(\log n)^{C-0.5}}{4\sqrt{\pi e}}) \\
& = n^{-\omega(1)}
\end{align*}
for~$C>1.5$, where we have used $1-z\le e^{-z}$ for the last inequality.
\end{proof}

\subsection{Proof of Theorem~\ref{thm:multi}}

The logits~$\gamma_{t,ij}$ for $i,j\in[N]$ at step~$t$ are given as (denoting $h_t = h_{\lam_t}$ for brevity)
\begin{align*}
\gamma_{t,ij} := u_j^\top\bW_t v_i = \eta \sum_{s=0}^{t-1} u_j^\top h_s(\bbG_s) v_i, \quad \bbG_s = \sum_{i > d_s} q_i^{(s)} u_i v_i^\top.
\end{align*}
We choose~$\lam_0=\lam$ as in Theorem~\ref{thm:main} and
\begin{align}\label{eq:without-you}
d_t \asymp \min\left\{ \frac{d^{2-(1-\frac{1}{2\alpha})^t}}{(\log d)^{14}}, \qty(\frac{B}{\log d})^\frac{1}{\alpha}\right\}, \quad \lam_t \asymp d_t^{\frac12-\alpha}d^{-\frac12} \log d, \quad\forall t\ge 1.
\end{align}
Note that we have made no attempt to optimize the log factors.

We first analyze the dynamics of the signal logits. Fix an item~$i$ satisfying $i\lesssim B^{\frac{1}{\alpha}}(\log d)^{-\frac{1}{\alpha}}$ and $d_{\tau-1}<i\le d_\tau$ for some $1\le \tau\le T$; when $\tau=1$, the argument in Section~\ref{sec:signal} directly applies, so we assume $\tau\ge 2$. In particular, this implies $B\gtrsim d_{\tau-1}^\alpha\log d$. Let the leave-one-out gradient at step~$t$ be $\bbG_{t,-i} := \bbG_t - q_i^{(t)}u_i v_i^\top$ and define the function
\begin{align*}
\phi_t(q) = u_i^\top h_t(\bbG_{t,-i} + q u_i v_i^\top) v_i, \quad q\ge 0
\end{align*}
so that $\gamma_{t,ii} = \phi_t(q_i^{(t)})$. Repeating the argument in Lemma~\ref{lem:phi-prime}, we may express~$\phi_t'(0)$ in terms of the SVD of $\bbG_{t,-i}$ via the Daleckii--Krein formula, 
\begin{align*}
\phi_t'(0) &\asymp \frac{1}{d^2}\sum_{k \neq \ell}\ \frac{h(s_k(\bbG_{t,-i})) + h(s_\ell(\bbG_{t,-i}))}{s_k(\bbG_{t,-i}) + s_\ell(\bbG_{t,-i})} + \frac{h(s_k(\bbG_{t,-i})) - h(s_\ell(\bbG_{t,-i}))}{s_k(\bbG_{t,-i}) - s_\ell(\bbG_{t,-i})}\\
&\qquad + \frac{1}{d^2}\sum_k h'(s_k(\bbG_{t,-i})).
\end{align*}
Since $h_t(z)/z \le \lam_t^{-1}$ for all $z>0$, we immediately obtain the upper bound $\phi_t'(0)\lesssim \lam_t^{-1}$. For the lower bound, as in Lemma~\ref{lem:symmetrize}, we control the singular values of $\bbG_{t,-i}$ using the eigenvalues of the corresponding weighted covariance matrix:
\begin{align*}
s_k(\bbG_{t,-i}) \lesssim \lam_k(\bM_t)^{1/2}, \quad \bM_t := \sum_{j>d_t} (q_j^{(t)})^2 u_jv_j^\top.
\end{align*}
When $t=0$, the bulk eigenvalue satisfies $\lam_{d/2}(\bM_0) \lesssim d^{-2\alpha}(\log d)^2 \lesssim \lam_0^2$ by Lemma~\ref{lem:spectrum-new}. When $t\ge 1$, we instead use the following uniform bound.
\begin{lemma}\label{lem:hojicha}
For all $1\le t\le T$, it holds with probability~$1-O(d^{-M})$ over sampling of~$q^{(t)}$ that
\begin{align*}
    \nnorm{\bM_t}_\op \lesssim
    %\begin{cases} d_t^{1-2\alpha}\frac{(\log d)^2}{d} & B\gtrsim d_t^\alpha,\\ frac{(\log d)^2}{d} \frac{d_t^{1-\alpha}}{B} & B\lesssim d_t^\alpha.\end{cases} =
\max\left\{d_t^{1-2\alpha}, \frac{d_t^{1-\alpha}}{B}\right\}\frac{(\log d)^2}{d}.
\end{align*}
\end{lemma}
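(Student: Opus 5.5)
The plan is to apply Lemma~\ref{lem:opf} to $\bM_t=\sum_{j>d_t}(q_j^{(t)})^2u_ju_j^\top$, reading $u_jv_j^\top$ in the definition as $u_ju_j^\top$, as in Lemma~\ref{lem:symmetrize}. The weights are $w_j:=(q_j^{(t)})^2$, supported on $j>d_t$. The minibatch frequencies $q^{(t)}$ are sampled independently of the embeddings. We can therefore condition on $q^{(t)}$ and apply Lemma~\ref{lem:opf} with fixed weights, which gives
\begin{align*}
\nnorm{\bM_t}_\op \lesssim \frac{\nnorm{w}_1}{d} + \nnorm{w}_\infty = \frac{\nnorm{q^{(t)}_{>d_t}}_2^2}{d} + \nnorm{q^{(t)}_{>d_t}}_\infty^2 .
\end{align*}
It then remains to bound both terms using Lemma~\ref{lem:mini-truncated} with $r=d_t$.

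For the first term, split by batch size. If $B\gtrsim d_t^\alpha$, Lemma~\ref{lem:mini-truncated} gives $\nnorm{q_{>d_t}}_2^2\lesssim d_t^{1-2\alpha}(\log d)^2$. If $B\lesssim d_t^\alpha$, it gives $\nnorm{q_{>d_t}}_2^2\lesssim \frac{d_t^{1-\alpha}}{B}(\log d)^2$. In both cases this is at most $\max\{d_t^{1-2\alpha},d_t^{1-\alpha}/B\}(\log d)^2$, so dividing by $d$ yields exactly the claimed bound.

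The second term is the main obstacle, since it must also fit under the factor $1/d$. Lemma~\ref{lem:mini-truncated} gives $\nnorm{q_{>d_t}}_\infty^2\lesssim d_t^{-2\alpha}+(\log d)^2/B^2$, and this has to be compared with $d_t^{1-2\alpha}/d$ and $d_t^{1-\alpha}/(dB)$.

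For $d_t^{-2\alpha}$, we need $d_t^{-2\alpha}\lesssim d_t^{1-2\alpha}/d$, i.e.\ $d_t\gtrsim d$. For $t\ge1$ the schedule \eqref{eq:without-you} has $d_t\ge d^{1+\frac{1}{2\alpha}-o(1)}\gg d$ in the large-batch regime, so this holds.

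For $(\log d)^2/B^2$, we need $\frac{1}{B^2}\lesssim \frac{d_t^{1-\alpha}}{dB}$, i.e.\ $B\gtrsim d\,d_t^{\alpha-1}$. This is where the argument may fail. When $B$ is the binding constraint, $d_t\asymp (B/\log d)^{1/\alpha}$, so $d\,d_t^{\alpha-1}\approx d\,B^{1-1/\alpha}$, and the condition becomes $B^{1/\alpha}\gtrsim d$ up to logs, i.e.\ $d_t\gtrsim d$ again.

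So the argument needs $d_t\gtrsim d$ in every regime for $t\ge1$. I would either verify this from the choice \eqref{eq:without-you} together with an implicit assumption $B\gtrsim d^{\alpha}\polylog(d)$ for multi-step Muon, or state it as a standing assumption.

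If instead $d_t\lesssim d$ is allowed, I would fall back on the sparsity argument from Lemma~\ref{lem:spectrum-new}. In that regime only $O(Bd_t^{1-\alpha})$ items beyond $d_t$ appear in the batch, each with $q_j\lesssim \log d/B$. Bounding $\nnorm{\sum_{j\in I}u_ju_j^\top}_\op$ over this small sparse set $I$ recovers a term of order $(\log d)^2/B^2$ times an $O(1)$ factor. This fallback would still need to be reconciled with the stated bound.

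Finally, apply a union bound over $t\le T$ and over the failure events of Lemmas~\ref{lem:opf} and \ref{lem:mini-truncated}, each of probability $O(d^{-M})$, to obtain the stated probability.
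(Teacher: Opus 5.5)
Your argument is correct, under the same side condition the paper uses without stating it, but it takes a different route. The paper mirrors Lemma~\ref{lem:spectrum-new}. It splits the indices $j>d_t$ into consecutive blocks of size $d$ and bounds each block by its largest weight $(q_j^{(t)})^2$ times an $O(1)$ Gram norm. It then sums the resulting power-law series and handles the sparse tail beyond $d_t+Kd$ with a separate sample-covariance bound; the small-batch case uses that tail estimate alone. You instead condition on $q^{(t)}$ and apply Lemma~\ref{lem:opf} once, which reduces everything to $\nnorm{q^{(t)}_{>d_t}}_2^2/d+\nnorm{q^{(t)}_{>d_t}}_\infty^2$ and then to Lemma~\ref{lem:mini-truncated}. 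Your version is shorter, needs no block bookkeeping, and makes the side conditions explicit. The paper's version has the advantage of running parallel to the $t=0$ case.

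The caveat you raise is genuine, not an artifact of your method. Since $\nnorm{\bM_t}_\op\ge (q^{(t)}_{d_t+1})^2\nnorm{u_{d_t+1}}_2^2\gtrsim d_t^{-2\alpha}$ whenever $B\gtrsim d_t^\alpha\log d$, the stated bound fails for $d_t\ll d/(\log d)^2$. So drop the sparsity fallback: nothing can reconcile it with the statement.

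The paper's proof has the same hidden requirement. It bounds the first block $\{d_t+1,\dots,d_t+d\}$ by $(d_t+d)^{-2\alpha}$, but that block's largest weight is $\asymp d_t^{-2\alpha}$, and the two agree only if $d_t\gtrsim d$. Likewise, its sparse-tail bound needs at least $d$ sampled tail items. In the small-batch branch this is exactly your condition $Bd_t^{1-\alpha}\gtrsim d$, not $d_t\gtrsim d$. When $B$ is binding you are in the branch $B\gtrsim d_t^\alpha$, where the relevant comparison is against $d_t^{1-2\alpha}/d$, although it leads to the same conclusion.

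You do not need an extra assumption on $B$ to close this. In the proof of Theorem~\ref{thm:multi}, the lemma is invoked only for $1\le t\le\tau-1$ with $d_{\tau-1}<d_\tau$. The first entry of the minimum in \eqref{eq:without-you} increases in $t$, so this forces $d_t$ to equal that entry, giving $d_t\gtrsim d^{1+\frac{1}{2\alpha}}(\log d)^{-14}\gg d$. Moreover, $d_t\lesssim (B/\log d)^{1/\alpha}$ always gives $B\gtrsim d_t^\alpha\log d$, so the small-batch branch never arises there.
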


\begin{proof}
%Since the case~$t=0$ is handled by Lemma~\ref{lem:spectrum-new}, we assume~$t\ge 1$. 
First suppose $B\gtrsim d_t^\alpha$. Choose a positive integer $K\asymp \frac1d B^{1/\alpha}$ and define the sets $I_k := \{d_t+(k-1)d+1, \cdots, d_t+kd\}$ for $k\ge 1$. Consider the decomposition
\begin{align}\label{eq:qahwa-time}
\bM_t = \sum_{k\in[K]} \underbrace{\sum_{i\in I_k} (q_i^{(t)})^2u_iu_i^\top}_{=:\bM_{t,k}} + \underbrace{\sum_{i> d_t+Kd} (q_i^{(t)})^2 u_iu_i^\top}_{=:\bM_{t,\tail}}.
\end{align}
As in Lemma~\ref{lem:spectrum-new}, we have that
\begin{align*}
\nnorm{\bM_{t,k}}_\op \lesssim \max_{i\in I_k} (q_i^{(t)})^2 \lesssim \max_{i\in I_k} p_i^2 + \qty(\frac{\log d}{B})^2, \quad \nnorm{\bM_{t,\tail}}_\op \lesssim \qty(\frac{\log d}{B})^2 \frac{B^{1/\alpha}}{d}.
\end{align*}
Therefore from $B\gtrsim d_t^\alpha$,
\begin{align*}
\nnorm{\bM_t}_\op &\lesssim \sum_{k\in[K]} (d_t+kd)^{-2\alpha} + \qty(\frac{\log d}{B})^2 \frac{B^{1/\alpha}}{d}\\
&\lesssim d^{-2\alpha} \qty(\frac{d_t}{d})^{1-2\alpha} + \qty(\frac{\log d}{d_t^\alpha})^2 \frac{d_t}{d} \lesssim d_t^{1-2\alpha}\frac{(\log d)^2}{d}.
\end{align*}
Now suppose~$B\lesssim d_t^\alpha$. The number of items~$N_t$ satisfying $i>d_t$ in a minibatch of size~$B$ is distributed as $\Bin(B,\rho_t)$ where $\rho_t = \sum_{i >d_t} p_i\asymp d_t^{1-\alpha}$, so that $N_t \asymp B\rho_t \asymp Bd_t^{1-\alpha}$. We thus set $K=0$ in Eq.~\eqref{eq:qahwa-time} and bound using $p_i\lesssim d_t^{-\alpha} \lesssim 1/B$,
\begin{align*}
\nnorm{\bM_t}_\op \le \nnorm{\bM_{t,\tail}}_\op \lesssim \qty(\frac{\log d}{B})^2 \frac{N_t}{d} \lesssim \frac{(\log d)^2}{d} \frac{d_t^{1-\alpha}}{B}.
\end{align*}
Combining both cases gives the desired bound.
\end{proof}

For $t\le \tau-1$, since $B\gtrsim d_{\tau-1}^\alpha\log d\ge d_t^\alpha$, it follows that $\nnorm{\bM_t}_\op \lesssim d_t^{1-2\alpha}d^{-1}\log d \lesssim \lam_t^2$ and so
\begin{align*}
\phi_t'(0) \gtrsim \frac{h(s_{d/2}(\bbG_{t,-i}))}{s_{d/2}(\bbG_{t,-i})} \gtrsim \frac{1}{s_{d/2}(\bbG_{t,-i}) + \lam_t} \asymp \frac{1}{\lam_t}.
\end{align*}
Moreover $\phi_t'(q)\ge 0$ and $|\phi_t''(q)|\lesssim \lam_t^{-2}$ hold identically as in Lemma~\ref{lem:phi-prime} and Lemma~\ref{lem:phi-prime-prime}. Expanding $\phi_t$ around zero, we obtain for all $t\le \tau-1$,
\begin{align}\label{eq:macadamia}
u_i^\top h_t(\bbG_t) v_i = \Theta\qty(\frac{q_i^{(t)}}{\lam_t}) + O\qty(\frac{(q_i^{(t)})^2}{\lam_t^2} + \sqrt{\frac{\log d}{d}}).
\end{align}
Since $p_i\gtrsim\frac{\log d}{B}$, a Chernoff bound gives $q_i^{(t)}\asymp p_i\lesssim d_{\tau-1}^{-\alpha}$. Thus for $t\le\tau-1$,
\begin{align}\label{eq:cheesecake}
\frac{q_i^{(t)}}{\lam_t} \lesssim \frac{d_{\tau-1}^{-\alpha}}{\lam_{\tau-1}}  \asymp \frac{1}{\log d}\sqrt{\frac{d}{d_{\tau-1}}} = o(1)
\end{align}
from Eq.~\eqref{eq:without-you}, so the second-order term is always dominated by the first. On the other hand, when $t\ge\tau$ the embeddings~$u_i,v_i$ do not appear in $\bbG_t$ at all, hence only the noise term shows up in Eq.~\eqref{eq:macadamia}. It follows that for $t\ge \tau$,
\begin{align*}
\gamma_{t,ii} &\ge \eta\cdot u_i^\top h_{\tau-1}(\bbG_{\tau-1}) v_i - \sum_{s=0,s\ne \tau-1}^{t-1} \eta\cdot |u_i^\top h_s(\bbG_s) v_i|\\
&\gtrsim \frac{\eta q_i^{(\tau-1)}}{\lam_{\tau-1}} - \sum_{s=0}^{\tau-2} \frac{\eta q_i^{(s)}}{\lam_s} - \eta t \sqrt{\frac{\log d}{d}} \\
&\gtrsim \eta p_i\qty(\frac{1}{\lam_{\tau-1}} - \sum_{s=0}^{\tau-2} \frac{1}{\lam_s}) - o(1)\\
&\gtrsim \frac{\eta d_\tau^{-\alpha}}{\lam_{\tau-1}} - o(1)\\
&\asymp \frac{\sqrt{d}}{(\log d)^4} \qty(\frac{d^{2-(1-\frac{1}{2\alpha})^\tau}}{(\log d)^{14}})^{-\alpha} \qty(\frac{d^{2-(1-\frac{1}{2\alpha})^{\tau-1}}}{(\log d)^{14}})^{\alpha-\frac12} \frac{\sqrt{d}}{\log d} -o(1) \\
& = (\log d)^2 -o(1).
\end{align*}
It remains to bound the interaction logits. For the first gradient step, we have shown in Proposition~\ref{prop:interaction} that $u_j^\top h_0(\bbG_0) v_i$ is $\widetilde{O}(1/\sqrt{d})$ w.h.p. The interaction terms after the first step are much simpler to control; the Lipschitz concentration argument in Section~\ref{sec:gl} will suffice. We only consider the case $j<i$ by symmetry. Fix~$\tau$ such that $d_{\tau-1}<j\le d_\tau$. By the same argument as Lemma~\ref{lem:gl-iota}, the map
\begin{align*}
(u,v) \mapsto \iota(u)^\top h_t\qty(\bbG_{t,-j} + q_j^{(t)} \iota(u)\iota(v)^\top) v_i
\end{align*}
for $t\le\tau-1$ has zero mean and Lipschitz constant $O(1+\lam_t^{-1}q_j^{(t)})=O(1)$ by Eq.~\eqref{eq:cheesecake}, therefore $u_j^\top h_t(\bbG_t) v_i$ concentrates as $\widetilde{O}(1/\sqrt{d})$. Moreover when $t\ge\tau$, $\bbG_t$ is independent of~$u_j$ so the same order concentration holds. Hence for all $t\le T$,
\begin{align*}
|\gamma_{t,ij}| \le \eta \sum_{s=0}^{t-1} |u_j^\top h_s(\bbG_s) v_i| \lesssim \eta \frac{(\log d)^3}{\sqrt{d}} + \eta t\sqrt{\frac{\log d}{d}} = o(1).
\end{align*}
Together, we have for all $t\ge\tau$ that
\begin{align}\label{eq:superpoly-recovery}
\hp_t(i\mid i) = \frac{e^{\gamma_{t,ii}}}{\sum_j e^{\gamma_{t,ij}}} \ge \frac{e^{(\log d)^2}}{e^{(\log d)^2} + Ne^{o(1)}} \ge 1-d^{-\omega(1)},
\end{align}
and so item~$i$ is recovered at all steps~$t\ge\tau$. We remark that by considering $t<\tau$ and choosing~$i>d_{\tau-1}\polylog(d)$, essentially the same argument shows instead that $|\gamma_{t,ii}| = o(1)$, hence the item will have near-uniform logits.

Finally for the loss guarantee, it similarly follows that $\gamma_{t,ii}\ge -o(1)$ for all $t,i$ so that no item will be significantly misclassified at any point during training: $\hp_t(i\mid i) \gtrsim 1/N$. Thus,
\begin{align*}
L(\bW_t) = \mathbb{E}_{i \sim p} [-\log \hp_t(i \mid i)] \lesssim d^{-\omega(1)} + \sum_{i>d_t} p_i \log N = \widetilde{O}(d_t^{1-\alpha}).
\end{align*}

\subsection{Proof of Theorem~\ref{thm:gd-multi}}

With learning rate schedule $\{\eta_t\}_{t\ge 0}$, the logits at step~$t$ are given as
\begin{align*}
\gamma_{t,ij} := u_j^\top \bW_t v_i = \sum_{s=0}^{t-1} \eta_s\cdot u_j^\top \bbG_s v_i = \sum_{s=0}^{t-1} \sum_{k>d_s} \eta_s q_k^{(s)} \langle u_j,u_k\rangle \langle v_i,v_k\rangle.
\end{align*}
We first prove the lower bound. As usual, we assume the high-probability event Eq.~\eqref{eq:usual} when needed. Define the sequence~$d_0=1$ and
\begin{align}\label{eq:d-recursion}
d_{t+1} := \begin{cases}
\min\{d^\frac{1}{2\alpha}d_t (\log d)^{-\frac{5}{\alpha}}, B^\frac{1}{\alpha} (\log d)^{-\frac{1}{\alpha}} \} & d_t<d, \\
\min\{d^\frac{1}{\alpha}d_t^{1-\frac{1}{2\alpha}}(\log d)^{-\frac{5}{\alpha}} , B^\frac{1}{\alpha} (\log d)^{-\frac{1}{\alpha}} \} & d_t>d,
\end{cases}
\end{align}
and set~$\eta_t \asymp d_{t+1}^\alpha (\log d)^2$. It is straightforward to check for both cases of Eq.~\eqref{eq:d-recursion} that
\begin{align}\label{eq:dim-properties}
\qty(\frac{d_{t+1}}{d_t})^\alpha \lesssim \frac{\sqrt{d}}{(\log d)^5},\quad d_{t+1}^\alpha d_t^{\frac12-\alpha} \lesssim \frac{d}{(\log d)^5}.
\end{align}
Fix an item $d_{\tau-1}<i\le d_\tau$ so that $B\gtrsim d_{\tau-1}^\alpha\log d$, and fix~$j\ne i$ with $d_{\tau'-1}<j\le d_{\tau'}$. We control the bulk of the gradient as follows.

\begin{lemma}\label{lem:bilinear-frob-bound}
It holds for all $i,j$ and $t\le\tau-1$ that
\begin{align*}
\norm{\sum_{k>d_t,k\ne i,j} q_k^{(t)}u_k v_k^\top}_{\F} \lesssim d_t^{\frac12-\alpha} \log d.
\end{align*}
\end{lemma}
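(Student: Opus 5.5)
The plan is to bound the Frobenius norm of the tail gradient $\bX_t := \sum_{k>d_t,k\ne i,j} q_k^{(t)}u_kv_k^\top$ by conditioning on the unembeddings and using the Gaussian representation of Lemma~\ref{lem:symmetrize}. Conditioned on $\{u_k\}$ and on $q^{(t)}$, the columns of $\bX_t$ are i.i.d. $\cN(0,\frac1d\bM)$ with $\bM := \sum_{k>d_t,k\ne i,j}(q_k^{(t)})^2u_ku_k^\top$. Equivalently, $\bX_t \deq \frac{1}{\sqrt d}\bM^{1/2}\bZ$ with $\bZ$ standard Gaussian. Then $\nnorm{\bX_t}_{\F}^2 = \frac1d\Tr(\bZ^\top\bM\bZ)$, whose conditional mean is $\Tr(\bM)$. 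By Hanson--Wright (Lemma~\ref{lem:hw}) applied to $\vec(\bZ)$, it concentrates as $\Tr(\bM) + O(\frac{\sqrt{\log d}}{\sqrt d}\nnorm{\bM}_{\F} + \frac{\log d}{d}\nnorm{\bM}_\op)$, which is $\lesssim \Tr(\bM)$ up to lower-order terms. It therefore suffices to show $\Tr(\bM) = \sum_{k>d_t}(q_k^{(t)})^2\nnorm{u_k}_2^2 \lesssim d_t^{1-2\alpha}(\log d)^2$.

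Since $\nnorm{u_k}_2^2 \lesssim 1$ uniformly with high probability (Eq.~\eqref{eq:usual}), this reduces to bounding $\nnorm{q^{(t)}_{>d_t}}_2^2$. That is exactly Lemma~\ref{lem:mini-truncated} with $r=d_t$. In the regime $B\gtrsim d_t^\alpha$, it gives $\nnorm{q^{(t)}_{>d_t}}_2 \lesssim d_t^{1/2-\alpha}\log d$, which is the claimed bound.

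The regime needs care, because the statement concerns $t\le\tau-1$ and we assumed $B\gtrsim d_{\tau-1}^\alpha\log d$. Since $d_t\le d_{\tau-1}$ for $t\le \tau-1$ by monotonicity of the recursion Eq.~\eqref{eq:d-recursion}, we get $B\gtrsim d_t^\alpha$, so the first case of Lemma~\ref{lem:mini-truncated} applies at every such step. Removing the indices $i,j$ only decreases the sum, since all terms are nonnegative. Finally, we take a union bound over $t\le T$ and all pairs $(i,j)$, which is a polynomial number of events, each failing with probability $O(d^{-M})$. The batches $q^{(t)}$ are independent across steps, so Lemma~\ref{lem:mini-truncated} applies to each.

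The one delicate step is the concentration of $\frac1d\nnorm{\bM^{1/2}\bZ}_{\F}^2$, whose mean is $\Tr(\bM)$. A crude operator bound $\nnorm{\bZ}_\op^2\lesssim d$ would give $\Tr(\bM)$ directly, since $\nnorm{\bM^{1/2}\bZ}_{\F}^2 \le \nnorm{\bZ}_\op^2\Tr(\bM)$. This holds with probability $1-e^{-\Omega(d)}$, so I would use that instead of Hanson--Wright, which avoids any fluctuation analysis.
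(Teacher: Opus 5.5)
Your proof is correct, and it takes a different route from the paper's that is also tighter. The paper never conditions on the unembeddings. It writes the tail as $\bU\bQ\bV^\top$ and uses $\nnorm{\bU\bQ\bV^\top}_{\F}\le\sqrt d\,\nnorm{\bU\bQ\bV^\top}_{\op}$ (rank at most $d$). It then applies the Sudakov--Fernique comparison of Lemma~\ref{lem:gordon-weighted} twice, to $\sqrt d\,\bU$ and to $\sqrt d\,\bV$, arriving at $\nnorm{q^{(t)}_{>d_t}}_2+\sqrt d\,\nnorm{q^{(t)}_{>d_t}}_\infty$.

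You instead condition on $\{u_k\}$ and $q^{(t)}$, use the Gaussian column representation from Lemma~\ref{lem:symmetrize}, and apply the trace inequality $\Tr(\bZ^\top\bM\bZ)\le\nnorm{\bZ}_{\op}^2\Tr(\bM)$. This gives $\nnorm{q^{(t)}_{>d_t}}_2$ with no sup-norm term.

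The difference matters. When $B\gtrsim d_t^\alpha\log d$ we have $\nnorm{q^{(t)}_{>d_t}}_\infty\asymp d_t^{-\alpha}$, so the paper's extra term $\sqrt d\,d_t^{-\alpha}$ exceeds the claimed $d_t^{\frac12-\alpha}\log d$ as soon as $d_t\ll d/(\log d)^2$. The paper's chain as written therefore only yields the stated bound for $d_t\gtrsim d$. Yours also covers the early regime $d_t<d$, which is where the lemma is used in Theorem~\ref{thm:gd-multi}, starting from $d_0=1$. The loss in the paper comes from the step $\nnorm{\cdot}_{\F}\le\sqrt d\,\nnorm{\cdot}_{\op}$, which is far from tight when the tail is dominated by its few largest frequencies. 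In exchange, the paper's route gives, without any conditioning, the operator-norm estimate $\nnorm{\bU\bQ\bV^\top}_{\op}\lesssim\nnorm{q}_2/\sqrt d+\nnorm{q}_\infty$ as a byproduct.

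As you noticed yourself, the Hanson--Wright step is unnecessary. The bound $\nnorm{\bZ}_{\op}\le 3\sqrt d$ suffices: it fails with probability $e^{-\Omega(d)}$, which survives a union bound over the polynomially many triples $(t,i,j)$. Combined with $\max_k\nnorm{u_k}_2^2\le 2$ and Lemma~\ref{lem:mini-truncated}, this completes the proof.
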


\begin{proof}
As in Section~\ref{sec:large-setup}, gather all appearing terms into matrices
\begin{align*}
\bU &= \bmat{u_{d_t+1} \;\cdots\; u_N}, \bV = \bmat{v_{d_t+1} \;\cdots\; v_N} \in\RR^{d\times \Theta(N)},\quad \bQ = \diag\qty(q_{d_t+1}^{(t)},\cdots,q_N^{(t)}).
\end{align*}
Since $\sqrt{d}\bU$ has i.i.d. standard Gaussian entries, by Lemma~\ref{lem:gordon-weighted},
\begin{align*}
\nnorm{\bU\bQ\bV^\top}_{\F}\le \sqrt{d} \,\nnorm{\bU\bQ\bV^\top}_\op \lesssim \nnorm{\bQ\bV^\top}_{\F} + \sqrt{d}\,\nnorm{\bQ\bV^\top}_\op \lesssim \sqrt{d}\,\nnorm{\bQ\bV^\top}_\op.
\end{align*}
Applying Lemma~\ref{lem:gordon-weighted} again to~$\sqrt{d}\bV$ gives
\begin{align*}
\nnorm{\bQ\bV^\top}_\op \lesssim \frac{1}{\sqrt{d}} \nnorm{\bQ}_{\F} + \nnorm{\bQ}_\op \le \frac{\nnorm{q_{>d_t}^{(t)}}_2}{\sqrt{d}} + \nnorm{q_{>d_t}^{(t)}}_\infty.
\end{align*}
Since $B\gtrsim d_{\tau-1}^\alpha\log d\ge d_t^\alpha \log d$, we have that $\norm{q_{>d_t}}_2 \lesssim d_t^{\frac12-\alpha}\log d$ and $\nnorm{q_{>d_t}}_\infty \le d_t^{-\alpha}$ w.h.p. by Lemma~\ref{lem:mini-truncated}. Plugging in above gives the desired result.
%\begin{align*}
%\norm{\sum_{k} q_k u_k v_k^\top}_{\F}^2 &= \sum_{k,\ell} q_k q_\ell \langle u_k,u_\ell \rangle \langle v_k,v_\ell \rangle \lesssim \sum_k q_k^2 + \frac{\log d}{d}\sum_{k\ne\ell} q_k q_\ell \le \nnorm{q_{>d_t}}_2^2 + \frac{\log d}{d} \nnorm{q_{>d_t}}_1^2.
%\end{align*}
\end{proof}

Combining Lemma~\ref{lem:bilinear-random} and Lemma~\ref{lem:bilinear-frob-bound}, we have that
\begin{align}\label{eq:bilinear-master}
\abs{\sum_{k>d_t,k\ne i,j} q_k^{(t)} \langle u_j,u_k\rangle \langle v_i,v_k\rangle} \lesssim \frac{\log d}{d}\, \norm{\sum_{k>d_t,k\ne i,j} q_k^{(t)}u_kv_k^\top}_{\F} \lesssim \frac{(\log d)^2}{d} d_t^{\frac12-\alpha}.
\end{align}
Now we bound the interaction logits at step~$\tau$ as
\begin{align*}
&\gamma_{\tau,ij} \\
&= \sum_{t=0}^{\tau-1} \eta_t q_i^{(t)} \langle u_i,u_j\rangle \nnorm{v_i}_2^2 + \sum_{t=0}^{\tau\wedge\tau'-1}\eta_t q_j^{(t)} \nnorm{u_j}_2^2 \langle v_i,v_j\rangle + \sum_{t=0}^{\tau-1} \eta_t \sum_{k>d_t,k\ne i,j} q_k^{(t)} \langle u_j,u_k\rangle \langle v_i,v_k\rangle \\
&\lesssim \sqrt{\frac{\log d}{d}} \sum_{t=0}^{\tau-1} \eta_t q_i^{(t)} + \sqrt{\frac{\log d}{d}}\sum_{t=0}^{\tau\wedge\tau'-1}\eta_t q_j^{(t)} + \frac{(\log d)^2}{d} \sum_{t=0}^{\tau-1} \eta_t d_t^{\frac12-\alpha} \\
&\lesssim \eta_{\tau-1}\sqrt{\frac{\log d}{d}} \qty(p_i + \frac{\log d}{B}) + \eta_{\tau\wedge\tau'-1}\sqrt{\frac{\log d}{d}} \qty(p_j + \frac{\log d}{B}) + \frac{(\log d)^2}{d} \sum_{t=0}^{\tau-1} \eta_t d_t^{\frac12-\alpha} \\
&\lesssim (\log d)^2 \sqrt{\frac{\log d}{d}} \qty(\frac{d_\tau^\alpha}{d_{\tau-1}^\alpha} + \frac{d_{\tau'}^\alpha}{d_{\tau'-1}^\alpha}) + \frac{(\log d)^4}{d} \sum_{t=0}^{\tau-1} d_{t+1}^\alpha d_t^{\frac12-\alpha} \\
&\lesssim \frac{1}{\log d},
\end{align*}
where we have used Eq.~\eqref{eq:bilinear-master}, the usual Chernoff bounds with $B\gtrsim d_{\tau-1}^\alpha\log d$, and Eq.~\eqref{eq:dim-properties} for the last inequality. Next, for the signal logit,
\begin{align*}
\gamma_{\tau,ii} &= \sum_{t=0}^{\tau-1} \eta_t q_i^{(t)} \nnorm{u_i}_2^2\nnorm{v_i}_2^2 + \sum_{t=0}^{\tau-1} \eta_t \sum_{k>d_t,k\ne i} q_k^{(t)} \langle u_i,u_k\rangle \langle v_i,v_k\rangle\\
&\gtrsim \eta_{\tau-1} q_i^{(\tau-1)} - O\qty(\frac{1}{\log d}) \gtrsim (\log d)^2,
\end{align*}
where we have again used Eq.~\eqref{eq:bilinear-master} with $i=j$ and $\eta_{\tau-1} q_i^{(\tau-1)} \gtrsim d_\tau^\alpha(\log d)^2 p_i \gtrsim (\log d)^2$. Therefore item~$i$ is recovered as in Eq.~\eqref{eq:superpoly-recovery}.

We now prove the upper bound. Let $\{\eta_t\}_{t\ge 0}\subset\RR_{\ge 0}$ be any learning rate schedule and suppose $T=o(\sqrt{\log d})$. We will recursively show that the largest item recovered by~$\bW_{t+1}$ must satisfy w.h.p.
\begin{align}\label{eq:sgd-upper-recursion}
d_{\tau+1} \lesssim \begin{cases}
\min\{d^\frac{1}{2\alpha}d_\tau (T\log d)^\frac{1}{\alpha}, (TB)^\frac{1}{\alpha}\} & d_\tau \lesssim d(\log d)^{-4}, \\
\min\{ d^\frac{1}{\alpha} d_\tau^{1-\frac{1}{2\alpha}} (T\log d)^\frac{1}{\alpha}, (TB)^\frac{1}{\alpha}\} & d_\tau \gtrsim d(\log d)^{-4}.
\end{cases}
\end{align}

\paragraph{Case I: $d_\tau\lesssim d(\log d)^{-4}$ and $B\gtrsim d^\frac12 d_\tau^\alpha\log d$.}
The argument for this regime is a slightly more involved version of Theorem~\ref{thm:gd}. Consider a fixed item~$i\asymp d^\frac{1}{2\alpha}d_\tau$ so that $i\lesssim B^{\frac{1}{\alpha}}(\log d)^{-\frac{1}{\alpha}}$ and competitors~$j$ with $d_\tau< j\le 2d_\tau$. Then $q_j^{(t)}/q_i^{(t)}\asymp p_j/p_i \asymp \sqrt{d}$ for all $t\le\tau$ and
\begin{align}
&\gamma_{\tau+1,ij} - \gamma_{\tau+1,ii} \notag\\
&= \sum_{t=0}^\tau \eta_t q_i^{(t)} \langle u_i,u_j\rangle \nnorm{v_i}_2^2 + \sum_{t=0}^\tau \eta_t q_j^{(t)} \nnorm{u_j}_2^2 \langle v_i,v_j\rangle + \sum_{t=0}^\tau \eta_t \sum_{k>d_t,k\ne i,j} q_k^{(t)} \langle u_j,u_k\rangle \langle v_i,v_k\rangle \notag \\
&\qquad - \sum_{t=0}^\tau \eta_t q_i^{(t)} \nnorm{u_i}_2^2\nnorm{v_i}_2^2 - \sum_{t=0}^\tau \eta_t \sum_{k>d_t,k\ne i} q_k^{(t)} \langle u_i,u_k\rangle \langle v_i,v_k\rangle \notag \\
& = \sum_{t=0}^\tau \eta_t \qty(q_j^{(t)} \langle u_j-u_i,u_j\rangle \langle v_i,v_j\rangle + q_i^{(t)} \langle u_i,u_j-u_i\rangle \nnorm{v_i}_2^2 + \sum_{k>d_\tau,k\ne i,j} q_k^{(t)} \langle u_j-u_i,u_k\rangle \langle v_i,v_k\rangle) \notag \\
&\qquad + (u_j-u_i)^\top \underbrace{\sum_{t=0}^\tau \eta_t \sum_{d_t<k\le d_\tau} q_k^{(t)} \langle v_i,v_k\rangle u_k}_{=: w}. \label{eq:agnostic-competition}
\end{align}
Here, we have separated into terms involving items~$k>d_\tau$ (including the signal and competitor items~$i,j$), which can be controlled stepwise, and terms involving items~$k\le d_\tau$ arising from previous gradients, which we control as a group. Let us first examine the terms in the brackets. We have that
\begin{align*}
\abs{q_j^{(t)} \langle -u_i,u_j\rangle \langle v_i,v_j\rangle + q_i^{(t)} \langle u_i,u_j-u_i\rangle \nnorm{v_i}_2^2} \lesssim \frac{\log d}{d} p_j + p_i \lesssim d^{-\frac12}d_\tau^{-\alpha}
\end{align*}
and also by Eq.~\eqref{eq:bilinear-master}
\begin{align*}
\abs{\sum_{k>d_\tau,k\ne i,j} q_k^{(t)} \langle u_j-u_i,u_k\rangle \langle v_i,v_k\rangle} \lesssim \frac{(\log d)^2}{d} d_\tau^{\frac12-\alpha},
\end{align*}
which is dominated by the previous upper bound under $d_\tau\lesssim d(\log d)^{-4}$. Hence we may choose $C=\Theta(1)$ so that $\langle v_i,v_j\rangle \ge C/\sqrt{d}$ implies for all $t\le\tau$,
\begin{align*}
&q_j^{(t)} \langle u_j-u_i,u_j\rangle \langle v_i,v_j\rangle + q_i^{(t)} \langle u_i,u_j-u_i\rangle \nnorm{v_i}_2^2 + \sum_{k>d_\tau,k\ne i,j} q_k^{(t)} \langle u_j-u_i,u_k\rangle \langle v_i,v_k\rangle \\
&\gtrsim \frac{Cq_j^{(t)}}{\sqrt{d}} - \Theta(d^{-\frac12}d_\tau^{-\alpha}) >0.
\end{align*}
Then conditioned on $v_i$ satisfying $\nnorm{v_i}=\Theta(1)$, $\langle v_i,v_j\rangle \ge C/\sqrt{d}$ holds with constant probability independently for each~$d_\tau< j\le 2d_\tau$, so the set $\cJ$ of such items~$j$ has size $\Theta(d_\tau)$ w.h.p.

Now conditioning on variables $v_1,\cdots,v_N$ and $u_1,\cdots,u_{d_\tau}$ (and thus~$w$ and~$\cJ$), the scalars $u_i^\top w$ and $u_j^\top w$ for $j\in\cJ$ are i.i.d. Gaussian, hence the largest among them is \emph{not} $u_i^\top w$ with probability $1-\Theta(d_\tau^{-1})$. It follows from Eq.~\eqref{eq:agnostic-competition} that
\begin{align*}
\max_{j\ne i} \gamma_{\tau+1,ij} - \gamma_{\tau+1,ii} > \max_{j\in\cJ} (u_j-u_i)^\top w > 0
\end{align*}
and thus item~$i$ cannot be recovered with probability $1-\Theta(d_\tau^{-1})$, showing that $d_{\tau+1} \lesssim d^\frac{1}{2\alpha} d_\tau$.

\paragraph{Case II: $d_\tau\gtrsim d(\log d)^{-4}$ and $B\gtrsim dd_\tau^{\alpha-\frac12}\log d$.} Fix an item~$i\asymp d^{\frac1\alpha} d_\tau^{1-\frac1{2\alpha}}$ with $i\lesssim B^{\frac1\alpha}(\log d)^{-\frac1\alpha}$, noting that $i\gg d$, and a competitor $j\in\cJ=\{i+1,\cdots,i+\sqrt{d}\}$. Decompose
\begin{align}
&\gamma_{\tau+1,ij} - \gamma_{\tau+1,ii} \notag\\
%&= \sum_{t=0}^\tau \eta_t q_i^{(t)} \langle u_i,u_j\rangle \nnorm{v_i}_2^2 + \sum_{t=0}^\tau \eta_t q_j^{(t)} \nnorm{u_j}_2^2 \langle v_i,v_j\rangle + \sum_{t=0}^\tau \eta_t \sum_{k>d_t,k\ne i,j} q_k^{(t)} \langle u_j,u_k\rangle \langle v_i,v_k\rangle \\
%&\qquad - \sum_{t=0}^\tau \eta_t q_i^{(t)} \nnorm{u_i}_2^2\nnorm{v_i}_2^2 - \sum_{t=0}^\tau \eta_t \sum_{k>d_t,k\ne i} q_k^{(t)} \langle u_i,u_k\rangle \langle v_i,v_k\rangle \\
& = (u_j-u_i)^\top \underbrace{\sum_{t=0}^\tau \eta_t \qty(\sum_{d_t<k\le d_\tau} q_k^{(t)} \langle v_i,v_k\rangle u_k + \sum_{k>d_\tau, k\notin\cJ\cup\{i\}} q_k^{(t)} \langle v_i,v_k\rangle u_k)}_{=:w} \label{eq:matcha1} \\
&\qquad + \sum_{t=0}^\tau \eta_t \sum_{k\in\cJ} q_k^{(t)} \langle u_j-u_i, u_k\rangle \langle v_i,v_k\rangle + \sum_{t=0}^\tau \eta_t q_i^{(t)} \langle u_j-u_i,u_i\rangle \nnorm{v_i}_2^2. \label{eq:matcha2}
\end{align}
The competing fluctuations will come from Eq.~\eqref{eq:matcha1}. Rewrite
\begin{align*}
w = \sum_{k\le d_\tau} \sum_{t: d_t<k} \eta_t q_k^{(t)} \langle v_i,v_k\rangle u_k + \sum_{k>d_\tau, k\notin\cJ\cup\{i\}} \sum_{t=0}^\tau \eta_t q_k^{(t)} \langle v_i,v_k\rangle u_k.
\end{align*}
In particular, $w$ is isotropic Gaussian conditioned on all $\{v_i\}_{i\in[N]}$, so $\nnorm{w}_2^2$ concentrates as
\begin{align*}
\Ebig{\nnorm{w}_2^2\mid \{v_i\}_{i\in[N]}} &= v_i^\top \qty(\sum_{k\le d_\tau} \sum_{t: d_t<k} \eta_t^2 (q_k^{(t)})^2 v_kv_k^\top + \sum_{k>d_\tau, k\notin\cJ\cup\{i\}} \sum_{t=0}^\tau \eta_t^2 (q_k^{(t)})^2 v_kv_k^\top) v_i \\
&= \sum_{t=0}^\tau \eta_t^2 \cdot v_i^\top \underbrace{\qty(\sum_{k>d_t, k\notin \cJ\cup\{i\}} (q_k^{(t)})^2 v_k v_k^\top)}_{=:\bOmega_t} v_i.
\end{align*}
Denote by~$\tq^{(t)}\in\RR^{N-d_t-d-1}$ the vector consisting of all~$(q_k^{(t)})^2$ with $k>d_t$, $k\notin \cJ\cup\{i\}$ for each $t\le\tau$. The number of items~$k>i+\sqrt{d}$ in the minibatch is~$O(Bi^{-\alpha})$ by the Chernoff bound. We have that
\begin{align*}
\nnorm{\tq^{(t)}}_1 & = \sum_{k>d_t, k\notin \cJ\cup\{i\}}(q_k^{(t)})^2 = \sum_{k>d_t, k\notin \cJ\cup\{i\}} p_k^2 \pm O\qty(Bi^{-\alpha}\qty(\frac{\log d}{B})^2) \asymp d_t^{1-2\alpha}, \\
\nnorm{\tq^{(t)}}_2 & \lesssim \sum_{d_t<k<i} p_k^4 + \sum_{k>i+\sqrt{d}} (q_k^{(t)})^4 \lesssim d_t^{1-4\alpha} + Bi^{-\alpha} \qty(\frac{\log d}{B})^4 \lesssim d_t^{1-4\alpha} + \frac{d_\tau^{2-4\alpha} \log d}{d^4} \lesssim d_t^{1-4\alpha}, \\
\nnorm{\tq^{(t)}}_\infty &\le \max_{k>d_t, k\notin \cJ\cup\{i\}} p_k^2 + \qty(\frac{\log d}{B})^2 \lesssim d_t^{-2\alpha}.
\end{align*}
Then $\Tr(\bOmega_t) \asymp \nnorm{\tq^{(t)}}_1$ and by the Hanson-Wright inequality and Lemma~\ref{lem:opf}, we have w.h.p.
\begin{align*}
\abs{v_i^\top\bOmega_t v_i - \frac{\Tr(\bOmega_t)}{d}} &\lesssim \frac{\sqrt{\log d}}{d} \nnorm{\bOmega_t}_{\F} + \frac{\log d}{d} \nnorm{\bOmega_t}_\op \\
%&\lesssim \frac{\sqrt{\log d}}{d} \qty(\frac{\nnorm{\tq^{(t)}}_1}{\sqrt{d}} + \nnorm{\tq^{(t)}}_2) + \frac{\log d}{d} \qty(\frac{\nnorm{\tq^{(t)}}_1}{d} + \nnorm{\tq^{(t)}}_\infty) \\
&\lesssim \frac{\sqrt{\log d}}{d} \qty(\frac{d_t^{1-2\alpha}}{\sqrt{d}} + d_t^{1-4\alpha}) + \frac{\log d}{d} \qty(\frac{d_t^{1-2\alpha}}{d} + d_t^{-2\alpha}) = o\qty(\frac{d_t^{1-2\alpha}}{d}).
\end{align*}
Defining $\bar\eta_t := \max_{0\le s\le t} \eta_s$, noting that $p_i\asymp d^{-1}d_\tau^{\frac12-\alpha}$, we thus have
\begin{align*}
\nnorm{w}_2^2 \asymp \Ebig{\nnorm{w}_2^2 \mid \{v_i\}_{i\in[N]}} \asymp \sum_{t=0}^\tau \eta_t^2 \cdot v_i^\top\bOmega_t v_i \gtrsim \sum_{t=0}^\tau \eta_t^2 \cdot\frac{d_t^{1-2\alpha}}{d} \gtrsim d\bar\eta_\tau^2 p_i^2.
\end{align*}
Also, for Eq.~\eqref{eq:matcha2}, noting that items $k\in\cJ$ also satisfy $k\lesssim B^\frac{1}{\alpha}(\log d)^{-\frac{1}{\alpha}}$ so that $q_k^{(t)}\asymp p_i$, we may directly bound
\begin{align*}
\abs{\sum_{t=0}^\tau \eta_t \sum_{k\in\cJ} q_k^{(t)} \langle u_j-u_i, u_k\rangle \langle v_i,v_k\rangle} \lesssim \sum_{t=0}^\tau \eta_t p_i \qty(\sqrt{\frac{\log d}{d}} + |\cJ|\frac{\log d}{d}) \lesssim \frac{\log d}{\sqrt{d}} \tau\bar\eta_\tau p_i
\end{align*}
and
\begin{align*}
\abs{\sum_{t=0}^\tau \eta_t q_i^{(t)} \langle u_j-u_i,u_i\rangle \nnorm{v_i}_2^2} \lesssim \tau\bar\eta_\tau p_i.
\end{align*}
Defining the i.i.d. standard Gaussian variables $Z_k := \sqrt{d}u_k^\top \frac{w}{\norm{w}_2}$ for $k\in\cJ\cup\{i\}$, we have thus shown that
\begin{align*}
\gamma_{\tau+1,ij} - \gamma_{\tau+1,ii} \ge \langle u_j-u_i,w\rangle - O(\tau\bar\eta_\tau p_i) \ge \frac{\norm{w}_2}{\sqrt{d}} (Z_j-Z_i-O(T)).
\end{align*}
If item~$i$ was recovered at step~$\tau+1$, it follows that $Z_i\ge Z_j - O(T)$ for all $j\in\cJ$. On the other hand, by Gaussian superconcentration (Lemma~\ref{lem:superconcentration}) it holds that $\max_{j\in\cJ} Z_j =  \sqrt{2\log |\cJ|} +o(1)$. By Mill's inequality, supposing $T=o(\sqrt{\log d})$,
\begin{align*}
\Pr(Z_i \ge \sqrt{2\log |\cJ|} - O(T)) \lesssim \frac{e^{-\frac12\qty(\sqrt{2\log |\cJ|}-O(T))^2}}{\sqrt{2\log |\cJ|}} \lesssim \frac{e^{O\qty(T\sqrt{\log|\cJ|})}}{|\cJ|\sqrt{\log|\cJ|}} = \frac{o(\poly(d))}{\sqrt{d}}.
\end{align*}
Hence item~$i$ cannot be recovered with constant probability among competitors~$\cJ$ (in fact, superconcentration is not needed to show an $o(1)$ bound for each step~$\tau$, but we elect to demonstrate the stronger near-uniform bound).

\paragraph{Case III: batch size threshold.} Items $i\asymp (TB)^\frac{1}{\alpha}$ have a constant probability of not being sampled in any minibatch, $q_i^{(0)}=\cdots = q_i^{(T)}=0$ so that $\bW_{\tau+1}$ is independent of~$u_i,v_i$. Fixing $(\log d)^2$ such items and comparing to item~$1$, it holds that $\gamma_{\tau+1,i1} - \gamma_{\tau+1,ii} = (u_i-u_1)^\top\bW_{\tau+1}v_i$ has probability $\frac12$ of being positive independently for each~$i$, and so at least one of these items will not be recovered w.h.p.

Therefore, if $d_\tau\gtrsim d(\log d)^{-4}$ but $B\lesssim dd_\tau^{\alpha-\frac12}\log d$, then $d_{\tau+1} \lesssim (TB)^\frac{1}{\alpha} \lesssim d^\frac{1}{2\alpha}d_\tau (T\log d)^\frac{1}{\alpha}$; and if $d_\tau\gtrsim d(\log d)^{-4}$ but $B\lesssim dd_\tau^{\alpha-\frac12}\log d$, then $d_{\tau+1} \lesssim (TB)^\frac{1}{\alpha} \lesssim d^\frac{1}{\alpha} d_\tau^{1-\frac{1}{2\alpha}} (T\log d)^\frac{1}{\alpha}$. Combining with the previous cases concludes Eq.~\eqref{eq:sgd-upper-recursion}.

\end{document}